\newcolumntype{?}{!{\vrule width 1.5pt}}
\colorlet{linkcolor}{blue!70!black}
\definecolor{pearDark}{HTML}{2980B9}
\newtheorem{theorem}{Theorem}
\crefname{theorem}{theorem}{Theorems}
\Crefname{Theorem}{Theorem}{Theorems}
\newtheorem*{lemma_nonumber*}{Lemma}
\newaliascnt{lemma}{theorem}
\newtheorem{lemma}[lemma]{Lemma}
\crefname{lemma}{lemma}{lemmas}
\Crefname{Lemma}{Lemma}{Lemmas}
\newaliascnt{corollary}{theorem}
\newtheorem{corollary}[corollary]{Corollary}
\crefname{corollary}{corollary}{corollaries}
\Crefname{Corollary}{Corollary}{Corollaries}
\newaliascnt{proposition}{theorem}
\newtheorem{proposition}[proposition]{Proposition}
\crefname{proposition}{proposition}{propositions}
\Crefname{Proposition}{Proposition}{Propositions}
\newaliascnt{definition}{theorem}
\newtheorem{definition}[definition]{Definition}
\crefname{definition}{definition}{definitions}
\Crefname{Definition}{Definition}{Definitions}
\newaliascnt{remark}{theorem}
\crefname{remark}{remark}{remarks}
\Crefname{Remark}{Remark}{Remarks}
\crefname{figure}{figure}{figures}
\Crefname{Figure}{Figure}{Figures}
\newtheorem{assumption}{\textbf{A}\hspace{-3pt}} \setcounter{assumption}{-1}
\Crefname{assumption}{\textbf{A}\hspace{-3pt}}{\textbf{A}\hspace{-3pt}}
\crefname{assumption}{\textbf{A}}{\textbf{A}}
\crefname{assumption}{assumption}{assumptions}
\Crefname{Assumption}{Assumption}{Assumptions}
\newtheorem{assumptionF}{\textbf{F}\hspace{-3pt}}
\Crefname{assumptionB}{\textbf{B}\hspace{-3pt}}{\textbf{B}\hspace{-3pt}}
\crefname{assumptionB}{\textbf{B}}{\textbf{B}}
\Crefname{assumptionC}{\textbf{C}\hspace{-3pt}}{\textbf{C}\hspace{-3pt}}
\crefname{assumptionC}{\textbf{C}}{\textbf{C}}
\Crefname{assumptionH}{\textbf{H}\hspace{-3pt}}{\textbf{H}\hspace{-3pt}}
\crefname{assumptionH}{\textbf{H}}{\textbf{H}}
\Crefname{assumptionT}{\textbf{T}\hspace{-3pt}}{\textbf{T}\hspace{-3pt}}
\crefname{assumptionT}{\textbf{T}}{\textbf{T}}
\Crefname{assumptionT}{\textbf{T}\hspace{-3pt}}{\textbf{T}\hspace{-3pt}}
\crefname{assumptionT}{\textbf{T}}{\textbf{T}}
\Crefname{assumptionL}{\textbf{L}\hspace{-3pt}}{\textbf{L}\hspace{-3pt}}
\crefname{assumptionL}{\textbf{L}}{\textbf{L}}
\Crefname{assumptionQ}{\textbf{Q}\hspace{-3pt}}{\textbf{Q}\hspace{-3pt}}
\crefname{assumptionQ}{\textbf{Q}}{\textbf{Q}}
\Crefname{assumptionAR}{\textbf{AR}\hspace{-3pt}}{\textbf{AR}\hspace{-3pt}}
\crefname{assumptionAR}{\textbf{AR}}{\textbf{AR}}
\newcommand*{\centerfloat}{%
  \parindent \z@
  \leftskip \z@ \@plus 1fil \@minus \textwidth
  \rightskip\leftskip
  \parfillskip \z@skip}
\def\mod{\ \mathrm{mod}}
\def \bfX {\mathbf{X}}
\def \bfB {\mathbf{B}}
\def \bfY {\mathbf{Y}}
\def\Qbb {\mathbb{Q}}
\def\Pbb {\mathbb{P}}
\def\argmin{\mathrm{argmin}}
\def\msp{\mathsf{P}}
\def\borel{\mathcal{B}}
\def\Pmeasure{\mathscr{P}}
\def\msd{\mathsf{D}}
\def\complementary{\mathrm{c}}
\def\msi{\mathsf{I}}
\def\msa{\mathsf{A}}
\def\mss{\mathsf{S}}
\def\msn{\mathsf{N}}
\def\mst{\mathsf{T}}
\def\msl{\mathsf{L}}
\def\msb{\mathsf{B}} 
\def\msc{\mathsf{C}}
\def\mse{\mathsf{E}}
\def\msv{\mathsf{V}}
\def\msp{\mathsf{P}}
\def\msx{\mathsf{X}}
\def\msz{\mathsf{Z}}
\def\rset{\mathbb{R}}
\def\nset{\mathbb{N}}
\def\nsets{\mathbb{N}^*}
\def\rmd{\mathrm{d}}
\def\rme{\mathrm{e}}
\def\rmC{\mathrm{C}}
\def\KL{\mathrm{KL}}
\def\ent{\mathrm{H}}
\newcommand{\abs}[1]{\left\vert #1 \right\vert}
\newcommand{\absLigne}[1]{\vert #1 \vert}
\newcommandx{\psr}[3][3=]{\left\langle#1,#2 \right\rangle_{#3}}
\newcommandx{\normr}[2][2=]{ \left\Vert#1 \right\Vert_{#2}}
\newcommandx{\psrLigne}[3][3=]{\langle#1,#2 \rangle_{#3}}
\newcommandx{\normrLigne}[2][2=]{ \Vert#1 \Vert_{#2}}
\newcommandx{\norm}[2][1=]{\ifthenelse{\equal{#1}{}}{\left\Vert #2 \right\Vert}{\left\Vert #2 \right\Vert^{#1}}}
\newcommand{\normLigne}[1]{\| #1 \|}
\newcommand\probaMarkovTilde[2][2=]
\newcommand{\plusinfty}{+\infty}
\def\ie{\textit{i.e.}}
\def\eqsp{\;}
\newcommand{\coint}[1]{\left[#1\right)}
\newcommand{\ocint}[1]{\left(#1\right]}
\newcommand{\ccint}[1]{\left[#1\right]}
\newcommand{\ccintLigne}[1]{[#1]}
\newcommand{\cball}[2]{\overline{\operatorname{B}}(#1,#2)}
\newcommand\sequence[3][2=,3=]
\newcommand\sequenceD[3][2=,3=]
\newcommand\sequenceDouble[4][3=,4=]
\def\Id{\mathrm{Id}}
\def\Idd{\mathrm{I}_d}
\newcommand{\ensembleLigne}[2]{\{#1\,:\eqsp #2\}}
\def\card{\operatorname{card}}
\def\path{\operatorname{path}}
\def\diag{\operatorname{diag}}
\def\vareps{\varepsilon}
\newcommand{\1}{\mathbbm{1}}
\DeclareMathOperator{\Var}{Var}
\DeclareMathOperator{\Cov}{Cov}
\DeclareMathOperator{\BWUVP}{BW_2^2-UVP}
\DeclareMathOperator{\BW}{BW_2^2}
\def\transpose{\top}
\newcommand{\beq}{\begin{equation}}
\newcommand{\eeq}{\end{equation}}
\def\Leb{\mathrm{Leb}}
\def\Pmeasurell{\mathscr{P}^{(\ell)}}
\title{Tree-Based Diffusion Schrödinger Bridge \\ with Applications to Wasserstein Barycenters}
\author{%
  Maxence Noble\thanks{Corresponding author. Contact at: maxence.noble-bourillot@polytechnique.edu.} \\
  CMAP, CNRS, École polytechnique, \\
  Institut Polytechnique de Paris, \\
91120 Palaiseau, France  
  % examples of more authors
  \And
  Valentin De Bortoli\\
  Computer Science Department, \\
ENS, CNRS, PSL University
  % examples of more authors
  \And
  Arnaud Doucet \\
Department of Statistics, \\ 
University of Oxford, UK  
  \And
  Alain Oliviero Durmus \\
  CMAP, CNRS, École polytechnique, \\
  Institut Polytechnique de Paris, \\
91120 Palaiseau, France    
  % \And
  % Coauthor \\
  % Affiliation \\
  % Address \\
  % \texttt{email} \\
  % \AND
  % Coauthor \\
  % Affiliation \\
  % Address \\
  % \texttt{email} \\
  % \And
  % Coauthor \\
  % Affiliation \\
  % Address \\
  % \texttt{email} \\
  % \And
  % Coauthor \\
  % Affiliation \\
  % Address \\
  % \texttt{email} \\
}
\begin{document}

\maketitle

\begin{abstract}
  Multi-marginal Optimal Transport (mOT), a generalization of OT, aims at minimizing the integral of a cost function with respect to a distribution with some prescribed marginals. In this paper, we consider an entropic version of mOT
  with a tree-structured quadratic cost, i.e., a function that can be written as
  a sum of pairwise cost functions between the nodes of a tree. To address this
  problem, we develop Tree-based Diffusion Schr\"odinger Bridge (TreeDSB), an
  extension of the Diffusion Schr\"odinger Bridge (DSB) algorithm. TreeDSB
  corresponds to a dynamic and continuous state-space counterpart of the
  multi-marginal Sinkhorn algorithm. A notable use case of our methodology is to
  compute Wasserstein barycenters which can be recast as the solution of a mOT
  problem on a star-shaped tree. We demonstrate that our methodology can be applied in high-dimensional settings such as image interpolation and
  Bayesian fusion.
\end{abstract}

\section{Introduction}\label{sec:intro}

In the last decade, computational Optimal Transport (OT) has shown great success
with applications in various fields such as biology
\citep{schiebinger2019optimal,bunne2022proximal}, shape correspondence
\citep{su2015optimal,feydy2017optimal,eisenberger2020deep}, control theory
\citep{bayraktar2018martingale,acciaio2019extended} and computer vision
\citep{schmitz2018wasserstein,carion2020end}. While OT commonly seeks at computing the transport plan that minimizes the cost of moving between two distributions, it can naturally be extended to the multi-marginal setting (mOT) when considering several distributions. This extension of OT has notably been studied in quantum chemistry \citep{cotar2013density}, clustering \citep{cuturi2014fast} and statistical
inference \citep{srivastava2018scalable}. In particular, a popular application in unsupervised learning of mOT with Euclidean cost consists in computing the Wasserstein barycenter of a set of probability distributions \citep{agueh2011barycenters, benamou2015iterative, alvarez2016fixed, peyre2019computational}.

Interior point methods can be used to solve OT and mOT problems but they come
with computational challenges \citep{pele2009fast}. In order to mitigate these
limitations, one often considers an \emph{entropic regularization} of OT, known
as Entropic OT (EOT). This regularized formulation can be efficiently solved in discrete state-spaces using the celebrated \emph{Sinkhorn}
algorithm \citep{cuturi2013sinkhorn,knight2008sinkhorn,sinkhorn1967concerning}, which admits a continuous state-space counterpart referred to as the \emph{Iterative Proportional Fitting} (IPF) procedure \citep{fortet1940resolution,kullback1968probability,ruschendorf1995convergence}. In the case of a quadratic cost, EOT is equivalent to the \emph{static} formulation of the Schr\"odinger Bridge (SB) problem \citep{schrodinger1932theorie}. Given a reference diffusion with finite time horizon $T$ and two probability measures, solving SB amounts to finding the closest diffusion to the reference (in terms of Kullback–Leibler divergence on path spaces) with the given marginals at times $t=0$ and $t=T$. This framework naturally arises in stochastic control
\citep{dai1991stochastic} where one aims at controlling the marginal
distribution of a stochastic process at a fixed time. Recently,
\citet{debortoli2021diffusion} introduced Diffusion Schr\"odinger Bridge (DSB), an approximation of a \emph{dynamic} version of the IPF scheme on path spaces, see also \cite{vargas2021solving, chen2021likelihood}. This methodology leverages
advances in the field of denoising diffusion models
\citep{song2020score,ho2020denoising} in order to derive a
scalable and efficient scheme to solve SB, and thus EOT. 

Similarly to OT, mOT admits an entropic regularization (EmOT), which can be solved via a multi-marginal generalization of Sinkhorn/IPF algorithm \citep{benamou2015iterative, marino2020optimal}. Recently, \cite{haasler2021multimarginal} proposed an extension of the \emph{static} SB problem in \emph{discrete} state-space to any multi-marginal tree-based setting. They notably made the correspondence between this formulation and EmOT, when the cost function writes as the sum of interaction energies onto the given tree structure, and introduced an efficient version of Sinkhorn algorithm to solve it.

\paragraph{Motivations and contributions.} In this work, we investigate the \emph{continuous} and \emph{dynamic} counterpart of the tree-based framework from \cite{haasler2021multimarginal}. To be more specific, we present an extension of the static SB formulation in continuous state-space to any multi-marginal tree-based setting, referred to as TreeSB. Then, we establish the equivalence between TreeSB and a formulation of EmOT relying on a (quadratic) tree-structured cost function, analogously to \cite{haasler2021multimarginal}. Inspired by DSB, we develop TreeDSB, a dynamic counterpart of the multi-marginal IPF (mIPF) to solve it, by operating on path spaces and using score-based diffusion techniques. To bridge gaps in literature, we prove the convergence of mIPF iterations in a \emph{non-compact} setting under mild assumptions, by extending results on IPF convergence \citep{ruschendorf1995convergence}. Finally, we illustrate our approach on examples of Wasserstein barycenters from statistical inference and image processing.

Although our approach can be applied to any tree, we focus on \emph{star-shaped trees}. In this setting, we show that TreeSB reduces to a regularized Wasserstein barycenter problem. Our method comes with several benefits compared to existing works. First, it is out-of-sample, \ie, it does not require re-running the full procedure when given a new data point. Second, our formulation of the Wasserstein barycenter problem obtained from TreeSB allows us to avoid numerical issues of having to choose the regularization too small, see \Cref{sec:barycenter}. Finally, to the best of our knowledge, this is the first methodology to extend ideas from diffusion-based models to the computation of Wasserstein barycenters. In particular, we believe that the idea of iterative refinement, \ie, solving the \emph{dynamic} counterpart of a \emph{static} problem, plays a key role in the efficiency and scalability of the method. 

\paragraph{Notation.} For any measurable space $(\msx, \mathcal{X})$, we denote
by $\Pmeasure(\msx)$ the space of probability measures defined on
$(\msx, \mathcal{X})$. Unless specified, $\mathcal{X}$ is defined as the Borel sets on $\msx$. For any $\ell\in \nset$, let
$\Pmeasurell=\Pmeasure((\rset^d)^\ell)$; we denote $\Pmeasure^{(1)}$ by
$\Pmeasure$. Assume that $\msx=(\rset^d)^\ell$ for some $\ell\in \nset$. For any
$x \in \msx$ and any $m,n\in \{0, \hdots, \ell\}$ such that $m \leq n$, let
$x_{m:n}=(x_m, x_{m+1}, \hdots, x_n)$. Let $\Leb$ be the Lebesgue measure. For any non-negative function
$f : \msx \to \rset_+$, such that $\int_{\msx} f \rmd \Leb < \plusinfty$, define
$\ent(f) = -\int_{\msx}f\log f \rmd \Leb \in \ocint{-\infty,\plusinfty}$. For
any distribution $\mu \in \Pmeasure(\msx)$, we define the entropy of $\mu$ as
$\ent(\mu) = \ent(\rmd \mu/ \rmd \Leb)$ if $\mu \ll \Leb$ and
$\ent(\mu) = \plusinfty$ otherwise.  For any two arbitrary measures $\mu$ and
$\nu$ defined on $(\msx, \mathcal{X})$, define the Kullback--Leibler divergence
between $\mu$ and $\nu$ as
$\KL(\mu | \nu)= \int_{\msx} \log(\rmd \mu/\rmd \nu) \rmd \mu -\int_{\msx}\rmd
\mu + \int_{\msx}\rmd \nu$ if $\mu \ll \nu$ and $\KL(\mu \mid \nu)= \plusinfty$
otherwise. For any $T>0$, we denote by $\rmC(\ccintLigne{0,T}, \rset^d)$ the space of continuous functions
from $\ccintLigne{0,T}$ to $\rset^d$. For any path measure
$\Pbb \in \Pmeasure(\rmC(\ccintLigne{0,T}, \rset^d))$, we denote by
$\mathrm{Ext}(\Pbb) \in \Pmeasure^{(2)}$ the coupling between the \emph{extremal} distributions of $\Pbb$, \ie,
$\mathrm{Ext}(\Pbb) = \Pbb_{0,T}$. Note that, for a given coupling $\pi_{0,T}\in \Pmeasure^{(2)}$, there may exist several path measures $\Pbb$ verifying $\mathrm{Ext}(\Pbb) = \pi_{0,T}$. For any undirected tree $\mst=(\msv, \mse)$ with vertices $\msv$ and edges
$\mse$, we denote by $\{v,v'\}$ (or $\{v',v\}$) the undirected edge between $v\in \msv$ and
$v'\in \msv$, if it exists. Given $r\in \msv$, we denote by
$\mst_r=(\msv, \mse_r)$ the directed version of $\mst$ rooted in $r$, where the
directed edges $\mse_r$ are uniquely defined from the edges $\mse$, see
\Cref{sec:recap-trees} for further details. In this case, the edge linking
$v\in \msv$ to $v'\in \msv$ in $\mst_r$ is denoted by $(v,v')$. Finally, for any integers $(n,K) \in \nset\times \nset^*$, we define $n \mod(K)$ as the the remainder of the Euclidean division of $n$ by $K$.

%%% Local Variables:
%%% mode: latex
%%% TeX-master: "main"
%%% End:

\section{Background and setting}
\label{sec:backgr-optim-transp}

\paragraph{Multi-marginal optimal transport.}

Let $\ell \in \nset^*$. Given a cost function
$c: (\rset^d)^{\ell+1} \to \rset$, a subset $\mss\subset \{0, \hdots, \ell\}$ and a family of
probability measures $\{\mu_i\}_{i \in \mss}\in \Pmeasure^{\abs{S}}$, mOT consists in solving
\begin{align}\label{eq:mot}\tag{mOT}
  \textstyle 
  \pi^\star= \arg \min \left\{\int c(x_{0:\ell}) \rmd \pi (x_{0:\ell}): \pi \in \Pmeasure^{(\ell+1)}, \eqsp \pi_i = \mu_i\eqsp, \forall i \in \mss \right\} , 
\end{align}
where %$\Pmeasure^{(\ell+1)}$ is the set of probability measures defined on $(\rset^d)^{\ell+1}$ and 
$\pi_i$ is the $i$-th marginal of $\pi$, \ie,
$\pi_i(\msa)=\pi(\text{proj}_i^{-1}(\msa))$ for any
$\msa \in \mathcal{B}(\rset^d)$, with $\text{proj}_i: x_{0:\ell} \mapsto
x_i$. Given some weights
$(w_i)_{i \in \{1, \hdots, \ell\}}\in (\rset_+)^{\ell}$, the Wasserstein barycenter
between the measures $\{\mu_i\}_{i \in \mss}$ is
given by $\pi^\star_{0}$ in \eqref{eq:mot}, in the case where
$\mss=\{1, \hdots, \ell\}$ and
$c(x_{0:\ell})=\sum_{i=1}^\ell w_{i} \| x_0 - x_i \|^2$ \citep{peyre2019computational}. In particular, when
$w_i=1/\ell$, the distribution $\pi^\star_0$ can be regarded as the Fr\'echet
mean \citep{karcher2014riemannian} of the measures $\{\mu_i\}_{i \in \mss}$ for
the Wasserstein distance of order 2. Similarly to OT, \eqref{eq:mot} can be relaxed using the following entropic regularization
\begin{align}\label{eq:mot_eps}\tag{EmOT}
  \textstyle 
    \textstyle{\pi^\star= \arg \min \left\{\int c(x_{0:\ell}) \rmd \pi (x_{0:\ell}) + \varepsilon \KL(\pi| \nu): \pi \in \Pmeasure^{(\ell+1)}, \eqsp \pi_i = \mu_i\eqsp, \forall i \in \mss \right\}} \eqsp ,
\end{align}
where $\vareps > 0$ is a hyperparameter
and $\nu$ is an arbitrary measure defined on $((\rset^d)^{\ell+1}, \mathcal{B}((\rset^d)^{\ell+1}))$.

\paragraph{Link with Schr\"odinger Bridge. }

We first recall the relationship between Schr\"odinger Bridge and
EOT. Given $T>0$, $\Qbb$ a (reference) path measure, \ie,
$\Qbb \in \Pmeasure(\rmC(\ccint{0,T}, \rset^d))$ and two measures
$\mu_0, \mu_1 \in \Pmeasure(\rset^d)$, solving the SB
problem amounts to finding the path measure $\Pbb^\star$ defined by
\begin{equation}
  \label{eq:dynamic_schrodinger_bridge}\tag{SB}
  \Pbb^\star = \argmin \ensembleLigne{\KL(\Pbb|\Qbb)}{\Pbb \in \Pmeasure(\rmC(\ccint{0,T}, \rset^d)), \ \Pbb_0 = \mu_0, \ \Pbb_T = \mu_1} \eqsp . 
\end{equation}
If $\Qbb$ is associated with a Stochastic Differential Equation
(SDE)\footnote{We refer to \Cref{sec:martingale-problems} for details on
  solutions of SDEs and associated measures.},
of the form $\rmd \bfX_t = -a \bfX_t \rmd t + \rmd \bfB_t$,  with
$a \geq 0$, then it can be shown, see \cite[Proposition 1]{leonard2014survey}
that $\Pbb^\star_{0,T}$ verifies
\begin{equation}
  \label{eq:static_schrodinger_bridge}\tag{static-SB}
  \textstyle \Pbb^\star_{0,T} = \argmin \ensembleLigne{\KL(\pi|\Qbb_{0,T})}{\pi \in \Pmeasure^{(2)}, \ \pi_0 = \mu_0, \ \pi_1=\mu_1} \eqsp . 
\end{equation}
This is called the \emph{static} formulation of SB.
It can be shown that solving \eqref{eq:static_schrodinger_bridge} is equivalent
to solving EOT with quadratic cost and regularization $\varepsilon=2 \sinh(aT)/a$ if $a>0$, $\varepsilon=2T$ if $a=0$. Moreover, since
$\Pbb^\star = \Pbb^\star_{0,T} \otimes \Qbb_{|0,T}$, where $\Qbb_{|0,T}$ is the
measure $\Qbb$ conditioned on initial and terminal conditions, solving the
\emph{dynamic} problem \eqref{eq:dynamic_schrodinger_bridge} is equivalent to
solving \eqref{eq:static_schrodinger_bridge}.

Similarly, \eqref{eq:mot_eps} can be easily rewritten in a \emph{static} multi-marginal SB
fashion
\begin{equation}
\tag{mSB-like}
  \label{eq:multimarginal_sb_static}
  \textstyle{\pi^\star  = \argmin \ensembleLigne{\KL(\pi|\pi^0)}{\pi \in \Pmeasure^{(\ell+1)}, \eqsp \pi_i = \mu_i\eqsp, \forall i \in \mss}} \eqsp ,
  \end{equation}
  with
  $\textstyle{(\rmd \pi^0 / \rmd \Leb)(x_{0:\ell}) \propto
    \exp[-c(x_{0:\ell})/\vareps]} (\rmd \nu / \rmd
  \Leb)(x_{0:\ell})$, 
  where $\pi^0$ is the \emph{reference} measure.

\paragraph{Diffusion Schr\"odinger Bridge.} Recently,
\citet{debortoli2021diffusion} introduced Diffusion Schr\"odinger Bridge (DSB),
a numerical scheme to solve \eqref{eq:dynamic_schrodinger_bridge}. It approximates the iterates of a \emph{dynamic} version of the \emph{Iterative Proportional Fitting} (IPF) scheme
\citep{sinkhorn1967concerning,knight2008sinkhorn,peyre2019computational,cuturi2014fast}, which can be described as follows: consider a sequence of path measures
$(\Pbb^n)_{n \in \nset}$ such that $\Pbb^0 = \Qbb$ and for any $n \in \nset$
\begin{equation}
  \Pbb^{2n+1} = \argmin \ensembleLigne{\KL(\Pbb|\Pbb^{2n})}{\Pbb_T = \mu_1} , \qquad \Pbb^{2n+2} = \argmin \ensembleLigne{\KL(\Pbb|\Pbb^{2n+1})}{\Pbb_0 = \mu_0} \eqsp .
\end{equation}
This procedure alternatively projects between the measures with fixed initial
distribution and the ones with fixed terminal distribution. For the first
iteration, we get that $\Pbb^1 = \mu_1 \otimes \Qbb_{|T}$. Assuming that $\Qbb$
is given by $\rmd \bfX_t = f_t(\bfX_t) \rmd t + \rmd \bfB_t$, with
$f: \ \ccint{0,T} \times \rset^d \to \rset^d$, then $\Pbb^1$ is associated with
the \emph{time-reversal} of this SDE initialized at $\mu_1$.  The time-reversal
of an SDE has been derived under mild assumptions on the drift and diffusion
coefficients \citep{haussmann1986time,cattiaux2021time}. In this case, we have
$(\bfY_{T-t})_{t \in \ccint{0,T}} \sim \Pbb^1$
, with $\bfY_0 \sim \mu_1$ and
\begin{equation}
  \rmd \bfY_t = \{-f_{T-t}(\bfY_t) + \nabla \log p_{T-t}(\bfY_t) \} \rmd t + \rmd \bfB_t ,
\end{equation}
where $p_t$ is the density of $\Pbb^0_t$ w.r.t. the Lebesgue measure. The score
$\nabla \log p_t$ is estimated using score matching techniques
\citep{hyvarinen2005estimation,vincent2011connection}. The first
iterate of DSB, $\Pbb^1$, corresponds to a \emph{denoising diffusion model}
\citep{ho2020denoising,song2020score}.
DSB
iterates further and not only parameterizes the backward process but also the
forward process. It can therefore be seen as a refinement of diffusion models
drawing a bridge between generative modeling and optimal transport.

\paragraph{Tree-based framework.} Consider an undirected tree $\mst = (\msv, \mse)$, with vertices $\msv$ and edges $\mse$, such that $\msv$ is identified with $\{0, \hdots, \ell\}$. Inspired by \cite{haasler2021multimarginal}, we restrict our study of \eqref{eq:mot_eps}, to the
case where the cost function $c$ is the tree-structured \emph{quadratic} cost derived from $\mst$
\begin{equation}\label{eq:tree_cost}
    \textstyle c(x_{0:\ell})=\sum_{\{v,v'\}\in \mse}w_{v,v'} \|x_{v}- x_{v'}\|_2^2 \eqsp, 
  \end{equation}
where $w_{v, v'}$ is a weight on the edge $\{v,v'\}$,
  which links $v$ to $v'$ (and $v'$ to $v$). Furthermore, as in
\cite{haasler2021multimarginal}, we choose $\mss$, \ie, the set of vertices of $\mst$ with constrained
marginals, to coincide with the \emph{leaves}
of $\mst$. This framework recovers important applications, from Wasserstein
barycenters to Wasserstein propagation, see \cite{solomon2014wasserstein,solomon2015convolutional}. We emphasize that it differs from an OT problem defined on the space of graphs \citep{chen2016robust}. Here, each node represents a probability
  measure (observed or to be inferred) and each edge represents a coupling
  between two distributions.

We consider  an arbitrary vertex $r\in\msv$ and choose $\nu$ in \eqref{eq:mot_eps} such that $(\rmd\nu/\rmd \Leb)(x_{0:\ell})=\varphi_r(x_r)$, where $\varphi_r$ is a density defined on $\rset^d$. Due to the form of $\nu$ and $c$, the reference measure $\pi^0$ in \eqref{eq:multimarginal_sb_static} is therefore a \emph{probability} distribution which
factorizes along $\mst_r=(\msv, \mse_r)$, the directed version of $\mst$
rooted in $r$. We refer to \Cref{sec:recap-trees} for more details on the notion of directed trees. In this setting, \eqref{eq:mot_eps} is equivalent to the tree-based problem
\begin{align}\tag{TreeSB}
  \label{eq:tree_sb_static}
  \textstyle{\pi^\star  = \argmin \ensembleLigne{\KL(\pi|\pi^0)}{\pi \in \Pmeasure^{(\abs{\msv})}, \eqsp \pi_i = \mu_i\eqsp, \forall i \in \mss}} \eqsp ,\\
\label{eq:pi_0}
  \textstyle 
    \text{with} \quad \pi^0=\pi^0_r \bigotimes_{(v,v')\in \mse_r}\pi^0_{v'|v} \eqsp ,
\end{align}
where $\pi^0_{v'|v}(\cdot \mid x_v)=\mathrm{N}(x_v, \vareps/(2w_{v,v'})\Idd)$
and $\pi^0_r \ll \Leb$ with density $\varphi_r$. In a manner akin to \cite{haasler2021multimarginal}, we thus establish, in \emph{continuous} state-space, the correspondence between \eqref{eq:tree_sb_static}, a \emph{static} tree-based version of SB, and a version of EmOT with tree-structured cost \eqref{eq:tree_cost}. In our work, we make the following assumption on the constrained marginals $\{\mu_i\}_{i \in \mss}$.
\begin{assumption}\label{ass:mu_i} For any $i\in \mss$, $\mu_i\ll \Leb$ and $\ent(\mu_i)<\infty$.
\end{assumption}
\vspace{-0.2cm}
In what follows, we define $K$ as the number of leaves of $\mst$
, denoting $\mss = \{i_0, \dots, i_{K-1}\}$, and define the horizon times
$T_{v,v'}=\vareps/(2w_{v,v'})$ for any $\{v,v'\}\in \mse$. For any $i_k\in \mss$, we will denote by $\mst_{k}=(\msv, \mse_{k})$ the directed version of $\mst$ rooted in the leaf $i_k$.  In the next 
section, we present our \emph{dynamic} method to solve \eqref{eq:tree_sb_static}, called \emph{Tree-based Diffusion Schr\"odinger Bridge}.

%%% Local Variables:
%%% mode: latex
%%% TeX-master: "main"
%%% End:

\vspace{-0.2cm}
\section{Tree-based Diffusion Schr\"odinger Bridge}
\label{sec:tree-based-diffusion}
\vspace{-0.2cm}
In this section, we present a method to solve \eqref{eq:tree_sb_static}
in the case where $r\in \mss$, \ie, $r$ is a leaf of $\mst$. We refer to
\Cref{sec:addit-deta-tree} for the extension to the case where
$r\in \msv \backslash \mss$. Without loss of generality, see \Cref{sec:addit-deta-tree}, we assume that $r=i_{K-1}$
and choose $\varphi_{r} = \rmd \mu_{i_{K-1}} / \rmd \Leb$, such that
$\pi^0_{i_{K-1}} = \mu_{i_{K-1}}$.

\vspace{-0.2cm}
\paragraph{Dynamic approach to mIPF.}In order to approximate solutions of \eqref{eq:tree_sb_static}, we
consider the \emph{multi-marginal} extension of the IPF algorithm, denoted by mIPF. Namely, we define a sequence of probability distributions $(\pi^n)_{n \in \nset}$ such that for any
$n \in \nset$
\begin{equation} \tag{mIPF}
  \label{eq:ipf_multimarginal}
  \hspace{-.48cm}\pi^{n+1} = \argmin \ensembleLigne{\KL(\pi|\pi^{n})}{\pi \in \Pmeasure^{(\abs{\msv})}, \eqsp \pi_{i_{k_n+1}} = \mu_{i_{k_n+1}}} \eqsp ,
\end{equation}
where $k_n=(n-1)\mod(K)$ and $(k_n+1)$ is identified with $n \mod(K)$. We define a \emph{mIPF cycle} as a sequence of $K$ consecutive mIPF updates. In particular, each marginal constraint is considered exactly once during one mIPF cycle. In a practical setting, our main aim is to sample from the \eqref{eq:ipf_multimarginal} iterates at the lowest cost. Although these updates can be made explicit, see \cite{marino2020optimal} for instance, direct sampling is unfeasible in practice when $d$ is large. To overcome this limitation, we suggest to compute these iterates in a \emph{dynamic} fashion with equivalent path measures. 

%We now detail the key idea of our method. 
Since $\pi^0$ factorizes along $\mst$, see \eqref{eq:pi_0}, one can show that the iterates of \eqref{eq:ipf_multimarginal} also factorize along $\mst$, see \Cref{sec:theor-prop-tree}. Since these iterates all have a constrained marginal, we obtain the following decomposition for any $n\in \nset$: $\pi^n=\mu_{i_{k_n}} \otimes_{(v,v')\in \mse_{k_n}}\pi^n_{v'|v}$ where $\mse_{k_n}$ denotes the set of edges of the directed tree $\mst_{k_n}$. Then, our approach consists in computing \emph{dynamic} iterates, \ie, path measures, along the edges of $\mst$ that coincide on their extremal times with the \emph{static} iterates $(\pi^n)_{n \in \nset}$. Namely, for any $n\in \nset$, for any edge $(v,v')\in\mse_{k_n}$, we define a path measure $\Pbb^{n}_{(v,v')}\in \Pmeasure(\rmC(\ccintLigne{0,T_{v,v'}}, \rset^d))$ such that $\mathrm{Ext}(\Pbb^{n}_{(v,v')})=\pi^n_{v,v'}$, where $\mathrm{Ext}(\Pbb^{n}_{(v,v')})$ stands for the joint distribution of $\Pbb^{n}_{(v,v')}$ at times 0 and $T_{v,v'}$. In particular, it comes that $\pi^n_{v'|v}=\Pbb^{n}_{(v,v'),T_{v,v'}|0}$. Using the tree-based form of the \eqref{eq:ipf_multimarginal} iterates, we can thus sample from $\pi^n$ by (i) following the directed edges of $\mst_{k_n}$, (ii) diffusing along them the corresponding path measures $(\Pbb^{n}_{(v,v')})_{(v,v')\in \mse_{k_n}}$ and (iii) picking the samples on the vertices. When $\mst$ is a \emph{bridge-shaped} tree (2 vertices, 1 edge), it simply reduces to the dynamic reformulation of the IPF scheme. In what follows, we explain how to obtain our \emph{dynamic} sequence.

%Although \eqref{eq:tree_sb_static} is a \emph{static} problem, our
%methodology relies on the fact that $\pi^0\in \Pmeasure^{(\abs{\msv})}$ can be
%obtained in a \emph{dynamic} fashion.  The following proposition establishes this
%\emph{static}-\emph{dynamic}
%correspondence.

\paragraph{Definition of the dynamic iterates.} We first compute the iterate $\Pbb^0$, corresponding to the dynamic version of $\pi^0$ defined \eqref{eq:pi_0}, in \Cref{prop:init_tree_dsb}. Then, we build the following iterates by recursion on $n\in \nset$ and prove their well-posedness in \Cref{prop:sinkhorn_continuous}.

\begin{proposition}
  \label{prop:init_tree_dsb}
  Let $\mst_{K-1} = (\msv, \mse_{K-1})$, the directed tree associated with
  $\mst = (\msv, \mse)$ and root $i_{K-1}$. Then, for any $(v,v') \in \mse_{K-1}$, there
  exists
  $\Pbb^{0}_{(v,v')} \in \Pmeasure(\rmC(\ccintLigne{0,T_{v,v'}}, \rset^d))$ with
  $\mathrm{Ext}(\Pbb^{0}_{(v,v')}) = \pi^{0}_{(v,v')}$ and such that
  $\Pbb^{0}_{(v,v')|0}$ is the distribution of
  $(\bfB_t)_{t \in \ccintLigne{0,T_{v,v'}}}$, recalling that
  $T_{v,v'}=\vareps/(2w_{v,v'})$.
\end{proposition}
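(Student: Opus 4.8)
The plan is to realize each $\Pbb^0_{(v,v')}$ as the law of a standard Brownian motion run over the horizon $\ccintLigne{0,T_{v,v'}}$ and initialized from the marginal $\pi^0_v$ of the reference measure at the parent vertex $v$. The observation driving everything is that the Gaussian edge kernel appearing in \eqref{eq:pi_0}, namely $\pi^0_{v'|v}(\cdot \mid x_v)=\mathrm{N}(x_v,(\vareps/(2w_{v,v'}))\Idd)=\mathrm{N}(x_v,T_{v,v'}\Idd)$, is exactly the time-$T_{v,v'}$ marginal of a standard Brownian motion started at $x_v$. This is precisely where the definition $T_{v,v'}=\vareps/(2w_{v,v'})$ enters.

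Concretely, I would first note that since $\pi^0$ is a probability measure factorizing along $\mst_{K-1}$ with $\pi^0_{i_{K-1}}=\mu_{i_{K-1}}$, its marginal $\pi^0_v$ at any vertex $v$ is a well-defined probability measure on $\rset^d$ (obtained by convolving $\mu_{i_{K-1}}$ with the Gaussians carried by the edges on the directed path from $i_{K-1}$ to $v$). Writing $\mathbb{W}_{x}$ for the Wiener measure on $\rmC(\ccintLigne{0,T_{v,v'}},\rset^d)$ associated with a standard Brownian motion started at $x\in\rset^d$, I would then set
\begin{equation}
  \Pbb^0_{(v,v')}(\rmd\omega)=\int_{\rset^d}\mathbb{W}_{x}(\rmd\omega)\,\pi^0_v(\rmd x)\eqsp.
\end{equation}
This is a bona fide element of $\Pmeasure(\rmC(\ccintLigne{0,T_{v,v'}},\rset^d))$, since $x\mapsto\mathbb{W}_{x}$ is a Markov kernel and the mixture against $\pi^0_v$ is well-defined by the standard disintegration construction of a path measure from an initial law and its Markov transitions.

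It then remains to verify the two required properties. By construction, the conditional law of $\Pbb^0_{(v,v')}$ given its value at time $0$ is $\Pbb^0_{(v,v')|0}(\cdot\mid x)=\mathbb{W}_x$, i.e.\ the law of Brownian motion started at $x$, which is the claimed identification of $\Pbb^0_{(v,v')|0}$ with the distribution of $(\bfB_t)_{t\in\ccintLigne{0,T_{v,v'}}}$. For the extremal constraint, computing $\mathrm{Ext}(\Pbb^0_{(v,v')})=\Pbb^0_{(v,v'),0,T_{v,v'}}$ amounts to reading off the joint law of $(\omega_0,\omega_{T_{v,v'}})$: under $\mathbb{W}_x$ one has $\omega_0=x$ almost surely and $\omega_{T_{v,v'}}\sim\mathrm{N}(x,T_{v,v'}\Idd)$, so integrating against $\pi^0_v$ gives $\pi^0_v(\rmd x)\,\mathrm{N}(x,T_{v,v'}\Idd)(\rmd y)=\pi^0_v(\rmd x)\,\pi^0_{v'|v}(\rmd y\mid x)=\pi^0_{(v,v')}(\rmd x,\rmd y)$, as desired.

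The construction carries no genuine analytic difficulty; the only points requiring care are the measure-theoretic bookkeeping—ensuring $\pi^0_v$ is a well-defined probability marginal and that $x\mapsto\mathbb{W}_x$ is a measurable Markov kernel on path space—together with the elementary but essential matching of horizons $T_{v,v'}=\vareps/(2w_{v,v'})$ that turns the Gaussian edge kernel of $\pi^0$ into a Brownian endpoint law. The hard part is therefore conceptual rather than technical: recognizing that the reference coupling carried by each edge is literally a Brownian transition over the prescribed time horizon, so that the \emph{dynamic} lift of $\pi^0$ along every edge is simply Wiener measure started from the appropriate vertex marginal.
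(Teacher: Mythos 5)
Your construction is correct and is exactly the argument the paper intends: the paper dispatches this proposition in one line (``straightforward to obtain by combining the definition of the Brownian motion with the definition of $\pi^0$ given in \eqref{eq:pi_0}''), and your mixture of Wiener measures $\int_{\rset^d}\mathbb{W}_x(\rmd\omega)\,\pi^0_v(\rmd x)$, together with the identification of the edge kernel $\mathrm{N}(x_v,T_{v,v'}\Idd)$ as the Brownian transition over horizon $T_{v,v'}$, is precisely the elaboration of that remark. No gaps.
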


%\Cref{prop:init_tree_dsb} shows that in order to sample from $\pi^0$, we can use the following procedure:
%\begin{enumerate*}[label=(\roman*)]
%\item sample from $\mu_{i_{K-1}}$ at the root $i_{K-1}$,
%\item follow the breadth-first order in $\mst_{K-1}$ and propagate along each edge $(v,v') \in \mse_{K-1}$, using $(\bfB_t)_{t \in \ccintLigne{0,T_{v,v'}}}$,
%\item picking the samples on the vertices of $\mst_{K-1}$.
%\end{enumerate*}
%We emphasize that the \emph{dynamic} description of $\pi^0$ given in \Cref{prop:init_tree_dsb} is not unique. In
%particular, we could have replaced the Brownian motions by Ornstein--Uhlenbeck
%processes with different hyperparameters. The key property of this dynamics is that
%its extremal coupling is the same as the static one.

Before deriving the dynamic counterpart of the \eqref{eq:ipf_multimarginal} iterates, we introduce several definitions. For any path measure $\Pbb$,
we denote by $\Pbb^R$ the \emph{time-reversal} of $\Pbb$. For any directed tree and any vertex $v$ of this tree, $p(v)$ refers to the (unique) \emph{parent} of $v$, and $c(v)$ to the unique \emph{child} of $v$ when it exists, see \Cref{sec:recap-trees} for more details.

\begin{wrapfigure}{r}{0.45\textwidth}
  \vspace{-0.5cm}
\begin{center}
  \includegraphics[width=.35\textwidth]{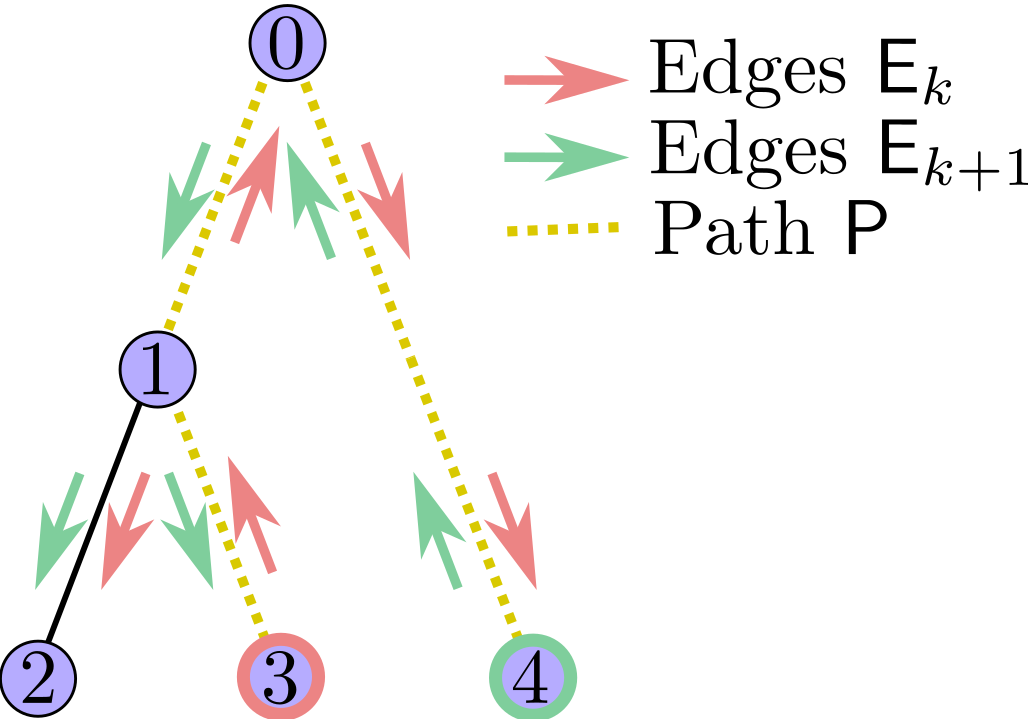}
  \end{center}
  \vspace{-0.3cm}
  \caption{Illustration of the change of root in a toy tree with 5 vertices.}
  \vspace{-0.5cm}
  \label{fig:truc}
\end{wrapfigure}

Let $n \in \nset$. Assume that we have defined the sequence of our dynamic iterates $(\Pbb^{m}_{(v,v')})_{(v,v')\in \mse_{k_m}, m\leq n}$ up to stage $n$.

Consider the path
  $\msp_n = \{(v_j,v_{j+1})\}_{j=1}^J$ in the directed tree $\mst_{k_n}$ such that $v_1 = i_{k_n}$ and $v_{J+1} = i_{k_n+1}$. In particular, for
  any $(v,v') \in \mse_{k_n+1}$, either $(v',v) \in \msp_n$ or
  $(v,v') \in \mse_{k_n} \backslash \msp_n$. This is illustrated in \Cref{fig:truc} when $\msv=\{0,1,2,3,4\}$,
$S=\{2,3,4\}$, $i_k=3$ and $i_{k+1}=4$: in this case, $\msp=\{(3,1), (1,0),(0,4)\}$ and $(1,2)$ is the only edge common to $\mse_{k}$ and $\mse_{k+1}$.  
  
  Consider now the directed tree $\mst_{k_{n+1}}$. We define the $(n+1)$-th iterate of our dynamic sequence by recursion on the edges of this tree, following the breadth-first order. In this order, $(i_{k_n +1},c(i_{k_n +1}))=(v_{J+1}, v_J)$ is the first edge considered.  
  
  First, we define $\Pbb^{n+1}_{(v_{J+1}, v_J)}= \mu_{i_{k_n+1}} \otimes (\Pbb^{n}_{(v_J, v_{J+1})})^R_{|0}$. In the case of a \emph{bridge-shaped} tree, this is exactly the $(n+1)$-th update described in DSB. Then, for any
  $(v,v') \in \mse_{k_n+1}\backslash \{(v_{J+1}, v_{J})\}$, 
  \begin{enumerate}[wide, labelwidth=!, labelindent=0pt, label=(\alph*)]
  \vspace{-0.2cm}
  \item either $(v,v') \in \mse_{k_n} \backslash \msp_n$, and
    we define $\Pbb^{n+1}_{(v,v')} = \Pbb^{n+1}_{(p(v),v),T_{p(v),v}} \otimes
    \Pbb^{n}_{(v,v')|0}$, \label{item:a_sinkhorn}
    \vspace{-0.1cm}
  \item or $(v',v) \in \msp_n$, and we define $\Pbb^{n+1}_{(v,v')} = \Pbb^{n+1}_{(p(v),v),T_{p(v),v}} \otimes (\Pbb^{n}_{(v',v)})^R_{|0}$. \label{item:b_sinkhorn}
  \end{enumerate}

\begin{proposition}
  \label{prop:sinkhorn_continuous} Consider the sequence of dynamic iterates defined by \ref{item:a_sinkhorn} and \ref{item:b_sinkhorn}. Then, for any $n \in \nset$ and any $(v,v') \in \mse_{k_n}$, $\Pbb^{n}_{(v,v')}\in \Pmeasure(\rmC(\ccintLigne{0,T_{v,v'}}, \rset^d))$ and we have
$\mathrm{Ext}(\Pbb^{n}_{(v,v')}) = \pi^{n}_{(v,v')}$.
\end{proposition}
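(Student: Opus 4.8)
The plan is to argue by induction on $n\in\nset$, and within each step by an inner induction following the breadth-first ordering of the edges of $\mst_{k_{n+1}}$. The base case $n=0$ is exactly \Cref{prop:init_tree_dsb}, which supplies path measures $\Pbb^0_{(v,v')}$ on $\rmC(\ccintLigne{0,T_{v,v'}},\rset^d)$ with $\mathrm{Ext}(\Pbb^0_{(v,v')})=\pi^0_{(v,v')}$. For the inductive step, I would first record the \emph{static} identity underlying the whole construction: since $\pi^{n+1}$ is the $\KL$-projection of $\pi^n$ onto the constraint $\{\pi_{i_{k_n+1}}=\mu_{i_{k_n+1}}\}$, it is obtained from $\pi^n$ by reweighting with a function of $x_{i_{k_n+1}}$ alone. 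As $\pi^n$ factorizes along $\mst$ (see \Cref{sec:theor-prop-tree}), such a reweighting leaves all conditionals of the factorization rooted at $i_{k_n+1}$ unchanged, so that $\pi^{n+1}_{v'|v}=\pi^n_{v'|v}$ for every edge $(v,v')\in\mse_{k_{n+1}}$, the conditionals being taken with respect to the orientation of $\mst_{k_{n+1}}$.

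The second step is to transport this identity across the change of root, carefully tracking orientations. Every edge of $\mst_{k_{n+1}}$ falls into exactly one of the two cases of the construction: either it already belonged to $\mse_{k_n}$ with the same orientation (case \ref{item:a_sinkhorn}), or it lay on the path $\msp_n$ and is now reversed (case \ref{item:b_sinkhorn}). Using $\mathrm{Ext}(\Pbb^n_{(v,v')})=\pi^n_{(v,v')}$ from the outer induction hypothesis, the conditional path law $\Pbb^n_{(v,v')|0}$ realizes $\pi^n_{v'|v}$, while the time-reversal satisfies $\mathrm{Ext}((\Pbb^n_{(v',v)})^R)=\pi^n_{(v,v')}$, so that $(\Pbb^n_{(v',v)})^R_{|0}$ realizes the reverse conditional, again equal to $\pi^n_{v'|v}$. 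In both cases the static identity gives $\pi^n_{v'|v}=\pi^{n+1}_{v'|v}$, which is what the two definitions are designed to exploit.

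It remains to run the inner induction on the breadth-first order and glue the endpoints. For the root edge $(v_{J+1},v_J)=(i_{k_n+1},c(i_{k_n+1}))$, the definition starts the reversed diffusion from $\mu_{i_{k_n+1}}=\pi^{n+1}_{i_{k_n+1}}$, whence $\mathrm{Ext}(\Pbb^{n+1}_{(v_{J+1},v_J)})=\mu_{i_{k_n+1}}\otimes\pi^n_{v_J|v_{J+1}}=\pi^{n+1}_{(v_{J+1},v_J)}$ by the previous step. For any later edge $(v,v')$, the parent edge $(p(v),v)$ has been treated earlier, so its terminal slice satisfies $\Pbb^{n+1}_{(p(v),v),T_{p(v),v}}=\pi^{n+1}_v$; gluing this marginal with the conditional path law (either $\Pbb^n_{(v,v')|0}$ or $(\Pbb^n_{(v',v)})^R_{|0}$) yields $\mathrm{Ext}(\Pbb^{n+1}_{(v,v')})=\pi^{n+1}_v\otimes\pi^n_{v'|v}=\pi^{n+1}_{(v,v')}$. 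Finally, the membership $\Pbb^{n+1}_{(v,v')}\in\Pmeasure(\rmC(\ccintLigne{0,T_{v,v'}},\rset^d))$ follows because $T_{v,v'}=T_{v',v}$, so time-reversal maps path measures onto the correct horizon, and the operation of prescribing an initial law and concatenating a conditional path kernel preserves the path-measure structure. I expect the main obstacle to be exactly this measure-theoretic bookkeeping in the \emph{non-compact} setting: justifying that each time-reversal is a genuine path measure with the claimed extremal coupling, and that the disintegration and gluing are well defined, which is where the existence results for time-reversals of diffusions (see \Cref{sec:martingale-problems}) must be invoked.
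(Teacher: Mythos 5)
Your proof is correct and follows essentially the same route as the paper's: an outer induction on $n$ combined with an inner induction over the edges of $\mst_{k_n+1}$ in breadth-first order, gluing the parent edge's terminal marginal $\pi^{n+1}_v$ with the conditional path law (forward or time-reversed) and invoking the static identity $\pi^{n+1}_{v'|v}=\pi^n_{v'|v}$ on $\mse_{k_n+1}$. The only cosmetic difference is that you justify this static identity via the "reweighting by a function of $x_{i_{k_n+1}}$ alone" argument, whereas the paper derives it by an explicit $\KL$ decomposition along the path between the two leaves (\Cref{lemma:mipf-markov}); both rest on the same re-rooting of the Markov factorization.
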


\Cref{prop:sinkhorn_continuous} highlights the equivalence between the \eqref{eq:ipf_multimarginal} iterates and our dynamic iterates. These path measures are
defined iteratively, by following the updates \ref{item:a_sinkhorn} and \ref{item:b_sinkhorn} along the edges of $\mst$. The key observation here is that the computation of each dynamic iterate reduces to a sequence of updates \ref{item:b_sinkhorn} on a \emph{path} linking two leaves of $\mst$. We emphasize that our iterates could be similarly obtained by directly considering a dynamic formulation of \eqref{eq:tree_sb_static} and introducing the formalism of deterministic time branching processes. We leave the study of this problem for future work. We now get into the details of our practical implementation, which relies on score-based methods.

\paragraph{Approximation of the dynamic iterates.} The time-reversal operated in the update \ref{item:b_sinkhorn} can be computed
explicitly, see \cite{haussmann1986time} for instance. Indeed, assuming that
$\Pbb^{n}_{(v',v)}$ is associated with
$\rmd \bfX_t = f_{t, v', v}(\bfX_t) \rmd t + \rmd \bfB_t$ with
$\bfX_0 \sim \pi_{v'}^{n}$, then, under mild conditions, its time-reversal
$(\Pbb^{n}_{(v',v)})^R$ is associated with
$\rmd \bfY_t = \{-f_{T-t, v', v} + \nabla \log p_{v', v, T-t}\}(\bfY_t) \rmd t +
\rmd \bfB_t$ with $\bfY_0 \sim \pi_{v}^{n+1}$, where $p_{v', v,t}$ is the
density of $\Pbb^n_{(v',v),t}$ w.r.t. the Lebesgue measure. The score
$\nabla \log p_{v', v, T-t}$ can then be approximated using score-matching
techniques \citep{hyvarinen2005estimation,vincent2011connection} which are now
ubiquitous in diffusion models \citep{song2020score} and used in DSB
\cite{debortoli2021diffusion}.
Therefore, at iteration $(n+1)$, the update \ref{item:b_sinkhorn} is similar
to the one of DSB \emph{for each edge} on the path joining $i_{k_n}$ and $i_{k_n+1}$. In practice, we parameterize the drifts $f_{t, v, v'}$ for
any $\{v,v'\} \in \mse$ with neural networks $f_{t, \theta_{v,v'}}$ and use the \emph{mean-matching} loss introduced
by \cite{debortoli2021diffusion}. Note that
doing so, we obtain $2 |\mse|$ neural networks. The whole procedure consisting in computing our dynamic iterates using the DSB framework is called \emph{Tree-based
  Diffusion Schr\"odinger Bridge} (TreeDSB) and is summarized in \Cref{algo:treedsb}.
\newpage
\begin{minipage}{.6\textwidth}
\vspace{-0.5cm}
\begin{algorithm}[H]
    \caption{TreeDSB (Training)}
    \label{algo:treedsb}
    \begin{algorithmic}[1]
      \STATE{\textbf{Input: } $\mst=(\msv, \mse)$, $\{\mu_i\}_{i\in \mss}$,
        $\{\theta_{v,v'}\}_{ \{v,v'\} \in \mse}$, $N \in \nset$}
      \FOR{$n=0,\dots,N$} \STATE{Let $k_n=(n-1) \mod(K)$} \STATE{Get path between
        $i_{k_n}$ and $i_{k_n+1}$, $\msp_n= \{v_j, v_{j+1}\}_{j=1}^{J}$} \WHILE{not
        converged} \FOR{$j=1, \dots, J$} \STATE{Sample from
        $\Pbb^n_{v_j, v_{j+1}}$ (Euler-Maruyama)} \STATE{Compute \emph{mean
          matching} loss $\ell(\theta_{v_{j+1},v_{j}})$}
      \STATE{$\theta_{v_{j+1},v_{j}} \leftarrow \textrm{Gradient
          Step}(\ell(\theta_{v_{j+1},v_j}))$} \STATE{Update
        $f_{t, \theta_{v_{j+1}, v_j}}$} \ENDFOR \ENDWHILE \ENDFOR
      \STATE{\textbf{Output:}  $\{\theta_{v,v'}\}_{ \{v,v'\} \in \mse}$}
    \end{algorithmic}
  \end{algorithm}
\end{minipage}
\hfill 
\begin{minipage}{.37\textwidth}
  The algorithm is initialized with $f_{t, \theta_{v,v'}} = 0$ for all
  $\{v,v'\} \in \mse$. This corresponds to Brownian motion dynamics when sampling
  at the first iteration of TreeDSB, see \Cref{prop:init_tree_dsb}. Note that in
  \Cref{algo:treedsb}, when we sample from $\Pbb^n_{(v_j,v_{j+1})}$, we update
  $f_{t, \theta_{v_{j+1}, v_j}}$ which will be used to sample from
  $\Pbb^{n+1}_{(v_{j+1},v_{j})}$ in the next iterations.  In order to sample from
  the dynamics $\Pbb^n_{(v_j,v_{j+1})}$, we consider its Euler--Maruyama
  discretization, see \Cref{sec:details-algo} for more
  details. We describe the different steps of the algorithm in the case of a toy
  example below, see \Cref{fig:illustration_wasserstein}
  for an illustration.
\end{minipage}
\vspace{-0.3cm}
\paragraph{TreeDSB on a toy tree.} We consider a
star-shaped tree with three leaves denoted $\{1,2,3\}$ and its central node
$\{0\}$. Following \eqref{eq:pi_0}, we define $\pi^0$
 with $r=3$ and $\varphi_r=(\rmd \mu_3 / \rmd \Leb)$. During the first iteration of TreeDSB, $\mst$ is rooted at vertex $3$ and we compute samples from
the \emph{forward} path $\msp_0 =\{(3, 0), (0,1)\}$ with Brownian motions, see \Cref{prop:init_tree_dsb}, in order to learn the \emph{backward} path $\{(1, 0), (0,3)\}$. In the next iteration, we re-root the tree $\mst$ at vertex $1$ and consider the \emph{forward} path $\msp_1 =\{(1, 0), (0,2)\}$, where the edges $(1, 0)$ and $(0,2)$ are respectively given by the first iteration and the initialisation. This highlights that \emph{TreeDSB does not require to update the whole tree}. The following iterations are done similarly. At each iteration $n\in \nset$, we sample from $\pi^n$ by first sampling from $\mu_{k_n}$ at leaf $i_{k_n}$ and then following the parameterized SDEs on  the directed edges of $\mst_{k_n}$. 
\vspace{-0.2cm}
\begin{figure}[h!]
  \centering
  \includegraphics[width=.8\linewidth]{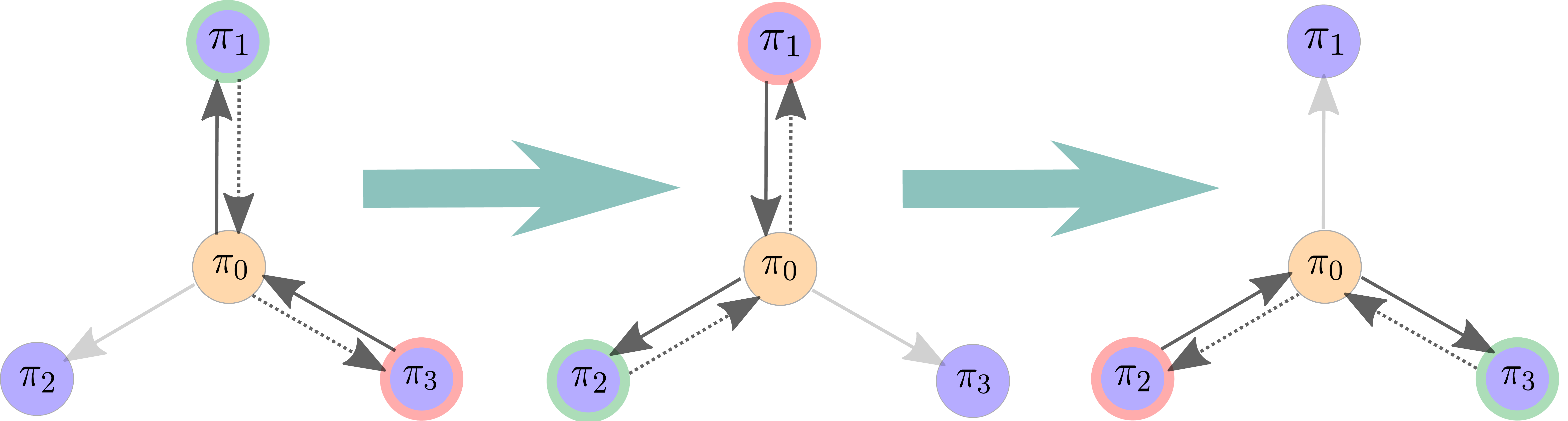}
  \caption{Illustration of one mIPF cycle solved by TreeDSB for a toy star-shaped tree. At each iteration, our method learns the \emph{backward} stochastic process (dotted arrows) that goes from the target leaf (green-circled), corresponding to the constrained marginal, to the current root of the tree (red-circled) by using samples from the \emph{forward} stochastic process (solid arrows).}
  \label{fig:illustration_wasserstein}
\end{figure}

%%% Local Variables:
%%% mode: latex
%%% TeX-master: "main"
%%% End:

\vspace{-0.3cm}
\section{Theoretical properties of mIPF}
\label{sec:theor-prop-tree}
\vspace{-0.2cm}

In this section, we study some of the theoretical properties of the \emph{static} iterates $(\pi^n)_{n \in \nset}$, that are equivalent to our \emph{dynamic} iterates according to \Cref{prop:sinkhorn_continuous}. In the case where the cost function $c$ is bounded in \eqref{eq:mot_eps}, results of convergence of
\eqref{eq:ipf_multimarginal} exist \citep{marino2020optimal,
  carlier2022linear}. However, our setting does not satisfy their
assumptions, since our transport cost is quadratic and the measures are defined
on $\rset^d$. In what follows, we provide the first non-quantitative convergence
results for \eqref{eq:ipf_multimarginal} in a \emph{non-compact} setting.

For the rest of the section, we consider a static formulation of the multi-marginal
Schr\"odinger bridge problem which is more general than \eqref{eq:tree_sb_static}, defined as
\begin{align} \tag{static-mSB}
  \label{eq:multimarginal_sb_static_nu}
     \textstyle{\pi^\star  = \argmin \ensembleLigne{\KL(\pi|\pi^0)}{\pi \in \Pmeasure^{(\ell+1)}, \eqsp \pi_i = \mu_i\eqsp, \forall i \in \mss}} \eqsp ,
\end{align}
where $\mss \subset \{0, \hdots, \ell\}$, $\pi^0\in \Pmeasure$,
$\{\mu_i\}_{i \in \mss} \in \Pmeasure^{\abs{\mss}}$.  We consider the following
set of assumptions.
\begin{assumption}\label{ass:measures}There exists a
family of measures $\{\nu_i\}_{i\in \{0, \hdots, \ell\}}$ defined on
$(\rset^d, \mathcal{B}(\rset^d))$ such that
$\pi^0 \ll \bigotimes_{i=0}^\ell \nu_i$ with density
$h=\rmd \pi^0 / (\rmd \bigotimes_{i=0}^\ell \nu_i)$ and $\mu_i \ll \nu_i$ with
density $r_i=\rmd \mu_i / \rmd \nu_i$ for any $i \in \mss$.
\end{assumption} 

\begin{assumption} \label{ass:kl_bounded} $\{\pi \in \Pmeasure^{(\ell+1)}: \KL(\pi \mid \pi^0) < \infty, \eqsp \pi_i=\mu_i, \eqsp \forall i \in \mss\}\neq \emptyset$ .
\end{assumption}

\begin{assumption} \label{ass:equivalence} There exists a family of probability measures $\{\Tilde{\mu}_j\}_{j \in \{0, \hdots, \ell\}\backslash \mss}$ such that $\pi^0 \sim \Tilde{\pi}^0$, where $\Tilde{\pi}^0=\bigotimes_{i\in \mss} \mu_i \bigotimes_{j \in \{0, \hdots, \ell\}\backslash \mss}\Tilde{\mu}_j$.
\end{assumption}
In particular, \eqref{eq:multimarginal_sb_static_nu} recovers
\eqref{eq:tree_sb_static} by considering $\nu_i=\Leb$ for any
$i\in \{0, \hdots, \ell\}$ and
$h(x_{0:\ell})=\varphi_r(x_r) \exp[-c(x_{0:\ell})/\vareps]$ in
\Cref{ass:measures}. We detail in \Cref{sec:proofs-main} how
\Cref{ass:kl_bounded} and \Cref{ass:equivalence} can be met in
\eqref{eq:tree_sb_static}. Under these assumptions, the multi-marginal
Schr\"odinger Bridge exists.

\begin{proposition} \label{prop:existence} Assume \textup{\Cref{ass:measures}}
  and \textup{\Cref{ass:kl_bounded}}. Then, there exists a unique solution
  $\pi^\star$ to \eqref{eq:multimarginal_sb_static_nu}. In addition, assume
  \textup{\Cref{ass:equivalence}}. Then, there exists a family
  $\{\psi_i^\star\}_{i \in \mss}$ of measurable functions
  $\psi_i^\star: \rset^d \to \rset$ such that
\begin{align}
    \textstyle{(\rmd \pi^\star / \rmd \pi^0)=\exp[\bigoplus_{i \in \mss}\psi_i^\star]} \quad \text{ $\pi^0$-a.s.}
\end{align}
\end{proposition}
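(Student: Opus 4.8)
The plan is to prove the two assertions separately: first existence and uniqueness of the minimiser under \Cref{ass:measures} and \Cref{ass:kl_bounded}, and then the multiplicative (Schr\"odinger-factorisation) form of the density under the additional \Cref{ass:equivalence}.

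For existence and uniqueness, I would argue by the direct method of the calculus of variations combined with convexity. The feasible set $\mcc=\{\pi\in\Pmeasure^{(\ell+1)}:\KL(\pi\mid\pi^0)<\infty,\ \pi_i=\mu_i\ \forall i\in\mss\}$ is nonempty by \Cref{ass:kl_bounded}. The functional $\pi\mapsto\KL(\pi\mid\pi^0)$ is convex and lower semicontinuous with respect to weak convergence (or setwise convergence), and the marginal constraints $\pi_i=\mu_i$ are preserved under weak limits, so $\mcc$ is convex and weakly closed. Since $\KL(\cdot\mid\pi^0)$ has compact sublevel sets for the weak topology (a standard consequence of the variational/Donsker--Varadhan representation, using that $\pi^0$ is a probability measure by \Cref{ass:measures}), a minimising sequence admits a weakly convergent subsequence whose limit lies in $\mcc$ and attains the infimum. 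Uniqueness follows from strict convexity of $\KL(\cdot\mid\pi^0)$ on $\mcc$: if two distinct minimisers existed, their midpoint would be feasible with strictly smaller divergence, a contradiction. This part is essentially standard and I expect no real obstacle beyond verifying lower semicontinuity and tightness carefully.

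For the factorisation, I would follow the Schr\"odinger-system/dual approach. The heuristic is that the Euler--Lagrange (first-order optimality) conditions for the constrained minimisation, where the constraints $\pi_i=\mu_i$ for $i\in\mss$ carry Lagrange multipliers $\psi_i^\star:\rset^d\to\rset$, yield
\begin{equation}
\textstyle{\frac{\rmd\pi^\star}{\rmd\pi^0}=\exp\bigl[\bigoplus_{i\in\mss}\psi_i^\star\bigr]}\eqsp,
\end{equation}
where $\bigoplus_{i\in\mss}\psi_i^\star(x_{0:\ell})=\sum_{i\in\mss}\psi_i^\star(x_i)$ acts only through the constrained coordinates. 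To make this rigorous rather than formal, I would use the role of \Cref{ass:equivalence}: since $\pi^0\sim\tilde\pi^0=\bigotimes_{i\in\mss}\mu_i\bigotimes_{j\notin\mss}\tilde\mu_j$, the reference measure and the product of the prescribed marginals are mutually absolutely continuous, which is exactly the equivalence condition under which Schr\"odinger's problem is known to admit a product-form solution. A clean way to establish the density form is via a perturbation argument: for any feasible competitor $\pi$, consider $\pi_t=(1-t)\pi^\star+t\pi$ and differentiate $\KL(\pi_t\mid\pi^0)$ at $t=0^+$; optimality gives $\int\log(\rmd\pi^\star/\rmd\pi^0)\,\rmd(\pi-\pi^\star)\geq0$. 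Since $\pi$ and $\pi^\star$ share the same marginals on $\mss$, admissible perturbations are precisely those whose $\mss$-marginals vanish, and the variational inequality forces $\log(\rmd\pi^\star/\rmd\pi^0)$ to be ($\pi^0$-a.s.) a function of the constrained coordinates alone, i.e. a sum $\bigoplus_{i\in\mss}\psi_i^\star$. The integrability/finiteness supplied by \Cref{ass:kl_bounded} is what lets me differentiate under the integral and conclude that the logarithmic density is $\pi^0$-integrable and hence well-defined.

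The main obstacle, I expect, is the rigorous passage from the formal first-order condition to the additive-separability statement: one must show that a measurable function on $(\rset^d)^{\ell+1}$ whose integral against \emph{every} signed measure with vanishing $\mss$-marginals is zero must itself be ($\pi^0$-a.s.) of the separated form $\sum_{i\in\mss}\psi_i^\star(x_i)$, with no dependence on the unconstrained coordinates $x_j$, $j\notin\mss$. This is a disintegration/measurable-selection argument that relies crucially on \Cref{ass:equivalence} to guarantee that the relevant conditional densities are strictly positive $\pi^0$-a.s., so that the separation is not obstructed by null sets. An alternative, and perhaps cleaner, route that I would keep in reserve is to invoke the existence theory for the multi-marginal Schr\"odinger system directly: under equivalence of $\pi^0$ with the product of the marginals, the system $\pi^0$-a.s.\ admits potentials $\{\psi_i^\star\}_{i\in\mss}$ and the minimiser is recovered as $\pi^\star=\exp[\bigoplus_{i\in\mss}\psi_i^\star]\,\pi^0$; uniqueness of $\pi^\star$ from the first part then pins down the density representation. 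Either way, \Cref{ass:equivalence} is the hypothesis doing the heavy lifting, and checking positivity of the conditionals is the delicate technical point.
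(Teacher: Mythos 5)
Your first part (existence and uniqueness) is fine and essentially matches the paper, which gets existence from the fact that $\Pmeasure_\mss$ is convex and closed in total variation together with \cite[Theorem 2.1]{csiszar1975divergence} under \Cref{ass:kl_bounded}, and uniqueness from strict convexity; your direct-method variant with weak compactness of KL sublevel sets is an acceptable substitute.

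The second part, however, has a genuine gap exactly at the point you flag as "the main obstacle," and the perturbation route you propose is known not to close it. Two problems. First, the differentiation of $t\mapsto\KL((1-t)\pi^\star+t\pi\mid\pi^0)$ at $t=0^+$ requires $\log(\rmd\pi^\star/\rmd\pi^0)\in\mathrm{L}^1(\pi)$ for the competitor $\pi$, which is not available from \Cref{ass:kl_bounded}; the potentials in Schr\"odinger problems are in general not even $\pi^0$-integrable, so the Euler--Lagrange inequality cannot be written down rigorously at this level of generality. Second, and more fundamentally, even granting the variational inequality, the conclusion that $\log(\rmd\pi^\star/\rmd\pi^0)$ is \emph{exactly} of the form $\bigoplus_{i\in\mss}\psi_i^\star$ does not follow from an annihilator/duality argument: the subspace $\bigoplus_{i\in\mss}\mathrm{L}^1(\mu_i)$ is in general \emph{not} closed in $\mathrm{L}^1(\pi^\star)$ (the paper needs the separate \Cref{ass:closeness} precisely to assume this later, and only for the convergence of mIPF, not for this proposition), so duality only places the log-density in the closure of the set of additively separable functions. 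The paper circumvents both issues by the approximation scheme of \cite{nutz2021introduction}: it minimises over the sets $\Pmeasure^n_\mss$ cut out by finitely many bounded moment constraints, where bounded potentials $\varphi_i^n$ exist elementarily, obtains $(\rmd\pi^\star/\rmd\pi^0)=\lim_n\exp[\bigoplus_{i\in\mss}\varphi_i^n]$ $\pi^0$-a.s.\ via TV-convergence of the approximants, and then uses \Cref{lemma:nutz_1} and \Cref{lemma:nutz_2} --- a pointwise, set-theoretic argument on full-measure sets under the equivalent product measure $\tilde\pi^0$ from \Cref{ass:equivalence}, after normalising the $\varphi_i^n$ at a well-chosen base point $x^\star$ --- to show that the a.s.\ limit of the \emph{sums} is itself a \emph{sum} of limits $\psi_i^\star$. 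That combinatorial step is the actual content of the proof and is absent from your proposal; your fallback of "invoking the existence theory for the multi-marginal Schr\"odinger system directly" is in effect an appeal to the very result being proved, since the two-marginal theory must genuinely be extended to the multi-marginal case here.
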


In order to establish the existence and uniqueness result of
\Cref{prop:existence}, we extend results from \cite{nutz2021introduction} to the
multi-marginal setting. A consequence of \Cref{prop:existence} is that the
iterates of \eqref{eq:ipf_multimarginal} can be described using potentials. 

\begin{corollary}\label{corollary:existence} Assume \textup{\Cref{ass:measures}},
  \textup{\Cref{ass:kl_bounded}} and \textup{\Cref{ass:equivalence}}. Let
  $(\pi^n)_{n \in \nset}$ be the sequence given by
  \eqref{eq:ipf_multimarginal}. Then, for any $n \in \nsets$ with $k_n = (n-1)\mod(K)$ and $q_n \in \nset$ such that $n = q_n K + k_n +1$, there exists a family of measurable functions
  $\{\psi^{q_n+1}_{i_0}, \hdots, \psi^{q_n+1}_{i_{k_n}}, \psi^{q_n}_{i_{k_n+1}}, \hdots,  \psi^{q_n}_{i_{K-1}}\}$ such that 
\begin{align}
    \textstyle{(\rmd \pi^n / \rmd \pi^0)(x_{0:\ell})=\exp[\bigoplus_{j=0}^{k_n} \psi^{q_n+1}_{i_j}(x_{i_j}) \bigoplus_{j=k_n+1}^{K-1} \psi^{q_n}_{i_j}(x_{i_j})]} \quad \text{ $\pi^0$-a.s.}
\end{align}
\end{corollary}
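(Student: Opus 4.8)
The plan is to argue by induction on $n$, using the observation that each \eqref{eq:ipf_multimarginal} update is itself a \emph{one-marginal} instance of \eqref{eq:multimarginal_sb_static_nu}: passing from $\pi^n$ to $\pi^{n+1}$ amounts to solving \eqref{eq:multimarginal_sb_static_nu} with reference measure $\pi^n$ and singleton constraint set $\{j_n^\star\}$, where $j_n^\star := i_{k_{n+1}}$ is the marginal constrained at that step (recall $(k_n+1)$ is identified with $n\mod(K)=k_{n+1}$). Provided the hypotheses of \Cref{prop:existence} hold for this sub-problem, its second conclusion, specialized to a single constrained index, furnishes a measurable $\psi_{j_n^\star}:\rset^d\to\rset$ with $(\rmd\pi^{n+1}/\rmd\pi^n)(x_{0:\ell})=\exp[\psi_{j_n^\star}(x_{j_n^\star})]$, $\pi^n$-a.s. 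Concretely, the single-marginal $\KL$ projection keeps the conditional law $\pi^n_{\cdot|j_n^\star}$ and replaces the marginal, so $\psi_{j_n^\star}=\log(\rmd\mu_{j_n^\star}/\rmd\pi^n_{j_n^\star})$; the decisive feature is that this density depends on the \emph{single} coordinate $x_{j_n^\star}$ and is strictly positive.

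Given this factor, the inductive step is a one-line chain-rule computation. Writing the induction hypothesis as $(\rmd\pi^n/\rmd\pi^0)=\exp[\bigoplus_{j=0}^{k_n}\psi^{q_n+1}_{i_j}\bigoplus_{j=k_n+1}^{K-1}\psi^{q_n}_{i_j}]$ and using $\rmd\pi^{n+1}/\rmd\pi^0=(\rmd\pi^{n+1}/\rmd\pi^n)(\rmd\pi^n/\rmd\pi^0)$, I would add $\psi_{j_n^\star}(x_{j_n^\star})$ to the existing potential carried by coordinate $j_n^\star$, leaving every other potential untouched. Since $j_n^\star=i_{k_{n+1}}$, this is exactly the coordinate whose level index is incremented by one: from $q_n$ to $q_{n+1}+1=q_n+1$ within a cycle ($k_{n+1}=k_n+1$), and from $q_n+1$ to $q_{n+1}+1=q_n+2$ at a cycle boundary ($k_{n+1}=0$, $q_{n+1}=q_n+1$). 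Checking these two cases against $k_n=(n-1)\mod(K)$ and $n=q_nK+k_n+1$ confirms that the updated family of potentials is precisely the one in the statement. The base case $n=0$ is trivial, with all potentials set to $0$ and $\rmd\pi^0/\rmd\pi^0=1$.

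The main work, and the main obstacle, is verifying that the hypotheses of \Cref{prop:existence} are inherited by each single-constraint sub-problem so that they propagate along the induction. The clean device is to maintain the invariant $\pi^n\sim\pi^0$: since every step multiplies the density by a strictly positive exponential, $\pi^{n+1}\sim\pi^n$, hence $\pi^n\sim\pi^0$ for all $n$. From this, \Cref{ass:measures} transfers with the same family $\{\nu_i\}$ (as $\pi^n\ll\pi^0\ll\bigotimes_i\nu_i$ and $\mu_{j_n^\star}\ll\nu_{j_n^\star}$), and \Cref{ass:equivalence} transfers by taking the auxiliary family to be $\{\mu_i\}_{i\in\mss\setminus\{j_n^\star\}}\cup\{\Tilde{\mu}_j\}_{j\notin\mss}$, whence the required product equals $\Tilde{\pi}^0\sim\pi^0\sim\pi^n$. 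The point deserving genuine care is \Cref{ass:kl_bounded} for the sub-problem: nonemptiness of $\{\pi:\KL(\pi|\pi^n)<\infty,\ \pi_{j_n^\star}=\mu_{j_n^\star}\}$ reduces, by disintegrating $\KL(\cdot|\pi^n)$ along coordinate $j_n^\star$, to finiteness of $\KL(\mu_{j_n^\star}|\pi^n_{j_n^\star})$ for the candidate $\mu_{j_n^\star}\otimes\pi^n_{\cdot|j_n^\star}$, which I would control through the equivalence $\mu_{j_n^\star}\sim\pi^n_{j_n^\star}$ together with the entropy bound $\ent(\mu_{j_n^\star})<\infty$ from \Cref{ass:mu_i}. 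Once this invariant is secured, the induction closes and the displayed factorization follows.
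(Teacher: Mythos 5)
Your overall strategy coincides with the paper's: induction on $n$, recognizing each \eqref{eq:ipf_multimarginal} update as a single-constraint instance of \eqref{eq:multimarginal_sb_static_nu} with reference measure $\pi^n$, invoking \Cref{prop:existence} to produce the new potential on coordinate $i_{k_n+1}$, and concluding via the chain rule $(\rmd\pi^{n+1}/\rmd\pi^0)=(\rmd\pi^{n+1}/\rmd\pi^n)\,(\rmd\pi^n/\rmd\pi^0)$. Your index bookkeeping (incrementing the level of the single coordinate $i_{k_n+1}$, with the two cases inside a cycle and at a cycle boundary) is also correct, as is the transfer of \Cref{ass:measures} and \Cref{ass:equivalence} to the sub-problems via $\pi^n\sim\pi^0$.

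The gap sits exactly where you say the genuine care is needed: verifying \Cref{ass:kl_bounded} for the sub-problem at stage $n$. You propose the feasible candidate $\mu_{i_{k_n+1}}\otimes\pi^n_{\cdot|i_{k_n+1}}$ and claim that $\KL(\mu_{i_{k_n+1}}\mid\pi^n_{i_{k_n+1}})<\infty$ follows from the equivalence $\mu_{i_{k_n+1}}\sim\pi^n_{i_{k_n+1}}$ together with $\ent(\mu_{i_{k_n+1}})<\infty$. That implication is false in general: writing $\KL(\mu\mid\nu)=-\ent(\mu)-\int\log(\rmd\nu/\rmd\Leb)\,\rmd\mu$, equivalence only guarantees that $\rmd\nu/\rmd\Leb$ is positive $\mu$-a.e., not that $\log(\rmd\nu/\rmd\Leb)$ is $\mu$-integrable (take $\mu$ heavy-tailed with finite entropy and $\nu$ Gaussian; the divergence is $+\infty$). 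The paper sidesteps this by exhibiting a different witness: the Csisz\'ar Pythagorean identity \eqref{eq:bonne_def}, $\KL(\pi^\star\mid\pi^0)=\KL(\pi^\star\mid\pi^n)+\sum_{i=1}^n\KL(\pi^i\mid\pi^{i-1})$, shows that the solution $\pi^\star$ of the full problem, which satisfies $\pi^\star_{i_{k_n+1}}=\mu_{i_{k_n+1}}$ and $\KL(\pi^\star\mid\pi^n)\le\KL(\pi^\star\mid\pi^0)<\infty$, is itself feasible for every sub-problem. Replacing your witness by this one closes the induction; the remainder of your argument is sound.
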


In the tree-based setting, \Cref{corollary:existence} explains why the \eqref{eq:ipf_multimarginal} iterations preserve the tree-based Markovian nature of $\pi^0$. We now prove that the marginal $\pi^n_i$ converges to $\mu_i$ for any $i\in \mss$, as $n$ goes to infinity, $\ie$, we have marginal convergence on the leaves of $\mst$.

\begin{proposition}\label{prop:marginal} Assume \textup{\Cref{ass:measures}} and \textup{\Cref{ass:kl_bounded}}. Let $(\pi^n)_{n \in \nset}$ be the sequence given by
  \eqref{eq:ipf_multimarginal}. Then, we have
  $\lim_{n \to \infty}\| \pi^n_i - \mu_i\|_{\mathrm{TV}}= 0$ for any
  $i \in \mss$.
\end{proposition}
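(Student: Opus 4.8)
The plan is to read \eqref{eq:ipf_multimarginal} as a sequence of information projections (I-projections) onto the linear constraint sets $\mcc_i=\{\pi\in\Pmeasure^{(\ell+1)}:\pi_i=\mu_i\}$, cycling through $i\in\mss$, and to exploit the Pythagorean identity attached to such projections. By \Cref{ass:kl_bounded} there is a fixed $\bar{\pi}$ with $\bar{\pi}_i=\mu_i$ for every $i\in\mss$ and $\KL(\bar{\pi}|\pi^0)<\infty$; in particular $\bar{\pi}\in\mcc_{i_{k_n+1}}$ for \emph{every} $n$, which is exactly what makes the cyclic scheme telescope. Since at step $n$ the constraint set is a linear family containing $\bar{\pi}$ with finite relative entropy, $\pi^{n+1}$ is the unique I-projection of $\pi^n$ onto $\mcc_{i_{k_n+1}}$ (well-posedness following as in \Cref{prop:existence} applied to the single-marginal problem with reference $\pi^n$), and it satisfies the Pythagorean identity $\KL(\bar{\pi}|\pi^n)=\KL(\bar{\pi}|\pi^{n+1})+\KL(\pi^{n+1}|\pi^n)$. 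Iterating this shows that $n\mapsto\KL(\bar{\pi}|\pi^n)$ is nonincreasing and finite.

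Summing the telescoping identity and using $\KL(\bar{\pi}|\pi^n)\geq 0$ gives $\sum_{n\in\nset}\KL(\pi^{n+1}|\pi^n)\leq\KL(\bar{\pi}|\pi^0)<\infty$, hence $\KL(\pi^{n+1}|\pi^n)\to 0$. I would also record the explicit form of each projection: because only the $i_{k_n+1}$-marginal is constrained, $(\rmd\pi^{n+1}/\rmd\pi^n)(x_{0:\ell})=(\rmd\mu_{i_{k_n+1}}/\rmd\pi^n_{i_{k_n+1}})(x_{i_{k_n+1}})$, which is well defined since $\mu_{i_{k_n+1}}=\bar{\pi}_{i_{k_n+1}}\ll\pi^n_{i_{k_n+1}}$; consequently $\pi^{n+1}_{i_{k_n+1}}=\mu_{i_{k_n+1}}$ holds exactly and $\KL(\pi^{n+1}|\pi^n)=\KL(\mu_{i_{k_n+1}}|\pi^n_{i_{k_n+1}})$. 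By Pinsker's inequality, $\KL(\pi^{n+1}|\pi^n)\to 0$ yields $\tvnorm{\pi^{n+1}-\pi^n}\to 0$, and since marginalization is a contraction for total variation, $\tvnorm{\pi^{n+1}_i-\pi^n_i}\to 0$ for every $i$.

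It remains to upgrade this to convergence of the full sequence for each fixed $i\in\mss$. Within any window of $K$ consecutive updates each constraint $\mcc_i$ is enforced exactly once, so for every $n$ there is an index $m_n\leq n$ with $n-m_n\leq K-1$ and $\pi^{m_n}_i=\mu_i$. A telescoping estimate $\tvnorm{\pi^n_i-\mu_i}=\tvnorm{\pi^n_i-\pi^{m_n}_i}\leq\sum_{m=m_n}^{n-1}\tvnorm{\pi^{m+1}_i-\pi^m_i}$, a sum of at most $K-1$ terms each tending to $0$ (with indices diverging as $n\to\infty$), then gives $\tvnorm{\pi^n_i-\mu_i}\to 0$. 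The main obstacle is the first paragraph: rigorously justifying the Pythagorean identity, and the existence/uniqueness of the single-marginal I-projections, in the \emph{non-compact} continuous state-space with a quadratic (hence unbounded) cost, so that no compactness or cost-boundedness is invoked. This is precisely where one must extend the I-projection arguments of \cite{ruschendorf1995convergence} beyond the bounded-cost settings of \cite{marino2020optimal,carlier2022linear}; the passage from relative entropy to total variation via Pinsker and the periodicity argument are then routine.
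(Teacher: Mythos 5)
Your proposal is correct and follows essentially the same route as the paper: the Pythagorean identity for I-projections onto the (linear, convex) marginal-constraint sets, telescoped against a fixed feasible measure of finite relative entropy, yields $\sum_n \KL(\pi^{n+1}\mid\pi^n)<\infty$, after which Pinsker's inequality and a telescoping bound over one mIPF cycle give the marginal convergence. The only cosmetic differences are that the paper anchors the telescoping at the optimizer $\pi^\star$ from \Cref{prop:existence} rather than an arbitrary feasible $\bar\pi$, and the "obstacle" you flag is handled there simply by citing Csisz\'ar's general results, which require no compactness or cost boundedness.
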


The previous result does not ensure the convergence of $(\pi^n)_{n \in \nset}$ to the solution to \eqref{eq:multimarginal_sb_static_nu}. In particular, \Cref{prop:marginal} does not provide the convergence of the marginals on the nodes $v \in \msv \backslash \mss$, which is key to compute regularized Wasserstein barycenters with TreeDSB. Relying on additional assumptions, we now derive the convergence of \eqref{eq:ipf_multimarginal}.

\begin{assumption}\label{ass:closeness} $\bigoplus_{i \in \mss}\mathrm{L}^1(\mu_i) \subset \mathrm{L}^1(\pi^\star)$ is closed.
\end{assumption}

\begin{assumption}\label{ass:ratio_bounded}
  There exist $\bar{c}\in (0, \infty)$ such that $\exp(\psi^n_{i_k}-\psi^{n+1}_{i_k})\leq \bar{c}$, for any $n\in \nset$, any
  $k\in \{0, \hdots, K-2\}$.
\end{assumption}

These assumptions can be seen as multi-marginal extensions of the ones of
\cite{ruschendorf1995convergence}, see \Cref{sec:proofs-main} for a discussion
and examples.

\begin{proposition} \label{prop:conv_mipf}Assume \textup{\Cref{ass:measures}},
  \textup{\Cref{ass:kl_bounded}}, \textup{\Cref{ass:equivalence}},
  \textup{\Cref{ass:closeness}} and \textup{\Cref{ass:ratio_bounded}}. Let
  $(\pi^n)_{n \in \nset}$ be the sequence given by
  \eqref{eq:ipf_multimarginal}. Then, we have
  $\lim_{n \to \infty} \| \pi^n -\pi^\star\|_{\mathrm{TV}}= 0$, where
  $\pi^\star$ is given in \Cref{prop:existence}.
\end{proposition}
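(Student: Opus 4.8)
The plan is to adapt the information–geometric proof of IPF convergence from \cite{ruschendorf1995convergence} to the cyclic multi-marginal scheme, using \Cref{prop:existence} and \Cref{corollary:existence} to represent every iterate through its potentials. Throughout, write $S_n = \log(\rmd\pi^n/\rmd\pi^0)=\bigoplus_{j=0}^{k_n}\psi^{q_n+1}_{i_j}\bigoplus_{j=k_n+1}^{K-1}\psi^{q_n}_{i_j}$ and $S^\star=\bigoplus_{i\in\mss}\psi^\star_i$, which are well defined by \Cref{corollary:existence} and \Cref{prop:existence}. Since $\KL(\pi^\star|\pi^0)<\infty$ (by \Cref{ass:kl_bounded} and optimality of $\pi^\star$), each $\pi^n$ has finite divergence to $\pi^0$, so the I-projection defining the next iterate exists and is unique.

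First I would establish a monotone energy decay. Each update \eqref{eq:ipf_multimarginal} is the information projection of $\pi^n$ onto the affine set $\{\pi:\pi_{i_{k_n+1}}=\mu_{i_{k_n+1}}\}$, which contains the feasible point $\pi^\star$; since this is a linear family of constraints, the Pythagorean identity gives $\KL(\pi^\star|\pi^n)=\KL(\pi^\star|\pi^{n+1})+\KL(\pi^{n+1}|\pi^n)$ for every $n$. An induction then shows $\KL(\pi^\star|\pi^n)$ is finite and non-increasing, hence convergent to some $L\geq 0$, while telescoping yields $\sum_n\KL(\pi^{n+1}|\pi^n)=\KL(\pi^\star|\pi^0)-L<\infty$. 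Because only one coordinate changes per step and $\pi^{n+1}$ already carries the constrained marginal, this step-divergence equals $\KL(\mu_{i_{k_n+1}}|\pi^n_{i_{k_n+1}})$; its summability both re-derives \Cref{prop:marginal} and shows $\KL(\pi^{n+1}|\pi^n)\to 0$.

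The heart of the argument is to upgrade $\KL(\pi^\star|\pi^n)\to L$ into $L=0$. Using the potential forms, $\KL(\pi^\star|\pi^n)=\int(S^\star-S_n)\,\rmd\pi^\star=\KL(\pi^\star|\pi^0)-\sum_{i\in\mss}\int\psi^{(\cdot)}_i\,\rmd\mu_i$, so $L=0$ is equivalent to the integrated potentials attaining their optimal values. The per-cycle increment of the $i$-th potential is $\log(\rmd\mu_i/\rmd\pi^{(\cdot)}_i)$, which \Cref{ass:ratio_bounded} bounds below, equivalently bounding the marginal density $\rmd\pi^n_i/\rmd\mu_i$ above by $\bar{c}$; this uniform domination plays the role of compactness in our non-compact state space. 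Combining it with the marginal convergence of \Cref{prop:marginal}, I would show that $(S_n)$ is Cauchy in $\mathrm{L}^1(\pi^\star)$, and then invoke the closedness in \Cref{ass:closeness} to conclude that the limit $S^\infty$ is again a sum of single-variable functions, $S^\infty=\bigoplus_{i\in\mss}\psi^\infty_i$ with $\psi^\infty_i\in\mathrm{L}^1(\mu_i)$. The measure $\pi^\infty$ with $\rmd\pi^\infty/\rmd\pi^0\propto\exp(S^\infty)$ is then feasible — its constrained marginals are the limits of $\pi^n_i$, namely $\mu_i$ — and has the exponential potential form characterizing the I-projection, so the uniqueness in \Cref{prop:existence} forces $\pi^\infty=\pi^\star$. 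Identifying the limit gives $\sum_i\int\psi^\infty_i\,\rmd\mu_i=\KL(\pi^\star|\pi^0)$, i.e. $L=0$.

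Finally, $\KL(\pi^\star|\pi^n)\to 0$ together with Pinsker's inequality yields $\|\pi^n-\pi^\star\|_{\mathrm{TV}}\to 0$, which is the claim. The main obstacle is the third step in the non-compact setting: without compactness one cannot directly extract a limit of $(\pi^n)$, and the delicate point is to make \Cref{ass:ratio_bounded} and \Cref{ass:closeness} interact so that the asynchronously updated potentials — different leaves sitting at cycle index $q_n$ or $q_n+1$, as tracked by \Cref{corollary:existence} — converge jointly in $\mathrm{L}^1(\pi^\star)$ to a genuinely separable limit. The passage from vanishing step-divergence to $\mathrm{L}^1$-convergence of the potentials, where the logarithm is controlled only on one side by $\bar{c}$, is the technical crux.
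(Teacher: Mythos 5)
Your skeleton is reasonable and you correctly identify the crux, but the crux is exactly where the proposal has a genuine gap: the claim that $(S_n)$ is Cauchy in $\mathrm{L}^1(\pi^\star)$ is asserted, not proved, and the ingredients you cite do not deliver it. The summability of the step-divergences controls $\sum_n\int(S_{n+1}-S_n)\,\rmd\pi^{n+1}$, where each increment $S_{n+1}-S_n$ is a single-coordinate potential difference integrated against the \emph{current} iterate's marginal, which there equals $\mu_{i_{k_n+1}}$. Converting this into absolute integrability costs an additive constant per step (one only gets $\int\absLigne{S_{n+1}-S_n}\,\rmd\pi^{n+1}\leq \KL(\pi^{n+1}\mid\pi^n)+2/\rme$, cf.\ \Cref{sec:KL_integrand_bound}), so the increments are not summable in $\mathrm{L}^1$, and the one-sided bound $\bar c$ from \Cref{ass:ratio_bounded} does not control the logarithm from below. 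Changing the integrating measure from $\pi^{n+1}$ to $\pi^\star$ is precisely where a compactness-type argument is needed in the non-compact setting, and none is supplied. A second, smaller gap: even granting an $\mathrm{L}^1(\pi^\star)$ limit $S^\infty$, the measure $\rmd\pi^\infty/\rmd\pi^0\propto\exp(S^\infty)$ is only known to be a probability measure with the right marginals if the densities $\exp(S_n)$ are uniformly integrable w.r.t.\ the reference measure; convergence of the potentials against $\pi^\star$ (which need not be equivalent to $\pi^0$) does not give this for free.

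The paper's proof fills exactly these holes by a different mechanism. It first shows (\Cref{lemma:ruschendorf_1}, using \Cref{ass:closeness} via Kober's theorem to get $c_i\normLigne{f}_{\mathrm{L}^1(\pi^\star)}\geq\normLigne{f_i}_{\mathrm{L}^1(\mu_i)}$) that $\sup_n\int\absLigne{\psi^n_i}\,\rmd\mu_i<\infty$, then combines this with the explicit formula for $\KL(\pi^n\mid\bar\pi)$ (\Cref{lemma:ruschendorf_2}) and \Cref{ass:ratio_bounded} to verify the de la Vall\'ee-Poussin criterion with $u\mapsto u\log u$, i.e.\ uniform integrability of $\{\exp[\bigoplus_i\psi^n_i]\}_n$ w.r.t.\ the reference measure (\Cref{lemma:ruschendorf_4}). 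Dunford--Pettis then gives relative compactness in the weak $\mathrm{L}^1$ ($\tau$-) topology; any limit point $\tilde\pi$ lies in $\Pmeasure_\mss$ by \Cref{prop:marginal}, and lower semicontinuity of $\KL$ plus the identity of \Cref{lemma:ruschendorf_2} and the convergence $\exp[\psi^n_{i_k}-\psi^{n+1}_{i_k}]\to 1$ in $\mathrm{L}^1(\mu_{i_k})$ yield $\KL(\tilde\pi\mid\bar\pi)\leq\KL(\pi^\star\mid\bar\pi)$, forcing $\tilde\pi=\pi^\star$ by uniqueness. Total-variation convergence is then obtained not by Pinsker applied to $\KL(\pi^\star\mid\pi^n)$ (which is never shown to vanish) but from the convergence $\KL(\pi^n\mid\bar\pi)\to\KL(\pi^\star\mid\bar\pi)$ together with the convexity argument of Csisz\'ar. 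If you want to salvage your route, you would essentially have to prove this uniform integrability anyway, at which point you are running the paper's argument.
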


To the best of our knowledge, \Cref{prop:conv_mipf} is the first convergence
result of \eqref{eq:ipf_multimarginal} without assuming that the space is
compact or that the cost is bounded. We highlight that traditional techniques to
prove the convergence of IPF cannot be easily extended to the multi-marginal
setting as pointed by \citet{carlier2022linear}. In the case of bounded
cost, quantitative results exist \citep{marino2020optimal,
  carlier2022linear}. We leave the study of such results  in the
\emph{unbounded} cost setting for future work.

%%% Local Variables:
%%% mode: latex
%%% TeX-master: "main"
%%% End:

\section{Application to Wasserstein barycenters} \label{sec:barycenter}

Although \Cref{algo:treedsb} can be applied to trees $\mst$ with fixed marginals on the leaves, one case of particular interest is star-shaped trees, \ie,
trees with a central node, denoted by index $0$, and such that
$\mss=\{1, \hdots,\ell\}$ (see \Cref{fig:illustration_wasserstein} for an
illustration with $\ell=3$). In this section, we draw a link between
\eqref{eq:tree_sb_static} and regularized Wasserstein barycenters.  We
 recall the definition of the Wasserstein distance of order 2 with
$\vareps$-entropic regularization between  $\mu$ and $\nu$
\citep[Chapter 4]{peyre2019computational}
\begin{align}  
    \textstyle{ W_{2, \vareps}^2(\mu, \nu) = \inf \{\int\|x_1-x_0\|^2\rmd \pi(x_0,x_1) -\vareps\ent(\pi): \pi \in \Pmeasure^{(2)}, \pi_0=\mu, \pi_1=\nu\} \eqsp .}\label{eq:true_reg_wasserstein}
\end{align}
In this work, we consider the $(\ell \vareps,(\ell-1)\vareps)$-doubly-regularized Wasserstein-2 barycenter problem \citep{chizat2023doubly} defined as follows
\begin{align} \tag{regWB} \label{eq:wasserstein_barycenter_pb}
    \textstyle{\mu^\star_\vareps=\arg \min \{ \sum_{i=1}^\ell w_{i}W^2_{2, \vareps/w_i}(\mu, \mu_i) + (\ell-1)\vareps \ent(\mu) : \mu \in \Pmeasure\} \eqsp, }
\end{align}
where $(w_i)_{i \in \{1, \hdots,\ell\}}\in (0, +\infty)^\ell$. The following proposition shows the
 equivalence between the barycenter problem \eqref{eq:wasserstein_barycenter_pb}
 and the multi-marginal Schr\"odinger bridge problem \eqref{eq:tree_sb_static} over $\mst$.  In particular, it allows us to use TreeDSB to estimate the solution $\mu^\star_\varepsilon$ of \eqref{eq:wasserstein_barycenter_pb}.
\begin{proposition}\label{prop:wasserstein_barycenter_equi}Let $\vareps>0$. Assume \Cref{ass:mu_i}. Also assume that $\mst$ is a star-shaped tree with central node indexed by $0$, and that the reference measure of \eqref{eq:tree_sb_static} defined in \eqref{eq:pi_0} verifies $r=i_{K-1}$ and $\varphi_r=\rmd \mu_{i_{K-1}}/\rmd \Leb > 0$. Under
  \textup{\Cref{ass:kl_bounded}},
  \eqref{eq:wasserstein_barycenter_pb} has a unique solution 
  $\pi^\star_0$, where $\pi^\star$ solves
  \eqref{eq:tree_sb_static}.  
\end{proposition}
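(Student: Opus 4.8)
The plan is to identify \eqref{eq:tree_sb_static} and \eqref{eq:wasserstein_barycenter_pb} as one and the same entropy-regularized problem, viewed respectively over joint couplings $\pi$ and over the single central marginal $\mu=\pi_0$, the two being linked by the projection $\pi\mapsto\pi_0$. First I would rewrite the objective of \eqref{eq:tree_sb_static} in entropic form. Since $\mst$ is star-shaped with centre $0$ and $c(x_{0:\ell})=\sum_{i=1}^\ell w_i\|x_0-x_i\|^2$, and since $\pi^0$ in \eqref{eq:pi_0} has Lebesgue density proportional to $\exp[-c/\vareps]\varphi_r(x_r)$ (which $\varphi_r>0$ makes a fully supported probability density with finite normaliser), a direct computation gives, for every admissible $\pi$ (i.e.\ $\pi_i=\mu_i$ for all leaves $i\in\mss$),
\begin{equation}
  \textstyle \KL(\pi\mid\pi^0)=\tfrac1\vareps\,G(\pi)+\mathrm{const},\qquad G(\pi):=\int c\,\rmd\pi-\vareps\,\ent(\pi),
\end{equation}
where the constant collects $\ent(\mu_r)$ and the log-normaliser and is finite by \Cref{ass:mu_i}. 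Hence $\pi^\star$, which exists and is unique by \Cref{prop:existence} (\Cref{ass:equivalence} being secured by $\varphi_r>0$), is also the unique minimiser of $G$.

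Second, I would carry out the entropy decomposition that turns $G$ into the barycenter functional. Writing $\mu=\pi_0$ and $\pi_{0,i}$ for the $(0,i)$-marginal, the chain rule together with the fact that conditioning does not increase entropy yields
\begin{equation}
  \textstyle \ent(\pi)\le \sum_{i=1}^\ell\ent(\pi_{0,i})-(\ell-1)\,\ent(\mu),
\end{equation}
with equality precisely when the leaves are conditionally independent given $x_0$, i.e.\ when $\pi$ factorises along the star. Substituting this and $\int c\,\rmd\pi=\sum_i w_i\int\|x_0-x_i\|^2\rmd\pi_{0,i}$ into $G$, and using the identity $w_iW_{2,\vareps/w_i}^2(\mu,\mu_i)=\inf\{w_i\int\|x_0-x_i\|^2\rmd\gamma-\vareps\ent(\gamma):\gamma_0=\mu,\ \gamma_1=\mu_i\}$ (valid because $w_i\cdot(\vareps/w_i)=\vareps$), I obtain the pointwise lower bound
\begin{equation}
  \textstyle G(\pi)\ \ge\ \sum_{i=1}^\ell w_i W_{2,\vareps/w_i}^2(\pi_0,\mu_i)+(\ell-1)\vareps\,\ent(\pi_0)\ =\ F(\pi_0),
\end{equation}
where $F$ is the objective of \eqref{eq:wasserstein_barycenter_pb}.

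Third, I would show this bound is attained and deduce the correspondence. Given any $\mu\in\Pmeasure$ with $F(\mu)<\infty$, I construct $\pi^\mu$ by gluing, conditionally independently over $x_0\sim\mu$, the (unique) optimal entropic couplings $\pi^\mu_{0,i}$ of $(\mu,\mu_i)$ realising each $W_{2,\vareps/w_i}^2(\mu,\mu_i)$; then $\pi^\mu$ is admissible and, being star-factorised with optimal bivariates, achieves equality above, so $G(\pi^\mu)=F(\mu)$. Consequently $\min_\pi G=\min_\mu F$ and $\pi\mapsto\pi_0$ maps minimisers of $G$ to minimisers of $F$. Applying the lower bound to $\pi^\star$ gives $F(\pi^\star_0)\le G(\pi^\star)=\min_\pi G=\min_\mu F$, hence $\pi^\star_0$ solves \eqref{eq:wasserstein_barycenter_pb}. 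For uniqueness, any minimiser $\mu$ of $F$ satisfies $G(\pi^\mu)=F(\mu)=\min_\pi G$, so $\pi^\mu=\pi^\star$ by uniqueness in \Cref{prop:existence}, whence $\mu=\pi^\mu_0=\pi^\star_0$.

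The main obstacle I anticipate is the entropy bookkeeping on the non-compact space $\rset^d$: the decomposition, the inequality, and the reconstruction all require the differential entropies $\ent(\pi),\ent(\mu),\ent(\pi_{0,i})$ and the regularised costs $W_{2,\vareps/w_i}^2(\mu,\mu_i)$ to be simultaneously finite, so that none of the manipulations produces an $\infty-\infty$ indeterminacy. I would control this by restricting attention to $\mu$ with $F(\mu)<\infty$ (the inequality being vacuous otherwise), invoking \Cref{ass:mu_i} to keep $\ent(\mu_i)$ finite, \Cref{ass:kl_bounded} to keep $\pi^\star$ and the additive constants finite, and checking that the reconstructed $\pi^\mu$ has $\KL(\pi^\mu\mid\pi^0)<\infty$ so that it is genuinely admissible. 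A secondary point to verify carefully is that $\varphi_r>0$ indeed yields the equivalence of \Cref{ass:equivalence} and $\pi^\star_0\ll\Leb$, so that the barycenter is a bona fide probability density.
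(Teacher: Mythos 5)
Your argument is correct, but it reaches the result by a genuinely different route than the paper. The paper obtains \Cref{prop:wasserstein_barycenter_equi} as a corollary of a general equivalence, valid for arbitrary trees, between \eqref{eq:tree_sb_static} and a regularized Wasserstein propagation problem (\Cref{prop:wasserstein_propagation}). Its key computation (\Cref{lemma:equiv_KL}) is an \emph{exact} decomposition of $\varepsilon\KL(\pi\mid\pi^0)$ into edgewise entropic transport costs plus weighted node entropies, but this identity only holds for $\pi$ that factorize along the tree; the paper therefore first proves the equivalence for the problem restricted to $\Pmeasure_{\mst_r}$ (using the gluing lemma in both directions, exactly as in your attainment step), and then invokes the Schr\"odinger-potential representation of \Cref{prop:existence} to argue that the unrestricted minimizer $\pi^\star$ is automatically Markovian on the tree, so the restricted and unrestricted problems coincide. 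You instead work directly on the star, recast $\KL(\cdot\mid\pi^0)$ as the entropic cost $G$, and replace the exact decomposition by the subadditivity inequality $\ent(\pi)\le\sum_{i}\ent(\pi_{0,i})-(\ell-1)\ent(\pi_0)$, which yields $G(\pi)\ge F(\pi_0)$ for \emph{every} admissible coupling, factorized or not, with equality exactly for star-factorized couplings with optimal bivariate marginals. This disposes of non-Markovian competitors in one stroke, so you never actually need the potential representation (and hence not \Cref{ass:equivalence}, which you invoke but do not use) to identify the minimizer --- only existence and uniqueness of $\pi^\star$, which follow from \Cref{ass:kl_bounded} alone; as a byproduct your equality case shows a posteriori that $\pi^\star$ is star-factorized. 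The price is that your argument is tailored to the star-shaped case and does not directly yield the general-tree propagation statement that the paper also wants. Both routes require the same finiteness bookkeeping ($\ent(\mu_i)<\infty$ from \Cref{ass:mu_i}, a finite-$\KL$ competitor from \Cref{ass:kl_bounded}, and absolute continuity of all competitors so that no $\infty-\infty$ occurs), which you correctly identify as the delicate point.
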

The proof of this result is postponed to \Cref{sec:proofs-main}. More generally, we
show in \Cref{sec:proofs-main} that, for any tree $\mst$,
\eqref{eq:tree_sb_static} is equivalent to a regularized version of the
Wasserstein propagation problem \citep{solomon2014wasserstein,
  solomon2015convolutional}. Moreover, we present in \Cref{sec:addit-deta-tree} an extension of \Cref{prop:wasserstein_barycenter_equi} in the case where the chosen root $r$ is not a leaf of $\mst$. We finally emphasize that the formulation of \eqref{eq:wasserstein_barycenter_pb} leads to a \emph{minimization} of the entropy of the barycenter. In particular, this allows us to choose $\varepsilon$ reasonably large in TreeDSB, which is a stability advantage compared to other regularized methods which do not consider this further regularization.

%%% Local Variables:
%%% mode: latex
%%% TeX-master: "main"
%%% End:

\vspace{-0.2cm}
\section{Related work}
\label{sec:related-work}
\vspace{-0.1cm}
\paragraph{Diffusion Schr\"odinger Bridge.}
Schr\"odinger Bridges \citep{schrodinger1932theorie} have been extensively
studied using tools from stochastic control and probability theory
\citep{leonard2014survey,dai1991stochastic,chen2020optimal}. 
More recently, algorithms were proposed to efficiently approximate
such bridges in the context of machine learning.  In particular,
\cite{debortoli2021diffusion} proposed DSB while
\cite{vargas2021solving,chen2021likelihood} developed related
algorithms.
In \cite{chen2023deepmomentum}, the authors study a multi-marginal version of
DSB in a linear tree-based setting, where the
set of observed nodes is the whole set of vertices.
However, contrary to our setting, \citet{chen2023deepmomentum} introduced a
momentum variable. This allows for smoother trajectories which are desirable for
single-cell trajectories applications and correspond to some spline
interpolation in the space of probability measures \citep{chen2018measure}. A
general framework for tree-based static Schr\"odinger Bridges on discrete state-spaces was given in
\cite{haasler2021multimarginal}. In this work, we extend their formulation to a dynamic and continuous setting, see \Cref{sec:proofs-main} for more a thorough comparison.
%to define multi-marginal IPF leading to its numerical counterpart TreeDSB.
\vspace{-0.2cm}

\paragraph{Wasserstein barycenters.} The notion of Wasserstein barycenter was
first introduced in \cite{rabin2012wasserstein} and then later studied in
\cite{agueh2011barycenters}. The algorithms to solve this problem can be split
into two families: the in-sample based approaches and the parametric
ones. In-sample approaches require access to all the measures $\mu_i$ which are
assumed to be empirical measures 
\citep{cuturi2014fast,benamou2015iterative,solomon2015convolutional}. Related to
this class of algorithms is the semi-discrete approach, which aims at computing
a Wasserstein barycenter between continuous distribution but rely on a
discretization of the barycenter
\citep{claici2018stochastic,staib2017parallel,mi2020variational}. Most recent approaches do not rely on a discrete
representation of the barycenter, but instead parameterize it using neural
networks. These approaches can be further split into two categories. First,
\emph{measure-based optimization} approaches parameterize the measures using a
neural network. This is the case of \cite{cohen2020estimating}, where the
barycenter is given by a generative model, which is then
optimized
. \citet{fan2020scalable} introduce an optimization procedure which relies on a
\emph{min-max-min} problem using the framework of
\cite{makkuva2020optimal}. More recently, \cite{korotin2022wasserstein}
considered a fixed point-based algorithm introduced in \cite{alvarez2016fixed}
to update a generative model parametrizing the barycenter. On the one hand,
\emph{potential-based methods} rely on a dual formulation of the barycenter.
\cite{korotin2021continuous} parameterized the dual potentials using Input
Convex Neural Network and considered regularizing losses imposing conjugacy and
congruency. On the other hand, \citet{li2020continuous} consider a dual version
of the \emph{regularized} Wasserstein barycenter problem contrary to other
works.
Our approach applied to start-shaped trees also approximates a \emph{regularized}
Wasserstein barycenter. However, contrary to \cite{li2020continuous}, we do not
consider a parameterization of the potentials in the \emph{static} setting but
instead, parameterize the drift of an associated \emph{dynamic} formulation
using Schr\"odinger bridges. To the best of our knowledge TreeDSB is the first
approach leveraging DSB-like algorithms to compute
Wasserstein barycenters.

%%% Local Variables:
%%% mode: latex
%%% TeX-master: "main"
%%% End:

\vspace{-0.2cm}
\section{Experiments}
\label{sec:experiments}
\vspace{-0.2cm}
In our experiments\footnote{Code available at \url{https://github.com/maxencenoble/tree-diffusion-schrodinger-bridge}.}, we illustrate the performance of TreeDSB to
compute entropic regularized Wasserstein barycenters for various
tasks
. We choose to compare our method with state-of-the-art regularized algorithms:
fast free-support Wasserstein barycenter (fsWB) \citep{cuturi2014fast}
, and continuous regularized Wasserstein barycenter (crWB)
\citep{li2020continuous}.
In all of our settings, we consider a star-shaped tree with $K$ leaves and edge
weights that are equal to $1/K$, resulting in a sequential training procedure
over $2K$ neural networks. The initial diffusion is always a Brownian motion parameterized as explained in \Cref{prop:init_tree_dsb}. Hence, the time horizon on each edge is defined by $T=K\varepsilon/2$. The order of the
leaves is randomly shuffled between the mIPF cycles. We consider 50 steps for the time discretization on $[0,T]$. We refer to
\Cref{sec:details-exps} for details on the choice of the schedule, the 
architecture of the neural networks and the settings of our experiments.
\bigskip
\vspace{-0.1cm}
\begin{minipage}[h]{.5\linewidth} 
  \textbf{Synthetic two dimensional datasets.}  We first illustrate TreeDSB in a synthetic two dimensional setting. We consider three different
  datasets \emph{Swiss-roll} (vertex 0, starting node $r$), \emph{Circle} (vertex 2) and \emph{Moons} (vertex 3) and
  compute their Wasserstein barycenter (vertex 1) by running TreeDSB for 50 mIPF cycles with $\varepsilon=0.1$. In
  \Cref{fig:wasserstein_barycenter_2d}, we show the estimated densities of the datasets on the leaves of the tree (we emphasize that the
  distributions plotted on each leaf are generated from the central barycenter
  measure). In \Cref{fig:wasserstein_coherence}, we observe the consistency
  between the barycenters generated from the different leaves. In \Cref{sec:details-exps}, we present additional results for this setting.
\end{minipage} 
\hfill 
\begin{minipage}[h]{.47\linewidth}
\vspace{-0.5cm}
  \centering
  \includegraphics[width=.48\linewidth]{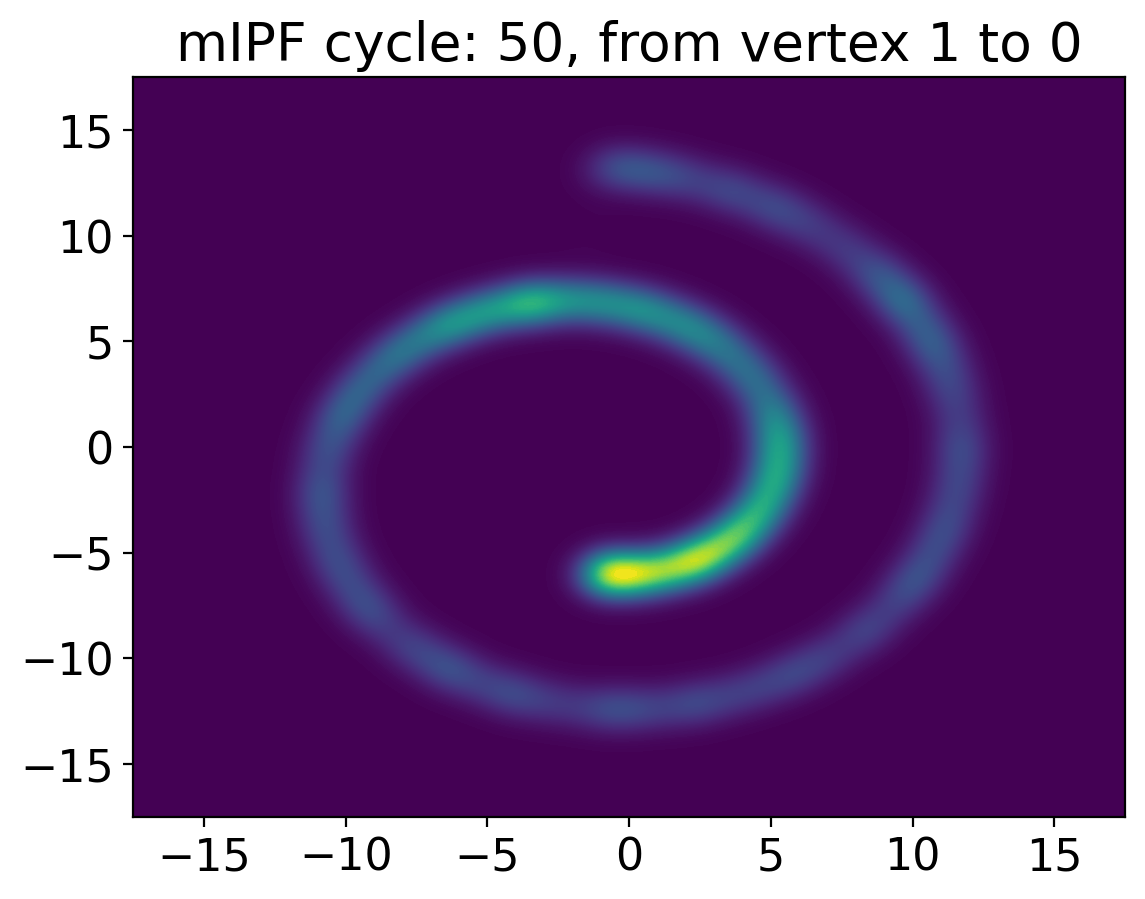}
  \\
  %\vspace{-0.1cm}
  \includegraphics[width=.48\linewidth]{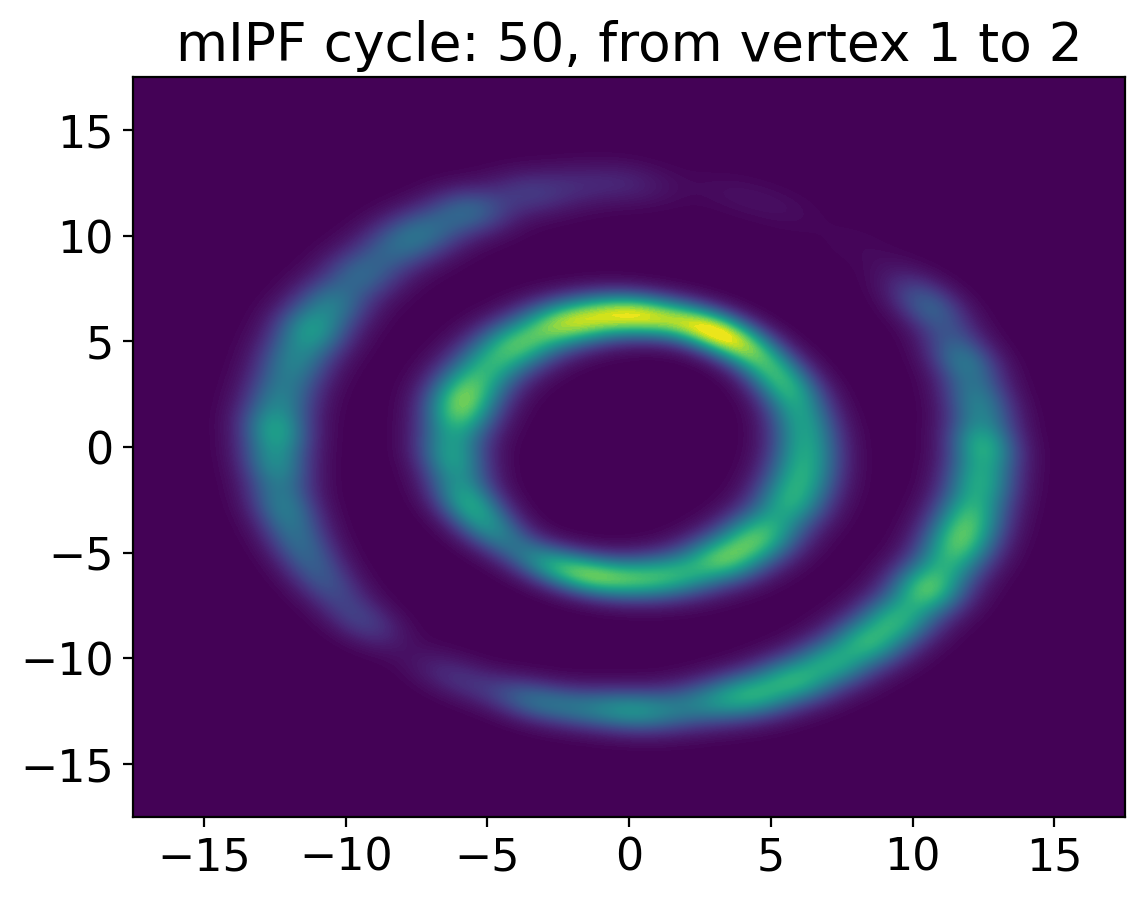}
  \includegraphics[width=.48\linewidth]{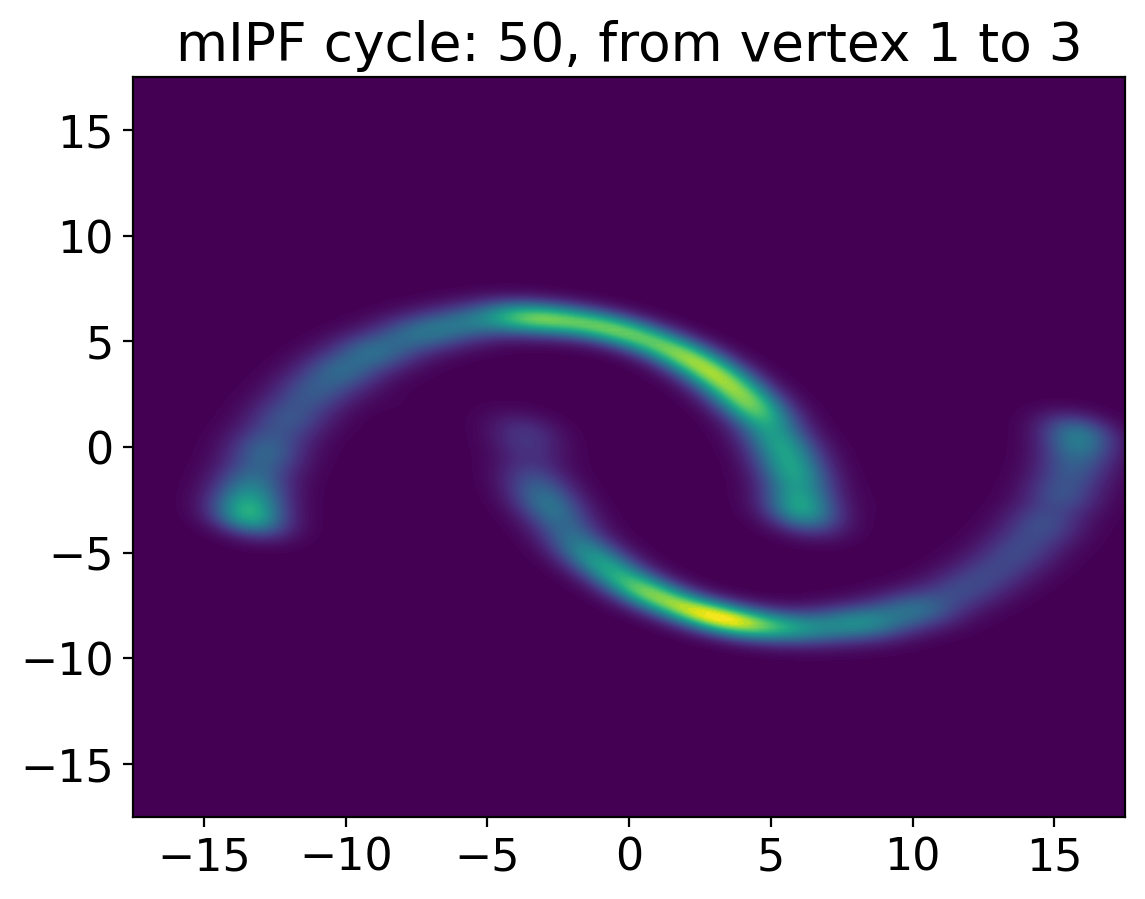}
  \vspace{-0.2cm}
  \captionof{figure}{Estimated densities on the leaves.}
\label{fig:wasserstein_barycenter_2d}
 \end{minipage}

\vspace{-0.1cm}
\begin{figure}[h]
  \centering
  \includegraphics[width=.32\linewidth]{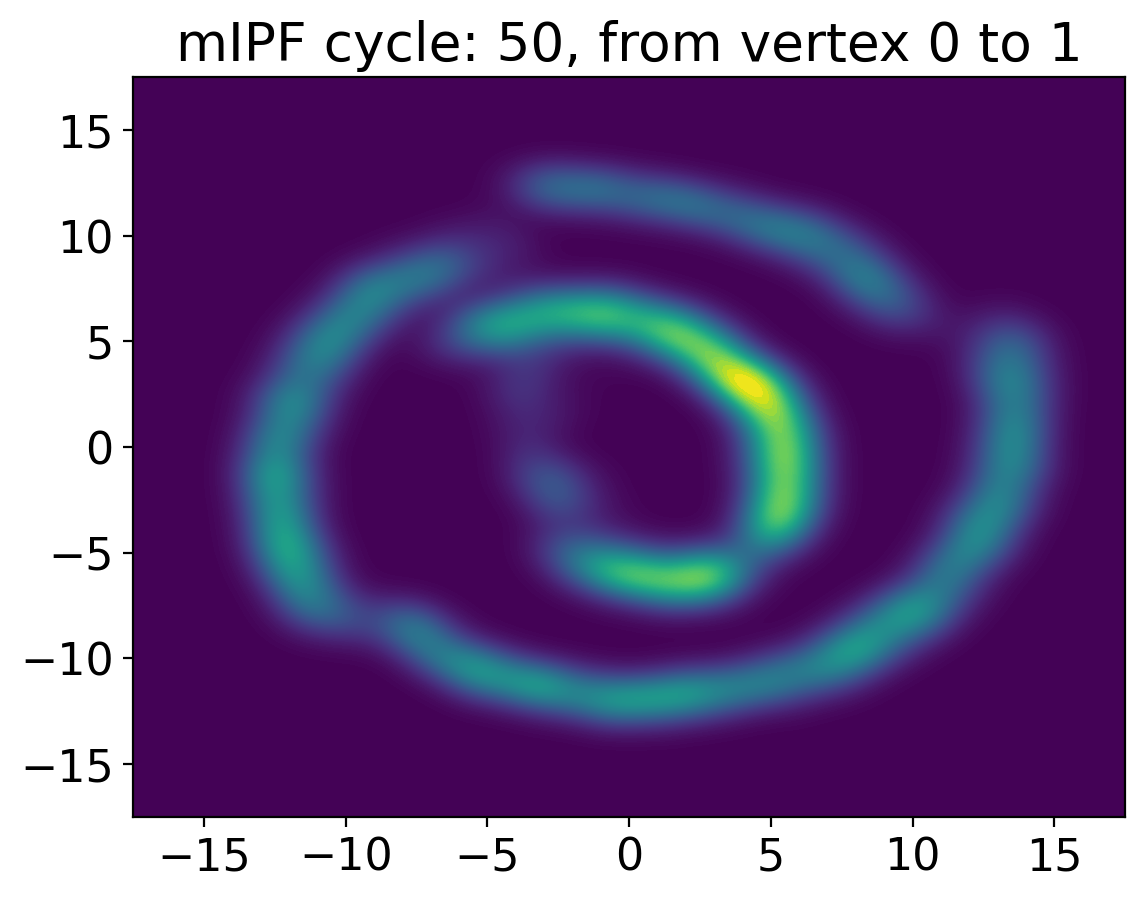} \hfill
  \includegraphics[width=.32\linewidth]{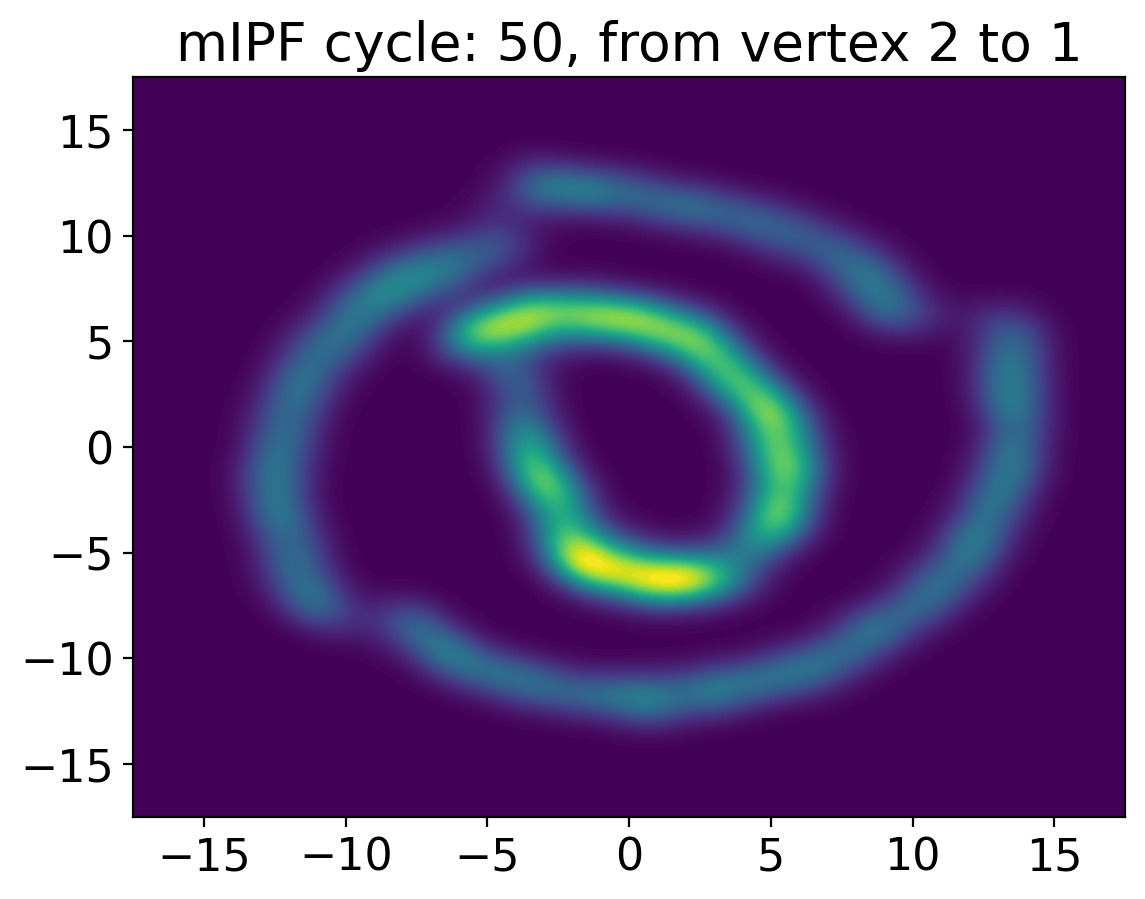} \hfill
  \includegraphics[width=.32\linewidth]{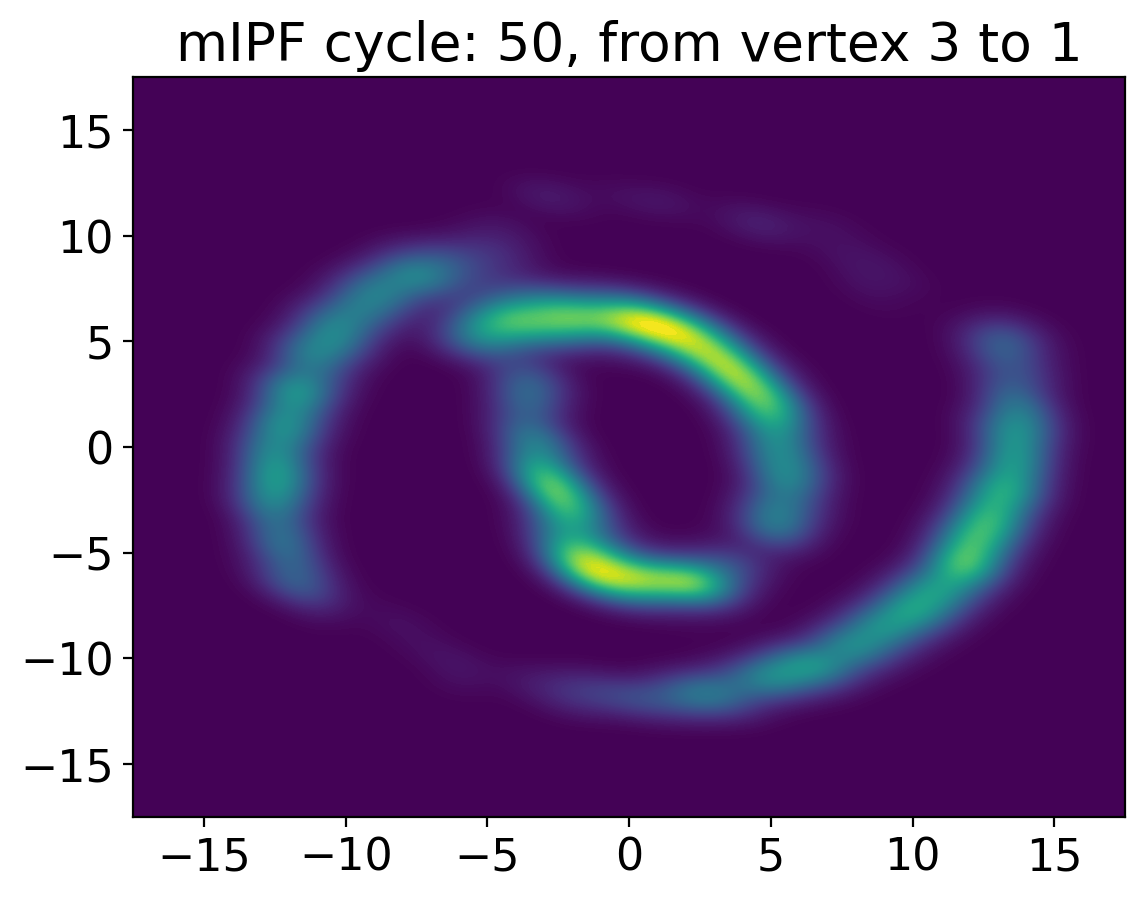}
  \vspace{-0.1cm}
  \caption{From left to right: barycenter estimated from the leaves \textit{Swiss-roll}, \textit{Circle} and \textit{Moons}.}
  \label{fig:wasserstein_coherence}
\end{figure} 

\vspace{-0.1cm}
\paragraph{Synthetic Gaussian datasets.} Next, we consider three independent
Gaussian distributions with zero mean and random non-diagonal covariance
matrices whose conditional number is less than 10, following
\cite{fan2020scalable}. In this case, the non-regularized barycenter can be
exactly computed. To evaluate the
performance of the algorithms, we use the Bures-Wasserstein Unexplained Variance
Percentage (UVP), following \cite[Section 5]{korotin2021continuous}. Given a
target distribution $\mu^\star\in \Pmeasure$ and some approximation
$\mu\in \Pmeasure$, we define
\begin{align}
    \BWUVP(\mu, \mu^\star)=  100\cdot 2\BW(\mu, \mu^\star)/\Var(\mu^\star)\% \eqsp ,
\end{align}
where $\BW(\mu, \mu^\star)=W_2^2(\mathrm{N}(\mathbb{E}[\mu], \Cov(\mu)),  \mathrm{N}(\mathbb{E}[\mu^\star], \Cov(\mu^\star))$.

\begin{table}[h]
    \setlength\tabcolsep{5pt}
    \centering
    \small
    \begin{tabular}{lccccccccc}
    \toprule
    % \multirow{2}{4.5em}{Method}
      Method
       & $d=2$ & $d=16$ & $d=64$ & $d=128$ &
                                                  $d=256$
      \\
    \midrule
fsWB \citep{cuturi2014fast} & $0.06_{\pm 0.01}$ & ${2.86_{\pm 0.06}}$ & ${11.12_{\pm 0.06}}$ &  ${14.47_{\pm 0.07}}$ & $17.41_{\pm 0.05}$
      \\  
crWB \citep{li2020continuous} &  $\bm{0.02_{\pm 0.01}}$ &  $\bm{1.52_{\pm 0.11}}$ & ${11.41_{\pm 0.73}}$ &  ${5.75_{\pm 0.02}}$ & ${18.27_{\pm 0.54}}$ 
      \\
Tree DSB & \cellcolor{pearDark!20} $0.63_{\pm 0.26}$ & \cellcolor{pearDark!20} $\bm{1.07_{\pm 0.58}}$ & \cellcolor{pearDark!20} $\bm{1.39_{\pm 0.07}}$ & \cellcolor{pearDark!20} $\bm{1.92_{\pm 0.02}}$ & \cellcolor{pearDark!20} $\bm{2.62_{\pm 0.07}}$ \\
    \bottomrule
    \end{tabular}
    \vspace{0.2cm}
     \caption{ Gaussian setting: comparison with the regularized methods crWB
       and fsWB.}
    \label{tab:comparison_gaussian}
  \end{table}

\vspace{-0.5cm}
In this setting, we choose $\mu^\star$ to be the non-regularized barycenter and assess the dependency w.r.t. the dimension of the algorithms using the $\BWUVP$
  metric. In \Cref{tab:comparison_gaussian}, we compare ourselves with the two regularized methods \cite{li2020continuous} ($\mathrm{L}_2$-reg. equal to $10^{-4}$)
  and \cite{cuturi2014fast}. We run TreeDSB for 10 mIPF cycles with $\varepsilon=0.1$. Bold
  numbers represent the best values up to statistical significance. While \cite{li2020continuous} and \cite{cuturi2014fast} enjoy better performance in low dimensions
  ($d=2$), TreeDSB outperforms these methods as the dimension increases. 

\vspace{-0.2cm}
\paragraph{MNIST Wasserstein barycenter.} We then turn to an image experiment using MNIST dataset \citep{lecun1998mnist}. Here, an image is not considered as a 2D-dimensional distribution as in \cite{cuturi2014fast} and \cite{li2020continuous}, but as a sample from a high-dimensional probability measure $(d=784)$. We aim at computing a Wasserstein
  barycenter between the digits $2$,$4$ and $6$. To do so, we run TreeDSB for 10 mIPF cycles with $r$ that corresponds to the digit 6 and $\varepsilon=0.5$. In \Cref{fig:wasserstein_barycenter_mnist}, we display samples from the estimated marginals on the leaves, to assess the reconstruction of the digits $2$, $4$ and $6$, and samples from the barycenter, obtained by diffusing from the leaf corresponding to the digit $6$. Our results prove the scalability of TreeDSB to the high-dimensional setting, compared to state-of-the-art regularized methods.
   Additional results on MNIST dataset are given in \Cref{sec:details-exps}.

\begin{figure}[h]
  \centering
  \includegraphics[width=.2\linewidth]{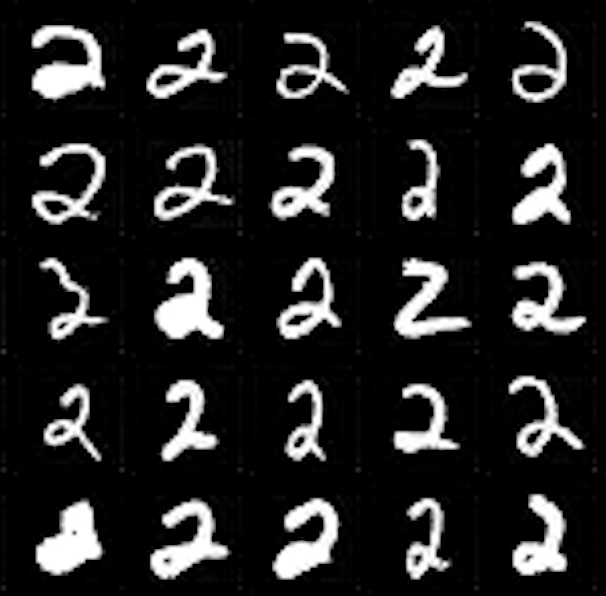} \hfill
  \includegraphics[width=.2\linewidth]{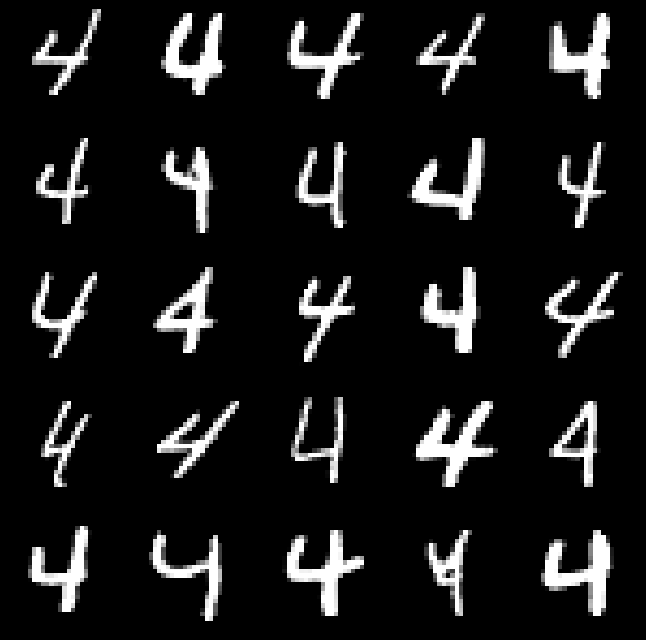} \hfill
  \includegraphics[width=.2\linewidth]{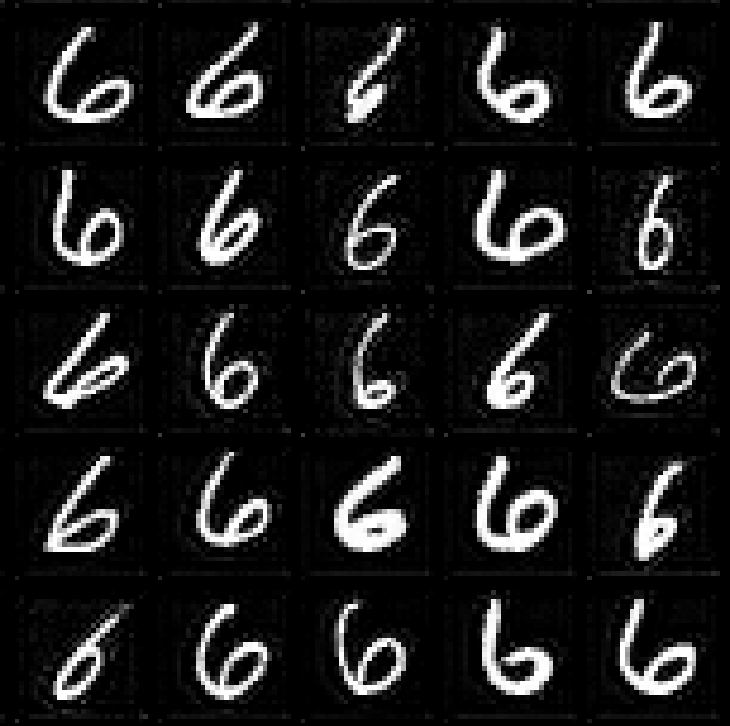} \hfill  
  \includegraphics[width=.2\linewidth]{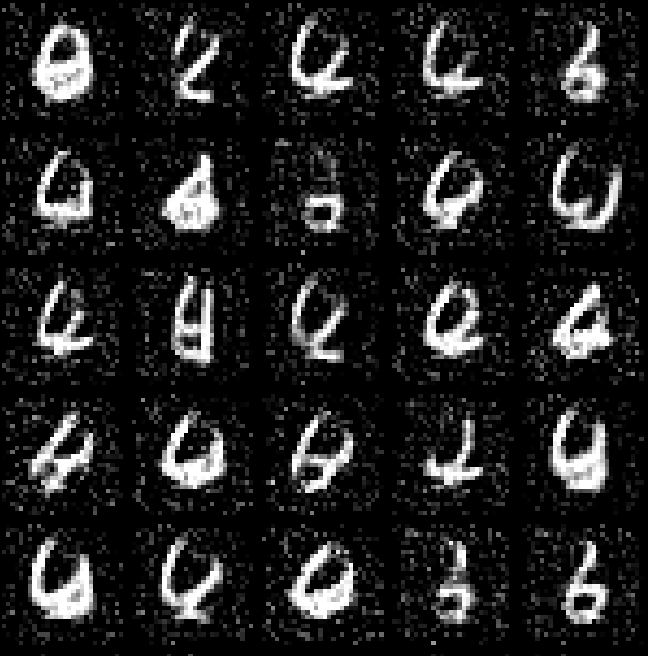}
  \caption{Samples from the estimated MNIST 2-4-6 marginals and from their Wasserstein barycenter.}
  \label{fig:wasserstein_barycenter_mnist}
\end{figure}

\begin{wraptable}{r}{8cm}
    \vspace{-0.1cm}
    \setlength\tabcolsep{5pt}
    \centering
    \small
    \begin{tabular}{lccccccccc}
    \toprule
    % \multirow{2}{4.5em}{Method}
      Method
       & Without het. & With het.
      \\
    \midrule
      fsWB \citep{cuturi2014fast} &  $12.95_{\pm 0.35}$ &  $14.43_{\pm 0.51}$  \\ 
 crWB \citep{li2020continuous} & $20.66_{\pm 0.71}$ & ${23.06_{\pm 0.12}}$
      \\      
Tree DSB & \cellcolor{pearDark!20} $\bm{8.69_{\pm 0.12}}$ & \cellcolor{pearDark!20} $\bm{8.90_{\pm 0.68}}$ \\
    \bottomrule
    \end{tabular}
     %\vspace{-0.1cm}
     \caption{Bayesian fusion setting: comparison with the regularized methods crWB
       and fsWB.}
    \label{tab:comparison_bayesian}
\vspace{-0.2cm}
  \end{wraptable}

\paragraph{Subset posterior aggregation.} Finally, we evaluate TreeDSB in
the context of \emph{Bayesian fusion} \citep{srivastava2018scalable}, also called posterior aggregation. Given a Bayesian model and a dataset partitioned into several shards, this task aims at recovering the full data posterior distribution from the posterior distributions computed on each shard. 

In particular, it has been proved that the barycenter of the subdataset posteriors is close to the
full data posterior under mild assumptions \citep{srivastava2018scalable}. Here, we consider a logistic regression model applied to the
\texttt{wine} dataset\footnote{\url{https://archive.ics.uci.edu/ml/datasets/wine}} ($d=42$) and proceed as follows. We first split this dataset into 3 subsets, with or without heterogeneity, and estimate the posterior parameters on each shard. Then, we draw samples from the
obtained logistic distributions to define $\mu_1, \mu_2, \mu_3$. Then, we compute the Wasserstein barycenter of these measures, and compare it to the posterior computed on the full dataset. As in the synthetic Gaussian
experiment, we run TreeDSB for 10 mIPF cycles $\vareps=0.1$ and we compare ourselves with \cite{li2020continuous} ($\mathrm{L}_2$-reg. equal to $10^{-4}$) and \cite{cuturi2014fast}. We evaluate the methods using the $\BWUVP$ metric, where $\mu^\star$ is the estimated full data posterior, and report the results in \Cref{tab:comparison_bayesian}. In both settings, we observe that our method outperforms existing regularized methods to compute Wasserstein barycenters.

\paragraph{Limitations.} 
One of the main limitation of entropic regularized OT approach is that their
behavior is usually badly conditioned as $\vareps \to 0$. In our setting, we
observe that if $\vareps$, or equivalently $T$, is too low then the algorithm
becomes less stable as the training of the models slows down. In the future, we
plan to mitigate this issue by incorporating fixed point techniques like the one
used in \cite{korotin2022wasserstein}. Finally, since our algorithm is based on
DSB \citep{debortoli2021diffusion}, it suffers from the same limitations. In
particular, training different neural networks iteratively incurs some bias in
the SDE which is harmful for large number
of mIPF iterations.

%%% Local Variables:
%%% mode: latex
%%% TeX-master: "main"
%%% End:

\section{Discussion}
\label{sec:discussion}

In this paper, we introduced Tree-based Diffusion Schr\"odinger Bridge (TreeDSB) a
scalable scheme to approximate solutions of entropic-regularized multi-marginal
Optimal Transport (mOT) problems. Our methodology leverages tools from the diffusion
model literature and extends Diffusion Schr\"odinger Bridge
\citep{debortoli2021diffusion}. In particular, it approximates the iterates of the
multi-marginal Iterative Proportional Fitting (mIPF) algorithm, for which we prove its convergence under mild assumptions. We illustrate the efficiency of
TreeDSB for image processing and Bayesian fusion, using the link
between mOT and Wasserstein barycenters. In future work, we would like to study quantitative convergence bounds for
mIPF in the \emph{unbounded} cost setting. Another line
of work would be to scale TreeDSB to higher dimensional problems building on
recent developments in the diffusion model and flow matching community
\citep{lipman2022flow,peluchetti2023diffusion,shi2023diffusion}.

%%% Local Variables:
%%% mode: latex
%%% TeX-master: "main"
%%% End:

\newpage

\section*{Acknowledgments}

We thank James Thornton for the DSB codebase\footnote{\url{https://github.com/JTT94/diffusion_schrodinger_bridge}} and useful discussions. AD acknowledges support from the Lagrange
Mathematics and Computing Research Center. AD and MN would like
to thank the Isaac Newton Institute for Mathematical Sciences for support and hospitality during the programme
\emph{The mathematical and statistical foundation of future data-driven engineering} when work on this paper was undertaken. MN acknowledges funding from the grant SCAI (ANR-19-CHIA-0002).

\bibliography{bibliography}
\bibliographystyle{icml2023}

\newpage
\appendix

%%%%%%%%%%%%%%%%%%%%%%%%%%%%%%%%%%%%%%%%%%%%%%%%%%%%%%%%%%%%%%%%%%%%%%%%%%%%%%%
%%%%%%%%%%%%%%%%%%%%%%%%%%%%%%%%%%%%%%%%%%%%%%%%%%%%%%%%%%%%%%%%%%%%%%%%%%%%%%%
% APPENDIX
%%%%%%%%%%%%%%%%%%%%%%%%%%%%%%%%%%%%%%%%%%%%%%%%%%%%%%%%%%%%%%%%%%%%%%%%%%%%%%%
%%%%%%%%%%%%%%%%%%%%%%%%%%%%%%%%%%%%%%%%%%%%%%%%%%%%%%%%%%%%%%%%%%%%%%%%%%%%%%%

\section*{Appendix organization}
\label{sec:pres-append}

First, additional notation is introduced in \Cref{sec:additional-notation}. Then, we briefly recall some notions on undirected and directed trees in
\Cref{sec:recap-trees}. Similarly, martingale problems are introduced in
\Cref{sec:martingale-problems}. The proofs of the main manuscript and additional theoretical results on Tree Schr\"odinger Bridges are given in
\Cref{sec:proofs-main}. Additional details on our consideration of the tree-based static SB problem are
described in \Cref{sec:addit-deta-tree}. Details on the implementation of TreeDSB are given in \Cref{sec:details-algo}
and the experiments are investigated
in \Cref{sec:details-exps}. 

%%% Local Variables:
%%% mode: latex
%%% TeX-master: "main"
%%% End:

\section{Additional notation}
\label{sec:additional-notation}

\def \msj {\mathsf{J}}

For any finite set $\mse$, we equivalently refer to the cardinal of $\mse$ as $\card(\mse)$ or $\abs{\mse}$. Let $(\msx, \mathcal{X})$ be a measurable space. For any
$x \in (\rset^d)^{\ell+1}$ and any $m\in \{0, \hdots, \ell\}$, let
$x_{-m}=(x_0,\hdots, x_{m-1}, x_{m+1}, \hdots, x_\ell)$. For any family of
measures $\{\nu_j\}_{j \in \{0,\hdots, \ell\}}$ defined on $(\msx, \mathcal{X})$ and
any $i\in \{0,\hdots, \ell\}$, let
$\nu_{-i}=\bigotimes_{j \in \{0, \hdots, \ell\}\backslash\{i\}}\nu_j$. Let
$I=\{i_1, \hdots, i_q\} \subset \{1, \hdots, \ell\}$ and $\mu \in \Pmeasurell$
such that $\mu \ll \Leb$. We define
$I^\complementary=\{1, \hdots, \ell\}\backslash I$ and denote it by
$\{i^\complementary_1, \hdots, i^\complementary_{\bar{q}}\}$ where
$\bar{q}=\ell -q$. We denote the marginal of $\mu$ along $I$ by $\mu_{I}$, \ie,
$\mu_{I} \in \Pmeasure^{(q)}$ and we have for any
$\msa \in \borel((\rset^d)^q)$,
$\mu_{I}(\msa)=\int_\msx \mu(x) \prod_{j=1}^q\delta_{x_{i_j}}(\msa_j) \rmd
x$. In addition, note that $\mu_I \ll \Leb$. We denote the conditional
distribution of $\mu$ given $I$ by $\mu_{|I}(\cdot|\cdot)$, \ie,
$\mu_{|I}(\cdot|\cdot) \in \Pmeasure^{(\bar{q})} \times (\rset^d)^q$ and we have
for any $y \in (\rset^d)^q$ and any $\msa \in \borel((\rset^d)^{\bar{q}} )$,
$\mu_{| I}(\msa | y)=\int_\msx \mu(x)/\mu_I(y) \prod_{j=1}^q \delta(x_{i_j}-y_j)
\prod_{j'=1} ^{\bar{q}} \delta_{x_{i^\complementary_{j'}}}(\msa_{j'}) \rmd
x$. Remark that for any $y \in (\rset^d)^q$, $\mu_{|I}(\cdot|y) \ll \Leb$. For
any subset $\msj \subset \msi^\complementary$ with $\card(\msj)=q_\msj$, we also
define
$\mu_{\msj |\msi}(\cdot|\cdot) \in \Pmeasure^{(q_\msj)} \times (\rset^d)^q$ such
that for any $y \in (\rset^d)^q$,
$\mu_{\msj|\msi}(\cdot|y)=\{\mu_{|\msi}(\cdot|y)\}_\msj$. For a collection of
functions $\{f_i\}_{i \in \msi}$, with $\msi \subset \{1, \dots, n\}$ and
$n \in \nset$ such that $f_i: \ \rset^d \to \rset$, we define
$\oplus_{i \in \msi} f_i: (\rset^d)^n \to \rset$ such that for any
$x = (x_1, \dots, x_n) \in (\rset^d)^n$,
$\oplus_{i \in \msi}f(x) = \sum_{i \in \msi} f_i(x_i)$.

\section{Introduction to trees}
\label{sec:recap-trees}

\paragraph{Undirected tree.} An undirected graph $\mst=(\msv, \mse)$, with vertices $\msv$ and edges $\mse$, is said to be an \emph{undirected tree} if it is  \emph{acyclic} and \emph{connected} \citep[Definition 1.19.]{valiente2002algorithms}. In particular, we have $\card(\mse)=\card(\msv)-1$. The undirected edge between two nodes $v_1$ and $v_2$ is similarly denoted by $\{v_1,v_2\}$ or $\{v_2,v_1\}$. We say that $\mst'=(\msv',\mse')$ is a \emph{sub-tree} of $\mst$ if $\mst'$ is an undirected tree with vertices $\msv' \subset\msv$ and edges $\mse' \subset \mse$. For any vertex $v \in \msv$, we define the set of its \emph{neighbours} $\msn_v$ as the set of vertices $v'\in \msv$ such that $\{v,v'\}\in \mse$. The integer $\card(\msn_v)$ is referred to as the degree of $v$.  
The vertices with degree 1 are called \emph{leaves}, and we denote the set of leaves by $\msv_\msl \subset \msv$. The (unique) \emph{path} in $\mst$ between two vertices $v$ and $v'$ is the sequence of two-by-two distinct edges $\{\{v_i, v_{i+1}\}\}_{i=1}^{n}$ (with $n\geq 1$) such that $v_k=v_{k+1}$ for any $k\in \{1, \hdots, n\}$ such that $k=0 \mod (2)$, $v_1=v$ and $v_{n+1}=v'$. This path can be seen as a linear sub-tree of $\mst$, and we define $n$ as the \emph{length} of this path. We say that $\mst$ is \emph{weighted} if there exists a map $w:\mse \mapsto \rset_+$; in this case, $w(\{v_1,v_2\})$, or equivalently $w(\{v_2,v_1\})$ (also denoted by $w_{v_1,v_2}$ or $w_{v_2,v_1}$ ) is called the weight of the edge $\{v_1,v_2\}$. The tree $\mst$ is said to be \emph{rooted} in $r\in \msv$ if $r$ defines a partial ordering $\leq_{\mst, r} \subset \msv \times \msv$ such that for any $v_1,v_2 \in\msv$,
$v_1 \leq_{\mst, r} v_2$ if the node $v_1$ lies on the unique path between $r$ and $v_2$.

\paragraph{Directed tree.} Consider a directed graph $\mst_r=(\msv, \mse_r)$ rooted in $r\in \msv$. Any directed edge $e \in \mse_r$ from $v_1\in \msv$ to $v_2\in\msv$ is denoted by $(v_1,v_2)$. $\mst_r$ is a said to be a \emph{directed tree} rooted in $r$ if (i) the underlying undirected graph $\mst=(\msv, \mse)$ is an undirected tree rooted in $r$ and (ii) any $(v_1,v_2)\in \mse_r$ is directed according to the partial ordering $\leq_{\mst,r}$, \ie, $\{v_1,v_2\}\in \mse$ and $v_1\leq_{\mst,r} v_2$. For any vertices $(v,v')\in \msv \times \msv$ such that $v\leq_{\mst,r} v'$, the (unique) \emph{path} in $\mst_r$ from $v$ to $v'$, denoted by $\path_{\mst_r}(v,v')$, is defined as the directed version of the path in $\mst$ between $v$ and $v'$ (viewed as a sub-tree of $\mst$), which is rooted in $v$. We say that $\mst_r$ is \emph{weighted}, if $\mst$ is weighted and the edges of $\mst_r$ have the same weights as the corresponding undirected edges of $\mst$. For any $(v_1,v_2)\in \mse_r$, we denote this weight by $w_{v_1,v_2}$. We say that $\mst_r$ is the (unique) \emph{directed version} of $\mst$ rooted in $r$. It is endowed with a canonical vertex numbering $\zeta:\msv \to \{0,\hdots, \card(\msv)-1\}$, corresponding to a depth-first traversal of its nodes, starting from the root $r$ \citep[Definition 3.1.]{valiente2002algorithms}. This numbering is consistent with the partial ordering on $\mst$, \ie, if $v_1 \leq_{\mst,r} v_2$, $\zeta(v_1) \leq \zeta(v_2)$, and satisfies $\zeta(r)=0$. In the rest of the paper, we will write in an equivalent manner $v$ or $\zeta(v)$.

For any vertices $(v_1,v_2)\in \mse\times \mse$ such that $v_1\leq_{\mst,r}v_2$, $\operatorname{path}_{\mst_r}(v_1,v_2)$ corresponds to the ordered set of edges in $\mse_r$ which define the ordered path between two vertices $v_1$ and $v_2$. For any vertex $v \in \msv$, we define:
\begin{enumerate}[wide, labelwidth=!, labelindent=0pt, label=(\alph*)]
    \item the set of its \emph{children} $\msc_v$ as the set of vertices $v'\in \msv$ such that $(v,v')\in \mse_r$. In particular, for any $v \in \msv_L$, the set of leaves, one has $\msc_v=\emptyset$.
    \item its \emph{parent} as the unique vertex $p(v)$ such that $(p(v), v)\in \mse_r$, if $v\neq r$ (the parent of the root is not defined). 
\end{enumerate}
Note that $\msn_r=\msc_r$ and, for any vertex $v\in \msv\backslash\{r\}$, $\msn_v=\{p(v)\} \cup \msc_v$.

\begin{definition}[Tree-structured directed Probabilistic Graphical Model (PGM)] \label{def:directed_pgm} Consider a directed tree $\mst_r=(\msv, \mse_r)$. The directed PGM induced by $\mst_r$ \citep[Definition 3.4.]{koller2009probabilistic}, denoted by $\Pmeasure_{\mst_r}$, is the family of distributions $\pi\in \Pmeasure^{(\abs{\msv})}$ which have a Markovian factorization along $\mst_r$, \ie, 
\begin{align}
    \textstyle \Pmeasure_{\mst_r}=\{\pi\in \Pmeasure^{(\abs{\msv})}: \pi=\pi_r \bigotimes_{(v,v')\in \mse_r}\pi_{v'|v}\} \eqsp .
\end{align}
\end{definition} 

\begin{lemma}\label{lemma:markov-tree} Consider an undirected tree $\mst=(\msv, \mse)$. Let $(r,r')\in \msv\times \msv$. Let $\mst'$ be a sub-tree of $\mst$ with vertices $\msv'$ such that $r'\in \msv'$. Denote by $\mst'_{r'}$ the directed version of $\mst'$ rooted in $r'$. Then, for any $\pi \in \Pmeasure_{\mst_r}$, we have $\pi_{\msv'}\in \Pmeasure_{\mst'_{r'}}$.
\end{lemma}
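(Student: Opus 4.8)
The plan is to derive the statement from two elementary moves: first re-root the ambient tree $\mst$ at $r'$, and then marginalise out the vertices of $\msv\backslash\msv'$ one at a time. The conceptual crux, and the main obstacle, is the \emph{root-invariance} of the PGM family: for a fixed undirected tree $\mst$ and any two vertices $r,r'$, one has $\Pmeasure_{\mst_r}=\Pmeasure_{\mst_{r'}}$. Granting this, the membership hypothesis $\pi\in\Pmeasure_{\mst_r}$ may be replaced by $\pi\in\Pmeasure_{\mst_{r'}}$, after which the directions on the shared edges match and only a marginalisation remains.

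To prove root-invariance I would argue that the directed factorisation of \Cref{def:directed_pgm} can be rewritten in a root-free form. When $\pi$ admits a density this is the identity
\[
\pi=\frac{\prod_{\{v,v'\}\in\mse}\pi_{v,v'}}{\prod_{v\in\msv}\pi_v^{\,\card(\msn_v)-1}},
\]
which one checks by counting, in $\pi_r\bigotimes_{(v,v')\in\mse_r}\pi_{v'|v}=\pi_r\bigotimes_{(v,v')\in\mse_r}\pi_{v,v'}/\pi_v$, how often each vertex occurs as a parent: a non-root vertex $v$ occurs $\card(\msn_v)-1$ times while the root $r$ occurs $\card(\msn_r)$ times, the extra factor $\pi_r$ absorbing the discrepancy, and each undirected edge contributes its (symmetric) pairwise marginal exactly once. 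The right-hand side is manifestly independent of $r$. In the general, not necessarily absolutely continuous, case the same conclusion follows by re-rooting through a sequence of single edge reversals along $\path_{\mst_r}(\cdot,\cdot)$, each reversal $(u,v)\mapsto(v,u)$ being an instance of Bayes' rule $\pi_v\,\pi_{u|v}=\pi_u\,\pi_{v|u}$ that leaves the joint law unchanged. Having re-rooted at $r'$, I would record the orientation-matching observation that $\mst'_{r'}$ is exactly the restriction of $\mst_{r'}$ to $\msv'$: for $v\in\msv'$ the unique $\mst$-path from $r'$ to $v$ stays inside the connected set $\msv'$, so the partial orders $\leq_{\mst,r'}$ and $\leq_{\mst',r'}$ agree on $\msv'$ and induce the same edge directions.

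It then remains to show that marginalising $\pi\in\Pmeasure_{\mst_{r'}}$ onto $\msv'$ gives an element of $\Pmeasure_{\mst'_{r'}}$, which I would do by induction on $\card(\msv)-\card(\msv')$, removing at each step a leaf $w$ of the current rooted tree with $w\notin\msv'$. Such a leaf exists whenever $\msv\supsetneq\msv'$: taking $w\in\msv\backslash\msv'$ of maximal depth, every descendant of $w$ must also lie outside $\msv'$ (a descendant in $\msv'$ would force the whole root-to-descendant path, hence $w$ itself, into the connected set $\msv'$), so by maximality $w$ has no children and is a leaf, and $w\neq r'$ since $r'\in\msv'$. For the inductive step, in $\pi=\pi_{r'}\bigotimes_{(v,v')\in\mse_{r'}}\pi_{v'|v}$ the variable $x_w$ appears in the single factor $\pi_{w|p(w)}$ only, because $w$ is neither the root nor a parent; integrating it out uses $\int \pi_{w|p(w)}(\rmd x_w\mid x_{p(w)})=1$ and simply deletes that factor, leaving the factorisation attached to the tree obtained by deleting $w$ (same root $r'$, same remaining kernels). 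Iterating removes all of $\msv\backslash\msv'$ and yields $\pi_{\msv'}\in\Pmeasure_{\mst'_{r'}}$. The only genuine difficulty is the root-invariance step; the marginalisation is then routine, the sole point requiring care being the measure-theoretic bookkeeping that $x_w$ enters a unique conditional kernel, which is guaranteed by the topological (depth-first) ordering of the directed tree.
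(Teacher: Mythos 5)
Your proposal is correct and follows essentially the same route as the paper: first re-root $\pi$ from $r$ to $r'$ by reversing the edges along $\path_{\mst_r}(r,r')$ via Bayes' rule $\pi_{v_1}\pi_{v_2|v_1}=\pi_{v_2}\pi_{v_1|v_2}$ (the paper does all reversals in one identity rather than one edge at a time, and does not need your symmetric density formula), and then marginalise onto $\msv'$ using that the kernels attached to edges outside $\mse'_{r'}$ integrate to one. The only cosmetic difference is that you peel off out-of-$\msv'$ leaves inductively where the paper integrates out $x_{\msv\backslash\msv'}$ in a single Fubini step; both hinge on the same observation that each discarded variable enters exactly one conditional factor.
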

\begin{proof} Let $(r,r')\in \msv\times \msv$. We denote by $\mst_{r}=(\msv, \mse_{r})$, respectively $\mst_{r'}=(\msv, \mse_{r'})$, the directed version of $\mst$ rooted in $r$, respectively $r'$. We define the paths $\msp_{r,r'}=\path_{\mst_{r}}(r,r')\subset \mse_{r}$ and $\msp_{r',r}=\path_{\mst_{r'}}(r',r)\subset \mse_{r'}$. It is easy to see that 
\begin{enumerate}[wide, labelwidth=!, labelindent=0pt, label=(\alph*)]
    \item $\mse_{r}\backslash \msp_{r,r'} = \mse_{r'}\backslash \msp_{r',r}$,
    \item $\msp_{r,r'}=\{(v_2,v_1): (v_1,v_2) \in \msp_{r',r} \}$,
    \item $\msp_{r',r}=\{(v_2,v_1): (v_1,v_2) \in \msp_{r,r'} \}$.
\end{enumerate}

Let $\pi \in \Pmeasure_{\mst_r}$. First note that for any $(v_1,v_2)\in \mse_{r}$, we have by Bayes decomposition $\pi_{v_1}\pi_{v_2|v_1}=\pi_{v_2}\pi_{v_1|v_2}=\pi_{v_1,v_2}$. Then it comes
\begin{align}
    \pi& = \textstyle{\pi_{r} \bigotimes_{(v_1,v_2)\in \mse_{r}}\pi_{v_2|v_1}}\\
    & = \textstyle{\pi_{r} \bigotimes_{(v_1,v_2)\in \msp_{r,r'}}\pi_{v_2|v_1} \bigotimes_{(v_1,v_2)\in \mse_{r}\backslash \msp_{r,r'}}\pi_{v_2|v_1}}\\
    & = \textstyle{\pi_{r} \bigotimes_{(v_2,v_1)\in \msp_{r',r}}\pi_{v_2|v_1} \bigotimes_{(v_1,v_2)\in \mse_{r'}\backslash \msp_{r',r}}\pi_{v_2|v_1}}\\
    & =\textstyle{\pi_{r'} \bigotimes_{(v_1,v_2)\in \msp_{r',r}}\pi_{v_2|v_1}\bigotimes_{(v_1,v_2)\in \mse_{r'}\backslash \msp_{r',r}}\pi_{v_2|v_1}}\\
    & = \textstyle{\pi_{r'} \bigotimes_{(v_1,v_2)\in \mse_{r'}}\pi_{v_2|v_1}} \eqsp, 
\end{align}
and therefore, we have $\pi \in \Pmeasure_{\mst_{r'}}$.

Let $\mst'$ be a sub-tree of $\mst$ with vertices $\msv'$ such that $r'\in \msv'$. First note that $\mse'_{r'}\subset \mse_{r'}$. Using the previous computation, we have for any $\msa \in \borel((\rset^d)^{\abs{\msv'}})$,
\begin{align}
    \pi_{\msv'}(\msa)& =\textstyle{\int_{(\rset^d)^{\abs{\msv}}} \pi_{r'}(x_{r'}) \bigotimes_{(v_1,v_2)\in \mse_{r'}}\pi_{v_2|v_1}(x_{v_2}|x_{v_1}) \prod_{v'\in \msv'}\delta_{x_{v'}}(\msa_{v'}) \rmd x} \\
    & = \textstyle{\int_{(\rset^d)^{\abs{\msv}-\abs{\msv'}}} \{\pi_{r'}(\msa_{r'})\bigotimes_{(v_1,v_2)\in \mse'_{r'}}\pi_{v_2|v_1}(\msa_{v_2}|x_{v_1}) \} \bigotimes_{(v_1,v_2)\in \mse_{r'}\backslash \mse'_{r'}}\pi_{v_2|v_1}(x_{v_2}|x_{v_1}) \rmd x_{\msv \backslash \msv'}}\\
    & = \textstyle{\{\pi_{r'} \bigotimes_{(v_1,v_2)\in \mse'_{r'}}\pi_{v_2|v_1}\}(\msa)} \eqsp,
\end{align}
which proves that $\pi_{\msv'} \in \Pmeasure_{\mst'_{r'}}$.
\end{proof}

\paragraph{Discretized undirected tree.} Let $N\geq 1$. Consider an undirected tree $\mst=(\msv, \mse)$ with weights $w$. We say that $\mst^{(N)}=(\msv^{(N)}, \mse^{(N)})$ is a $N$-discretized version of $\mst$ if it is an undirected tree with weights $w^{(N)}$ such that
\begin{enumerate}[wide, labelwidth=!, labelindent=0pt, label=(\alph*)]
    \item $\msv^{(N)} = \msv \bigsqcup \cup _{\substack{e \in \mse, \\ k \in \{1,\ldots,N-1\}}} \{v_e^{k}\}$,
    \item $\mse^{(N)} = \cup_{e\in\mse} \cup_{k=0, \ldots,N-1} \left\{\{v_e^{k}, v_e^{k+1}\}\right\}$ with the convention that the vertices $v_e^{N}$ and $v_e^{N}$ are defined such that $\{v_e^{0}, v_e^{N}\}=e$,
    \item $\sum_{e\in \operatorname{path}_{\mst}(v,v')}1/w^{(N)}_{e}=1/w_{v,v'}$, if $\{v,v'\} \in \mse$.
\end{enumerate}
 Remark that the leaves of $\mst^{(N)}$ are exactly the original leaves of $\mst$ and that $\mst^{(1)}=\mst$. The non-uniqueness of $\mst^{(N)}$ comes from the freedom of choice on the weights of its edges.

\paragraph{Discretized directed tree.} Let $N\geq 1$. Consider a directed tree $\mst_r=(\msv, \mse_r)$ rooted in $r\in \msv$ with weights $w$. We say that $\mst_r^{(N)}=(\msv^{(N)}, \mse_r^{(N)})$ is a $N$-discretized version of $\mst_r$ if it is the directed version of $\mst^{(N)}$ rooted in $r$, where $\mst^{(N)}$ is a $N$-discretized version of the underlying undirected tree of $\mst_r$.

%%% Local Variables:
%%% mode: latex
%%% TeX-master: "main"
%%% End:

\section{Background on martingale problems}
\label{sec:martingale-problems}

In this section, we introduce the background on Stochastic Differential
Equations (SDEs) and weak solutions of SDEs following the framework of
\cite[Section 10.1, page 249]{stroock1997multidimensional}. We recall that
$\rmC_0^\infty(\rset^d)$ is the space of infinitely differentiable real-valued
functions which vanish at infinity. In addition, we have that $\mathcal{S}_+^d$
is the space of $d \times d$, symmetric, non-negative matrices.

\begin{definition}
  Let $T > 0$ or $T=+\infty$,
  $\sigma: \ \coint{0,T} \times \rset^d \to \mathcal{S}_+^d$ and
  $b: \ \coint{0,T} \times \rset^d \to \rset^d$, locally bounded
  measurable functions. We define the \emph{infinitesimal generator},
  $\mathcal{A}$, given for any $f \in \rmC_0^\infty(\rset^d)$,
  $t \in \coint{0,T}$ and $x \in \rset^d$ by
  \begin{equation}
    \label{eq:infinitesimal_generator_def}
    \textstyle \mathcal{A}_t(f)(x) = \langle b_t(x), \nabla f(x) \rangle + \tfrac{1}{2} \langle \sigma_t(x) \sigma_t(x)^\top, \nabla^2 f(x) \rangle .   
  \end{equation}
  We say that a probability measure $\Pbb$ \emph{satisfies the martingale
    problem for $\mathcal{A}$} if for any $t \in \coint{0,T}$ and
  $f \in \rmC_0^\infty(\rset^d)$, we have that
  $(f(\bfX_t) - \int_0^t \mathcal{A}_s(f)(\bfX_s) \rmd s)_{s \in \ccint{0,t}}$
  is a $\Pbb$-martingale.
\end{definition}

In the main document, see \Cref{sec:backgr-optim-transp}, we say that ``a path
measure $\Pbb$ is associated with
$\rmd \bfX_t = b(t, \bfX_t) \rmd t + \sigma(t, \bfX_t) \rmd \bfB_t$ with
$(\bfB_t)_{t \geq 0}$ a $d$-dimensional Brownian motion'' if $\Pbb$ solves the
martingale problem associated with $\mathcal{A}$ given by
\eqref{eq:infinitesimal_generator_def}. Unless specified, we always assume that
such a path measure exists and is unique. Below, we recall the following
theorem, see \cite[Theorem 10.2.2]{stroock1997multidimensional}, which gives
sufficient conditions for the existence and uniqueness of solutions to the
martingale problem.

\begin{theorem}
  Assume that for any $x \in \rset^d$ we have
  \begin{align}
    &\textstyle \inf \ensembleLigne{\langle \theta, \sigma \sigma^\top(s, x) \theta \rangle }{\theta \in \rset^d, \ \normLigne{\theta}=1, \ s \in \ccint{0,T}} > 0 , \\
    &\lim_{y \to x} \sup \ensembleLigne{\normLigne{\sigma(s,x) - \sigma(s,y)}}{s \in \ccint{0,T}} = 0 . 
  \end{align}
  In addition, assume that there exists $C > 0$ such that for any $x \in \rset^d$
  \begin{equation}
    \textstyle \sup \ensembleLigne{\normLigne{\sigma \sigma^\top(t,x)}}{s \in \ccint{0,T}} + \sup \ensembleLigne{\langle x, b(t,x) \rangle}{s \in \ccint{0,T}} \leq C ( 1+ \normLigne{x}^2) .
  \end{equation}
  Then, there exists a unique solution to the martingale problem with
  initialization $x_0 \in \rset^d$.
\end{theorem}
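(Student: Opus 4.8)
The plan is to follow the classical Stroock--Varadhan program \citep{stroock1997multidimensional}, splitting the statement into an existence part handled by a compactness argument and a uniqueness part handled by analytic resolvent estimates. The organizing observation is that the third hypothesis is a Lyapunov condition in disguise: taking $V(x) = 1 + \normLigne{x}^2$, so that $\nabla V(x) = 2x$ and $\nabla^2 V(x) = 2\,\mathrm{Id}$, one computes $\mathcal{A}_t V(x) = 2\langle b_t(x), x\rangle + \Tr(\sigma_t\sigma_t^\top(x))$, and the assumed bound $\langle x, b_t(x)\rangle + \normLigne{\sigma_t\sigma_t^\top(x)} \leq C(1 + \normLigne{x}^2)$ yields $\mathcal{A}_t V \leq C' V$ for some $C' > 0$. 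This single estimate is what controls non-explosion and supplies uniform moment bounds throughout.

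For existence, I would first regularize the coefficients, mollifying and truncating $b, \sigma$ outside $\ball{0}{n}$ to obtain bounded continuous coefficients $b^{(n)}, \sigma^{(n)}$ that agree with $b, \sigma$ on $\ball{0}{n}$ and still obey the Lyapunov bound with a constant independent of $n$. For bounded continuous coefficients the martingale problem has a solution $\Pbb^{(n)}$ by the classical construction. The Lyapunov estimate above, via Gr\"onwall's inequality, gives $\sup_n \expe{\Pbb^{(n)}}{V(\bfX_t)} < \infty$ uniformly on compact time intervals, and together with the resulting control on increments this produces tightness of $(\Pbb^{(n)})_n$ through the Aldous/Kolmogorov criterion. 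I would then extract a weakly convergent subsequence $\Pbb^{(n_k)} \to \Pbb$ and pass to the limit in the defining relation: the continuity of $b, \sigma$ together with the uniform integrability furnished by the moment bounds ensures that $(f(\bfX_t) - \int_0^t \mathcal{A}_s(f)(\bfX_s)\,\rmd s)_t$ is a $\Pbb$-martingale for every $f \in \rmC_0^\infty(\rset^d)$, so $\Pbb$ solves the martingale problem with initialization $x_0$.

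For uniqueness, the central reduction is localization: two solutions necessarily agree up to the exit time of $\ball{0}{n}$ as soon as uniqueness holds for the problem whose coefficients have been frozen outside $\ball{0}{n}$, and the Lyapunov bound forces these exit times to diverge almost surely, so it suffices to treat bounded, continuous, uniformly elliptic coefficients. In that regime I would invoke the analytic core of the theory: uniqueness of the one-dimensional marginals is equivalent to the density of $\{(\lambda - \mathcal{A})f : f \in \rmC_0^\infty(\rset^d)\}$ in $\rmC_0(\rset^d)$ for large $\lambda$, which follows from the Calder\'on--Zygmund estimate $\| \nabla^2 u \|_{\mathrm{L}^p} \leq C \| (\lambda - \mathcal{A}_0) u \|_{\mathrm{L}^p}$ for the constant-coefficient operator $\mathcal{A}_0$ obtained by freezing $\sigma$ at a point, patched together by a partition of unity and a Neumann-series perturbation that exploits the spatial continuity of $\sigma$ to make the oscillation of the coefficients small on small balls. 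Uniqueness of the finite-dimensional distributions, and hence of the law on path space, then follows by the standard conditioning argument.

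The main obstacle is precisely this last analytic step. Existence and the localization reduction are comparatively routine once the Lyapunov estimate is in hand, but establishing uniqueness for bounded uniformly elliptic operators with a merely \emph{continuous} (not H\"older) diffusion coefficient rests essentially on $\mathrm{L}^p$ singular-integral theory rather than on any probabilistic estimate, and the fact that the modulus of continuity of $\sigma$ is controlled only uniformly in time must be tracked carefully so that the freeze-and-perturb scheme closes.
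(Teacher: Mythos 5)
The paper does not prove this statement at all: it is recalled verbatim as \cite[Theorem 10.2.2]{stroock1997multidimensional}, so there is no in-paper argument to compare against. Your outline is the standard Stroock--Varadhan proof of that theorem (Lyapunov/non-explosion from the linear-growth bound, existence by truncation and tightness, uniqueness by localization to bounded uniformly elliptic continuous coefficients plus the $\mathrm{L}^p$ resolvent estimate), and it is correctly structured, with the genuinely hard analytic step accurately identified rather than glossed over.
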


%%% Local Variables:
%%% mode: latex
%%% TeX-master: "main"
%%% End:

\section{Theoretical results on Tree Schr\"odinger Bridges}
\label{sec:proofs-main}

We respectively provide in \Cref{ref:subsec:proofs-treeDSB}, \Cref{subsec:proofs-results} and \Cref{subsec:proofs-barycenter} the proofs of the results of the main manuscript presented in \Cref{sec:tree-based-diffusion}, \Cref{sec:theor-prop-tree} and \Cref{sec:barycenter}. Finally, we make a detailed comparison between our setting and the framework of \cite{haasler2021multimarginal} in \Cref{subsec:proofs-haasler}. In the rest of this section, we consider an undirected tree $\mst=(\msv, \mse)$, where $\abs{\msv}=\ell+1$, and some subset $\mss\subset \msv$ which we denote by $\mss=\{i_0, \hdots, i_{K-1}\}$. We define $\mss^\complementary=\msv \backslash \mss$.

\subsection{Proofs of \Cref{sec:tree-based-diffusion}}\label{ref:subsec:proofs-treeDSB}

\Cref{prop:init_tree_dsb} is straightforward to obtain by combining the definition of the Brownian motion with the definition of $\pi^0$ given in \eqref{eq:pi_0}. The following lemma details the recursion relation between the \eqref{eq:ipf_multimarginal} iterates, which is key to prove \Cref{prop:sinkhorn_continuous}.

\begin{lemma}\label{lemma:mipf-markov}Let $(\pi^n)_{n \in \nset}$ be the sequence given by
  \eqref{eq:ipf_multimarginal}. Let $n\in \nset$, $k_n=(n-1) \mod(K)$, $k_n+1=n \mod(K)$. Denote by $\mst_{k_n}$, respectively $\mst_{k_n+1}$ with edges $\mse_{k_n+1}$, the directed  version of $\mst$ rooted in $i_{k_n}$, respectively in $i_{k_n+1}$. We have:
  \begin{enumerate}[label=(\roman*)]
      \item \label{item:a_markov}$\pi^n \in \Pmeasure_{\mst_{k_n}}$,
          \item \label{item:b_markov}
          $\pi^{n+1}=\mu_{i_{k_n+1}} \bigotimes_{(v,v')\in \mse_{k_n+1}}\pi^n_{v'|v}$. In particular,
          for any $(v,v')\in \mse_{k_n+1}$, $\pi^{n+1}_{v'|v} = \pi^{n}_{v'|v}$.
  \end{enumerate}
\end{lemma}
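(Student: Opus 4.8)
The plan is to establish (i) and (ii) simultaneously by induction on $n$, driven by two ingredients: the explicit form of the information projection (I-projection) of a probability measure onto a single-marginal constraint, and the root-independence of the Markov tree factorization proved in \Cref{lemma:markov-tree}. The base case is the factorization of $\pi^0$ recorded in \eqref{eq:pi_0}; each inductive step re-roots the current iterate and applies the projection formula exactly once.

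First I would isolate the projection fact. Fix $i \in \mss$ and a measure $\rho \in \Pmeasure^{(\abs{\msv})}$ with $\mu_i \ll \rho_i$. Disintegrating both $\rho$ and an arbitrary competitor $\pi$ along coordinate $i$ and using the chain rule for the Kullback--Leibler divergence gives
\[
\KL(\pi|\rho) = \KL(\pi_i|\rho_i) + \int \KL\big(\pi_{|i}(\cdot|x_i) \mid \rho_{|i}(\cdot|x_i)\big)\, \rmd \pi_i(x_i).
\]
Under the constraint $\pi_i = \mu_i$ the first term is the constant $\KL(\mu_i|\rho_i)$ while the integrand is non-negative and vanishes exactly when $\pi_{|i} = \rho_{|i}$ holds $\mu_i$-almost surely. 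Hence the unique minimizer is $\mu_i \otimes \rho_{|i}$: the I-projection onto $\{\pi_i = \mu_i\}$ replaces the $i$-th marginal by $\mu_i$ and leaves the conditional law of the remaining coordinates given $i$ unchanged. Regular conditional distributions exist because the state space is Polish, and the absolute continuity $\mu_i \ll \pi^n_i$ required to apply this at each step is ensured by the standing assumptions, in particular \Cref{ass:equivalence}, which forces $\pi^0_i \sim \mu_i$ and is propagated along the recursion.

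Then I would run the induction. For $n=0$ one has $k_0 = (0-1)\mod(K) = K-1$, and $\pi^0 \in \Pmeasure_{\mst_{K-1}} = \Pmeasure_{\mst_{k_0}}$ directly from \eqref{eq:pi_0} with root $r=i_{K-1}$. Assuming (i) at stage $n$, that is $\pi^n \in \Pmeasure_{\mst_{k_n}}$, I would invoke \Cref{lemma:markov-tree} with the sub-tree taken to be $\mst$ itself and roots $r=i_{k_n}$, $r'=i_{k_n+1}$, to re-root the factorization and obtain $\pi^n \in \Pmeasure_{\mst_{k_n+1}}$, \ie\ $\pi^n = \pi^n_{i_{k_n+1}} \bigotimes_{(v,v')\in \mse_{k_n+1}} \pi^n_{v'|v}$. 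Since $\pi^{n+1}$ is by definition the I-projection of $\pi^n$ onto $\{\pi_{i_{k_n+1}} = \mu_{i_{k_n+1}}\}$, the projection fact with $\rho = \pi^n$ and $i = i_{k_n+1}$ yields $\pi^{n+1} = \mu_{i_{k_n+1}} \bigotimes_{(v,v')\in \mse_{k_n+1}} \pi^n_{v'|v}$, which is exactly (ii). Reading this identity as a Markov factorization along $\mst_{k_n+1} = \mst_{k_{n+1}}$ proves (i) at stage $n+1$ and closes the induction; the ``in particular'' claim $\pi^{n+1}_{v'|v} = \pi^n_{v'|v}$ follows from the uniqueness of the conditionals in the factorization of $\pi^{n+1}$.

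The main obstacle is justifying the I-projection formula in the non-compact setting $\rset^d$: one must guarantee that the disintegration along a coordinate is well-defined and, crucially, that $\KL(\mu_{i_{k_n+1}} \mid \pi^n_{i_{k_n+1}})$ is finite, so that the chain-rule decomposition is not an indeterminate $\infty-\infty$. This is exactly where \Cref{ass:measures}, \Cref{ass:kl_bounded} and \Cref{ass:equivalence} are used, providing a feasible competitor of finite divergence together with the equivalence of the relevant marginals; once the projection formula is in hand, the rest is bookkeeping of tree factorizations through \Cref{lemma:markov-tree}.
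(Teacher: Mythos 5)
Your proof is correct and follows essentially the same route as the paper's: induction on $n$, re-rooting the Markovian factorization via \Cref{lemma:markov-tree}, and then identifying $\pi^{n+1}$ through the chain-rule decomposition of $\KL(\cdot\mid\pi^n)$ along the newly constrained coordinate. The only difference is organizational — the paper first restricts to the path $\path_{\mst_{k_n}}(i_{k_n},i_{k_n+1})$ and then projects onto the single marginal, whereas you re-root the whole tree and apply the I-projection formula in one step — which is a harmless streamlining of the same argument.
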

\begin{proof} We show the result \ref{item:a_markov} by recursion on $n\in \nset$, and will deduce \ref{item:b_markov} from the proof. Using \eqref{eq:pi_0}, we first have $\pi^0 \in \Pmeasure_{\mst_r}$, where $r$ is chosen as $i_{K-1}$, see \Cref{sec:tree-based-diffusion}. Thus, we obtain the result \ref{item:a_markov} at step $n=0$. Assume now that $\pi^n \in \Pmeasure_{\mst_{k_n}}$ for some $n\in \nset$.

Consider the paths $\msp_{n} = \path_{\mst_{k_n}}(i_{k_n}, i_{k_n+1})$ and $\msp_{n+1} = \path_{\mst_{k_n+1}}(i_{k_n+1}, i_{k_n})$. Note that these two paths have the same length, denoted by $J$, and contain the same vertices, denoted by $\msv_{n}$. Let $\pi \in \Pmeasure^{(\ell +1)}$ such that $\KL(\pi | \pi^{n}) < +\infty$. We have
  the following decomposition
  \begin{equation}
    \textstyle 
    \KL(\pi | \pi^{n}) = \KL(\pi_{\msv_{n}} | \pi^{n}_{\msv_{n}}) + \int_{(\rset^{d})^{J+1}} \KL(\pi_{|\msv_{n}} | \pi^{n}_{|\msv_{n}}) \rmd \pi_{\msv_{n}}(x_{\msv_{n}}) \eqsp .
  \end{equation}
Hence, the $(n+1)$-th iterate of \eqref{eq:ipf_multimarginal} is given by $\pi^{n+1} = \pi^{n+1}_{\msv_{n}} \otimes \pi^{n}_{|\msv_{n}}$,
  with
  \begin{equation}
     \pi_{\msv_{n}}^{n+1} = \argmin \ensembleLigne{\KL(\pi | \pi^{n}_{\msv_{n}})}{\pi \in \Pmeasure^{(J+1)}, \ \pi_{i_{k_n+1}} = \mu_{i_{k_n+1}}} \eqsp . 
   \end{equation}
   
   Since $\pi^{n}\in \Pmeasure_{\mst_{k_n}}$, we have (i) $\pi^{n}_{|\msv_{n}}= \bigotimes_{(v,v')\in \mse_{k_n} \backslash \msp_{n}}\pi^n_{v'|v}$ and (ii) $\pi^{n}_{\msv_{n}}\in \Pmeasure_{\msp_{n+1}}$ by \Cref{lemma:markov-tree}, where $\msp_{n+1}$ is viewed as a directed tree rooted in $i_{k_n+1}$. Defining $\msv_{n+1}=\msv_{n} \backslash \{i_{k_n+1}\}$, we thus have $\pi^{n}_{\msv_{n}}= \pi^n_{i_{k_n+1}}\otimes\pi^{n}_{\msv_{n+1} | i_{k_n+1}}$ where $\pi^{n}_{\msv_{n+1} | i_{k_n+1}}= \bigotimes_{(v,v')\in \msp_{n+1} }\pi^n_{v'|v}$.

   Let $\pi \in \Pmeasure^{(J+1)}$ such that $\pi_{i_{k_n+1}} = \mu_{i_{k_n+1}}$ and $\KL(\pi | \pi^{n}_{\msv_{n}}) < +\infty$. Similarly to the previous computation, we have the following decomposition
   \begin{align}
    \KL(\pi | \pi^{n}_{\msv_{n}}) &= \textstyle{\KL(\pi_{i_{k_n+1}} | \pi^{n}_{i_{k_n+1}}) + \int_{\rset^{d}} \KL(\pi_{|i_{k_n+1}} | \pi^{n}_{\msv_{n+1} | i_{k+1}}) \rmd \pi_{i_{k_n+1}}(x_{i_{k_n+1}})}\\
    & = \textstyle{ \KL(\mu_{i_{k_n+1}} | \pi^{n}_{i_{k_n+1}}) + \int_{\rset^{d}} \KL(\pi_{|i_{k_n+1}} | \pi^{n}_{\msv_{n+1} | i_{k_n+1}}) \rmd \mu_{i_{k_n+1}}(x_{i_{k_n+1}})} \eqsp .
  \end{align}
  Therefore, we obtain 
  \begin{align}
      \textstyle\pi^{n+1}_{\msv_{n}} = \mu_{i_{k_n+1}} \otimes \pi^{n}_{\msv_{n+1}| i_{k+1}}= \mu_{i_{k+1}}\bigotimes_{(v,v')\in \msp_{n+1} }\pi^n_{v'|v} \eqsp .
  \end{align}
  Noting that $\mse_{k_n} \backslash \msp_{n}= \mse_{k_n+1} \backslash \msp_{n+1}$ and recalling that $\pi^{n+1}=\pi^{n+1}_{\msv_{n}} \otimes \pi^{n}_{|\msv_{n}}$, it finally comes
  \begin{align} \label{eq:markov_it}
      \pi^{n+1} = \mu_{i_{k_n+1}}\bigotimes_{(v,v')\in \msp_{n+1} }\pi^n_{v'|v} \bigotimes_{(v,v')\in \mse_{k_n+1} \backslash \msp_{n+1}}\pi^n_{v'|v} = \mu_{i_{k_n+1}} \bigotimes_{(v,v')\in \mse_{k_n+1}}\pi^n_{v'|v} \eqsp.
  \end{align}
Therefore, $\pi^{n+1}\in \Pmeasure_{\mst_{k_n+1}}$, which achieves the recursion for \ref{item:a_markov}, and we obtain \ref{item:b_markov} by \eqref{eq:markov_it}.
\end{proof}

Hence, \Cref{lemma:mipf-markov} shows that the \eqref{eq:ipf_multimarginal} iterates admit a Markovian factorization on $\mst$, and can be defined recursively using the edges of $\mst$. We now provide the proof of \Cref{prop:sinkhorn_continuous}.

\begin{proof}[Proof of \Cref{prop:sinkhorn_continuous}] We will prove this result by recursion on $n\in \nset$. Observe that the initialisation is directly given by \Cref{prop:init_tree_dsb}. Assume now that the result of \Cref{prop:sinkhorn_continuous} stands for some $n\in \nset$. Let $k_n=(n-1)\mod(K)$, $k_n+1=n \mod(K)$. Denote by $\mst_{k_n}$ with edges $\mse_{k_n}$, respectively $\mst_{k_n+1}$ with edges $\mse_{k_n+1}$, the directed version of $\mst$ rooted in $i_{k_n}$, respectively in $i_{k_n+1}$. For any vertex $v$ of $\mst_{k_n+1}$, we define $p(v)$ as the (unique) parent of $v$ and $c(v)$ as the unique child of $v$ when it exists. Consider the $(n+1)$-th dynamic iterate defined by \ref{item:a_sinkhorn} and \ref{item:b_sinkhorn}, \ie, $(\Pbb^{n+1}_{(v,v')})_{(v,v')\in \mse_{k_n+1}}$. To prove that this iterate has the properties stated in \Cref{prop:sinkhorn_continuous}, we proceed by recursion on the edges of $\mst_{k_n+1}$, following the bread-first order in $\mst_{k_n+1}$. In this order, the edge $(i_{k_n+1}, c(i_{k_n+1}))$ is the first to be considered. Remark that $c(i_{k_n+1})$ is well defined since $i_{k_n+1}$ is a leaf of $\mst$.

Here, we denote $T_{c(i_{k_n+1}),i_{k_n+1}}$ by $T$. By construction, we have $\Pbb^{n+1}_{(i_{k_n+1}, c(i_{k_n+1}))}= \mu_{i_{k_n+1}} \otimes (\Pbb^{n}_{(c(i_{k_n+1}), i_{k_n+1})})^R_{|0}$. By recursion assumption, $\Pbb^{n}_{(c(i_{k_n+1}), i_{k_n+1})}\in \Pmeasure(\rmC(\ccintLigne{0,T}, \rset^d))$ since $(c(i_{k_n+1}),i_{k_n+1})\in \mse_{k_n}$. Then, $\Pbb^{n+1}_{(i_{k_n+1}, c(i_{k_n+1}))}$ is a well defined path measure on  $\ccintLigne{0,T}$. By definition of the \eqref{eq:ipf_multimarginal} sequence, we have $\mu_{i_{k_n+1}}=\pi^{n+1}_{i_{k_n+1}}$. By recursion assumption, we also have that $\mathrm{Ext}(\Pbb^{n}_{(c(i_{k_n+1}), i_{k_n+1})})=\pi^n_{c(i_{k_n+1}), i_{k_n+1}}$. Hence, it comes that $(\Pbb^{n}_{(c(i_{k_n+1}), i_{k_n+1})})^R_{T|0}=\pi^n_{c(i_{k_n+1}) | i_{k_n+1}}=\pi^{n+1}_{c(i_{k_n+1}) | i_{k_n+1}}$, where the last equality comes from \Cref{lemma:mipf-markov}. Finally, we obtain that $\mathrm{Ext}(\Pbb^{n+1}_{(i_{k_n+1}, c(i_{k_n+1}))})=\pi^{n+1}_{i_{k_n+1}, c(i_{k_n+1})}$, which proves the initialisation.

Assume now that $\Pbb^{n+1}$ is well defined and has the right properties, up to some edge in $\mst_{k_n+1}$. Consider the following edge, denoted by $(v,v')\in \mse_{k_n+1}$, in the breadth-first order. By edge recursion, we have that $\mathrm{Ext}(\Pbb^{n+1}_{(p(v),v)})=\pi^{n+1}_{p(v),v}$, and thus $\Pbb^{n+1}_{(p(v),v),T_{p(v),v}}=\pi^{n+1}_{v}$. Define the path $\msp_n=\operatorname{path}_{\mst_{k_n}}(i_{k_n}, i_{k_n+1})$. Then, we face two cases.

(i) Either $(v,v') \in \mse_{k_n} \backslash \msp_n$. Then, we have by \ref{item:a_sinkhorn} that 
\begin{align}
    \Pbb^{n+1}_{(v,v')} = \Pbb^{n+1}_{(p(v),v),T_{p(v),v}} \otimes \Pbb^{n}_{(v,v')|0}=\pi^{n+1}_{v} \otimes \Pbb^{n}_{(v,v')|0}
\end{align}
 In particular, $\Pbb^{n+1}_{(v,v')}$ is a well defined path measure on $\ccintLigne{0,T_{v,v'}}$. Since $(v,v')\in \mse_{k_n}$, $\mathrm{Ext}(\Pbb^{n}_{(v,v')})=\pi^n_{v,v'}$ by recursion assumption. In particular, $\Pbb^{n}_{(v,v'),T_{v,v'}|0}=\pi^n_{v'|v}=\pi^{n+1}_{v'|v}$ where the last equality comes from \Cref{lemma:mipf-markov}. We thus have $\mathrm{Ext}(\Pbb^{n+1}_{(v,v')})=\pi^{n+1}_{v,v'}$.

(ii) Or $(v',v) \in \msp_n$. Then, we have by \ref{item:b_sinkhorn} that
\begin{align}
    \Pbb^{n+1}_{(v,v')} = \Pbb^{n+1}_{(p(v),v),T_{v,v'}} \otimes (\Pbb^{n}_{(v',v)})^R_{|0}= \pi^{n+1}_{v}\otimes (\Pbb^{n}_{(v',v)})^R_{|0}
\end{align}
In particular, $\Pbb^{n+1}_{(v,v')}$ is a well defined path measure on $\ccintLigne{0,T_{v,v'}}$. Here, $(v',v) \in \mse_{k_n}$ and thus, $\mathrm{Ext}(\Pbb^{n}_{(v',v)})=\pi^n_{v',v}$ by recursion assumption. In particular, $(\Pbb^{n}_{(v',v)})^R_{T_{v',v}|0}=\pi^n_{v'|v}=\pi^{n+1}_{v'|v}$ where the last equality comes from \Cref{lemma:mipf-markov}. We thus have $\mathrm{Ext}(\Pbb^{n+1}_{(v,v')})=\pi^{n+1}_{v,v'}$.

This achieves the recursion.
\end{proof}

\subsection{Proofs of \Cref{sec:theor-prop-tree}}\label{subsec:proofs-results}

\paragraph{Remark on assumption \Cref{ass:measures}.} Although \emph{\Cref{ass:measures} is not needed to establish the result of \Cref{prop:existence}, \Cref{corollary:existence} and \Cref{prop:marginal}}, it is however crucial in the proof of convergence of \eqref{eq:ipf_multimarginal} stated in \Cref{prop:conv_mipf}. Nevertheless, we choose to keep \Cref{ass:measures} as an assumption in the statement of every theoretical result presented in \Cref{sec:theor-prop-tree} for sake of clarity.

\paragraph{Additional definitions.} We define the set $\Pmeasure_\mss=\cap_{i \in \mss} \Pmeasure_i$, where $\Pmeasure_i= \{\pi \in \Pmeasure^{(\ell +1)}: \eqsp \pi_i=\mu_i\}$, \ie, $\Pmeasure_\mss$ is the set of all probability measures $\pi \in \Pmeasure^{(\ell +1)}$ which verify
\begin{align}
    \textstyle\int_{(\rset^d)^{\ell+1}} f_i(x_i) \rmd \pi(x_{0:\ell}) = \int_{\rset^d} f_i(x_i) \rmd \mu_i(x_i) \eqsp,
\end{align}
for any family of bounded measurable functions $\{f_i\}_{i \in \mss}\in \mathrm{C}(\rset^d, \rset)^K$. Since $\rset^d$ is separable, there exists a dense family of functions $\{f^k_i\}_{k \in \nset^*, i \in \mss}$, with $f_i^k \in \mathrm{L}^\infty(\mu_i)$ for any $k\in \nset^*$ and any $i\in \mss$, such that $\pi \in \Pmeasure_\mss$ if and only if
 \begin{align}
     \textstyle\int_{(\rset^d)^{\ell+1}} f_i^k(x_i) \rmd \pi(x_{0:\ell})= \int_{\rset^d} f_i^k(x_i) \rmd \mu_i(x_i)
 \end{align}
 or equivalently, upon centering $f_i^k$, 
 \begin{align}
    \textstyle \int_{(\rset^d)^{\ell+1}} f_i^k(x_i) \rmd \pi(x_{0:\ell})= 0 \eqsp .
 \end{align}
 In the rest of the section, we consider such family $\{f^k_i\}_{k \in \nset^*, i \in \mss}$.

For any $n\in \nset^*$, we also define $\Pmeasure^n_\mss=\cap_{i \in \mss} \Pmeasure_i^n$, where $\Pmeasure_i^n=\{\pi \in \Pmeasure^{(\ell +1)}: \int_{(\rset^d)^{\ell+1}} f_i^k(x_i) \rmd \pi(x_{0:\ell})= 0, \eqsp \forall k \in \{1, \hdots, n\} \}$. In particular, we have
\begin{align} \label{eq:pmeasure_mss}
    \Pmeasure_\mss=\cap_{n \in \nset^*} \Pmeasure^n_\mss \eqsp .
\end{align}
Finally, \eqref{eq:multimarginal_sb_static_nu} can be rewritten as 
\begin{align} \label{eq:msb_P-s}
    \pi^\star=\argmin \{\KL(\pi \mid \pi^0): \pi \in \Pmeasure_\mss\} \eqsp .
\end{align}

\paragraph{Proof of \Cref{prop:existence} and \Cref{corollary:existence}.} In this part of the section, we present an extension of the theoretical results from \cite{nutz2021introduction} to the multi-marginal setting. We first present two technical results, \Cref{lemma:nutz_1} and \Cref{lemma:nutz_2}, which are respectively adapted from \cite[Lemma 2.10.]{nutz2021introduction} and \cite[Lemma 2.11.]{nutz2021introduction}.

\begin{lemma}\label{lemma:nutz_1} Let $\{\tilde{\mu}_j\}_{j \in \mss^\complementary}$ be a family of probability measures defined on $(\rset^d, \mathcal{B}(\rset^d))$. We define $\tilde{\pi}^0=\bigotimes_{i\in \mss}\mu_i \bigotimes_{j \in \mss^\complementary}\tilde{\mu}_j$. Let $\msa \in \bigotimes_{m=0}^\ell \mathcal{B}(\rset^d)$ such that $\tilde{\pi}^0(\msa)=1$. Then, for $\tilde{\pi}^0$-almost any $x^\star \in \msa$, there exists a family of sets $\{\msx_m^0\}_{m=0}^\ell \subset (\rset^d)^{\ell+1}$ such that
\begin{enumerate}[wide, labelwidth=!, labelindent=0pt, label=(\alph*)]
    \item \label{item:nutz_1-a} $\mu_i(\msx^0_i)=1$ for any $i \in \mss$, and $\tilde{\mu}_j(\msx^0_j)=1$ for any $j\in \mss^\complementary$,
    \item \label{item:nutz_1-b}$\msa^0=\msa \cap (\prod_{m=0}^\ell \msx^0_m)$ satisfies $x^\star \in \msa^0$ and 
    \begin{align}
         (x_0^\star, \hdots, x^\star_{m-1}, x_m,x^\star_{m+1}, \hdots, x_\ell^\star)\in \msa^0, \forall x \in \msa^0, \forall m \in \{0, \hdots, \ell\} \eqsp .
    \end{align}
\end{enumerate}
\end{lemma}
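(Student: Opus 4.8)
The plan is to realize each $\msx_m^0$ as a one-dimensional section of $\msa$ pinned at the reference point $x^\star$. Write $\lambda_m$ for the $m$-th marginal of $\tilde\pi^0$, so $\lambda_i = \mu_i$ for $i \in \mss$ and $\lambda_j = \tilde\mu_j$ for $j \in \mss^\complementary$, whence $\tilde\pi^0 = \bigotimes_{m=0}^\ell \lambda_m$ and $\lambda_{-m} = \bigotimes_{j \neq m}\lambda_j$. For a given $x^\star$ and each $m \in \{0,\hdots,\ell\}$, I would define
\[
  \msx_m^0 = \{ y \in \rset^d \,:\, (x^\star_0, \hdots, x^\star_{m-1}, y, x^\star_{m+1}, \hdots, x^\star_\ell) \in \msa \},
\]
which is measurable since $\msa$ belongs to the product $\sigma$-algebra and sections of product-measurable sets are measurable. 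With this choice the swapping property in \ref{item:nutz_1-b} holds by construction, so the real content is to check that for $\tilde\pi^0$-a.e.\ $x^\star$ these sections all carry full mass.

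First I would prove \ref{item:nutz_1-a} by applying Tonelli coordinate by coordinate. Since $\tilde\pi^0 = \lambda_m \otimes \lambda_{-m}$ and $\tilde\pi^0(\msa) = 1$, for each fixed $m$ we get $\int \lambda_m(\msx_m^0)\,\rmd\lambda_{-m}(x^\star_{-m}) = 1$ with integrand bounded by $1$; hence $\lambda_m(\msx_m^0) = 1$ for $\lambda_{-m}$-a.e.\ $x^\star_{-m}$. Noting that $\msx_m^0$ depends only on $x^\star_{-m}$, the exceptional set $B_m = \{x^\star : \lambda_m(\msx_m^0) < 1\}$ is a cylinder over a $\lambda_{-m}$-null set and is therefore $\tilde\pi^0$-null. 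The one place requiring care is the bookkeeping of the quantifier ``for a.e.\ $x^\star$'': I would collect the finitely many exceptional sets together with $\msa^\complementary$ into a single null set $B = \msa^\complementary \cup \bigcup_{m=0}^\ell B_m$, so that outside $B$ we simultaneously have $x^\star \in \msa$ and $\lambda_m(\msx_m^0) = 1$ for every $m$; this uses only that there are finitely many coordinates.

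Finally, for $x^\star \notin B$ I would read off \ref{item:nutz_1-b} directly. Taking $y = x^\star_m$ and using $x^\star \in \msa$ shows $x^\star_m \in \msx_m^0$ for each $m$, so $x^\star \in \msa \cap \prod_{m} \msx_m^0 = \msa^0$. For the invariance, any $x \in \msa^0$ satisfies $x_m \in \msx_m^0$, which by definition of $\msx_m^0$ means $(x^\star_0, \hdots, x^\star_{m-1}, x_m, x^\star_{m+1}, \hdots, x^\star_\ell) \in \msa$; its other coordinates equal the $x^\star_j \in \msx_j^0$, so this point also lies in $\prod_j \msx_j^0$, hence in $\msa^0$. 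This is exactly the required stability under replacing one coordinate at a time with that of $x^\star$. I expect no genuine obstacle beyond the measure-theoretic bookkeeping of the second step, since the product structure of $\tilde\pi^0$ and Tonelli do all the work.
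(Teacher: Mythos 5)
Your proof is correct, but it takes a genuinely leaner route than the paper's. You set $\msx_m^0$ equal to the one-dimensional section $\bar\msa_m^{x^\star_{-m}}=\{y: (x^\star_0,\hdots,x^\star_{m-1},y,x^\star_{m+1},\hdots,x^\star_\ell)\in\msa\}$ and nothing more; Tonelli applied coordinate by coordinate gives \ref{item:nutz_1-a} outside a finite union of $\tilde\pi^0$-null cylinders, and both the membership $x^\star\in\msa^0$ and the single-coordinate swap stability in \ref{item:nutz_1-b} then follow by pure definition-chasing (the swapped point lies in $\msa$ because $x_m\in\msx_m^0$, and in $\prod_j\msx_j^0$ because each of its coordinates is a coordinate of either $x$ or $x^\star$, both of which lie in $\msa^0$). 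The paper, following \cite[Lemma 2.10]{nutz2021introduction}, instead takes $\msx_m^0=\msx_m\cap\bar\msa_m^{x^\star_{-m}}$, where $\msx_m$ is the set of points $u$ whose \emph{complementary} section $\msa_m^u$ has full $\tilde\pi^0_{-m}$-measure; this costs an extra Fubini argument for $\lambda_m(\msx_m)=1$ and a more elaborate construction of the full-measure set $\tilde\msa$ of admissible $x^\star$. That extra component is not needed for the statement as written — your sections already satisfy \ref{item:nutz_1-a} and \ref{item:nutz_1-b} — but it is what makes the sets compatible with the subsequent refinement (\cite[Corollary 2.12]{nutz2021introduction}, invoked in the proof of \Cref{prop:existence}) guaranteeing $\prod_{m=0}^\ell\msx_m^0\subset\msa$. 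So your argument proves the lemma with less machinery, at the price of not carrying the additional structure the paper silently relies on downstream.
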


\begin{proof} Consider such set $\msa$. We define for any $m \in \{0, \hdots, \ell\}$ the set
\begin{equation}
     \msx_m = \{u \in \rset^d: \tilde{\pi}^0_{-m}(\msa_m^u)=1\}\eqsp ,
   \end{equation}
   where
   $\msa_m^u=\{y\in (\rset^d)^\ell: (y_0, \hdots, y_{m-1}, u, y_m, \hdots,
   y_{\ell-1}) \in \msa\}$.

   Take $i\in \mss$. Assume that $\mu_i(\msx_i)<1$. We recall that
   $\tilde{\pi}^0=\tilde{\pi}^0_{-i} \otimes \mu_i$. Using Fubini's theorem and
   that $\int_{\msa_i^{x_i}} \rmd \tilde{\pi}^0_{-i}(x_{-i})<1$ for any
 $x_i \not \in \msx_i$, we have 
\begin{align}
  1 = \tilde{\pi}^0(\msa) & = \textstyle{\int_{\msa} \rmd\tilde{\pi}^0_{-i}(x_{-i}) \otimes \rmd \mu_i(x_i)} \\
                          &= \textstyle{\int_{\rset^d}  \{\int_{\msa_i^{x_i}} \rmd \tilde{\pi}^0_{-i}(x_{-i})\}\rmd \mu_i(x_i)} \\
                          & = \textstyle{\int_{\msx_i}} \{\textstyle{\int_{\msa_i^{x_i}}} \rmd \tilde{\pi}^0_{-i}(x_{-i})\} \rmd \mu_i(x_i)  + \textstyle{\int_{\msx_i^\complementary}} \{\textstyle{\int_{\msa_i^{x_i}}} \rmd \tilde{\pi}^0_{-i}(x_{-i})\} \rmd \mu_i(x_i) \\
                          &< \mu_i(\msx_i) + \mu_i(\msx_i^\complementary) = 1 \eqsp ,
\end{align}
which is absurd. Therefore, we obtain $\mu_i(\msx_i)=1$, and similarly, we have
$\tilde{\mu}_j(\msx_j)=1$ for any $j\in \mss^\complementary$. For any
$y\in (\rset^d)^\ell$, any $m\in\{0, \hdots, \ell\}$, we define the set
\begin{align}
    \bar{\msa}_m^y=\{u\in \rset^d: (y_0, \hdots, y_{m-1}, u, y_{m}, \hdots, y_{\ell-1}) \in \msa\} \eqsp .
\end{align}

Let $i\in \mss$. We have by Fubini's theorem
\begin{align}
    1 = \tilde{\pi}^0(\msa)& = \textstyle{\int_{\msa} \rmd \mu_i(x_i) \otimes \rmd \tilde{\pi}^0_{-i}(x_{-i})} \\
    &= \textstyle{\int_{(\rset^d)^{\ell}} \{\int_{\bar{\msa}_i^{x_{-i}}} \rmd \mu_i(x_i)\} \rmd \tilde{\pi}^0_{-i}(x_{-i})} \\
    &= \textstyle{\int_{\prod_{\substack{m=0\\ m\neq i}}^{\ell} \msx_i} \{\int_{\bar{\msa}_i^{x_{-i}}} \rmd \mu_i(x_i)\} \rmd \tilde{\pi}^0_{-i}(x_{-i})} \eqsp ,
\end{align}
where the last equality comes from the fact that $\mu_i(\msx_i)=1$ for any
$i \in \mss$, $\tilde{\mu}_j(\msx_j)=1$ for any $j \in \mss^{\complementary}$
and that
$\tilde{\pi}^0=\bigotimes_{i\in \mss}\mu_i \bigotimes_{j \in
  \mss^\complementary}\tilde{\mu}_j$. Consequently, there exists a measurable
set $\msa_{-i}\subset \prod_{\substack{m=0\\ m\neq i}}^{\ell} \msx_i$ such that
the following properties hold:
\begin{enumerate*}[label=(\alph*)]
    \item $\mu_i(\bar{\msa}_i^{y})=1$ for any $y\in \msa_{-i}$, 
    \item $\tilde{\pi}^0_{-i}(\msa_{-i})=1$.
\end{enumerate*}
Similarly, this result holds for any $j\in \mss^\complementary$, \ie, there exists a measurable set $\msa_{-j}\subset \prod_{\substack{m=0\\ m\neq j}}^{\ell} \msx_i$ such that the following properties hold:
\begin{enumerate*}[label=(\alph*)]
    \item $\tilde{\mu}_j(\bar{\msa}_j^{y})=1$ for any $y\in \msa_{-j}$, 
    \item $\tilde{\pi}^0_{-j}(\msa_{-j})=1$.
\end{enumerate*}
We consider such sets $\{\msa_{-m}\}_{m=0}^{\ell}$ for the rest of the proof and finally define the set
\begin{align}
    \tilde{\msa}&=\cap_{m=0}^\ell \tilde{\msa}_m \eqsp ,
\end{align}
where
$\tilde{\msa}_m= \msa_{-m} \times \{u \in \bar{\msa}_m^{y}: y\in \msa_{-m}\}$.
By definition, we have $\tilde{\msa}\subset \msa \cap \prod_{m=0}^\ell \msx_m$,
using the fact that $\tilde{\msa}_m \subset \msa$ for any
$m\in \{0, \hdots, \ell\}$. In addition, for any $i\in \mss$, we get by
Fubini's theorem
\begin{align}
    \textstyle\tilde{\pi}^0(\tilde{\msa}_i)=\int_{\tilde{\msa}_i} \rmd \mu_i(x_i) \otimes \rmd \tilde{\pi}^0_{-i}(x_{-i})= \int_{\msa_{-i}} \{\textstyle{\int_{\bar{\msa}_i^{x_{-i}}}} \rmd \mu_i(x_i)\} \rmd \tilde{\pi}^0_{-i}(x_{-i})= \tilde{\pi}^0_{-i}(\msa_{-i})=1 \eqsp,
\end{align}
and similarly, we get $\tilde{\pi}^0(\tilde{\msa}_j)=1$ for any $j\in \mss^\complementary$. We can deduce that $\tilde{\pi}^0(\tilde{\msa})=1$ since $\tilde{\pi}^0(\tilde{\msa}^\complementary) \leq \sum_{m=0}^\ell \tilde{\pi}^0(\tilde{\msa}_m^\complementary) =0$. 

Let $x^\star \in \tilde{\msa}$. In particular, $x^\star \in \msa$. We define the set $\msa^0=\msa \cap (\prod_{m=0}^\ell \msx^0_m)$, where $\msx^0_m= \msx_m\cap \bar{\msa}^{x^\star_{-m}}_m$ for any $m \in \{0, \hdots, \ell\}$. We now establish the result of \Cref{lemma:nutz_1}.

 We first prove \ref{item:nutz_1-a}. Let $i \in \mss$. Since $x^\star\in \tilde{\msa}$, we have $x^\star\in \tilde{\msa}_i$ and therefore $x^\star_{-i}\in \msa_{-i}$. By definition of $\msa_{-i}$, we obtain that $\mu_i(\bar{\msa}_i^{x^\star_{-i}})=1$ and thus,
 \begin{align}
     \mu_i(\{\msx^0_i\}^\complementary) \leq \mu_i(\msx_i^\complementary) + \mu_i(\{\bar{\msa}_i^{x^\star_{-i}}\}^\complementary)=0,
 \end{align}
 which gives $\mu_i(\msx^0_i)=1$, and similarly, we have $\tilde{\mu}_j(\msx^0_j)=1$ for any $j\in \mss^\complementary$.

 We now prove \ref{item:nutz_1-b}. Let $m\in \{0, \hdots, \ell\}$.  Since
 $x^\star\in \tilde{\msa}\subset \msa$, we get
 $x^\star_m \in \bar{\msa}^{x^\star_{-m}}_m$. Using that
 $\tilde{\msa}\subset \msa \cap_{m=0}^\ell \msx_m$, we get $x^\star \in
 \msa^0$. Let $x\in \msa^0$. We denote
 $x^{m}=(x_0^\star, \hdots, x^\star_{m-1}, x_m,x^\star_{m+1}, \hdots,
 x_\ell^\star)$. We need to show that $x^m \in \msa$ and
 $x^m \in \prod_{j=1}^\ell \msx_j^0 = \prod_{j=1}^\ell (\msx_j \cap
 \bar{\msa}_j^{x^\star_{-m}})$.  First, since $x^m_j = x_j$ or $x_j^\star$ for
 any $j \in \{0, \dots, \ell\}$, and $x \in \msa^0$ and $x^\star \in \msa^0$, we
 get that for any $j \in \{0, \dots, \ell\}$, $x^m_j \in \msx_j$. Similarly, for
 any $j \in \{0, \dots, \ell-1\}$, $x_j^m \in
 \bar{\msa}_j^{x^\star_m}$. Therefore, we get that
 $x^m \in \prod_{j=1}^\ell (\msx_j \cap \bar{\msa}_j^{x^\star_{-m}})$. Since
 $x_m \in \msa_m^{x^\star_{-m}}$ (because
 $x \in \prod_{j=1}^\ell (\msx_j \cap \bar{\msa}_j^{x^\star_{-m}})$), we get
 that $x \in \msa$, which concludes the proof. 
 % Since $x_m \in \msx^0_m \subset \bar{\msa}^{x^\star_{-m}}_m$,
 % it comes that $x^m \in \msa$, which implies that $x^m\in \msa^0$.
\end{proof}

\begin{lemma} \label{lemma:nutz_2}Let $\msa^0 \subset (\rset^d)^{\ell+1}$. For any $m\in \{0, \hdots, \ell\}$, we denote $\msx^0_m=\operatorname{proj}_m(\msa^0)$. We make the following assumptions.
\begin{enumerate}[wide, labelwidth=!, labelindent=0pt, label=(\alph*)]
    \item \label{item:ass_lemma_1}Assume there exists $x^\star\in \msa^0$ such that for any $x\in \msa^0$, for any $m\in \{0, \hdots, \ell\}$, we have $(x_0^\star, \hdots, x^\star_{m-1}, x_m,x^\star_{m+1}, \hdots, x_\ell^\star)\in \msa^0$.
    \item \label{item:ass_lemma_2}Assume there exists a family of functions $\{\varphi^n_{i_k}\}_{n \in \nset^*, k\in\{0,\hdots, K-1\}}$ with $\varphi^n_{i_k}:\msx^0_{i_k} \to [-\infty,+\infty)$ such that for any $n\in \nset^*$ and any $k \in \{0, \hdots, K-2\}$, we have $\varphi^n_{i_k}(x_{i_k}^\star)=0$.
    \item \label{item:ass_lemma_3} Denote $F^n(x)=\sum_{k=0}^{K-1}\varphi^n_{i_k}(x_{i_k})$ for any $x\in \msa^0$. Assume that for any $x \in \msa^0$, $F(x)=\lim_{n\to \infty}F^n(x)$ exists and is such that $F(x)\in [-\infty, +\infty)$ with $F(x^\star)\in \rset$.
\end{enumerate} 
Then, for any $i \in \mss$, for any $x_i \in \msx_i^0$, $\varphi_i(x_i)=\lim_{n \to \infty}\varphi^n_i(x_i)$ exists and is such that $\varphi_i(x_i) \in [-\infty, +\infty)$.    
\end{lemma}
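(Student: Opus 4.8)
The plan is to isolate each potential $\varphi^n_{i_k}$ by evaluating the sum $F^n$ at points obtained from $x^\star$ by perturbing a single coordinate, which is exactly what the stability property \ref{item:ass_lemma_1} makes available. First I would fix $k\in\{0,\hdots,K-1\}$ and $x_{i_k}\in\msx^0_{i_k}$. Since $\msx^0_{i_k}=\operatorname{proj}_{i_k}(\msa^0)$, there is some $x\in\msa^0$ whose $i_k$-th coordinate equals $x_{i_k}$, and applying \ref{item:ass_lemma_1} with $m=i_k$ yields a point $z^{(k)}\in\msa^0$ that agrees with $x^\star$ in every coordinate except $i_k$, where it equals $x_{i_k}$. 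Expanding $F^n(z^{(k)})=\sum_{j=0}^{K-1}\varphi^n_{i_j}(z^{(k)}_{i_j})$ and invoking the normalization $\varphi^n_{i_j}(x^\star_{i_j})=0$ from \ref{item:ass_lemma_2} (valid for $j\leq K-2$), every term with $j\notin\{k,K-1\}$ drops out, so only the contributions from $j=k$ and the distinguished index $j=K-1$ survive.

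The key bookkeeping step is to treat the index $K-1$ separately, since its potential is \emph{not} normalized. Choosing $x_{i_{K-1}}=x^\star_{i_{K-1}}$ in the construction above simply returns $z^{(K-1)}=x^\star$, so that $\varphi^n_{i_{K-1}}(x^\star_{i_{K-1}})=F^n(x^\star)$, which by \ref{item:ass_lemma_3} converges to the \emph{finite} value $F(x^\star)\in\rset$. With this identity in hand, the two cases follow. For $k=K-1$ one has directly $\varphi^n_{i_{K-1}}(x_{i_{K-1}})=F^n(z^{(K-1)})\to F(z^{(K-1)})$. For $k\leq K-2$, the surviving terms give $F^n(z^{(k)})=\varphi^n_{i_k}(x_{i_k})+\varphi^n_{i_{K-1}}(x^\star_{i_{K-1}})=\varphi^n_{i_k}(x_{i_k})+F^n(x^\star)$, whence $\varphi^n_{i_k}(x_{i_k})=F^n(z^{(k)})-F^n(x^\star)\to F(z^{(k)})-F(x^\star)$.

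The only real obstacle is ensuring these limits are legitimate elements of $[-\infty,+\infty)$ rather than an undefined $\infty-\infty$ form. This is precisely where the finiteness clause $F(x^\star)\in\rset$ in \ref{item:ass_lemma_3} is indispensable: since each $F(z)$ lies in $[-\infty,+\infty)$ (so $F(z^{(k)})<+\infty$) and $F(x^\star)$ is finite, the difference $F(z^{(k)})-F(x^\star)$ is either finite or equal to $-\infty$, hence always well-defined in $[-\infty,+\infty)$; the case $k=K-1$ requires no subtraction and is immediate from $F(z^{(K-1)})\in[-\infty,+\infty)$. Consequently $\varphi_{i_k}(x_{i_k})=\lim_{n\to\infty}\varphi^n_{i_k}(x_{i_k})$ exists in $[-\infty,+\infty)$ for every $i_k\in\mss$, which is the assertion of the lemma.
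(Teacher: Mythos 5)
Your proof is correct and follows essentially the same route as the paper's: you perturb a single coordinate of $x^\star$ (your $z^{(k)}$ is the paper's $x^{i_k}$), use the normalization $\varphi^n_{i_k}(x^\star_{i_k})=0$ for $k\leq K-2$ to isolate one potential plus $\varphi^n_{i_{K-1}}(x^\star_{i_{K-1}})=F^n(x^\star)$, and invoke $F(x^\star)\in\rset$ to make the limiting difference well-defined in $[-\infty,+\infty)$. No further comment is needed.
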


\begin{proof} Consider $\msa^0\subset (\rset^d)^{\ell+1}$ such that assumptions \ref{item:ass_lemma_1}, \ref{item:ass_lemma_2} and \ref{item:ass_lemma_3} hold. Remark that we have $F^n(x^\star)=\varphi^n_{i_{K-1}}(x_{i_{K-1}}^\star)$.

Let $x\in \msa^0$. We denote $x^m=(x_1^\star, \hdots, x_{m-1}^\star, x_{m}, x_{m+1}^\star, \hdots, x_{l}^\star)$ for any $m\in\{0,\hdots, \ell\}$. In particular, we have $x^m\in \msa^0$ by assumption \ref{item:ass_lemma_1}. Let us define
\begin{align}
    \varphi_{i_k}(x_{i_k})& =F(x^{i_k}) -F(x^\star),\quad \forall  k \in \{0, \hdots, K-2\} \eqsp ,\\
    \varphi_{i_{K-1}}(x_{i_{K-1}})&=F(x^{i_{K-1}}) \eqsp .
\end{align}

Using assumption \ref{item:ass_lemma_3}, we have $\varphi_{i}(x_{i})\in [-\infty, +\infty)$ for any $i\in \mss$. Let $k\in\{0, \hdots, K-2\}$. We have by definition of $F^n$,
\begin{align}
    \textstyle\varphi^n_{i_k}(x_{i_k}) = F^n(x^{i_k}) - \sum_{\substack{m=0\\m\neq k}}^{K-1}\varphi^n_{i_m}(x_{i_m}^\star)= F^n(x^{i_k}) -F^n(x^\star) \eqsp ,
\end{align}
where we used assumption \ref{item:ass_lemma_2} in the last equality. Since $x^{i_k}\in \msa^0$ and $x^\star\in \msa^0$, we have by assumption \ref{item:ass_lemma_3},
\begin{align}
    \lim_{n\to \infty} \varphi^n_{i_k}(x_{i_k})= F(x^{i_k}) -F(x^\star)= \varphi_{i_k}(x_{i_k}) \eqsp .
\end{align}

Furthermore, by combining the definition of $F^n$ with assumption \ref{item:ass_lemma_2}, we have
\begin{align}
    \lim_{n\to \infty} \varphi^n_{i_{K-1}}(x_{i_{K-1}})= F(x^{i_{K-1}})= \varphi_{i_{K-1}}(x_{i_{K-1}}),
\end{align}
which concludes the proof.
\end{proof}

In what follows, before proving \Cref{prop:existence}, we respectively show in \Cref{prop:ass_kl} and \Cref{prop:ass_equiv} how
\Cref{ass:kl_bounded} and \Cref{ass:equivalence} can be satisfied in the case where $\pi^0\in \Pmeasure_{\mst_r}$, as in \eqref{eq:pi_0}, that is
\begin{equation} \label{eq:pi_0_appendix}
  \textstyle 
    \pi^0=\pi^0_r \bigotimes_{(v,v')\in \mse_r}\pi^0_{v'|v} .
\end{equation}

\begin{proposition}\label{prop:ass_kl} Let $\pi^0\in \Pmeasure_{\mst_r}$.
  Assume that $\pi_r^0 = \mu_r$ if $r \in \mss$ or $\pi_r^0 = \mathrm{N}(m_r, \sigma_r \Id)$, with $m_r \in \rset^d$
  and $\sigma_r > 0$ if $r \in \mss^\complementary$. In addition, assume
  that for any $(v,v') \in \mse_r$,
  $\pi^0_{v'|v}(\cdot|x_v) =\mathrm{N}(x_v, \sigma_{v,v'} \Id)$ with
 $\sigma_{v,v'} > 0$. Finally, assume that for any
  $i \in \mss$, $\int_{\rset^d} \normLigne{x}^2 \rmd \mu_i(x) < +\infty$ and
  $\ent(\mu_i)<+\infty$. Then \textup{\Cref{ass:kl_bounded}} is satisfied.
\end{proposition}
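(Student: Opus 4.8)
The plan is to establish \textup{\Cref{ass:kl_bounded}} by producing a single explicit measure $\pi$ with the prescribed leaf marginals and $\KL(\pi \mid \pi^0) < \infty$. First I would record that under the stated hypotheses $\pi^0 \ll \Leb$, with
\begin{equation*}
  \textstyle (\rmd \pi^0/\rmd \Leb)(x_{0:\ell}) = p_r(x_r)\prod_{(v,v')\in \mse_r} g_{v,v'}(x_{v'}-x_v) \eqsp,
\end{equation*}
where $p_r = \rmd \pi^0_r/\rmd\Leb$ (equal to $\rmd\mu_r/\rmd\Leb$ if $r\in\mss$, a Gaussian density if $r\in\mss^\complementary$) and $g_{v,v'}$ denotes the density of $\mathrm{N}(0,\sigma_{v,v'}\Id)$; crucially, every edge factor $g_{v,v'}$ is strictly positive on $\rset^d$.

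As competitor I would take the independent product $\pi = \bigotimes_{i\in\mss}\mu_i\bigotimes_{j\in\mss^\complementary}\rho_j$, with each $\rho_j$ a fixed Gaussian (e.g. $\mathrm{N}(0,\Id)$). Then $\pi\in\Pmeasure^{(\ell+1)}$ and $\pi_i=\mu_i$ for all $i\in\mss$ by construction, so only finiteness of the divergence remains. Absolute continuity $\pi\ll\pi^0$ follows from positivity of the Gaussian edge densities, together with the fact that the support of $\pi_r$ is contained in that of $p_r$ (they coincide when $r\in\mss$, and $p_r>0$ everywhere when $r\in\mss^\complementary$).

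Next I would evaluate $\KL(\pi\mid\pi^0)=\int\log(\rmd\pi/\rmd\pi^0)\,\rmd\pi$ by taking the logarithm of the ratio of the two densities and integrating termwise. This produces three families of contributions: (i) leaf terms $\int\log(\rmd\mu_i/\rmd\Leb)\,\rmd\mu_i=-\ent(\mu_i)$ (the root contributes such a term when $r\in\mss$); (ii) Gaussian log-density terms from the interior coordinates $\rho_j$ (and from $p_r$ when $r\in\mss^\complementary$); and (iii) one term per edge of the form $-\int\log g_{v,v'}(x_{v'}-x_v)\,\rmd\pi=\tfrac{d}{2}\log(2\pi\sigma_{v,v'})+(2\sigma_{v,v'})^{-1}\int\normLigne{x_{v'}-x_v}^2\,\rmd\pi$. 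Every marginal of $\pi$ has finite second moment — the $\mu_i$ by hypothesis and the $\rho_j$ because they are Gaussian — so each pairwise integral $\int\normLigne{x_{v'}-x_v}^2\,\rmd\pi$ is finite, making (iii) finite; the $\rho_j$ terms in (ii) are finite since fixed Gaussians have finite entropy; and (i) is finite by the finite-entropy assumption on the $\mu_i$.

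Summing the finitely many finite contributions gives $\KL(\pi\mid\pi^0)<\infty$, so $\pi$ witnesses \textup{\Cref{ass:kl_bounded}}. The step requiring the most care is the termwise splitting itself: a priori $\int\log(\rmd\pi/\rmd\pi^0)\,\rmd\pi$ has the indeterminate form $\infty-\infty$, so before recombining I must check that each summand is an absolutely convergent integral. This is precisely where the two quantitative hypotheses enter — the second-moment bounds tame the quadratic edge terms (iii), and the entropy bounds tame the logarithmic leaf terms (i) — while the freedom in choosing the auxiliary Gaussians $\rho_j$ keeps the interior contributions automatically finite.
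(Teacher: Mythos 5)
Your proposal is correct and follows essentially the same route as the paper: both take the independent product $\pi=\bigotimes_{i\in\mss}\mu_i\bigotimes_{j\in\mss^{\complementary}}\rho_j$ with Gaussian interior factors and reduce $\KL(\pi\mid\pi^0)$ to a root term plus one term per edge, each controlled by the finite-entropy and finite-second-moment hypotheses. The only difference is presentational — the paper invokes the KL chain rule along the directed tree where you expand the log-density ratio termwise (and you are right to flag, and correctly resolve, the absolute-convergence issue in that splitting).
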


\begin{proof}
  Let $\pi = \otimes_{i \in \mss} \mu_i \otimes_{i \in \mss^\complementary} \nu_i$ with
  $\nu_i$ any Gaussian measure with positive definite covariance matrix. First,
  we have that
  \begin{equation}
    \textstyle \KL(\pi \mid \pi^0) = \KL(\pi_r \mid \pi_r^0) + \sum_{(v,v') \in \mse_r} \int_{\rset^d} \KL(\pi_{v'|v}|\pi_{v'|v}^0) \rmd \pi_{v} \eqsp . 
  \end{equation}
  For any $(v,v') \in \mse_r$, there exists $C_{v,v'} \geq 0$ such that 
  \begin{align}
    &\textstyle\int_{\rset^d} \KL(\pi_{v'|v}|\pi_{v'|v}^0) \rmd \pi_{v} \textstyle\leq C_{v,v'} - \mathrm{H}(\pi_{v'}) + \int_{\rset^d \times \rset^d} \normLigne{x_v - x_{v'}}^2/(2\sigma_{v,v'}^2) \rmd \pi_v \otimes \pi_{v'}(x_v, x_{v'}) \\
    &\qquad \textstyle \leq C_{v,v'} - \mathrm{H}(\pi_{v'}) + (1/\sigma_{v,v'}^2) \int_{\rset^d} \normLigne{x_v}^2 \rmd \pi_v(x_v) + (1/\sigma_{v,v'}^2) \int_{\rset^d} \normLigne{x_{v'}}^2 \rmd \pi_{v'}(x_{v'}) < +\infty .
  \end{align}
  We conclude the proof upon remarking that $\KL(\pi_r \mid \pi_r^0)<+\infty$.
\end{proof}

\begin{proposition}\label{prop:ass_equiv} Let $\pi^0\in \Pmeasure_{\mst_r}$.
  Assume that $\pi_r^0 = \mu_r$ if $r \in \mss$ or $\pi_r^0 = \mathrm{N}(m_r, \sigma_r \Id)$, with $m_r \in \rset^d$
  and $\sigma_r > 0$ if $r \in \mss^\complementary$. In addition, assume
  that for any $(v,v') \in \mse_r$,
  $\pi^0_{v'|v}(\cdot|x_v) =\mathrm{N}(x_v, \sigma_{v,v'} \Id)$ with
  $\sigma_{v,v'} > 0$. Finally, assume that for any $i \in \mss$, $\mu_i$ admits
  a positive density w.r.t.~the Lebesgue measure. Then \textup{\Cref{ass:equivalence}} is satisfied.
\end{proposition}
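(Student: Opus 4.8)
The plan is to prove \Cref{ass:equivalence} by showing that both $\pi^0$ and a suitably chosen $\tilde\pi^0$ are equivalent (mutually absolutely continuous) to the Lebesgue measure on $(\rset^d)^{\ell+1}$, from which $\pi^0\sim\tilde\pi^0$ follows immediately by transitivity of mutual absolute continuity.

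First I would write down the Lebesgue density of $\pi^0$. Since $\pi^0\in\Pmeasure_{\mst_r}$ factorizes as in \eqref{eq:pi_0_appendix} and each transition kernel $\pi^0_{v'|v}(\cdot\mid x_v)=\mathrm{N}(x_v,\sigma_{v,v'}\Id)$ admits the Gaussian density $g_{v,v'}(x_{v'}-x_v)$, which is strictly positive everywhere on $\rset^d$ thanks to $\sigma_{v,v'}>0$, the measure $\pi^0$ is absolutely continuous w.r.t.\ $\Leb$ with
\[
  \frac{\rmd\pi^0}{\rmd\Leb}(x_{0:\ell})=p_r(x_r)\prod_{(v,v')\in\mse_r}g_{v,v'}(x_{v'}-x_v),
\]
where $p_r$ denotes the Lebesgue density of $\pi^0_r$. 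In each admissible case for $p_r$ --- either $p_r$ is the density of the non-degenerate Gaussian $\mathrm{N}(m_r,\sigma_r\Id)$ when $r\in\mss^\complementary$, or $p_r=\rmd\mu_r/\rmd\Leb$ with $\mu_r$ having a positive density when $r\in\mss$ --- the factor $p_r$ is positive $\Leb$-a.e. Consequently the full density is strictly positive $\Leb$-a.e., which gives both $\pi^0\ll\Leb$ and (by positivity of the density) $\Leb\ll\pi^0$, hence $\pi^0\sim\Leb$.

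Next I would construct the family $\{\tilde\mu_j\}_{j\in\mss^\complementary}$ by simply taking each $\tilde\mu_j=\mathrm{N}(0,\Id)$; any fixed probability measure with an everywhere-positive Lebesgue density would serve equally well. The resulting product $\tilde\pi^0=\bigotimes_{i\in\mss}\mu_i\bigotimes_{j\in\mss^\complementary}\tilde\mu_j$ then has Lebesgue density $\prod_{i\in\mss}(\rmd\mu_i/\rmd\Leb)(x_i)\prod_{j\in\mss^\complementary}(\rmd\tilde\mu_j/\rmd\Leb)(x_j)$. By the standing assumption that each $\mu_i$ with $i\in\mss$ has a positive density, together with the positivity of the Gaussian densities $\rmd\tilde\mu_j/\rmd\Leb$, this density is again strictly positive $\Leb$-a.e., so by the same argument $\tilde\pi^0\sim\Leb$.

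Finally, combining $\pi^0\sim\Leb$ with $\tilde\pi^0\sim\Leb$ yields $\pi^0\sim\tilde\pi^0$ by transitivity, which is precisely the statement of \Cref{ass:equivalence}. The only point requiring any care --- and the sole place where the hypotheses of \Cref{prop:ass_equiv} are actually used --- is the verification that these Lebesgue densities are strictly positive almost everywhere; this is immediate from the non-degeneracy of the Gaussian transition kernels and from the positivity assumption on the $\mu_i$, so I do not anticipate a genuine obstacle. The argument is purely structural and does not interact with the tree topology beyond the Markov factorization \eqref{eq:pi_0_appendix}.
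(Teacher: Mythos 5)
Your proposal is correct and follows essentially the same route as the paper: both arguments establish that $\pi^0$ and $\tilde{\pi}^0$ admit strictly positive Lebesgue densities (the former via the Markov factorization with non-degenerate Gaussian kernels and a positive root density, the latter by choosing the $\tilde{\mu}_j$ with positive densities) and conclude $\pi^0 \sim \tilde{\pi}^0$ by equivalence to $\Leb$. Your write-up merely makes explicit the density formula and the concrete choice $\tilde{\mu}_j = \mathrm{N}(0,\Id)$, which the paper leaves implicit.
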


\begin{proof}
  We have that $\pi^0$ admits a positive density w.r.t~the Lebesgue
  measure. Letting
  $\Tilde{\pi}^0=\bigotimes_{i\in \mss} \mu_i \bigotimes_{j \in \mss^\complementary}\Tilde{\mu}_j$ where $\Tilde{\mu}_j$ which admits a
  positive density w.r.t.~the Lebesgue measure for any
  $j \in \mss^\complementary$, we get that $\Tilde{\pi}^0$ admits
  a positive density w.r.t.~the Lebesgue measure and therefore
  $\pi^0 \sim \Tilde{\pi}^0$, which concludes the proof.
\end{proof}

Using the preliminary results presented above, we are now ready to prove \Cref{prop:existence}.

\begin{proof}[Proof of \Cref{prop:existence}] Assume \Cref{ass:measures} and \Cref{ass:kl_bounded}. Since $\Pmeasure_\mss$ is convex and closed in total-variation norm, there exists a probability distribution $\pi^\star$ solution to \eqref{eq:msb_P-s}, or equivalently to \eqref{eq:multimarginal_sb_static_nu}, by using \Cref{ass:kl_bounded} with \cite[Theorem 2.1.]{csiszar1975divergence}. Moreover, this solution is unique by strict convexity of $\KL(\cdot\mid \pi^0)$. 

We now turn to the proof of existence of potentials defining $(\rmd \pi^\star/\rmd \pi^0)$, by adapting the arguments of \cite[Section 2.3.]{nutz2021introduction}. Define $\nu^n=\argmin\{\KL(\pi \mid \pi^0): \pi \in \Pmeasure_\mss^n\}$ for any $n\in \nset^*$. Since $\{\Pmeasure_\mss^n\}_{n\in \nset^*}\subset \Pmeasure^{(\ell +1)}$ is a decreasing sequence of sets that are convex and closed in total-variation norm such that \eqref{eq:pmeasure_mss} holds, we get from \cite[Proposition 1.17.]{nutz2021introduction} with \Cref{ass:kl_bounded} that
\begin{align}
    \lim_{n\to \infty}\|\nu^n -\pi^\star\|_{\mathrm{TV}}=0 \eqsp ,
\end{align}
or equivalently
\begin{align}\label{eq:nu^n}
   \lim_{n\to \infty}\| (\rmd \nu^n / \rmd \pi^0) - (\rmd \pi^\star / \rmd \pi^0)\|_{\mathrm{L}^1(\pi^0)}=0  \eqsp .
\end{align}

Following \cite[Example 1.18]{nutz2021introduction}, there exists a family of bounded measurable functions $\{\varphi^n_i\}_{n \in \nset^*,i \in \mss}$ with $\varphi^n_i:\rset^d \to \rset$ such that for any $n\in \nset^*$
\begin{align}\label{eq:nu^n_expr}
    \textstyle(\rmd \nu^n / \rmd \pi^0) = \exp[\bigoplus_{i \in \mss}\varphi^n_i] \eqsp .
\end{align}

We consider such family $\{\varphi^n_i\}_{n \in \nset^*,i \in \mss}$ for the rest of the proof. By combining \eqref{eq:nu^n} and \eqref{eq:nu^n_expr}, we obtain, up to extraction, 
\begin{align}\label{eq:pi^star_lim}
    \textstyle(\rmd \pi^\star/\rmd \pi^0)= \lim_{n \to \infty} \exp[\bigoplus_{i \in \mss}\varphi^n_i] \quad \text{$\pi^0$-a.s.} \eqsp .
\end{align}

We now define the following sets
\begin{align}
    \msa^\star & = \textstyle{\{x \in (\rset^d)^{\ell+1}: \lim_{n\to \infty}\bigoplus_{i \in \mss}\varphi^n_i(x_i)\in [-\infty, +\infty)\}} \eqsp, \label{eq:msa^star}\\
    \msb^\star & = \textstyle{\{x \in (\rset^d)^{\ell+1}: \lim_{n\to \infty}\bigoplus_{i \in \mss}\varphi^n_i(x_i)>-\infty\} \subset \msa^\star}
\end{align}

Using \eqref{eq:pi^star_lim}, we have $\pi^0(\msa^\star)=1$. Using  \Cref{ass:equivalence}, it comes $\tilde{\pi}^0(\msa^\star)=1$. Moreover, we also get that $\pi^\star(\msb^\star)=1$ by \eqref{eq:pi^star_lim}. Thus, it comes $\pi^0(\msb^\star)>0$, and $\tilde{\pi}^0(\msb^\star)>0$ using \Cref{ass:equivalence}.

We then apply \Cref{lemma:nutz_1} to $\tilde{\pi}^0$ and $\msa=\msa^\star$. Since $\tilde{\pi}^0(\msb^\star)>0$, it implies that there exists $x^\star \in \msb^\star$ and a measurable set $\msa^0\subset \msb^\star$ verifying the properties \ref{item:nutz_1-a} and \ref{item:nutz_1-b}. Following \cite[Corollary 2.12]{nutz2021introduction}, we may assume without loss of generality in the statement of \Cref{lemma:nutz_1} that the sets $\msx_m^0$ are measurable with $\prod_{m=0}^\ell \msx_m^0 \subset \msa$. In this case, we obtain that $\mu_i(\operatorname{proj}_i(\msa^0))=1$ for any $i\in \mss$.

We now aim at applying \Cref{lemma:nutz_2} to the set $\msa^0$. Remark that $\msa^0$ directly satisfies assumption \ref{item:ass_lemma_1}. For any $n\in \nset^*$, consider the following transformation of the functions $\{\varphi^n_{i}\}_{i \in \mss}$
\begin{align}
    \varphi^n_{i_k} &\xleftarrow{} \varphi^n_{i_k} -\varphi^n_{i_k}(x_{i_k}^\star),\quad \forall k \in \{0, \hdots, K-2\} \eqsp,\\
    \varphi^n_{i_{K-1}} &\xleftarrow{} \textstyle{\varphi^n_{i_{K-1}} +\sum_{k=0}^{K-2}\varphi^n_{i_k}(x_{i_k}^\star)} \eqsp .
\end{align}
For any $i\in \mss$, we restrict $\varphi^n_{i_k}$ to $\msx_{i_k}^0$, so that the family $\{\varphi^n_{i}\}_{n \in \nset^*, i \in \mss}$ now verifies assumption \ref{item:ass_lemma_2}. Finally, since $\msa^0 \subset \msa^\star$ and $x^\star\in \msb^\star$, we directly obtain 
assumption \ref{item:ass_lemma_3}.

Therefore, \Cref{lemma:nutz_2} may be applied. It provides us with the family of functions $\{\varphi_i\}_{i \in \mss}$ defined by $\varphi_i:\msx_i^0 \to [-\infty, +\infty)$ with $\varphi_i=\lim_{n\to \infty}\varphi^n_i$ $\mu_i$-a.s. for any $i\in \mss$. Since $\mu_i(\operatorname{proj}_i(\msa^0))=1$ for any $i \in \mss$, we may extend the functions $\varphi_i$ to $\rset^d$. In particular, we can find a family of functions $\{\psi_i^\star\}_{i \in \mss}$ with $\psi_i^\star:\rset^d\to [-\infty, +\infty)$ such that $\psi_i^\star=\varphi_i$ $\mu_i$-a.s.
Note that these functions are measurable as limits of measurable functions.

Since $\pi^0 \sim \tilde{\pi}^0$ by \Cref{ass:equivalence}, \eqref{eq:pi^star_lim} turns into
\begin{align}\label{eq:pi^star_lim_new}
    \textstyle(\rmd \pi^\star/\rmd \pi^0)= \exp[\bigoplus_{i \in \mss}\psi_i^\star] \quad \text{$\pi^0$-a.s.} \eqsp .
\end{align}

Finally, we show that the functions $\psi_i^\star$ are $\mu_i$-a.s. finite. Let $i\in \mss$. Let us define $\msa_i=\{x_i \in \rset^d: \psi^\star_i(x_i)=-\infty\}$. Using \eqref{eq:pi^star_lim_new}, we obtain 
$\textstyle(\rmd \pi^\star/\rmd \pi^0)(\msa_i \times (\rset^d)^\ell)=0$. Since $\pi^\star_i=\mu_i$, we have 
\begin{align}
    \textstyle\mu_i(\msa_i)=\pi^\star(\msa_i \times (\rset^d)^\ell)=\int_{\msa_i \times (\rset^d)^\ell}(\rmd \pi^\star /\rmd \pi^0) \rmd \pi^0=0 \eqsp ,
\end{align}
which gives the result.
\end{proof}

We now turn to the proof of \Cref{corollary:existence}, which states that the iterates of \eqref{eq:ipf_multimarginal} can be expressed via potentials, in the same manner as the solution $\pi^\star$ to \eqref{eq:multimarginal_sb_static_nu}.

\begin{proof}[Proof of \Cref{corollary:existence}] Assume \textup{\Cref{ass:measures}},
  \textup{\Cref{ass:kl_bounded}} and \textup{\Cref{ass:equivalence}}. We prove the result of this corollary by recursion on $n\in \nset^*$. First take $n=1$. In this case, the first iteration of \eqref{eq:ipf_multimarginal} is a multi-marginal SB problem of the form \eqref{eq:multimarginal_sb_static_nu} where $S=\{i_0\}$ with reference measure $\pi^0$. Therefore, using \textup{\Cref{ass:kl_bounded}} and \textup{\Cref{ass:equivalence}}, we can apply \Cref{prop:existence} and obtain existence of $\psi^1_{i_0}:\rset^d \to \rset$ such that
\begin{align}
    (\rmd \pi^1/\rmd \pi^0)=\exp[\psi_{i_0}^1] \quad \text{$\pi^0$-a.s} \eqsp .
\end{align}
By taking $\psi_{i_k}^0=0$ for $k\in \{1, \hdots, K-1\}$, we thus obtain the result at step $n=1$.

Now assume that the result is verified for some $n\in \nset^*$, with
$k_n=(n-1)\mod(K)$. We define $k_n+1=n \mod (K)$ and $q_n\in\nset$ as the quotient of
the Euclidean division of $n$ by $K$. In this case, the $(n+1)$-th iteration of
\eqref{eq:ipf_multimarginal} is a multi-marginal SB problem of the form
\eqref{eq:multimarginal_sb_static_nu} where $\mss=\{i_{k_n+1}\}$ with reference
measure $\pi^n$. Using \eqref{eq:bonne_def}, we have that
\textup{\Cref{ass:kl_bounded}} is satisfied for this new
\eqref{eq:multimarginal_sb_static_nu} problem. \textup{\Cref{ass:measures}} and
\textup{\Cref{ass:equivalence}} are satisfied for this problem, given the form
of $\pi^n$. Therefore, we can apply \Cref{prop:existence} and obtain existence
of $\psi^{q_n+1}_{i_{k_n+1}}:\rset^d \to \rset$ such that
\begin{align}\label{eq:pi^n+1_pi^n}
    (\rmd \pi^{n+1}/\rmd \pi^n)=\exp[\psi_{i_{k_n+1}}^{q_n+1}] \quad \text{$\pi^n$-a.s} \eqsp .
\end{align}
By assumption, we have that $\pi^n \ll \pi^0$. Hence, we obtain $\pi^{n+1}\ll \pi^0$ and thus,
\begin{align}
    (\rmd \pi^{n+1}/\rmd \pi^0)=(\rmd \pi^{n+1}/\rmd \pi^n)(\rmd \pi^{n}/\rmd \pi^0) \quad \text{$\pi^0$-a.s} \eqsp .
\end{align}
By combining \eqref{eq:pi^n+1_pi^n} with the result of the recursion at step $n$, we directly obtain the result at step $n+1$, which achieves the proof.
\end{proof}

\paragraph{Proofs of \Cref{prop:marginal} and \Cref{prop:conv_mipf}.} In this part of the section, we establish the proofs of results related to the convergence of \eqref{eq:ipf_multimarginal}, respectively \Cref{prop:marginal} and \Cref{prop:conv_mipf}, which can be seen as a natural extension of \cite[Proposition 2.1.]{ruschendorf1995convergence} and \cite[Theorem 3.1.]{ruschendorf1995convergence}.

\begin{proof}[Proof of \Cref{prop:marginal}] Under \Cref{ass:measures} and
  \Cref{ass:kl_bounded}, we obtain by \Cref{prop:existence} existence and
  uniqueness of a solution to \eqref{eq:multimarginal_sb_static_nu}, which we
  denote by $\pi^\star$. Since $\pi^\star \in \Pmeasure_\mss$, using recursively
  \cite[Theorem 3.12.]{csiszar1975divergence}, the fact that
  $\ensembleLigne{\pi_{i_k} = \mu_{i_k}}{\pi\in\Pmeasure^{(\abs{\msv})}}$ is
  convex for any $k \in \{0, \dots, K-1\}$ and \eqref{eq:ipf_multimarginal}, we
  obtain
  \begin{align}
    \textstyle
    \KL(\pi^\star \mid \pi^0)= \KL(\pi^\star \mid \pi^n) + \sum_{i=1}^n \KL(\pi^i \mid \pi^{i-1}) \eqsp . \label{eq:bonne_def}
\end{align}
Therefore, we have $\sum_{i=1}^\infty \KL(\pi^i \mid \pi^{i-1}) \leq \KL(\pi^\star \mid \pi^0) < \infty$ and thus,
\begin{align}\label{eq:KL_to0}
    \textstyle{\lim_{i \to + \infty} \KL(\pi^i \mid \pi^{i-1})=0} \eqsp .
\end{align}

Let $n\in \nset^*$ with $n > 2K$, $k \in \{0, \hdots,K-1\}$ and let $q_n\in \nset$ be
the quotient of the Euclidean division of $n-1$ by $K$. We define
$n_k=q_n K +k +1$ with $(n_k-1)=k\mod (K)$ if $n_k\leq n$. Otherwise, we set
$n_k=(q_n-1) K +k +1$ with $(n_k-1)=k\mod (K)$. Note that we always have
$\abs{n-n_k}\leq 2K$.  In particular, we have $\pi^{n_k}_{i_k}=\mu_{i_k}$ by
definition of \eqref{eq:ipf_multimarginal}. Therefore, we obtain
\begin{align}
    \|\pi^n_{i_k} - \mu_{i_k}\|_{\mathrm{TV}} & \leq \|\pi^n - \pi^{n_k}\|_{\mathrm{TV}} && \\
    & \leq \|\pi^n - \pi^{n-1}\|_{\mathrm{TV}} + \hdots + \|\pi^{n_k+1} - \pi^{n_k}\|_{\mathrm{TV}} && \text{(triangle inequality)}\\
    & \leq (2\KL(\pi^n \mid \pi^{n-1}))^{1/2} + \hdots + (2\KL(\pi^{n_k+1} \mid \pi^{n_k}))^{1/2} \eqsp , && \text{(Pinsker's inequality)}
\end{align}
where each term goes to 0 as $n \to + \infty$ in the last inequality by \eqref{eq:KL_to0}, which achieves the proof.
\end{proof}

We now turn to the proof \Cref{prop:conv_mipf}, which requires several preliminary technical results. For the rest of this section, we define, for any $n\in \nset$, $q_n$ as the quotient of the Euclidean division of $n-1$ by $K$ (in particular, $q_0=-1$).

\paragraph{Schr\"odinger equations.} Under \Cref{ass:measures}, \Cref{ass:kl_bounded} and \Cref{ass:equivalence}, we know from \Cref{prop:existence} that the unique solution $\pi^\star$ to \eqref{eq:multimarginal_sb_static_nu} can be $\pi^0$-a.s. written as $(\rmd \pi^\star / \rmd \pi^0)=\exp[\bigoplus_{i \in \mss}\psi_i^\star]$, where $\{\psi_i^\star\}_{i \in \mss}$ are measurable potentials, referred to as \emph{Schr\"odinger potentials}. These functions are determined by the fixed-point \emph{Schr\"odinger equations}
\begin{align}
    \textstyle \psi_i^\star(x_i)=\log[r_i(x_i)/\int_{(\rset^d)^\ell} \exp[\sum_{j \in \mss\backslash \{i\}}\psi_j^\star(x_j)] h(x_{0:\ell}) \rmd \nu_{-i} (x_{-i}) ] \quad \text{$\mu_i$-a.s.}, \quad \forall i \in \mss \eqsp ,
\end{align}
which are obtained by marginalising $\pi^\star$ along its constrained marginals. This family of potentials is not unique. Indeed, for any family of real numbers $\{\lambda_{i_k}\}_{k\in \{0, \hdots, K-2\}}$, we have
\begin{align}
    \textstyle(\rmd \pi^\star/\rmd \pi^0) = \exp[\bigoplus_{i \in \mss}\tilde{\psi}_i] \eqsp,
\end{align}
where $\tilde{\psi}_{i_k}=\psi_{i_k}^\star + \tilde{\lambda}_{i_k}$ for any $k \in \{0, \hdots, K-1\}$ with $\Tilde{\lambda}_{i_k}=\lambda_{i_k}$ if $k\in \{0, \hdots, K-2\}$ and $\Tilde{\lambda}_{i_{K-1}}=-\sum_{i=0}^{K-2}\lambda_{i_k}$.

\paragraph{Remark on the initialisation of \eqref{eq:ipf_multimarginal}.} Consider a probability measure $\bar{\pi}^0\in \Pmeasure^{(\ell+1)}$ of the form
\begin{align}\label{eq:bar_pi_0}
    \textstyle(\rmd \bar{\pi}^0/\rmd\pi^0)=\exp[\bigoplus_{i \in \mss}\psi_i^0] \eqsp ,
\end{align}
where $\{\psi_i^0\}_{i \in \mss}$ is a family of measurable potentials with $\psi^0_i:\rset^d \to \rset$ such that $\abs{\int_{\rset^d} \psi_i^0\rmd \mu_i}<\infty$ for any $i\in \mss$. Then, for any $\pi\in \Pmeasure_\mss$, we have
\begin{align}
    \textstyle\KL(\pi \mid \pi^0)= \KL(\pi \mid \bar{\pi}^0) + \int_{(\rset^d)^K}\bigoplus_{i \in \mss}\psi^0_i \rmd \pi= \KL(\pi \mid \bar{\pi}^0) + \sum_{i \in \mss} \int_{\rset^d}\psi^0_i \rmd \mu_i \eqsp .
\end{align}
Hence, \eqref{eq:multimarginal_sb_static_nu} is equivalent to the multi-marginal SB problem
\begin{align}
     \textstyle{\argmin \ensembleLigne{\KL(\pi|\bar{\pi}^0)}{\pi \in \Pmeasure^{(\ell+1)}, \eqsp \pi_i = \mu_i\eqsp, \forall i \in \mss}} \eqsp .
\end{align}
We refer to \citep[Proposition 4.2]{peyre2019computational} for the EOT
counterpart of this result. This means that the solutions of the multi-marginal
Schr\"odinger Bridge problem are invariant by multiplication of the reference measure by potentials on the \emph{fixed} marginals.  Consequently, the
initialisation of the \eqref{eq:ipf_multimarginal} sequence may be chosen as $\bar{\pi}^0$
instead of $\pi^0$.

\textbf{For sake of clarity, we now refer to the reference
  probability measure of \eqref{eq:multimarginal_sb_static_nu} as $\bar{\pi}$ or $\pi^{-1}$
  and to the initialisation of the \eqref{eq:ipf_multimarginal} iterates as $\pi^0$.}

\paragraph{Solving \eqref{eq:ipf_multimarginal} with potentials.} To prove the convergence of the \eqref{eq:ipf_multimarginal} iterates to the solution $\pi^\star$ given by \Cref{prop:existence}, we first rewrite these iterates with potentials, following the form of $\pi^\star$.

To do so, we recursively define the sequence of potentials $\{\psi^n_i\}_{n\in \nset, i\in \mss}$ by
\begin{align} \label{eq:def_psi_0}
    & \psi^0_{i_0}=\hdots=\psi^0_{i_{K-2}}=0 \eqsp ,\\
    &\textstyle{\psi^0_{i_{K-1}}(x_{i_{K-1}})= \log(r_{i_{K-1}}(x_{i_{K-1}})/\int_{(\rset^d)^\ell} h(x_{0:\ell}) \rmd \nu_{-{i_{K-1}}} (x_{-{i_{K-1}}}) )} \eqsp,
\end{align}
and for any $n \in \nset^*$ and  $k\in \{0, \hdots, K-1\}$
\begin{align}
  \psi^{q_n+1}_{i_k}(x_{i_k})&=\textstyle{\log[r_{i_k}(x_{i_k})/\int_{(\rset^d)^\ell} \exp[\bigoplus_{\ell=0}^{k}\psi^{q_n+1}_{i_\ell}(x_{i_\ell}) \bigoplus_{m=k+1}^{K-1}\psi^{q_n}_{i_m}(x_{i_m})]}
  \\
                             & \qquad \times h(x_{0:\ell}) \rmd \nu_{-{i_k}} (x_{-{i_k}}) ] \eqsp , \label{eq:def_psi_n}
\end{align}
recalling that $q_n$ is the quotient of the Euclidean division of $n-1$ by $K$.

We now define the sequence of probability measures $\{\pi^n\}_{n\in \nset}$ by
\begin{align}\label{eq:def_pi^n}
    \textstyle\rmd \pi^n/\rmd \bar{\pi}= \exp[\bigoplus_{\ell=0}^{k_n}\psi^{q_n+1}_{i_\ell} \bigoplus_{m=k_n+1}^{K-1}\psi^{q_n}_{i_m}], \ k_n=(n-1)\mod (K) , \ n = q_n K + k_n + 1 .
\end{align}
In particular, we have $(\rmd \pi^0/\rmd \bar{\pi})=\exp[\oplus_{\ell=0}^{K-1} \psi^0_{i_\ell}] = \exp[
\psi^0_{i_{K-1}}]$, and thus $\int_{\rset^d}\psi^0_{i_{K-1}}\rmd \mu_{i_{K-1}}=\KL(\mu_{i_{K-1}}\mid \bar{\pi}_{i_{K-1}})$. Consequently, \underline{$\pi^0$ can be chosen as the initialisation of \eqref{eq:ipf_multimarginal}}, following the previous remark, if we assume that $\KL(\mu_{i_{K-1}}\mid \bar{\pi}_{i_{K-1}}) <\infty$. In \eqref{eq:tree_sb_static} with $r=i_{K-1}$, the latter assumption is directly verified since we choose $\bar{\pi}_{i_{K-1}}=\mu_{i_{K-1}}$.

Let $n\in \nset$, with $k_n=(n-1)\mod(K)$, $k_n+1=n
\mod(K)$. Using \eqref{eq:def_psi_0} and \eqref{eq:def_psi_n}, we get that
$\pi^n_{i_{k_n}}=\mu_{i_{k_n}}$. Moreover, we have 
\begin{align} \label{eq:ratio_pi^n}
    \rmd \pi^n/\rmd \pi^{n-1} = \exp[\psi^{q_n+1}_{i_{k_n}}-\psi^{q_n}_{i_{k_n}}]\eqsp,
\end{align}
with the convention that $\psi^{-1}_{i_{K-1}}=0$. In particular, we obtain that
$\pi^{n+1}_{|i_{k_n+1}}=\pi^{n}_{|i_{k_n+1}}$.

In conclusion, the sequence $\{\pi^n\}_{n\in \nset}$ defined in \eqref{eq:def_pi^n} verifies $\pi^{n+1}=\mu_{i_{k_n+1}}\pi^{n}_{|i_{k_n+1}}$ for any $n\in \nset$. By decomposition property of the Kullback-Leibler divergence, this sequence solves \eqref{eq:ipf_multimarginal} with initialisation $\pi^0$. \emph{We consider such iterates in the following.}

Since $\pi^n_{i_{k_n}}=\mu_{i_{k_n}}$, we have that
\begin{align}\label{eq:KL_decreasing}
    \textstyle \KL(\pi^n \mid \pi^{n-1})= \int_{\rset^d} (\psi^{q_n+1}_{i_{k_n}}-\psi^{q_n}_{i_{k_n}})\rmd \mu_{i_{k_n}} \eqsp .
\end{align}

Before proving a multi-marginal counterpart to \cite[Lemma
4.1]{ruschendorf1995convergence}, we state and prove the following result.

\begin{proposition}
  \label{sec:KL_integrand_bound}
  Let $\pi_0, \pi_1$ two probability measures on $\rset^d$ such that
  $\pi_0 \ll \pi_1$. Then, denoting $f = \rmd \pi_0 / \rmd \pi_1$, the following
  assertions are equivalent:
\begin{enumerate}[label=(\alph*)]
\item \label{item:kl} $\KL(\pi_0 \mid \pi_1) < +\infty$
\item \label{item:abs} $\int_{\rset^d} \absLigne{\log(f)(x)} \rmd \pi_0(x) < +\infty$
\item \label{item:pos} $\int_{\rset^d} \log(f)(x) \1_{f(x) > 1} \rmd \pi_0(x) < +\infty$
\end{enumerate}
If one of these conditions is satisfied then 
$\int_{\rset^d} \abs{\log(f)(x)} \rmd \pi_0 \leq \KL(\pi_0 \mid \pi_1) + 2/\rme$.
\end{proposition}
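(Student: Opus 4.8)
The plan is to reduce all three statements to a single elementary bound controlling the negative part of $\log f$. Since $\pi_0$ and $\pi_1$ are both probability measures, the two mass terms in the Notation-section definition of the divergence cancel, so that $\KL(\pi_0 \mid \pi_1) = \int_{\rset^d} \log(f) \rmd \pi_0$. I would decompose $\log f = (\log f)_+ - (\log f)_-$ into positive and negative parts, where $(\log f)_+ = \log(f) \1_{f > 1}$ and $(\log f)_- = -\log(f) \1_{0 < f < 1}$; note the set $\{f = 0\}$ is $\pi_0$-null, so it contributes nothing to any of these integrals.

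The key step is the universal bound on the negative part. Writing $\rmd \pi_0 = f \rmd \pi_1$, I would compute
\begin{align}
  \int_{\rset^d} (\log f)_- \rmd \pi_0 = \int_{\{0 < f < 1\}} (-f \log f) \rmd \pi_1 \leq 1/\rme ,
\end{align}
where the inequality uses that $t \mapsto -t \log t$ attains its maximum $1/\rme$ on $(0,1]$ at $t = 1/\rme$ (solving $-\log t - 1 = 0$), together with $\pi_1(\{0 < f < 1\}) \leq 1$. In particular $\int (\log f)_- \rmd \pi_0 \in [0, 1/\rme]$ is always finite, so $\KL(\pi_0 \mid \pi_1) = \int (\log f)_+ \rmd \pi_0 - \int (\log f)_- \rmd \pi_0$ is a well-defined element of $\ocint{-\infty, +\infty}$.

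With this in hand the equivalences follow by bookkeeping. Condition \ref{item:pos} is exactly $\int (\log f)_+ \rmd \pi_0 < \plusinfty$, while \ref{item:abs} reads $\int (\log f)_+ \rmd \pi_0 + \int (\log f)_- \rmd \pi_0 < \plusinfty$; since the negative term is bounded by $1/\rme$, these are equivalent. For \ref{item:kl}, the finiteness of the negative term in $\KL(\pi_0 \mid \pi_1) = \int (\log f)_+ \rmd \pi_0 - \int (\log f)_- \rmd \pi_0$ shows $\KL(\pi_0 \mid \pi_1) < \plusinfty$ iff $\int (\log f)_+ \rmd \pi_0 < \plusinfty$, i.e.\ iff \ref{item:pos} holds, closing the chain. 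Finally, assuming the conditions hold,
\begin{align}
  \int_{\rset^d} \absLigne{\log(f)} \rmd \pi_0
  &= \int (\log f)_+ \rmd \pi_0 + \int (\log f)_- \rmd \pi_0 \\
  &= \KL(\pi_0 \mid \pi_1) + 2 \int (\log f)_- \rmd \pi_0 \leq \KL(\pi_0 \mid \pi_1) + 2/\rme ,
\end{align}
using $\int (\log f)_+ \rmd \pi_0 = \KL(\pi_0 \mid \pi_1) + \int (\log f)_- \rmd \pi_0$. The only nontrivial ingredient is the constant $1/\rme$ obtained by maximizing $-t\log t$; everything else is careful handling of positive and negative parts, made rigorous precisely because $(\log f)_-$ is universally bounded.
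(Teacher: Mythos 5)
Your proof is correct and follows essentially the same route as the paper's: both arguments hinge on the universal bound $\int_{\{f<1\}}|\log f|\,\rmd\pi_0 = \int_{\{f<1\}}|f\log f|\,\rmd\pi_1 \leq 1/\rme$ obtained from $\sup_{u\in(0,1]}|u\log u| = 1/\rme$, and then deduce the equivalences and the final inequality $\int|\log f|\,\rmd\pi_0 \leq \KL(\pi_0\mid\pi_1) + 2/\rme$ by splitting $\log f$ into its positive and negative parts. Your write-up is, if anything, slightly more explicit about the bookkeeping (well-definedness of $\KL$ in $\ocint{-\infty,+\infty}$, the $\pi_0$-nullity of $\{f=0\}$), but there is no substantive difference in method.
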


\begin{proof}
  First, note that
\begin{equation}
  \label{eq:bound}
  \textstyle{\int_{\rset^d}\absLigne{ \log(f)(x) } \1_{f < 1}\rmd \pi_0(x)  \leq \int_{\rset^d} \absLigne{\log(f)(x)f(x)} \1_{f < 1}\rmd \pi_1(x)} \leq 1/\rme \eqsp ,
\end{equation}
where we have used that for any $u \in \ccint{0,1}$,
$\abs{u \log(u)} \leq 1 / \rme$.  We have that \ref{item:abs} implies
\ref{item:pos}. Using the previous result we have that \ref{item:pos} implies
\ref{item:abs}. Hence \ref{item:pos} and \ref{item:abs} are equivalent. In
addition, it is clear that \ref{item:abs} implies \ref{item:kl}. Finally (this
is more of a convention), we have that
$\KL(\pi_0 \mid \pi_1) = \int_{\rset^d} \log(f)(x) \1_{f(x) > 1} \rmd \pi_0(x) +
\int_{\rset^d} \log(f)(x) \1_{f(x) < 1} \rmd \pi_0(x)< +\infty$. Using
\eqref{eq:bound} this implies \ref{item:pos}.  Finally, we have
  \begin{align}
    \textstyle 
    \int_{\rset^d} \abs{\log(f)(x)} \rmd \pi_0(x) &\textstyle = \int_{\rset^d} \log(f)(x) \rmd \pi_0(x) - 2 \int_{\rset^d} \log(f)(x) \1_{f(x) < 1} \rmd \pi_0(x) \\
    &\leq \KL(\pi_0 \mid \pi_1) + 2/\rme ,
  \end{align}
  which concludes the proof.
\end{proof}

We begin with the following lemma which controls the integral of the potentials
uniformly w.r.t. $n \in \nset$. It can be seen as the \emph{multi-marginal}
counterpart of \cite[Lemma 4.1]{ruschendorf1995convergence}.

\begin{lemma}\label{lemma:ruschendorf_1}
  Assume \textup{\Cref{ass:closeness}}. There exist
  $\{c_i\}_{i\in \mss}\in (0, +\infty)^K$ such that for any function
  $f:(\rset^d)^{\ell+1} \to \rset$ of the form $f=\bigoplus_{i \in \mss}f_i$, we
  have
\begin{align} \label{eq:c_i}
    c_i\|f\|_{\mathrm{L}^1(\pi^\star)}\geq \|f_i\|_{\mathrm{L}^1(\mu_i)}, \quad \forall i \in \mss \eqsp .
\end{align}

  For any $n\in \nset^\star$, we have
    \begin{enumerate}[wide, labelwidth=!, labelindent=0pt, label=(\alph*)]
        \item \label{lemma1-item1} $\sum_{i \in \mss} \int_{\rset^d} \psi^n_i \rmd \mu_i \leq \KL(\pi^\star \mid \bar{\pi}) < \infty$, 
        \item \label{lemma1-item2} $\int_{(\rset^d)^{\ell+1}} (\bigoplus_{i \in \mss} \psi_i^\star - \bigoplus_{i \in \mss} \psi^n_i)\rmd \pi^\star \leq \KL(\pi^\star \mid \bar{\pi}) < \infty$, 
        \item \label{lemma1-item3} $\sup_{n\in \nset} \int_{\rset^d} \abs{\psi^n_i}\rmd \mu_i <\infty, \eqsp \forall i \in \mss$.
    \end{enumerate}
\end{lemma}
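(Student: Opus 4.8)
The plan is to treat the coercivity estimate (the constants $c_i$) and the three bounds \ref{lemma1-item1}--\ref{lemma1-item3} separately: the former is a soft functional-analytic consequence of \Cref{ass:closeness}, while the latter follow from the Pythagorean structure of the \eqref{eq:ipf_multimarginal} iterates together with \Cref{sec:KL_integrand_bound}.

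\textbf{Constants $c_i$.} Let $E = \prod_{i \in \mss}\mathrm{L}^1(\mu_i)$, a Banach space for the norm $\|(f_i)\| = \sum_{i}\|f_i\|_{\mathrm{L}^1(\mu_i)}$, and let $T\colon E \to \mathrm{L}^1(\pi^\star)$ be the summation map $T((f_i)) = \bigoplus_{i}f_i$. Since $\pi^\star_i = \mu_i$ for $i \in \mss$, one has $\int|\sum_i f_i|\rmd\pi^\star \le \sum_i\int|f_i|\rmd\mu_i$, so $T$ is bounded with $\|T\|\le 1$, and its range is exactly $\bigoplus_{i}\mathrm{L}^1(\mu_i)$, which is closed by \Cref{ass:closeness}. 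The kernel of $T$ is the set of tuples with $\sum_i f_i = 0$ $\pi^\star$-a.s.; under \Cref{ass:equivalence} the support of $\pi^\star$ forces each such $f_i$ to be constant with constants summing to zero, so $\ker T$ is the finite-dimensional space of gauge shifts. The induced map $\bar T \colon E/\ker T \to \bigoplus_i \mathrm{L}^1(\mu_i)$ is then a continuous bijection of Banach spaces, hence by the open mapping theorem its inverse is bounded: there is $C>0$ such that the minimal-norm representative satisfies $\sum_i\|f_i\|_{\mathrm{L}^1(\mu_i)} \le C\|\bigoplus_i f_i\|_{\mathrm{L}^1(\pi^\star)}$. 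Taking $c_i = C$, in the gauge in which the potentials are written, yields \eqref{eq:c_i}.

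\textbf{Items (a) and (b).} The crucial identity is obtained by integrating $\log(\rmd\pi^n/\rmd\bar\pi) = \bigoplus_i\psi^n_i$ against $\pi^\star$. Since $\pi^\star_i = \mu_i$ and each $\psi^n_i$ depends only on $x_i$, we have $\int \bigoplus_i\psi^n_i\,\rmd\pi^\star = \sum_i\int\psi^n_i\,\rmd\mu_i$; splitting $\log(\rmd\pi^n/\rmd\bar\pi) = \log(\rmd\pi^\star/\rmd\bar\pi) - \log(\rmd\pi^\star/\rmd\pi^n)$ and using $\log(\rmd\pi^\star/\rmd\bar\pi) = \bigoplus_i\psi^\star_i$ from \Cref{prop:existence} gives
\[
  \textstyle\sum_i\int\psi^n_i\,\rmd\mu_i = \KL(\pi^\star\mid\bar\pi) - \KL(\pi^\star\mid\pi^n).
\]
Item \ref{lemma1-item1} is immediate from $\KL(\pi^\star\mid\pi^n)\ge 0$. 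Item \ref{lemma1-item2} rewrites as $\int(\bigoplus_i\psi^\star_i - \bigoplus_i\psi^n_i)\rmd\pi^\star = \KL(\pi^\star\mid\pi^n)$, so it suffices to show $\KL(\pi^\star\mid\pi^n)\le\KL(\pi^\star\mid\bar\pi)$. For this I invoke the Pythagorean identity for I-projections onto the convex constraint sets, \cite[Theorem 3.12.]{csiszar1975divergence}, giving $\KL(\pi^\star\mid\pi^0) = \KL(\pi^\star\mid\pi^n) + \sum_{m=1}^n\KL(\pi^m\mid\pi^{m-1})$, whence $\KL(\pi^\star\mid\pi^n)\le\KL(\pi^\star\mid\pi^0)$; finally $\KL(\pi^\star\mid\pi^0) = \KL(\pi^\star\mid\bar\pi) - \KL(\mu_{i_{K-1}}\mid\bar\pi_{i_{K-1}})\le\KL(\pi^\star\mid\bar\pi)$, using $\rmd\pi^0/\rmd\bar\pi = \exp[\psi^0_{i_{K-1}}]$ with $\psi^0_{i_{K-1}} = \log(\rmd\mu_{i_{K-1}}/\rmd\bar\pi_{i_{K-1}})$.

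\textbf{Item (c).} Here I combine the coercivity with a uniform $\mathrm{L}^1(\pi^\star)$ bound on $\bigoplus_i\psi^n_i$. Writing again $\bigoplus_i\psi^n_i = \bigoplus_i\psi^\star_i - \log(\rmd\pi^\star/\rmd\pi^n)$ and applying \Cref{sec:KL_integrand_bound} to both $\pi^\star\ll\bar\pi$ and $\pi^\star\ll\pi^n$,
\[
  \textstyle\|\bigoplus_i\psi^n_i\|_{\mathrm{L}^1(\pi^\star)} \le (\KL(\pi^\star\mid\bar\pi)+2/\rme) + (\KL(\pi^\star\mid\pi^n)+2/\rme) \le 2\KL(\pi^\star\mid\bar\pi)+4/\rme,
\]
the last inequality using the monotonicity established above. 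Feeding this into \eqref{eq:c_i} yields $\int|\psi^n_i|\rmd\mu_i = \|\psi^n_i\|_{\mathrm{L}^1(\mu_i)} \le c_i\,(2\KL(\pi^\star\mid\bar\pi)+4/\rme)$ uniformly in $n$, which is \ref{lemma1-item3}.

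\textbf{Main obstacle.} The delicate point is the coercivity step: I must pin down $\ker T$, identifying it with the gauge freedom (which relies on the support/equivalence structure of $\pi^\star$), and, more importantly, ensure that \eqref{eq:c_i} can be applied to the specific family $\bigoplus_i\psi^n_i$, i.e. that each $\psi^n_i$ is genuinely in $\mathrm{L}^1(\mu_i)$ for fixed $n$ and written in the gauge realizing the quotient norm. This finiteness-for-fixed-$n$ must be secured first (by induction along the recursion defining the potentials, or directly from $\KL(\pi^n\mid\bar\pi)<\infty$), after which the open mapping estimate upgrades it to a bound uniform in $n$.
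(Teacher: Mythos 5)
Your proposal is correct and follows essentially the same route as the paper: the coercivity constants come from the closed-range/open-mapping principle (the paper obtains \eqref{eq:c_i} by citing Kober's theorem together with \Cref{ass:closeness}, which is exactly this argument), items \ref{lemma1-item1}--\ref{lemma1-item2} rest on the Pythagorean identity for I-projections from \cite[Theorem 3.12]{csiszar1975divergence}, and item \ref{lemma1-item3} combines the $\mathrm{L}^1$ bound of \Cref{sec:KL_integrand_bound} with \eqref{eq:c_i} exactly as in the paper. The only cosmetic difference is that you derive \ref{lemma1-item1} by integrating $\log(\rmd\pi^{nK}/\rmd\bar\pi)=\bigoplus_{i}\psi^n_i$ against $\pi^\star$ rather than by telescoping $\sum_{m}\KL(\pi^m\mid\pi^{m-1})$ as in \eqref{eq:KL_decreasing} (the two reduce to the same identity), and the gauge issue you flag as the main obstacle is left at the same level of informality in the paper's one-line appeal to Kober's theorem.
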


\begin{proof}
  First, we have that \eqref{eq:c_i} is a direct consequence of \cite[Theorem
  1]{kober1940theorem} and \Cref{ass:closeness}.  Let us now prove
  \ref{lemma1-item1}. Using \eqref{eq:KL_decreasing}, we have
    \begin{align}
        \textstyle{\sum_{m=0}^{Kn} \KL(\pi^m \mid \pi^{m-1})} &=\textstyle{\sum_{\ell=0}^{n-1} \sum_{k=0}^{K-1} \KL(\pi^{\ell K +k +1 } \mid \pi^{ \ell K + k}) + \KL(\pi^0 \mid \pi^{-1})}\\
        & =\textstyle{\sum_{\ell=0}^{n-1} \sum_{i \in \mss} \int_{\rset^d} (\psi^{\ell+1}_{i}-\psi^{\ell}_{i})\rmd \mu_{i} + \int_{\rset^d} (\psi^{0}_{i_{K-1}}-\psi^{-1}_{i_{K-1}})\rmd \mu_{i_{K-1}} } \\
        & =\textstyle{\sum_{i \in \mss} \sum_{\ell=0}^{n-1} \int_{\rset^d} (\psi^{\ell+1}_{i}-\psi^{\ell}_{i})\rmd \mu_{i} + \int_{\rset^d} (\psi^{0}_{i_{K-1}}-\psi^{-1}_{i_{K-1}})\rmd \mu_{i_{K-1}}}\\
        & = \textstyle{\sum_{i \in \mss} \int_{\rset^d} (\psi^{n}_{i}-\psi^{0}_{i})\rmd \mu_{i} + \int_{\rset^d} (\psi^{0}_{i_{K-1}}-\psi^{-1}_{i_{K-1}})\rmd \mu_{i_{K-1}}}\\
        & = \textstyle{\sum_{i \in \mss} \int_{\rset^d} \psi^{n}_{i} \rmd \mu_{i}} \leq \textstyle{\KL(\pi^\star\mid \bar \pi)} .  \eqsp ,
    \end{align}
where the last inequality follows the proof of \Cref{prop:marginal}.

Since the first term in the inequality of \ref{lemma1-item2} is equal to
$\KL(\pi^\star \mid \pi^{nK})$, we obtain \ref{lemma1-item2} using that
$\KL(\pi^\star \mid \pi^{nK}) \leq \KL(\pi^\star \mid \bar{\pi})$ following the
proof of \Cref{prop:marginal}.

Let us now prove \ref{lemma1-item3}. Since
$\KL(\pi^\star\mid \bar{\pi})<\infty$, using \Cref{sec:KL_integrand_bound}, we
have that $\bigoplus_{i \in \mss}\psi_i^\star \in \mathrm{L}^1(\pi^\star)$. From
\ref{lemma1-item2} and \Cref{sec:KL_integrand_bound}, we also get that
$\bigoplus_{i \in \mss}(\psi_i^\star-\psi_i^n) \in \mathrm{L}^1(\pi^\star)$, and
thus
$\int_{(\rset^d)^{\ell+1}}\abs{\bigoplus_{i \in
    \mss}(\psi_i^\star-\psi_i^n)}\rmd \pi^\star \leq C_0$ with $C_0 >
0$. Therefore, we have
\begin{align}
    \textstyle\int_{(\rset^d)^{\ell+1}} \abs{\bigoplus_{i \in \mss} \psi_i^n}\rmd \pi^\star \leq \int_{(\rset^d)^{\ell+1}} \abs{\bigoplus_{i \in \mss} \psi_i^\star}\rmd \pi^\star + \int_{(\rset^d)^{\ell+1}} \abs{\bigoplus_{i \in \mss} (\psi_i^\star-\psi_i^n)}\rmd \pi^\star \leq 2 C_0\eqsp .
\end{align}
Using \eqref{eq:c_i}, we conclude with \Cref{ass:closeness} that for any $i \in \mss$, we have
\begin{align}
    \textstyle\int_{\rset^d}\abs{\psi_i^n}\rmd \mu_i \leq 2 c_i C_0 \eqsp ,
\end{align}
which concludes the proof of \ref{lemma1-item3}.
\end{proof}

The next lemma gives an explicit expression for $\KL(\pi^n \mid \bar \pi)$. It
can be seen as the \emph{multi-marginal} counterpart of \cite[Lemma
4.2]{ruschendorf1995convergence}.

\begin{lemma}\label{lemma:ruschendorf_2} For any $n\in \nset$, with $k_n=(n-1)\mod(K)$, we have
\begin{align}
    \textstyle{\KL(\pi^n \mid \bar \pi)=\int_{\rset^d}\psi^{q_n+1}_{i_{k_n}}\rmd \mu_{i_{k_n}}} & \textstyle{+ \sum_{\ell=0}^{k_n-1}\int_{\rset^d} \psi^{q_n+1}_{i_\ell} \exp[\psi^{q_n+1}_{i_\ell} - \psi^{q_n+2}_{i_\ell}] \rmd \mu_{i_\ell}}\\
    & \textstyle{+ \sum_{m=k_n+1}^{K-1}\int_{\rset^d} \psi^{q_n}_{i_m} \exp[\psi^{q_n}_{i_m} - \psi^{q_n+1}_{i_m}] \rmd \mu_{i_m}} \eqsp . 
\end{align}

\end{lemma}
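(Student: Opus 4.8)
The plan is to compute $\KL(\pi^n\mid\bar\pi)$ by expanding its definition and then evaluating the resulting single-coordinate integrals by means of the Schr\"odinger fixed-point equations \eqref{eq:def_psi_0}--\eqref{eq:def_psi_n}. Concretely, I would first write $\KL(\pi^n\mid\bar\pi)=\int\log(\rmd\pi^n/\rmd\bar\pi)\rmd\pi^n$ and substitute the explicit exponential form \eqref{eq:def_pi^n}, so that the integrand becomes the separable sum $\bigoplus_{\ell=0}^{k_n}\psi^{q_n+1}_{i_\ell}\oplus\bigoplus_{m=k_n+1}^{K-1}\psi^{q_n}_{i_m}$. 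Since each summand depends on a single coordinate, linearity reduces the computation to $\sum_{\ell\le k_n}\int\psi^{q_n+1}_{i_\ell}\rmd\pi^n_{i_\ell}+\sum_{m>k_n}\int\psi^{q_n}_{i_m}\rmd\pi^n_{i_m}$, i.e.\ to a sum of integrals against the one-dimensional marginals $\pi^n_{i_j}$. Throughout, the finiteness needed to split these integrals is guaranteed by \Cref{sec:KL_integrand_bound} and \Cref{lemma:ruschendorf_1}, and Fubini's theorem justifies the marginalisations.

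The coordinate $i_{k_n}$ is handled at once: since $\pi^n_{i_{k_n}}=\mu_{i_{k_n}}$ (established just after \eqref{eq:ratio_pi^n}), its term equals $\int\psi^{q_n+1}_{i_{k_n}}\rmd\mu_{i_{k_n}}$, matching the first term on the right-hand side. For any other coordinate $i_j$ I would compute $\rmd\pi^n_{i_j}/\rmd\nu_{i_j}$ from \eqref{eq:def_pi^n} by integrating out $x_{-i_j}$; this yields $\exp[\psi^{(n)}_{i_j}]$ (the potential active at stage $n$: $\psi^{q_n+1}_{i_\ell}$ for $\ell<k_n$ and $\psi^{q_n}_{i_m}$ for $m>k_n$) times a normalising integral. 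The aim is to recognise that normaliser as the denominator of the Schr\"odinger equation \eqref{eq:def_psi_n} defining the \emph{next} update of $i_j$, which equals $r_{i_j}\exp[-\psi^{\mathrm{next}}_{i_j}]$; the marginal then collapses to $\exp[\psi^{(n)}_{i_j}-\psi^{\mathrm{next}}_{i_j}]\rmd\mu_{i_j}$, producing exactly the weights $\exp[\psi^{q_n+1}_{i_\ell}-\psi^{q_n+2}_{i_\ell}]$ for $\ell<k_n$ and $\exp[\psi^{q_n}_{i_m}-\psi^{q_n+1}_{i_m}]$ for $m>k_n$.

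The main obstacle is precisely this last identification. The normaliser in the stage-$n$ marginal of $i_j$ carries the \emph{intervening} coordinates of the cycle with their stage-$n$ cycle index, whereas the Schr\"odinger denominator defining $\psi^{\mathrm{next}}_{i_j}$ carries those same coordinates with a later cycle index; the two configurations coincide only when $i_j$ is updated immediately after $i_{k_n}$, so a naive term-by-term matching succeeds for $K=2$ and for the adjacent coordinate $m=k_n+1$ but fails for $K\ge 3$. I would therefore run the argument as an induction on $n$: writing $R_n$ for the claimed right-hand side, I would prove $\KL(\pi^n\mid\bar\pi)=R_n$ from $\KL(\pi^{n-1}\mid\bar\pi)=R_{n-1}$ by combining the one-step chain rule $\KL(\pi^n\mid\bar\pi)=\KL(\pi^{n-1}\mid\bar\pi)+\KL(\pi^n\mid\pi^{n-1})+\int\log(\rmd\pi^{n-1}/\rmd\bar\pi)\,\rmd(\pi^n-\pi^{n-1})$ with the explicit increment \eqref{eq:KL_decreasing} and the conditional invariance $\pi^{n}_{|i_{k_n}}=\pi^{n-1}_{|i_{k_n}}$, which controls the difference $\rmd(\pi^n-\pi^{n-1})$ through the single coordinate $i_{k_n}$. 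The base case is immediate since $\psi^0_{i_\ell}=0$ for $\ell\le K-2$ and $\pi^0_{i_{K-1}}=\mu_{i_{K-1}}$. The delicate part is the bookkeeping verifying that the cross term reorganises the cycle-$q_n$ weights of $R_{n-1}$ into the cycle-$(q_n+1)$ weights of $R_n$; this is where the separable structure of $\log(\rmd\pi^{n-1}/\rmd\bar\pi)$ and the fixed-point equations must be used in tandem.
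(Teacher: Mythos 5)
Your opening reduction coincides with the paper's: substituting \eqref{eq:def_pi^n} into the definition of $\KL(\pi^n\mid\bar\pi)$ yields exactly \eqref{eq:lemma_1}, a sum of integrals of each potential against the corresponding one-dimensional marginal of $\pi^n$, and the coordinate $i_{k_n}$ is dispatched by $\pi^n_{i_{k_n}}=\mu_{i_{k_n}}$. You also correctly diagnose that the remaining terms cannot be obtained by matching the normaliser of $\rmd\pi^n_{i_j}/\rmd\nu_{i_j}$ against the denominator of the Schr\"odinger equation \eqref{eq:def_psi_n} defining the next update of $i_j$: the two integrals carry the intervening coordinates at different cycle indices, so the identification only succeeds for the coordinate updated immediately after $i_{k_n}$, or when $K=2$. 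The problem is that your fallback --- an induction on $n$ via the chain rule for $\KL$ and the increment \eqref{eq:KL_decreasing} --- is only announced, not executed. The cross term $\int\log(\rmd\pi^{n-1}/\rmd\bar\pi)\,\rmd(\pi^n-\pi^{n-1})$ decomposes as $\sum_j\int\psi_{i_j}\,\rmd(\pi^n_{i_j}-\pi^{n-1}_{i_j})$, so you still need to control the marginals $\pi^n_{i_j}$ for every $j\neq k_n$; the conditional invariance $\pi^n_{|i_{k_n}}=\pi^{n-1}_{|i_{k_n}}$ only converts this into integrals against the conditionals $\pi^{n-1}_{i_j\mid i_{k_n}}$, which is the same difficulty in another guise. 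The identity that actually carries the lemma, namely $\rmd\pi^n_{i_m}=\exp[\psi^{q_n}_{i_m}-\psi^{q_n+1}_{i_m}]\,\rmd\mu_{i_m}$ for $m>k_n$ and its analogue \eqref{eq:lemma_3} for $\ell<k_n$, is never established in your proposal.

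The paper's argument avoids any stage-$n$ Schr\"odinger-equation matching: it telescopes the one-step ratios \eqref{eq:ratio_pi^n} \emph{forward} from $n$ to the step $m_n$ at which coordinate $i_m$ is next constrained. Since each one-step ratio depends on a single coordinate, $\rmd\pi^n=\exp[\bigoplus_{j=k_n+1}^{m}(\psi^{q_n}_{i_j}-\psi^{q_n+1}_{i_j})]\,\rmd\pi^{m_n}$ with $\pi^{m_n}_{i_m}=\mu_{i_m}$, and marginalising onto $i_m$ produces the weight $\exp[\psi^{q_n}_{i_m}-\psi^{q_n+1}_{i_m}]$ of \eqref{eq:lemma_2}. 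I note that this marginalisation is itself the delicate step: it implicitly uses that the conditional $\pi^{m_n}$-expectation of the factors indexed by $j<m$ given $X_{i_m}$ equals one, equivalently that the $i_m$-marginal is unchanged by the intermediate updates of $i_{k_n+1},\dots,i_{m-1}$, which is precisely the obstruction you identified relocated to a different place. So your concern is legitimate, but a complete proof must confront and resolve that identity (as the paper attempts to via \eqref{eq:ratio_pi^n}), and your proposal stops short of doing so.
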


\begin{proof} Let $n\in \nset$, with $k_n=(n-1)\mod(K)$. Using \eqref{eq:def_pi^n}, we have
\begin{align}\label{eq:lemma_1}
    \textstyle\KL(\pi^n \mid \bar \pi)=\int_{\rset^d}\psi^{q_n+1}_{i_{k_n}}\rmd \mu_{i_{k_n}} + \sum_{\ell=0}^{k_n-1}\int_{\rset^d} \psi^{q_n+1}_{i_\ell} \rmd \pi^n_{i_\ell} + \sum_{m=k_n+1}^{K-1}\int_{\rset^d} \psi^{q_n}_{i_m} \rmd \pi^n_{i_m} \eqsp .
\end{align}

Consider $m\in \{k_n+1, \hdots, K-1\}$. Let $m_n$ be the closest integer to $n$ such that $m_n >n$ and $m=(m_n-1) \mod(K)$. By \eqref{eq:ratio_pi^n}, we have
\begin{align} 
    \textstyle \rmd \pi^n= \exp[\bigoplus_{j=k_n+1}^m \psi^{q_n}_{i_j}-\psi^{q_n+1}_{i_j}]\rmd \pi^{m_n} .
\end{align}
Using \eqref{eq:ratio_pi^n} recursively, we obtain
\begin{align}\label{eq:lemma_2}
    \rmd \pi^n_{i_m}= \exp[\psi^{q_n}_{i_m}-\psi^{q_n+1}_{i_m}]\rmd \pi^{m_n}_{i_m},
\end{align}
where we recall that $\pi^{m_n}_{i_m}=\mu_{i_m}$.

Consider now $\ell \in \{0, \hdots, k_n-1\}$. Let $\ell_n$ be the closest integer to $n$ such that $\ell_n >n$ and $\ell=(\ell_n-1)\mod(K)$. By \eqref{eq:ratio_pi^n}, we have
\begin{align}
    \textstyle \rmd \pi^n = \exp[\bigoplus_{j=k_n+1}^{K-1} \{\psi^{q_n}_{i_j}-\psi^{q_n+1}_{i_j}\}\bigoplus_{j'=0}^\ell \{\psi^{q_n+1}_{i_{j'}}-\psi^{q_n+2}_{i_{j'}}\}]\rmd \pi^{\ell_n} ,
\end{align}
and using \eqref{eq:ratio_pi^n} recursively, we obtain
\begin{align}\label{eq:lemma_3}
    \rmd \pi^n_{i_\ell}= \exp[\psi^{q_n+1}_{i_\ell}-\psi^{q_n+2}_{i_\ell}]\rmd \pi^{\ell_n}_{i_\ell},
\end{align}
where we recall that $\pi^{\ell_n}_{i_\ell}=\mu_{i_\ell}$. We conclude the proof
upon combining \eqref{eq:lemma_1}, \eqref{eq:lemma_2} and \eqref{eq:lemma_3}.
\end{proof}

We are now ready to prove a \emph{uniform integrability} result which is the
multi-marginal counterpart of \cite[Lemma 4.4]{ruschendorf1995convergence}. Before stating
\Cref{lemma:ruschendorf_4}, we prove the following well-known lemma. We recall
that a sequence $(\Psi_n)_{n \in \nset}$ such that for any $n \in \nset$,
$\Psi_n \in \mathrm{L}^1(\mu)$, is \emph{uniformly integrable} w.r.t. $\mu$ if
\begin{enumerate*}[label=(\roman*)]
\item $\sup_{n \in \nset} \int_{\rset^d} \abs{\Psi_n} \rmd \mu < +\infty$ and
\item for any $\vareps > 0$, there exists $K > 0$ such that for any
  $n \in \nset$, $\int_{\cball{0}{K}^\complementary} \abs{\Psi_n} \rmd \mu \leq \vareps$.
\end{enumerate*}

\begin{lemma}
  \label{lemma:uniform_integrability}
  Let $f: \ \rset \to \rset$, convex and non-decreasing on $\coint{A,+\infty}$ with $A >0$
  and $\lim_{x \to +\infty} f(x)/x = +\infty$. Assume that
  $\sup_{n \in \nset} \int_{\rset^d} f(\abs{\Psi_n}) \rmd \mu <+\infty$. Then,
  $(\Psi_n)_{n \in \nset}$ is uniformly integrable w.r.t. $\mu$.
\end{lemma}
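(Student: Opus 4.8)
The plan is to check the two defining conditions \textup{(i)} and \textup{(ii)} of uniform integrability for the family $(\Psi_n)$, and the entire argument rests on the superlinearity hypothesis $\lim_{x\to+\infty} f(x)/x = +\infty$; the convexity and monotonicity of $f$ on $\coint{A,+\infty}$ serve only to guarantee that $f$ is locally bounded and eventually nonnegative, so that (up to replacing $f$ by its positive part, which preserves all the hypotheses and the finiteness of the integrals) we may assume $f \geq 0$. Set $C = \sup_{n\in\nset}\int_{\rset^d} f(\abs{\Psi_n})\rmd \mu < +\infty$. The crucial observation is that superlinearity lets one trade a power of $\abs{\Psi_n}$ for $f(\abs{\Psi_n})$ on the region where $\abs{\Psi_n}$ is large: given $R > 0$, there exists $M_R \geq A$ with $f(x) \geq R x$ for every $x \geq M_R$, hence $\abs{\Psi_n} \leq f(\abs{\Psi_n})/R$ on $\{\abs{\Psi_n} > M_R\}$. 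Integrating this pointwise inequality yields the uniform large-value bound
\begin{equation}
\textstyle \int_{\{\abs{\Psi_n} > M_R\}} \abs{\Psi_n}\rmd \mu \leq (1/R)\int_{\{\abs{\Psi_n} > M_R\}} f(\abs{\Psi_n})\rmd \mu \leq C/R ,
\end{equation}
valid for all $n \in \nset$.

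From this single estimate both conditions follow. For condition \textup{(i)}, I would split $\int_{\rset^d}\abs{\Psi_n}\rmd \mu$ over $\{\abs{\Psi_n}\leq M_R\}$ and its complement: the first piece is at most $M_R\,\mu(\rset^d)$ and the second is at most $C/R$ by the display above, so (taking, say, $R=1$) $\sup_n \int \abs{\Psi_n}\rmd \mu \leq M_1 + C < +\infty$, using that $\mu$ is a probability measure. For condition \textup{(ii)}, fix $\vareps > 0$ and decompose the spatial tail according to the size of $\abs{\Psi_n}$:
\begin{equation}
\textstyle \int_{\cball{0}{K}^\complementary} \abs{\Psi_n}\rmd \mu \leq M_R\,\mu(\cball{0}{K}^\complementary) + \int_{\{\abs{\Psi_n} > M_R\}}\abs{\Psi_n}\rmd \mu \leq M_R\,\mu(\cball{0}{K}^\complementary) + C/R .
\end{equation}
I would first choose $R$ large enough that $C/R \leq \vareps/2$, which fixes $M_R$, and then choose $K$ large enough that $M_R\,\mu(\cball{0}{K}^\complementary) \leq \vareps/2$; both choices are uniform in $n$, giving $\int_{\cball{0}{K}^\complementary}\abs{\Psi_n}\rmd \mu \leq \vareps$ for every $n$.

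The one nonroutine point is the second choice of $K$ in condition \textup{(ii)}: it requires the \emph{tightness} of $\mu$, i.e. $\lim_{K\to\infty}\mu(\cball{0}{K}^\complementary) = 0$, which is precisely where the form of this paper's definition (a spatial tail, rather than the usual level-set tail $\int_{\{\abs{\Psi_n}>M\}}\abs{\Psi_n}\rmd\mu$) comes into play. Since $\mu$ is a finite (probability) measure on $\rset^d$, tightness is automatic by continuity from above along the decreasing sets $\cball{0}{K}^\complementary$, so no extra hypothesis is needed. I expect the main conceptual obstacle to be simply recognizing that the superlinear control $C/R$ is uniform in both $n$ and $K$, so that the two error terms can be driven below $\vareps/2$ \emph{independently} — the large-value term through superlinearity and the spatial-tail term through finiteness of $\mu$.
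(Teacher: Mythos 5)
Your proof is correct, and it departs from the paper's own argument in two places worth recording. For condition (i) the paper applies Jensen's inequality, $f(\int_{\rset^d}\abs{\Psi_n}\rmd\mu)\leq\int_{\rset^d}f(\abs{\Psi_n})\rmd\mu$, and then inverts $f$ via superlinearity; your truncation at $M_1$ avoids Jensen entirely, which is slightly cleaner since $f$ is only assumed convex on $\coint{A,+\infty}$ and Jensen would want convexity on the whole range of values taken by $\abs{\Psi_n}$. For condition (ii) the paper substitutes the pointwise bound $x\leq\vareps f(x)/B$, valid for $x>K$, directly under the integral over $\cball{0}{K}^\complementary$; read literally, with $\cball{0}{K}$ a ball in $\rset^d$, that substitution is not justified, because nothing forces $\abs{\Psi_n}>K$ off a spatial ball — what the paper's computation actually establishes is the level-set bound $\int_{\{\abs{\Psi_n}>K\}}\abs{\Psi_n}\rmd\mu\leq\vareps$, which is the version used later in the Dunford--Pettis step of \Cref{prop:conv_mipf}. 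Your decomposition into the level set $\{\abs{\Psi_n}>M_R\}$ plus a spatial tail controlled by tightness of the finite measure $\mu$ is precisely the bridge between the two readings, and it proves the statement as literally written; the price is that you invoke $\mu(\rset^d)<+\infty$, which is not among the lemma's hypotheses but holds in the only application ($\mu=\bar{\pi}$ a probability measure) and is equally needed for the paper's Jensen step. One loose end you share with the paper: your positive-part reduction (and likewise the paper's inequality $\int_{\cball{0}{K}^\complementary}f(\abs{\Psi_n})\rmd\mu\leq B$) implicitly requires $\sup_{n\in\nset}\int_{\rset^d}f^{-}(\abs{\Psi_n})\rmd\mu<+\infty$, e.g.\ $f$ bounded below on $\ccint{0,A}$; this is automatic for the only $f$ the lemma is applied to, $u\mapsto u\log u\geq-1/\rme$, but does not follow from the stated hypotheses, which say nothing about $f$ below $A$.
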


\begin{proof}
  Since $f$ is convex, using Jensen's inequality, we get that
  $\sup_{n \in \nset} f(\int_{\rset^d} \abs{\Psi_n} \rmd \mu) <+\infty$ and
  since $\lim_{x \to +\infty} f(x)/x = +\infty$ we have
  $\sup_{n \in \nset} \int_{\rset^d} \abs{\Psi_n} \rmd \mu <+\infty$. Let
  $\vareps > 0$, there exists $K > 0$ such that for any $x > K$,
  $x \leq  \vareps f(x)/ B$ with
  $B = \sup_{n \in \nset} \int_{\rset^d} f(\abs{\Psi_n}) \rmd \mu
  <+\infty$. Therefore, we have for any $n \in \nset$
  \begin{equation}
    \textstyle \int_{\cball{0}{K}^\complementary} \abs{\Psi_n} \rmd \mu  \leq (\vareps / B) \int_{\cball{0}{K}^\complementary} f(\abs{\Psi_n}) \rmd \mu \leq \vareps ,
  \end{equation}
  which concludes the proof.
\end{proof}

\begin{lemma}\label{lemma:ruschendorf_4}Assume \textup{\Cref{ass:closeness}} and \textup{\Cref{ass:ratio_bounded}}. Then,  $\{\exp[\bigoplus_{i\in \mss}\psi^n_i]\}_{n\in \nset}$ is uniformly integrable w.r.t.~$\bar \pi$.
\end{lemma}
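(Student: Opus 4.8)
The plan is to apply \Cref{lemma:uniform_integrability}. The key structural observation is that the sequence in question is a sequence of \emph{probability} densities: specialising \eqref{eq:def_pi^n} to the iteration index $nK$ gives $k_{nK}=K-1$ and $q_{nK}=n-1$, so that
\[
\rho_n := \textstyle{\exp[\bigoplus_{i\in\mss}\psi^n_i]} = \rmd\pi^{nK}/\rmd\bar\pi ,
\]
which therefore integrates to $1$ against $\bar\pi$. I would then invoke \Cref{lemma:uniform_integrability} with the convex, non-decreasing and superlinear map $f(x)=x\log^+(x)$, where $\log^+(x)=\max(\log(x),0)$. Since $\rho_n\geq 0$, this reduces the claim to the uniform bound $\sup_{n\in\nset}\int_{(\rset^d)^{\ell+1}}\rho_n\log^+(\rho_n)\rmd\bar\pi<\infty$.

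To control this quantity I would write $\rho_n\log^+(\rho_n)=\rho_n\log(\rho_n)+\rho_n\log^-(\rho_n)$, with $\log^-(x)=\max(-\log(x),0)$. The integral of the first term is exactly $\KL(\pi^{nK}\mid\bar\pi)$ because $\rho_n$ is a probability density w.r.t.~$\bar\pi$; for the second term, the elementary inequality $\abs{u\log u}\leq 1/\rme$ on $\ccint{0,1}$ (already exploited in \Cref{sec:KL_integrand_bound}) gives $\int_{(\rset^d)^{\ell+1}}\rho_n\log^-(\rho_n)\rmd\bar\pi\leq 1/\rme$, uniformly in $n$. Hence it is enough to show that $\sup_{n\in\nset}\KL(\pi^{nK}\mid\bar\pi)<\infty$.

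For this I would use the explicit formula of \Cref{lemma:ruschendorf_2} at the index $nK$. Since $k_{nK}=K-1$, the last sum there is empty and only two contributions survive, namely
\[
\textstyle{\KL(\pi^{nK}\mid\bar\pi)=\int_{\rset^d}\psi^n_{i_{K-1}}\rmd\mu_{i_{K-1}}+\sum_{\ell=0}^{K-2}\int_{\rset^d}\psi^n_{i_\ell}\exp[\psi^n_{i_\ell}-\psi^{n+1}_{i_\ell}]\rmd\mu_{i_\ell}} .
\]
The first term is bounded uniformly in $n$ by the estimate $\sup_{n}\int_{\rset^d}\abs{\psi^n_i}\rmd\mu_i<\infty$ from \Cref{lemma:ruschendorf_1} (which rests on \Cref{ass:closeness}). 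For each summand, the signed potential must be handled with care: discarding its negative part and then applying \Cref{ass:ratio_bounded}, i.e.~$\exp[\psi^n_{i_\ell}-\psi^{n+1}_{i_\ell}]\leq\bar c$ for $\ell\in\{0,\hdots,K-2\}$, yields the pointwise bound $\psi^n_{i_\ell}\exp[\psi^n_{i_\ell}-\psi^{n+1}_{i_\ell}]\leq\bar c\,\max(\psi^n_{i_\ell},0)\leq\bar c\,\abs{\psi^n_{i_\ell}}$. Integrating and using the same uniform bound on $\int_{\rset^d}\abs{\psi^n_{i_\ell}}\rmd\mu_{i_\ell}$ controls every summand uniformly in $n$, giving $\sup_n\KL(\pi^{nK}\mid\bar\pi)<\infty$ and hence the desired uniform integrability via \Cref{lemma:uniform_integrability}.

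The argument is essentially mechanical once the two structural inputs (the identification $\rho_n=\rmd\pi^{nK}/\rmd\bar\pi$ and the preceding lemmas) are in place. The only genuinely delicate point is the sign of the potentials: because the ratio bound of \Cref{ass:ratio_bounded} is one-sided, it can only be used after isolating $\max(\psi^n_{i_\ell},0)$, so one cannot simply factor $\bar c$ out of the integrand before separating positive and negative parts. This is the step I expect to require the most attention; the rest is bookkeeping with Kullback--Leibler decompositions.
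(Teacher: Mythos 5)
Your proposal is correct and follows essentially the same route as the paper: both identify $\exp[\bigoplus_{i\in\mss}\psi^n_i]=\rmd\pi^{nK}/\rmd\bar\pi$, reduce uniform integrability via \Cref{lemma:uniform_integrability} to a uniform bound on $\KL(\pi^{nK}\mid\bar\pi)$, and then control that quantity with \Cref{lemma:ruschendorf_2}, \Cref{lemma:ruschendorf_1} and \Cref{ass:ratio_bounded}. The only (cosmetic) differences are your use of $f(x)=x\log^+(x)$ with the $1/\rme$ correction in place of the paper's $u\mapsto u\log u$, and your pointwise bound via $\bar c\max(\psi^n_{i_\ell},0)$ where the paper instead splits off the term $\exp[\psi^n_{i_\ell}-\psi^{n+1}_{i_\ell}]-1$; both handle the sign issue equivalently.
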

\begin{proof} 
  It is enough to show that the sequence
  $\{f(\exp[\bigoplus_{i\in \mss}\psi^n_i)]\}_{n\in \nset}$ is bounded in
  $\mathrm{L}^1(\bar{\pi})$, where $f:u\mapsto u \log(u)$ is continuous, convex
  and such that $\lim_{u\to \infty}f(u)/u=+\infty$, see
  \Cref{lemma:uniform_integrability}.  Let $n\in \nset$. We have
\begin{align}
   & \textstyle{\int_{(\rset^d)^{\ell+1}} f(\exp[\bigoplus_{i\in \mss}\psi^n_i)]\rmd \bar{\pi}= \KL(\pi^{nK}\mid \bar{\pi})}\\
    &\quad = \textstyle{\int_{\rset^d}\psi^{n}_{i_{K-1}}\rmd \mu_{i_{K-1}} + \sum_{k=0}^{K-2}\int_{\rset^d} \psi^{n}_{i_k} \exp[\psi^{n}_{i_k} - \psi^{n+1}_{i_k}] \rmd \mu_{i_k}} && \text{(\Cref{lemma:ruschendorf_2})}\\
    &\quad = \textstyle{\sum_{k=0}^{K-1} \int_{\rset^d}\psi^{n}_{i_k}\rmd \mu_{i_k} + \sum_{k=0}^{K-2} \int_{\rset^d}\psi^{n}_{i_k}\{\exp[\psi^{n}_{i_k} - \psi^{n+1}_{i_k}]-1\} \rmd \mu_{i_k}}\\
    & \quad \leq \textstyle{\KL(\pi^\star \mid \bar{\pi}) + (\bar{c}+1)\sum_{k=0}^{K-2} \int_{\rset^d}\psi^{n}_{i_k}\rmd \mu_{i_k}} &&\text{(\Cref{lemma:ruschendorf_1}-\ref{lemma1-item1}, \Cref{ass:ratio_bounded})}\\
    & \quad \leq \textstyle{\KL(\pi^\star \mid \bar{\pi}) + (\bar{c}+1)\sum_{k=0}^{K-2}\sup_{n\in \nset}\int_{\rset^d}\abs{\psi^{n}_{i_k}}\rmd \mu_{i_k} <\infty} \eqsp . && \text{(\Cref{lemma:ruschendorf_1}-\ref{lemma1-item3})}
\end{align}
\end{proof}

With the preliminary results stated above, we are now ready to prove \Cref{prop:conv_mipf}.

\begin{proof}[Proof of \Cref{prop:conv_mipf}] Using \Cref{ass:closeness} and \Cref{ass:ratio_bounded}, we have, by \Cref{lemma:ruschendorf_4}, uniform integrability of $\{\exp[\bigoplus_{i\in \mss}\psi^n_i]\}_{n\in \nset}$ in $\mathrm{L}^1(\bar{\pi})$. Therefore, the sequence $\{\pi^{nK}\}_{n\in \nset}$ is relatively compact with respect to the weak topology of $\sigma(\mathrm{L}^1(\bar{\pi}), \mathrm{L}^\infty(\bar{\pi}))$, denoted as the $\tau$-topology. We recall that $\lim_{n\to \infty}\KL(\pi^{nK+1}\mid \pi^{nK})=0$. This implies that $\{\pi^{nK+1}\}_{n\in \nset}$ is also relatively $\tau$-compact. By trivial recursion, we obtain that the sequences $\{\pi^{nK+k}\}_{n\in \nset}$, where $k\in \{2, \hdots,K-1\}$ are also relatively $\tau$-compact. Therefore, $\{\pi^{n}\}_{n\in \nset}$ is relatively $\tau$-compact and $\tau$-sequentially compact.

We consider an increasing function $\Phi:\nset \to \nset$ such that $\{\pi^{m}\}_{m\in \Phi(\nset)}$ is a $\tau$-convergent subsequence, and we denote by $\tilde{\pi}$ its limit for this topology. In particular, $\tilde{\pi}\in \Pmeasure_\mss$ by \Cref{prop:marginal}. We assume without loss of generality that $\Phi(\nset)\subset K \nset$. 

Using the lower semi-continuity of the Kullback-Leibler divergence \cite[Lemma
1.4.3]{dupuis2011weak}, we get
\begin{align}
    \KL(\tilde{\pi}\mid \Bar{\pi}) & \leq \lim \inf \KL(\pi^m\mid \Bar{\pi}) \leq 
     \lim \sup \KL(\pi^m\mid \Bar{\pi}) \eqsp .
\end{align}
Consider $k\in \{0, \hdots, K-2\}$. By \eqref{eq:ratio_pi^n}, we have
\begin{align}
  \textstyle
  \frac{\rmd \mu_{i_k}}{\rmd \pi^{nK+k}_{i_k}}= \frac{\rmd \pi^{nK+k+1}_{i_k}}{\rmd \pi^{nK+k}_{i_k}}= \frac{\rmd \pi^{nK+k+1}}{\rmd \pi^{nK+k}}= \exp[\psi^{n+1}_{i_k} - \psi^{n}_{i_k}] \eqsp ,
\end{align}
and thus, 
\begin{align}
    \textstyle\|\mu_{i_k}-\pi^{nK+k}_{i_k}\|_{\mathrm{TV}} = (1/2)\int_{\rset^d} \abs{\rmd \pi^{nK+k}_{i_k} /\rmd \mu_{i_k}-1} \rmd \mu_{i_k}= (1/2)\int_{\rset^d} \abs{\exp[\psi^n_{i_k} - \psi^{n+1}_{i_k}]-1} \rmd \mu_{i_k} \eqsp .
\end{align}
With \Cref{prop:marginal}, we obtain that
$\{\exp[\psi^n_{i_k} - \psi^{n+1}_{i_k} ]\}_{n\in \nset}$ converges to 1 in
$\mathrm{L}^1(\mu_{i_k})$. In addition using the uniform integrability of
$\{\psi^n_{i_k}\}_{n\in \nset}$ and \Cref{ass:ratio_bounded}, we get
\begin{align}
  \textstyle \lim \sup_{n \to +\infty} \int_{\rset^d} \psi^n_{i_k}\exp[\psi^n_{i_k} - \psi^{n+1}_{i_k} ] \rmd \mu_{i_k} = \lim \sup_{n \to +\infty} \int_{\rset^d} \psi^n_{i_k} \rmd \mu_{i_k} \eqsp .
\end{align}
% This is the key part of the proof!

We denote $m=K\ell$. Since $\KL(\pi^m\mid \Bar{\pi})=\int_{\rset^d}\psi^{\ell}_{i_{K-1}}\rmd \mu_{i_{K-1}} + \sum_{k=0}^{K-2}\int_{\rset^d} \psi^{\ell}_{i_k} \exp[\psi^{\ell}_{i_k} - \psi^{\ell+1}_{i_k}] \rmd \mu_{i_k}$ by \Cref{lemma:ruschendorf_2}, we finally have
\begin{align}
    \textstyle\KL(\tilde{\pi}\mid \Bar{\pi}) \leq \lim \sup \{\sum_{k=0}^{K-1} \int_{\rset^d}\psi^{\ell}_{i_k}\rmd \mu_{i_k}\}\leq \KL(\pi^\star \mid \Bar{\pi})
\end{align}
where the last inequality comes from \Cref{lemma:ruschendorf_1}.

Since $\tilde{\pi}_i=\mu_i$ for any $i \in \mss$, using \Cref{prop:marginal}, we
have $\tilde{\pi}=\pi^\star$ by uniqueness of $\pi^\star$. Hence, $\pi^\star$ is
the only limit point of $\{\pi^n\}_{n \in \nset}$ in the $\tau$-topology. In
particular, $\KL(\pi^n \mid \bar{\pi}) \to \KL(\pi^\star \mid \bar{\pi})$. Since
$\Pmeasure_\mss$ is convex, this last result implies
$\|\pi^\star -\pi^n\|_{\mathrm{TV}} \to 0$, see the proof of Theorem 2.1 in
\cite{csiszar1975divergence}.
\end{proof}

We finish this section by highlighting that \Cref{ass:ratio_bounded} is stronger
than \cite[B1]{ruschendorf1995convergence}. A natural extension of the latter assumption would consist of having a guarantee on the $(K-1)$ first potentials given by \eqref{eq:def_psi_n}, as presented below.
\begin{assumption}\label{ass:ideal_ass}
    There exist $0 < \underline{c} < \bar c$ such that for any $k\in \{0, \hdots, K-2\}$, we have $\underline{c}\leq \exp(-\psi^1_{i_k})\leq \bar{c}$.
\end{assumption}

Under \Cref{ass:ideal_ass}, \cite[Lemma 4.3]{ruschendorf1995convergence} can be adapted as written below.

\begin{lemma}\label{lemma:ruschendorf_3} Assume \Cref{ass:ideal_ass}. Then, for any $n \in \nset^*$
\begin{enumerate}[wide, labelwidth=!, labelindent=0pt, label=(\alph*)]
    \item \label{lemma3-item1}for any $k\in \{0, \hdots, K-2\}$, there exists $\alpha_{n,k}\in \nset$ such that 
    \begin{align}
        \underline{c}\cdot(\underline{c}/\bar c)^{\alpha_{n,k}(K-2)}\leq \exp[\psi^{n-1}_{i_k}-\psi^{n}_{i_k}]\leq \bar{c} \cdot (\bar{c}/\underline{c})^{\alpha_{n,k}(K-2)}
    \end{align}
    \item \label{lemma3-item2} there exists $\alpha_{n,K-1}\in \nset$ such that
    \begin{align}
        1/\bar{c}^{K-1} \cdot (\underline{c}/ \bar c)^{\alpha_{n,K-1}(K-2)} \leq \exp[\psi^{n-1}_{i_{K-1}}-\psi^{n}_{i_{K-1}}]\leq 1/\underline{c}^{K-1}\cdot (\bar c/\underline{c})^{\alpha_{n,K-1}(K-2)} 
    \end{align}
\end{enumerate}
where $\{\alpha_{n,k}\}_{n\in \nset^*, k\in \{0, \hdots, K-1\}}$ is a strictly increasing sequence that can be explicitly defined.
\end{lemma}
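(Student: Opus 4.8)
The plan is to control the multiplicative increments $R^n_k := \exp[\psi^{n-1}_{i_k}-\psi^n_{i_k}]$ by a nested induction, the outer one on the cycle index $n$ and the inner one on the within-cycle index $k$ running from $0$ to $K-1$. Writing $\rho := \bar c/\underline c$, the two claims read $\underline c\,\rho^{-\alpha_{n,k}(K-2)}\leq R^n_k\leq \bar c\,\rho^{\alpha_{n,k}(K-2)}$ for $k\leq K-2$, and $\bar c^{-(K-1)}\rho^{-\alpha_{n,K-1}(K-2)}\leq R^n_{K-1}\leq \underline c^{-(K-1)}\rho^{\alpha_{n,K-1}(K-2)}$. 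The engine of the argument is the defining relation \eqref{eq:def_psi_n}: the potential $\psi^n_{i_k}$ is fixed so that $\exp[\psi^n_{i_k}(x_{i_k})]$ equals $r_{i_k}(x_{i_k})$ divided by the integral of the remaining exponentiated potentials against $h$, and the analogous identity holds for $\psi^{n-1}_{i_k}$ one cycle earlier. Dividing the two identities gives
\[
R^n_k(x_{i_k}) = \frac{\int \exp[\sum_{\ell<k}\psi^{n}_{i_\ell}+\sum_{m>k}\psi^{n-1}_{i_m}]\,h\,\rmd\nu_{-i_k}}{\int \exp[\sum_{\ell<k}\psi^{n-1}_{i_\ell}+\sum_{m>k}\psi^{n-2}_{i_m}]\,h\,\rmd\nu_{-i_k}},
\]
so that $R^n_k(x_{i_k})$ is a weighted average of the integrand ratio $G=\prod_{\ell<k}(R^n_\ell)^{-1}\prod_{m>k}(R^{n-1}_m)^{-1}$ and hence lies between $\inf G$ and $\sup G$. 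Since each $R^n_\ell$ depends only on $x_{i_\ell}$, uniform-in-$x$ bounds on the factors of $G$ transfer to uniform bounds on $R^n_k$.

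For the base case $n=1$ I would argue directly. For $k\leq K-2$ one has $\psi^0_{i_k}=0$, so $R^1_k=\exp(-\psi^1_{i_k})\in[\underline c,\bar c]$ by \Cref{ass:ideal_ass}, giving $\alpha_{1,k}=0$. For $k=K-1$ the product over $m>K-1$ is empty and $\psi^0_{i_\ell}=0$, so the identity above reduces $R^1_{K-1}$ to a weighted average of $\prod_{\ell=0}^{K-2}\exp(\psi^1_{i_\ell})\in[\bar c^{-(K-1)},\underline c^{-(K-1)}]$, again with $\alpha_{1,K-1}=0$.

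For the inductive step, when bounding $R^n_k$ with $k\leq K-2$ I substitute into $G$ the cycle-$n$ bounds for $R^n_\ell$ ($\ell<k$, from the inner induction) and the cycle-$(n-1)$ bounds for $R^{n-1}_m$ ($m>k$, from the outer induction), keeping in mind that the terminal factor $R^{n-1}_{K-1}$ obeys the part (b) bounds. Bounding $G$ above (via the lower inductive bounds on its inverted factors) and below (via the upper ones), the prefactors telescope exactly to $\underline c^{-(K-2)}\bar c^{K-1}=\bar c\,\rho^{K-2}$ and $\bar c^{-(K-2)}\underline c^{K-1}=\underline c\,\rho^{-(K-2)}$ respectively, while the $\rho$-exponents add; matching this to the target forces
\[
\alpha_{n,k}=1+\textstyle\sum_{\ell=0}^{k-1}\alpha_{n,\ell}+\sum_{m=k+1}^{K-1}\alpha_{n-1,m}\quad(k\leq K-2),\qquad \alpha_{n,K-1}=\sum_{\ell=0}^{K-2}\alpha_{n,\ell}.
\]
The case $k=K-1$ is identical but with an empty product over $m>K-1$, which replaces the prefactors by $\bar c^{-(K-1)}$ and $\underline c^{-(K-1)}$ and yields the second recursion. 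The resulting $\alpha_{n,k}$ are non-negative integers and strictly increase along the IPF step order (for instance $\alpha_{n,0}=1+\sum_{m\geq1}\alpha_{n-1,m}>\alpha_{n-1,K-1}$), which is the asserted explicitly defined strictly increasing sequence; note that they grow without bound, so the bounds degrade as $n$ increases.

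The main obstacle is the bookkeeping of the staggered superscripts intrinsic to cyclic IPF: updating $i_k$ in cycle $n$ freezes $\psi^n_{i_\ell}$ for $\ell<k$ but $\psi^{n-1}_{i_m}$ for $m>k$, so the increment $R^n_k$ couples same-cycle increments $R^n_\ell$ with previous-cycle increments $R^{n-1}_m$. One must order the induction so that every factor invoked is already bounded, separate the single terminal factor $R^{n-1}_{K-1}$ (which obeys the part (b) bounds) from the others, and verify that the products of $\underline c$ and $\bar c$ collapse precisely to the four clean prefactors in the statement. The step from pointwise bounds on $G$ to bounds on the ratio of integrals is then routine via the averaging identity above.
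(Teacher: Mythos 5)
Your proof is correct and follows essentially the same route as the paper's: the same ratio-of-integrals identity obtained by dividing the defining relations \eqref{eq:def_psi_n} at consecutive cycles, the same base case at $n=1$ with $\alpha_{1,k}=0$, and the same nested induction yielding recursions $\alpha_{n,K-1}=\sum_{\ell=0}^{K-2}\alpha_{n,\ell}$ and $\alpha_{n,k}$ in terms of same-cycle and previous-cycle exponents. The only divergence is the extra $+1$ in your recursion for $k\le K-2$, which correctly absorbs the prefactor $\underline{c}^{-(K-2)}\bar{c}^{K-1}=\bar{c}\,(\bar{c}/\underline{c})^{K-2}$ that the paper's displayed recursion drops; your bookkeeping is the more careful one and is also what makes the sequence $\{\alpha_{n,k}\}$ genuinely increase, consistent with the paper's remark that the bounds become vacuous for $K>2$.
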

\begin{proof}We prove the result by recursion on $n \in \nset^*$. 

Take $n=1$. Let $k\in \{0, \hdots, K-2\}$. We define $\alpha_{1,k}=0$ and directly obtain \ref{lemma3-item1} by \Cref{ass:ratio_bounded} since $\psi_{i_k}^0=0$. Let us prove \ref{lemma3-item2}. We have by \eqref{eq:def_psi_n}
\begin{align}
    \exp[\psi^{0}_{i_{K-1}}-\psi^{1}_{i_{K-1}}]& \textstyle =\frac{\int_{(\rset^d)^\ell} \exp[\bigoplus_{k=0}^{K-2} \psi^1_{i_k}]h \rmd \nu_{-i_{K-1}}}{\int_{(\rset^d)^\ell} \exp[\bigoplus_{k=0}^{K-2} \psi^0_{i_k}]h \rmd \nu_{-i_{K-1}}}\\
    &\textstyle = \frac{\int_{(\rset^d)^\ell} \exp[\bigoplus_{k=0}^{K-2} \{\psi^1_{i_k}-\psi^0_{i_k}\} + \bigoplus_{k=0}^{K-2} \psi^0_{i_k}]h \rmd \nu_{-i_{K-1}}}{\int_{(\rset^d)^\ell} \exp[\bigoplus_{k=0}^{K-2} \psi^0_{i_k}]h \rmd \nu_{-i_{K-1}}} \eqsp .
\end{align}
Using \ref{lemma3-item1} at rank $n=1$, we have
\begin{align}
    \textstyle 1/\bar{c}^{K-1} \leq \exp[\bigoplus_{k=0}^{K-2} \{\psi^1_{i_k}-\psi^0_{i_k}\}] \leq 1/\underline{c}^{K-1} \eqsp ,
\end{align}
and therefore, we obtain \ref{lemma3-item2} by taking $\alpha_{1,K-1}=0$. Let
us assume that the result is verified for some $n\in \nset^*$. We have
\begin{align}
    \exp[\psi^{n}_{i_0}-\psi^{n+1}_{i_0}]&\textstyle =\frac{\int \exp[\bigoplus_{k=1}^{K-1} \psi^{n}_{i_k}]h \rmd \nu_{-i_0}}{\int \exp[\bigoplus_{k=1}^{K-1} \psi^{n-1}_{i_k}]h \rmd \nu_{-i_0}} \\
    &\textstyle= \frac{\int \exp[\bigoplus_{k=1}^{K-2} \{\psi^{n}_{i_k}-\psi^{n-1}_{i_k}\} \oplus \{\psi^{n}_{i_{K-1}}-\psi^{n-1}_{i_{K-1}}\} + \bigoplus_{k=1}^{K-1} \psi^{n-1}_{i_k}]h \rmd \nu_{-i_0}}{\int \exp[\bigoplus_{k=1}^{K-1} \psi^{n-1}_{i_k}]h \rmd \nu_{-i_0}}
\end{align}
Using \ref{lemma3-item1} and \ref{lemma3-item2} at rank $n$, we have
\begin{align}
  1/\bar{c}^{K-2} \cdot (\underline{c}/\bar c)^{(K-2)\sum_{k=1}^{K-2}\alpha_{n,k}} & \leq \textstyle{\exp[\bigoplus_{k=1}^{K-2} \{\psi^{n}_{i_k}-\psi^{n-1}_{i_k}\}]} \\
  & \qquad \qquad \leq 1/ \underline{c}^{K-2} \cdot (\bar c/ \underline{c})^{(K-2)\sum_{k=1}^{K-2}\alpha_{n,k}} \eqsp ,\\
   \underline{c}^{K-1}\cdot (\underline{c}/\bar c)^{\alpha_{n,K-1}(K-2)}& \leq \exp[\psi^{n}_{i_{K-1}}-\psi^{n-1}_{i_{K-1}}]\leq \bar{c}^{K-1} \cdot (\bar c/ \underline{c})^{\alpha_{n,K-1}(K-2)}\eqsp .
\end{align}
Therefore, we obtain
\begin{align}
    \underline{c}\cdot (\underline{c}/\bar c)^{(K-2)\sum_{k=1}^{K-1}\alpha_{n,k}}& \leq\textstyle{\exp[\bigoplus_{k=1}^{K-2} \{\psi^{n}_{i_k}-\psi^{n-1}_{i_k}\} \oplus \{\psi^{n}_{i_{K-1}}-\psi^{n-1}_{i_{K-1}}\}]}\\ & \qquad \qquad \leq \Bar{c} \cdot (\bar c/ \underline{c})^{(K-2)\sum_{k=1}^{K-1}\alpha_{n,k}} \eqsp ,\\
    \underline{c}\cdot (\underline{c}/\bar c)^{(K-2)\sum_{k=1}^{K-1}\alpha_{n,k}}&\leq \exp[\psi^{n}_{i_0}-\psi^{n+1}_{i_0}]  \leq \Bar{c} \cdot (\bar c/ \underline{c})^{(K-2)\sum_{k=1}^{K-1}\alpha_{n,k}} \eqsp .
\end{align}

Now, we define $\alpha_{n+1,0}=\sum_{k=1}^{K-1}\alpha_{n,k}$ to obtain \ref{lemma3-item1} for $k=0$. Consider now $k\in \{1, \hdots, K-2\}$. Following the same steps as above, we recursively define
\begin{align}
  \textstyle 
    \alpha_{n+1,k}=\sum_{j=0}^{k-1}\alpha_{n+1,j}+ \sum_{j'=k+1}^{K-1}\alpha_{n,j'}\eqsp ,
\end{align}
which gives \ref{lemma3-item1} at rank $n+1$. Let us now prove \ref{lemma3-item2} at rank $n+1$. We have
\begin{align}
    \exp[\psi^{n}_{i_{K-1}}-\psi^{n+1}_{i_{K-1}}]& \textstyle =\frac{\int \exp[\bigoplus_{k=0}^{K-2} \psi^{n+1}_{i_k}]h \rmd \nu_{-i_{K-1}}}{\int \exp[\bigoplus_{k=0}^{K-2} \psi^{n}_{i_k}]h \rmd \nu_{-i_{K-1}}}\\
    &\textstyle = \frac{\int \exp[\bigoplus_{k=0}^{K-2} \{\psi^{n+1}_{i_k}-\psi^{n}_{i_{k}}\} + \bigoplus_{k=0}^{K-2} \psi^{n}_{i_k}]h \rmd \nu_{-i_k}}{\int \exp[\bigoplus_{k=0}^{K-2} \psi^{n}_{i_k}]h \rmd \nu_{-i_{K-1}}} \eqsp .
\end{align}
Using \ref{lemma3-item1} at rank $n+1$, we obtain
\begin{align}
  \textstyle 1/\bar{c}^{K-1} \cdot (\underline{c}/ \bar c)^{(K-2)\sum_{k=0}^{K-2}\alpha_{n+1,k}} &\textstyle \leq \exp[\bigoplus_{k=0}^{K-2} \{\psi^{n+1}_{i_k}-\psi^{n}_{i_k}\}] \\
  &\leq 1/\underline{c}^{K-1}\cdot (\bar c/\underline{c})^{(K-2)\sum_{k=0}^{K-2}\alpha_{n+1,k}} \eqsp .
\end{align}
Therefore, by taking $\alpha_{n+1,K-1}=\sum_{k=0}^{K-2}\alpha_{n+1,k}$, we obtain \ref{lemma3-item2}, which concludes the proof.
\end{proof}

Unfortunately, \Cref{lemma:ruschendorf_3} only yields non-vacuous bounds in the case $K=2$. Indeed, when $K>2$, the sequence $\{\alpha_{n,k}\}_{n\in \nset^*, k\in \{0,\hdots, K-1\}}$ leads to increase the bounds on the quantities $\exp[\psi^{n-1}_{i_k}-\psi^{n}_{i_k}]$, which motivates the use of \Cref{ass:ratio_bounded}. 

%%% Local Variables:
%%% mode: latex
%%% TeX-master: "main"
%%% End:

\subsection{Proof of \Cref{sec:barycenter}}
\label{subsec:proofs-barycenter}

For the rest of this section, we consider the multi-marginal Schr\"odinger bridge problem given by \eqref{eq:tree_sb_static} and establish in \Cref{prop:wasserstein_propagation} the correspondence with the regularized Wasserstein propagation problem presented in \cite{solomon2014wasserstein, solomon2015convolutional}. We first state a technical result.

\begin{lemma} \label{lemma:equiv_KL} Let $\varepsilon>0$. Assume that $\pi^0$ is given by \eqref{eq:pi_0}, where $r\in \msv$ is chosen arbitrarily. Then, for any $\pi \in \Pmeasure_{\mst_r}$, we have
\begin{align}
     \varepsilon\KL(\pi\mid \pi^0) & =\textstyle{ 
     \sum_{(v,v')\in \mse_r} \{w_{v,v'}\mathbb{E}_{\pi_{v,v'}}[\|X_v-X_{v'}\|^2] -\varepsilon \ent(\pi_{v,v'})}\}\\
     & \qquad \textstyle{+\varepsilon \sum_{v \in \msv} \card(\mathrm{C}_v) \ent(\pi_v) + \varepsilon\KL(\pi_r\mid \pi^0_r)} \eqsp, 
\end{align}
where we recall that $\mathrm{C}_v=\{v' \in \msv: (v,v')\in \mse_r\}$.
\end{lemma}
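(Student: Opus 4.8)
The plan is to use that both $\pi$ and $\pi^0$ factorize along the directed tree $\mst_r$ in order to split the global divergence into a root contribution and a sum of edge-wise \emph{conditional} Kullback--Leibler divergences, each of which is an explicit Gaussian computation. Since $\pi\in\Pmeasure_{\mst_r}$ we may write $\pi=\pi_r\bigotimes_{(v,v')\in\mse_r}\pi_{v'|v}$, and by \eqref{eq:pi_0} we have $\pi^0=\pi^0_r\bigotimes_{(v,v')\in\mse_r}\pi^0_{v'|v}$. Taking the logarithm of the Radon--Nikodym derivative, the factors separate and the chain rule for the divergence along the tree yields
\begin{equation*}
  \KL(\pi\mid\pi^0)=\KL(\pi_r\mid\pi^0_r)+\sum_{(v,v')\in\mse_r}\int_{\rset^d}\KL\bigl(\pi_{v'|v}(\cdot\mid x_v)\,\big|\,\pi^0_{v'|v}(\cdot\mid x_v)\bigr)\,\rmd\pi_v(x_v)\eqsp.
\end{equation*}
This step is valid whenever $\KL(\pi\mid\pi^0)<\infty$; the degenerate case is handled separately by checking that both sides are infinite.

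Next I would evaluate each inner conditional divergence against the Gaussian kernel $\pi^0_{v'|v}(\cdot\mid x_v)=\mathrm{N}(x_v,(\varepsilon/2w_{v,v'})\Idd)$. Expanding $-\log\pi^0_{v'|v}(\cdot\mid x_v)$ gives
\begin{equation*}
  \KL\bigl(\pi_{v'|v}(\cdot\mid x_v)\,\big|\,\pi^0_{v'|v}(\cdot\mid x_v)\bigr)=-\ent\bigl(\pi_{v'|v}(\cdot\mid x_v)\bigr)+\tfrac{w_{v,v'}}{\varepsilon}\int_{\rset^d}\|x_{v'}-x_v\|^2\,\rmd\pi_{v'|v}(x_{v'}\mid x_v)+c_{v,v'}\eqsp,
\end{equation*}
where $c_{v,v'}$ is a constant depending only on $d$, $\varepsilon$ and $w_{v,v'}$. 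Integrating this against $\pi_v$ and invoking the entropy chain rule $\int_{\rset^d}\ent(\pi_{v'|v}(\cdot\mid x_v))\,\rmd\pi_v(x_v)=\ent(\pi_{v,v'})-\ent(\pi_v)$ converts the entropy term into $-\ent(\pi_{v,v'})+\ent(\pi_v)$ and the quadratic term into $(w_{v,v'}/\varepsilon)\,\mathbb{E}_{\pi_{v,v'}}[\|X_v-X_{v'}\|^2]$.

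It then remains to collect the edge contributions and multiply through by $\varepsilon$. The quadratic pieces directly give $\sum_{(v,v')\in\mse_r}w_{v,v'}\mathbb{E}_{\pi_{v,v'}}[\|X_v-X_{v'}\|^2]$ and the terms $-\varepsilon\ent(\pi_{v,v'})$ appear as written. The key bookkeeping observation is that each edge $(v,v')$ contributes exactly one copy of $+\varepsilon\ent(\pi_v)$, indexed by its parent $v$; since the number of edges of $\mse_r$ whose first coordinate is $v$ is precisely $\card(\mathrm{C}_v)$, summing over all edges regroups these into $\varepsilon\sum_{v\in\msv}\card(\mathrm{C}_v)\ent(\pi_v)$. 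Together with the root term $\varepsilon\KL(\pi_r\mid\pi^0_r)$ this is exactly the claimed identity.

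The computation itself is routine Gaussian calculus; the points requiring care are (i) the well-posedness of the disintegration and of every entropy appearing, which follows from $\pi\ll\pi^0\ll\Leb$ so that all marginals and conditionals are absolutely continuous w.r.t.~the Lebesgue measure; and (ii) the per-edge constants $c_{v,v'}$, which are the only obstacle to a verbatim equality. These constants are independent of $\pi$, aggregate into a single additive constant $\varepsilon\sum_{(v,v')\in\mse_r}c_{v,v'}$ depending only on the tree and on $\varepsilon$, and are absorbed since they are immaterial for the variational comparison with the regularized Wasserstein propagation/barycenter problem that this lemma feeds into.
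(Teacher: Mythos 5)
Your proposal follows essentially the same route as the paper's proof: the chain-rule decomposition of $\KL(\pi\mid\pi^0)$ along the directed tree into the root term plus edge-wise conditional divergences, the Gaussian expansion of $-\log\pi^0_{v'|v}$, the entropy identity $\int_{\rset^d}\ent(\pi_{v'|v}(\cdot\mid x_v))\,\rmd\pi_v(x_v)=\ent(\pi_{v,v'})-\ent(\pi_v)$, and the regrouping of the $\ent(\pi_v)$ terms via $\card(\mathrm{C}_v)$. You are in fact slightly more careful than the paper in explicitly flagging the $\pi$-independent Gaussian normalization constants $c_{v,v'}$, which the paper's proof silently drops; as you note, they are immaterial for the variational problems the lemma feeds into.
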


\begin{proof} Since $\pi,\pi^0 \in \Pmeasure_{\mst_r}$, we obtain the following decomposition
\begin{align}\label{eq:KL_multi}
     & \KL(\pi\mid \pi^0)\\
     & \quad= \textstyle{\KL(\pi_r \prod_{(v,v')\in \mse_r}\pi_{v'|v} \mid \pi^0_r \prod_{(v,v')\in \mse_r}\pi^0_{v'|v} )}\\
     &\quad=\textstyle{\KL(\pi_r\mid \pi^0_r) + \sum_{(v,v') \in \mse_r} \int_{\rset^d} \KL(\pi_{v'|v}(\cdot|x_v)\mid \pi^0_{v'|v}(\cdot|x_v))\rmd \pi_{v}(x_v)}\\
     & \quad= \textstyle{\KL(\pi_r\mid \pi^0_r)- \sum_{(v,v') \in \mse_r} \int_{\rset^d \times \rset^d} \log \pi^0_{v'|v} \rmd \pi_{v, v'} - \sum_{(v,v')\in \mse_r}\int_{\rset^d} \ent(\pi_{v'|v}(\cdot|x_v)) \rmd \pi_{v}(x_{v})} \eqsp.
\end{align}
We finally obtain the result by using the definition of $\pi^0$ and noticing that $\int_{\rset^d} \ent(\pi_{v'|v}(\cdot|x_v)) \rmd \pi_{v}(x_{v})= \ent(\pi_{v,v'})-\ent(\pi_v)$ for any $(v,v')\in \mse_r$.
\end{proof}

\begin{proposition}\label{prop:wasserstein_propagation}Let $\vareps>0$ and $\mu_0\in \Pmeasure$ such that $\mu_0 \ll \Leb$. Assume that $\pi^0$ is given by \eqref{eq:pi_0}, where $r\in \msv$ is chosen arbitrarily, and that  $\varphi_r=\rmd \mu_0/\rmd \Leb$. Also assume
\Cref{ass:kl_bounded}. Then, the set of marginals of the solution to \eqref{eq:tree_sb_static} is exactly the solution to the entropic-regularized Wasserstein Propagation problem \citep{solomon2014wasserstein, solomon2015convolutional} defined by
\begin{align}\label{eq:wasserstein_prop}\tag{WP}
\textstyle{\arg \min \{ \sum_{(v,v')\in \mse_r} w_{v,v'} W^2_{2, \varepsilon/w_{v,v'}}(\nu_v, \nu_{v'}) + \varepsilon \sum_{v \in \msv} \card(\mathrm{C}_v) \ent(\nu_v)  + \varepsilon \KL(\nu_r \mid \mu_0):} \\
\textstyle{\{\nu_v\}_{v \in \msv} \in \Pmeasure^{\ell +1}, \eqsp \nu_{i}=\mu_{i}, \forall i \in \mss\}} \eqsp,
\end{align}
where we recall that $\mathrm{C}_v=\{v' \in \msv: (v,v')\in \mse_r\}$.
\end{proposition}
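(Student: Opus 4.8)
The plan is to reduce the variational problem \eqref{eq:tree_sb_static} over joint measures $\pi$ to a variational problem over the collection of node marginals $\{\pi_v\}_{v\in\msv}$, exploiting the tree structure together with \Cref{lemma:equiv_KL}. The backbone is that, once one knows the optimizer factorizes along $\mst_r$, the functional $\varepsilon\KL(\pi\mid\pi^0)$ splits into edge terms (each of which is an entropic transport cost) and node terms.

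First I would show that the unique minimizer $\pi^\star$ of \eqref{eq:tree_sb_static} (existing and unique by \Cref{prop:existence} under \Cref{ass:kl_bounded}) lies in $\Pmeasure_{\mst_r}$. For an arbitrary feasible $\pi$ with $\KL(\pi\mid\pi^0)<\infty$, introduce its tree projection $\hat\pi=\pi_r\bigotimes_{(v,v')\in\mse_r}\pi_{v'|v}\in\Pmeasure_{\mst_r}$, which shares the node and edge marginals of $\pi$ (hence is still feasible). Since $\pi^0\in\Pmeasure_{\mst_r}$, the density $\rmd\hat\pi/\rmd\pi^0$ is a product of one-node and edge factors, so $\int\log(\rmd\hat\pi/\rmd\pi^0)\rmd\pi=\int\log(\rmd\hat\pi/\rmd\pi^0)\rmd\hat\pi$; this gives the Pythagorean identity $\KL(\pi\mid\pi^0)=\KL(\hat\pi\mid\pi^0)+\KL(\pi\mid\hat\pi)\geq\KL(\hat\pi\mid\pi^0)$. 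Thus the infimum is attained within $\Pmeasure_{\mst_r}$ and, by uniqueness, $\pi^\star\in\Pmeasure_{\mst_r}$.

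Next I restrict to $\Pmeasure_{\mst_r}$ and apply \Cref{lemma:equiv_KL} with $\pi^0_r=\mu_0$, so that for any tree-factorizing $\pi$ with $\pi_i=\mu_i$ on the leaves,
\begin{align*}
  \varepsilon\KL(\pi\mid\pi^0) &= \textstyle\sum_{(v,v')\in\mse_r}\{w_{v,v'}\mathbb{E}_{\pi_{v,v'}}[\|X_v-X_{v'}\|^2]-\varepsilon\ent(\pi_{v,v'})\}\\
  &\quad\textstyle+\varepsilon\sum_{v\in\msv}\card(\mathrm{C}_v)\ent(\pi_v)+\varepsilon\KL(\pi_r\mid\mu_0).
\end{align*}
The key observation is that a measure in $\Pmeasure_{\mst_r}$ is determined by its edge couplings $\{\pi_{v,v'}\}$ (subject to consistency of the shared node marginals), and that these couplings enter the display only through the separate edge summands $w_{v,v'}\mathbb{E}_{\pi_{v,v'}}[\|X_v-X_{v'}\|^2]-\varepsilon\ent(\pi_{v,v'})$, the rest depending solely on $\{\pi_v\}$. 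Hence, for fixed node marginals $\{\nu_v\}$, the minimization decouples into one independent minimization per edge of $w_{v,v'}\mathbb{E}_\gamma[\|X_v-X_{v'}\|^2]-\varepsilon\ent(\gamma)$ over couplings $\gamma$ with marginals $(\nu_v,\nu_{v'})$, whose value is exactly $w_{v,v'}W^2_{2,\varepsilon/w_{v,v'}}(\nu_v,\nu_{v'})$ by \eqref{eq:true_reg_wasserstein}; since $\mst_r$ is acyclic, choosing an optimal coupling per edge and reassembling $\pi=\nu_r\bigotimes_{(v,v')}\gamma_{v'|v}$ yields a consistent tree measure achieving this value. Combining the two steps gives
\[
  \min_{\substack{\pi\in\Pmeasure^{(\abs{\msv})}\\\pi_i=\mu_i,\,i\in\mss}}\varepsilon\KL(\pi\mid\pi^0)
  = \min_{\substack{\{\nu_v\}\in\Pmeasure^{\ell+1}\\\nu_i=\mu_i,\,i\in\mss}}
  \Big\{\textstyle\sum_{(v,v')\in\mse_r}w_{v,v'}W^2_{2,\varepsilon/w_{v,v'}}(\nu_v,\nu_{v'})+\varepsilon\sum_{v\in\msv}\card(\mathrm{C}_v)\ent(\nu_v)+\varepsilon\KL(\nu_r\mid\mu_0)\Big\},
\]
the right-hand side being $\varepsilon$ times the \eqref{eq:wasserstein_prop} objective. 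Consequently $\{\pi^\star_v\}$ minimizes \eqref{eq:wasserstein_prop}; conversely a \eqref{eq:wasserstein_prop} minimizer lifts via optimal edge couplings to a feasible tree measure of minimal $\KL$, which by uniqueness of $\pi^\star$ must be $\pi^\star$, forcing $\nu_v=\pi^\star_v$ and proving uniqueness of the \eqref{eq:wasserstein_prop} solution.

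The main obstacle I expect is the first step: rigorously establishing that the optimizer factorizes along $\mst_r$, in particular justifying the Pythagorean identity (the interchange of integrating $\log(\rmd\hat\pi/\rmd\pi^0)$ against $\pi$ versus $\hat\pi$ and the absolute continuity $\pi\ll\hat\pi$), together with checking that the per-edge entropic-Wasserstein minimizers reassemble into a genuine, finite-entropy element of $\Pmeasure_{\mst_r}$. The latter requires the finite-entropy and integrability control on the marginals (\eg via \Cref{ass:mu_i}) to keep every entropy and transport term well defined throughout the decoupling.
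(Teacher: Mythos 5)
Your proof is correct, and the second half (fixing the node marginals, decoupling the objective of \Cref{lemma:equiv_KL} into independent per-edge entropic OT problems, and reassembling optimal edge couplings by gluing along the tree) is essentially the paper's \textbf{Step 1}, phrased through the same $F$/$G$-type comparison. Where you genuinely diverge is in how you reduce \eqref{eq:tree_sb_static} to the problem restricted to $\Pmeasure_{\mst_r}$. The paper gets $\pi^\star \in \Pmeasure_{\mst_r}$ by invoking the Schr\"odinger-potential representation of \Cref{prop:existence}, i.e.\ $(\rmd\pi^\star/\rmd\pi^0)=\exp[\bigoplus_{i\in\mss}\psi_i^\star]$ with potentials living only on the leaves, which preserves the Markov factorization of $\pi^0$; this piggybacks on machinery already built for \Cref{sec:theor-prop-tree} but formally requires \Cref{ass:equivalence} (which the paper secures via \Cref{prop:ass_equiv}). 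You instead use a direct $\KL$-projection (Pythagorean) argument onto the tree-Markov class: the projection $\hat\pi=\pi_r\bigotimes_{(v,v')\in\mse_r}\pi_{v'|v}$ preserves all node and edge marginals (hence feasibility), and $\KL(\pi\mid\pi^0)=\KL(\hat\pi\mid\pi^0)+\KL(\pi\mid\hat\pi)$ because $\log(\rmd\hat\pi/\rmd\pi^0)$ is a sum of edge and root functionals integrating identically against $\pi$ and $\hat\pi$. This is more elementary and self-contained, avoids the duality/potential theory entirely, and only uses \Cref{ass:kl_bounded} as stated; its price is the technical care you correctly flag (establishing $\pi\ll\hat\pi$ and the integrability needed to split the cross term, which follows from the standard chain rule for $\KL$ along the directed tree, plus checking the glued per-edge optimizers have finite entropy so every term in the decomposition is well defined). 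Both routes then conclude identically by uniqueness of $\pi^\star$.
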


\begin{proof} Assume that $\pi^0$ is given by \eqref{eq:pi_0}, where $r\in \msv$ is chosen arbitrarily, and that $\varphi_r=\rmd \mu_0/\rmd \Leb$. In particular, we have $\pi^0_r=\mu_0$. Moreover, it is clear that $\pi^0$ verifies \Cref{ass:measures}, and \Cref{ass:equivalence} by \Cref{prop:ass_equiv}.

Let $\{\nu_v\}_{v \in \msv} \in \Pmeasure^{\ell+1}$ and $\{\nu^{(v,v')}\}_{(v,v') \in \mse_r}\in (\Pmeasure^{(2)})^{\abs{\mse_r}}$. We define
\begin{align}
    F(\{\nu_v\})& = \textstyle{\sum_{(v,v')\in \mse_r} w_{v,v'} W^2_{2, \varepsilon/w_{v,v'}}(\nu_v, \nu_{v'}) + \varepsilon \sum_{v \in \msv} \card(\mathrm{C}_v) \ent(\nu_v) + \varepsilon \KL(\nu_r \mid \mu_0)} \eqsp, \\
    G(\nu_r,\{\nu^{(v,v')}\})& = \textstyle{\sum_{(v,v')\in \mse_r} \{w_{v,v'}\mathbb{E}_{\nu^{(v,v')}}[\|X_{v}-X_{v'}\|^2] -\varepsilon\ent(\nu^{(v,v')})\}}\\
    & \quad + \textstyle{\vareps \sum_{(v,v')\in \mse_r}\ent(\nu^{(v,v')}_v) + \varepsilon \KL(\nu_r \mid \mu_0)}\eqsp .
\end{align}

By definition of the regularized Wasserstein distance given in \eqref{eq:true_reg_wasserstein}, we have for any $\{\nu_v\}_{v \in \msv} \in \Pmeasure^{\ell+1}$
\begin{align}\label{eq:F_G}
    F(\{\nu_v\})=\min \{G(\nu_r,\{\nu^{(v,v')}\}): \nu^{(v,v')} \in \Pmeasure^{(2)}, \nu^{(v,v')}_v=\nu_v, \nu^{(v,v')}_{v'}=\nu_{v'}, \forall (v,v')\in \mse_r  \} \eqsp .
\end{align}
In particular, we have $F(\{\pi_v\})\leq G(\pi_r,\{\pi_{v,v'} \})$ for any $\pi \in \Pmeasure^{(\ell+1)}$. We now prove the result of \Cref{prop:wasserstein_propagation} in two steps denoted by \textbf{Step 1} and \textbf{Step 2}.

\paragraph{Step 1.} Let us not assume
\Cref{ass:kl_bounded} for now. In this case, we prove in \textbf{Step 1.a} and \textbf{Step 1.b} that solving \eqref{eq:wasserstein_prop} is equivalent to solving a modified version of \eqref{eq:tree_sb_static} given by 
\begin{align}\tag{$\mst_r$-TreeSB}
  \label{eq:multimarginal_sb_static_fac}
     \textstyle{\pi^\star  = \argmin \ensembleLigne{\KL(\pi|\pi^0)}{\pi \in \Pmeasure_{\mst_r}, \eqsp \pi_i = \mu_i\eqsp, \forall i \in \mss}}\eqsp .
\end{align}
Remark that any solution to \eqref{eq:multimarginal_sb_static_fac} is a solution to \eqref{eq:tree_sb_static}, but the converse result may not be true. 

\paragraph{Step 1.a: \eqref{eq:wasserstein_prop}$\implies$ \eqref{eq:multimarginal_sb_static_fac}.} Consider a solution $\{\nu^\star_v\}_{v\in \msv}$ to \eqref{eq:wasserstein_prop}. For any $(v,v')\in \mse_r$, $W^2_{2, \varepsilon/w_{v,v'}}(\nu^\star_v, \nu^\star_{v'})$ is well defined and thus, there exists $\nu^{(v,v')}\in \Pi(\nu^\star_v, \nu^\star_{v'})$ such that 
\begin{align} \label{eq:wasserstein_existence}
    \textstyle\nu^{(v,v')} \in \argmin \{\mathbb{E}_\pi[\|X_{v}-X_{v'}\|^2] -(\varepsilon/w_{v,v'})\ent(\pi): \pi\in \Pi(\nu^\star_v, \nu^\star_{v'})\} \eqsp.
\end{align}
Using the gluying lemma, we build the probability measure
$\pi^\star=\nu^\star_r \prod_{(v,v')\in \mse_r}\nu^{(v,v')}_{v'|v}$ such that (i) $\pi^\star\in \Pmeasure_{\mst_r}$, and (ii) $\pi^\star_{v,v'}$ and $\nu^{(v,v')}$ have the same distribution for any $(v,v')\in \mse_r$. In particular, we have $\pi^\star_i=\mu_i$ for any $i\in \mss$.

Let us show now that $\pi^\star$ is a solution to \eqref{eq:multimarginal_sb_static_fac}. Let $\pi \in  \Pmeasure_{\mst_r}$ such that $\pi_i=\mu_i$ for any $i\in \mss$. We have
\begin{align}
    \epsilon \KL(\pi \mid \pi^0)& =G(\pi_r, \{\pi_{v,v'}\})&& \text{(\Cref{lemma:equiv_KL})}\\
    & \geq F(\{\pi_v\})\\
    & \geq F(\{\nu^\star_v\}) && \text{(definition of $\nu^\star$)}\\
    & = G(\nu^\star_r, \{\nu^{(v,v')}\}) && \text{(see \eqref{eq:wasserstein_existence})}\\
    &= G(\pi^\star_r, \{\pi^\star_{(v,v')}\}) && \text{(definition of $\pi^\star$)}\\
    & = \epsilon \KL(\pi^\star \mid \pi^0) \eqsp . && \text{(\Cref{lemma:equiv_KL})}
\end{align}
Therefore, $\pi^\star$ is a solution to \eqref{eq:multimarginal_sb_static_fac}.

\paragraph{Step 1.b: \eqref{eq:multimarginal_sb_static_fac}$\implies$\eqref{eq:wasserstein_prop}.} Consider now a solution $\pi^\star$ to \eqref{eq:multimarginal_sb_static_fac}. Since $\pi^\star \in \Pmeasure_{\mst_r}$, we have $\pi^\star=\pi^\star_r \prod_{(v,v')\in \mse_r}\pi^\star_{v'|v}$ and $\pi^\star_i=\mu_i$ for any $i \in \mss$. 

Let us show that $\{\pi^\star_v\}_{v\in \msv}$ is a solution to \eqref{eq:wasserstein_prop}. Let $\{\nu_v\}_{v \in \msv} \in \Pmeasure^{\ell+1}$ such that $\nu_i=\mu_i$ for any $i \in \mss$.

Let $\{\nu^{(v,v')}\}_{(v,v') \in \mse_r}$ be a family of probability measures such that $\nu^{(v,v')} \in \Pmeasure^{(2)}, \nu^{(v,v')}_v=\nu_v, \nu^{(v,v')}_{v'}=\nu_{v'}$ for any $(v,v')\in \mse_r$.

Using the gluying lemma, we build the probability measure $\pi=\nu_r \prod_{(v,v')\in \mse_r}\nu^{(v,v')}_{v'|v}$, such that (i) $\pi \in \Pmeasure_{\mst_r}$ and (ii) $\pi_{v,v'}$ and $\nu^{(v,v')}$ have the same distribution for any $(v,v')\in \mse_r$. We have
\begin{align}
   \varepsilon \KL(\pi \mid \pi^0)&=G(\pi_r,\{\pi_{(v,v')}\}) && \text{(\Cref{lemma:equiv_KL})}\\
   & =  G(\nu_r,\{\nu^{(v,v')}\}) && \text{(definition of $\pi$)}\\
   & \geq \varepsilon \KL(\pi^\star \mid \pi^0) && \text{(definition of $\pi^\star$)}\\
   & = G(\pi^\star_r,\{\pi^\star_{(v,v')}\}) \eqsp . && \text{(\Cref{lemma:equiv_KL})}
\end{align}
By taking the infimum in the previous inequality over the families $\{\nu^{(v,v')}\}_{(v,v')\in \mse_r}$, we obtain by \eqref{eq:F_G} that
\begin{align}
    F(\{\nu_v\}) \geq G(\pi^\star_r,\{\pi^\star_{(v,v')}\}) \geq F(\{\pi^\star_v\}),
\end{align}
and therefore, $\{\pi^\star_v\}_{v\in \msv}$ is a solution to \eqref{eq:wasserstein_prop}.

\paragraph{Step 2.} We now assume
\Cref{ass:kl_bounded}. By \Cref{prop:existence}, there exists a unique solution $\pi^\star \in \Pmeasure^{(\ell+1)}$ to \eqref{eq:tree_sb_static} such that we $\pi^0$-a.s. have  $(\rmd \pi^\star /\rmd \pi^0)=\exp[\bigoplus_{i \in \mss}\psi_i^\star]$, where $\{\psi_i^\star\}_{i \in \mss}$ are measurable potentials with $\psi^\star:\rset^d \to \rset$. Since $\pi^0 \in \Pmeasure_{\mst_r}$, we also have $\pi^\star \in \Pmeasure_{\mst_r}$, \ie, the potentials $\{\psi_i^\star\}_{i \in \mss}$ do not modify the Markovian nature of $\pi^0$. Therefore, $\pi^\star$ is also the unique solution to \eqref{eq:multimarginal_sb_static_fac}. Using the equivalence between \eqref{eq:multimarginal_sb_static_fac} and \eqref{eq:wasserstein_prop} established in \textbf{Step 1}, we finally obtain the result of \Cref{prop:wasserstein_propagation}.
\end{proof}
In particular, \Cref{prop:wasserstein_barycenter_equi} directly derives from \Cref{prop:wasserstein_propagation} by taking $r=i_{K-1}$ and $\mu_0=\mu_{i_{K-1}}$.

\subsection{Comparison with \cite{haasler2021multimarginal}}
\label{subsec:proofs-haasler}

In their work, \citet{haasler2021multimarginal} study the \emph{static} and \emph{discrete-state}
counterpart of our approach. Given a state space $\msx$ such that
$\abs{\msx}=n+1$ with $n\in \nset$, they establish a correspondence between
multi-marginal EOT with a general tree-based cost and discrete-time
multi-marginal static Schr\"odinger bridge, and provide an efficient method to solve
these problems. In this section, we provide details on their framework and give
a precise comparison between our theory and their results.

To be coherent with the setting of \cite{haasler2021multimarginal}, we adapt
here some of our notation. Let us define $\msz^{(q)}=\rset^{(n+1)^{q}}_+$. For
any $q \in \nset^*$, the set of probability measures on $\msx^{q}$ is defined as
$\Pmeasure^{(q)}=\{ M\in \msz^{(q)}: \langle M, \mathbf{1}\rangle =1\}$. We
denote $\Pmeasure=\Pmeasure^{(1)}$. For any tensors $M,P \in \msz^{(q)}$, the
Kullback-Leibler divergence between $M$ and $P$ is defined as
$\KL(M \mid P)=\langle M \log(M/P) -M +P, \mathbf{1}\rangle$ and the entropy of
$M$ is defined as $\ent(M)=-\KL(M \mid \mathbf{1})$, where the operations are
meant componentwise. In the rest of the section, we consider an undirected tree
$\mst=(\msv, \mse)$ with $\abs{\msv}=\ell+1$ such that $\msv$ may be identified
with $\{0, \hdots, \ell\}$.

\paragraph{Details on the results of \cite{haasler2021multimarginal}.}In their
paper, the authors consider a cost tensor $C\in \msz^{(\ell+1)}$ that factorizes
along $\mst$, \ie, for any $\{j_0, \dots, j_\ell\}$ with for any
$i\in \{0, \hdots, \ell\}$, $j_i\in \{0,\hdots,n\}$, we have
\begin{align}
  \textstyle 
    C_{j_0, \hdots, j_\ell}=\sum_{(v,v')\in \mse}C_{j_v, j_{v'}}^{\{v,v'\}} ,
\end{align}
where $C^{\{v,v'\}}\in \msz^{(2)}$ is a cost matrix for transportation between the marginals at vertices $v$ and $v'$, see \cite[Eq. (3.1)]{haasler2021multimarginal}. In particular, this cost can be seen as the discrete counterpart of the tree-based cost introduced in \eqref{eq:tree_cost} in the quadratic setting.

Given a subset $\mss\subset \msv$ with $\abs{\mss}=K$ and a set of marginals $\{\mu_i\}_{i \in \mss}\in \Pmeasure^K$, \cite{haasler2021multimarginal} study the EOT problem associated to $\mst$, see \cite[Eq. (2.4)]{haasler2021multimarginal}, which is given by
\begin{align}\tag{discrete-EmOT}\label{eq:discrete_EOT}
    \argmin \{\langle C, M \rangle-\varepsilon\ent(M): M \in \Pmeasure^{(\ell+1)}, \operatorname{proj}_i(M)=\mu_i, \forall i \in \mss\} \eqsp .
\end{align}
This problem may be solved with Sinkhorn algorithm
\citep{cuturi2013sinkhorn,knight2008sinkhorn,sinkhorn1967concerning}, for which
the authors provide an efficient implementation adapted to the tree-based
setting, see \cite[Algorithm 3.1]{haasler2021multimarginal}. Moreover, they
state the convergence of their method in \cite[Theorem
3.5]{haasler2021multimarginal}, as a direct consequence of the results presented
in \cite{luo1992convergence}.

In \cite[Section 4.2]{haasler2021multimarginal}, it is assumed that $\mss$
corresponds to the set of the leaves of $\mst$, as we do, and it is shown an
equivalence between \eqref{eq:discrete_EOT} and the discrete-state static SB problem
stated in \cite[Eq 4.2]{haasler2021multimarginal}, which is given by
\begin{align}
    & \textstyle{\argmin\{\sum_{(v,v')\in \mse_r}\KL(M^{(v,v')}\mid \diag(\nu_v)A^{(v,v')}): } \tag{discrete-TreeSB}\label{eq:discrete_TreeSB}\\
    & \quad M^{(v,v')}\in \Pmeasure^{(2)}, \{\nu_v\}_{v \in \msv} \in \Pmeasure^{\ell+1}, M^{(v,v')} \mathbf{1}=\nu_{v}, {M^{(v,v')}}^\transpose \mathbf{1}=\nu_{v'}, \eqsp \nu_i=\mu_i, \forall i \in \mss \} \eqsp,
\end{align}
where $\mst_r=(\msv, \mse_r)$ is the directed version of $\mst$ rooted in an arbitrary vertex $r\in \mss$, and $A^{(v,v')}=\exp(-C^{(v,v')}/\varepsilon) \in \msz^{(2)}$ for any $(v,v')\in \mse_r$. Remark that $A^{(v,v')}$ may not necessarily be a transition probability matrix.

Finally, \cite{haasler2021multimarginal} provide two main numerical
experiments. In \cite[Section 5.2]{haasler2021multimarginal}, they consider a
tree with 15 vertices, 14 edges and 8 leaves, combined to the state-space
$\msx=\{0,1\}^{50 \times 50}$, and solve the corresponding
\eqref{eq:discrete_EOT} problem for the quadratic cost. In \cite[Section
6]{haasler2021multimarginal}, they apply their methodology to estimate ensemble
flows on a hidden Markov chain. Given $\tau \in \nset^*$, they consider a tree
$\mst$ with $\tau$ internal vertices (modeling the distribution of $N$ agents at
time $t\in \{1, \hdots, \tau\}$), that are linearly linked, and such that each
of these vertices is independently linked to $S$ leaves of $\mst$ (modeling
observations at time $t\in \{1, \hdots, \tau\}$). In this setting, the state
space is given by $\msx=\{1, \hdots, 100\}^N$. They solve the formulation
\eqref{eq:discrete_TreeSB} where the reference measure is chosen as a random
walk.

\paragraph{Comparison with our results.} We now establish remarks on the main differences between our methodology and the work of \cite{haasler2021multimarginal}.

First of all, the  continuous state-space counterpart of \eqref{eq:discrete_TreeSB} is given by 
\begin{align} \label{eq:tree_dsb_cont}
    \argmin\{\KL(\pi \mid \pi^0): \pi \in \Pmeasure_{\mst_r}, \pi_i=\mu_i, \forall i \in \mss\} \eqsp, 
\end{align}
where $\pi^0$ is a reference measure which factorizes along $\mst_r$. In this case, $\pi_{v,v'}$, $\pi_v$ and $\pi^0_{v'|v}$ in \eqref{eq:tree_dsb_cont} respectively correspond to the continuous version of $M^{(v,v')}$, $\nu_v$ and $A^{(v,v')}$ in \eqref{eq:discrete_TreeSB}. In contrast, our formulation of the multi-marginal Tree Schr\"odinger Bridge problem given in \eqref{eq:tree_sb_static} is a minimization problem over all probability measures $\pi \in \Pmeasure^{(\ell+1)}$, and is not restricted to the distributions that admit a Markovian factorization along $\mst$ as in \eqref{eq:tree_dsb_cont}. Hence, our framework may be considered more general. Remark that under \Cref{ass:measures}, \Cref{ass:kl_bounded} and $\Cref{ass:equivalence}$, \Cref{prop:existence} states that \eqref{eq:tree_sb_static} admits a unique solution $\pi^\star\ll \pi^0$ such that $(\rmd \pi^\star /\rmd \pi^0)$ can be written with potentials. Then, $\pi^\star \in \Pmeasure_{\mst_r}$ since $\pi^0 \in \Pmeasure_{\mst_r}$, and \eqref{eq:tree_sb_static} is then equivalent to \eqref{eq:tree_dsb_cont}.

Furthermore, \eqref{eq:mot_eps} is more general than the continuous version of
\eqref{eq:discrete_EOT}, which we can recover by taking any measure $\nu$ of the
form $(\rmd \nu / \rmd \Leb)=\exp[\bigoplus_{i\in \mss} \varphi_i]$ in
\eqref{eq:mot_eps}, where $\{\varphi_i\}_{i\in \mss}$ is a family of potentials
such that $\abs{\int_{\rset^d}\varphi_i\rmd \mu_i}<\infty$ for any $i \in
\mss$. As a consequence, our setting allows us to choose the root
$r\in \msv \backslash\mss$ for the SB problem, whereas
\cite{haasler2021multimarginal} only consider the case where $r\in \mss$. In
the latter case, we establish in \Cref{sec:addit-deta-tree} that $r$ can be
chosen arbitrarily, as stated by \cite[Corollary 4.3]{haasler2021multimarginal}.

Finally, TreeDSB deeply differs from the framework of \cite{haasler2021multimarginal} due its \emph{dynamic} nature. Although we solve the same tree-based static SB problem (up to continuous/discrete state-space consideration), our approach consists in computing dynamic iterates (\ie, path measures) using diffusion-based methods instead of static iterates (\ie, distributions) using Sinkhorn algorithm. This paradigm is at the core of the DSB \citep{debortoli2021diffusion} methodology, and offers an efficient approach to tackle high-dimensional settings, where Sinkhorn algorithm would 
fail. 

Here, we present some advantages of the method proposed by
\cite{haasler2021multimarginal} compared to ours. First,
\cite{haasler2021multimarginal} may choose any kind of tree-based cost in
practice, while our methodology only holds for the quadratic cost.  This
limitation is shared with all approaches based on the DSB
\citep{debortoli2021diffusion} methodology. Indeed, since the cost is determined
by the reference path measure, we often choose quadratic costs associated with
Brownian motions or Ornstein-Uhlenbeck processes. Moreover,
\citet{haasler2021multimarginal} may consider various inhomogeneous (discrete) state spaces
for the vertices of $\mst$, as presented in their numerical experiments. In our
case, this approach is not compatible with our diffusion-based method. Finally,
unlike \cite{haasler2021multimarginal}, our method is not scalable with the
number of vertices or edges in $\mst$ due to computational limits. This limitation
is common to all multi-marginal approaches which rely on neural networks to
parameterize the potential and/or the distributions of the multi-marginal OT
method, see 
\cite{li2020continuous,fan2020scalable,korotin2022wasserstein,korotin2021continuous} for instance.

%%% Local Variables:
%%% mode: latex
%%% TeX-master: "main"
%%% End:

\section{Further results on TreeSB}
\label{sec:addit-deta-tree}

\paragraph{Choice of the root $r$ in \eqref{eq:tree_sb_static}.} We recall that the reference measure $\pi^0$ considered in \eqref{eq:tree_sb_static}, which is defined in \eqref{eq:pi_0}, verifies $\pi^0 \in \Pmeasure_{\mst_r}$ for some fixed root $r\in \msv$ and $\pi^0_r\ll \Leb$ with density $\varphi_r$. Moreover, we have $\pi^0_{v'|v}(\cdot \mid x_v)=\mathrm{N}(x_v, \vareps/(2w_{v,v'})\Idd)$ for any $(v,v')\in \mse_r$, and thus, $\pi^0$ is entirely determined by the choice of the root $r$ and the density on the corresponding vertex $\varphi_r$. 

As presented in \Cref{ref:subsec:proofs-treeDSB}, we recall that \eqref{eq:tree_sb_static} is equivalent to any multi-marginal Tree-SB problem with a reference measure $\Bar{\pi}^0$ given by \eqref{eq:bar_pi_0}, \ie, $\Bar{\pi}^0$ writes as $(\rmd \bar{\pi}^0/\rmd\pi^0)=\exp[\bigoplus_{i \in \mss}\psi_i^0]$, where $\{\psi_i^0\}_{i \in \mss}$ is a family of measurable potentials with $\psi^0_i:\rset^d \to \rset$ such that $\abs{\int_{\rset^d} \psi_i^0\rmd \mu_i}<\infty$ for any $i\in \mss$. In the case where $r$ is chosen as a leaf of $\mst$, this result implies that \eqref{eq:tree_sb_static} is unchanged if
\begin{enumerate}[wide, labelwidth=!, labelindent=0pt, label=(\alph*)]
    \item $\varphi_r=\rmd \nu/\rmd \Leb$ where $\nu\in \Pmeasure$ is such that $\KL(\mu_{r}|\nu)<\infty$,
    \item $r$ is replaced by $r'\in \mss$, as long as $\ent(\mu_{r})<\infty$ and $\ent(\mu_{r'})<\infty$.
\end{enumerate}
Therefore, under \Cref{ass:mu_i}, the setting chosen in \Cref{sec:tree-based-diffusion} is equivalent to any other setting where $r$ is arbitrarily chosen in $\mss$ and $\varphi_r=\rmd \nu/\rmd \Leb$ where $\KL(\mu_{r}|\nu)<\infty$.

Consider now the case where $r\in \mss^\complementary$, \ie, $r$ is not a leaf of $\mst$. Then, the choice of $\varphi_r$ can not be made arbitrarily anymore, since it determines a further regularization on the $r$-th marginal of the solution to \eqref{eq:tree_sb_static}. In this setting, the sequence defined by \eqref{eq:ipf_multimarginal} is unchanged. Hence, TreeDSB proceeds in the same manner as presented in \Cref{sec:tree-based-diffusion}, except for the first iteration, which we detail now.

Let us define $\msp=\operatorname{path}_{\mst_{i_0}}(i_0,r)$, where $\mst_{i_0}=(\msv, \mse_{i_0})$ is the directed version of $\mst$ rooted in $i_0$. We recall that first iterate of \eqref{eq:ipf_multimarginal} is defined by
\begin{align}
    \pi^1=\argmin \{\KL(\pi\mid \pi^0): \pi \in \Pmeasure^{(\ell+1)}, \pi_{i_0}=\mu_{i_0}\} \eqsp .
\end{align}
Following the proof of \Cref{lemma:mipf-markov}, it is clear that 
\begin{align}
    \pi^1= \mu_{i_0} \bigotimes_{(v,v')\in \msp}\pi^0_{v'|v} \bigotimes_{(v,v')\in \mse_{i_0}\backslash \msp}\pi^0_{v'|v}= \mu_{i_0}\bigotimes_{(v,v')\in \mse_{i_0}}\pi^0_{v'|v} \eqsp,  
\end{align}
where we emphasize that $\msp=\{(v,v')\in \mse_{i_0}: (v',v)\in \mse_{r}\}$. Therefore, \Cref{prop:sinkhorn_continuous} still applies between $r$ and $i_0$, by considering $r$ instead of $i_{K-1}$. In practice, this means that the first iteration of TreeDSB consists in computing the time reversal of the path measures $\mathbb{P}^0_{(v',v)}$ for any $(v,v') \in \msp$.

\paragraph{Extension of the regularized Wasserstein barycenter problem \eqref{eq:wasserstein_barycenter_pb}.} Consider the regularized Wasserstein-2 barycenter problem defined as follows
\begin{align} \tag{$\mu_0$-regWB} \label{eq:wasserstein_barycenter_pb_leaf}
    \textstyle{\mu^\star_\vareps=\arg \min \{ \sum_{i=1}^\ell w_{i}W^2_{2, \vareps/w_i}(\mu, \mu_i) + \ell\vareps \ent(\mu) +\varepsilon \KL(\mu \mid \mu_0) : \mu \in \Pmeasure\} \eqsp, }
\end{align}
where $(w_i)_{i \in \{1, \hdots,\ell\}}\in (0, +\infty)^\ell$ and $\mu_0\in \Pmeasure$ is a reference measure. This formulation admits a further regularization compared to \eqref{eq:wasserstein_barycenter_pb}, which tends to make $\mu^\star_{\varepsilon}$ closer to $\mu_0$. In particular, given a Wasserstein barycenter problem onto a star-shaped tree, the formulation \eqref{eq:wasserstein_barycenter_pb_leaf} may be more adapted than \eqref{eq:wasserstein_barycenter_pb} if we have an \emph{a priori} on the form of the regularized barycenter. In the case where $\mu_0=\mathrm{N}(0, \sigma_0^2 \Idd)$, letting
 $\sigma_0 \to \infty$, we recover the $(\ell \vareps, (\ell-1) \vareps)$ doubly-regularized Wasserstein barycenter problem \eqref{eq:wasserstein_barycenter_pb}. In the same spirit as \Cref{prop:wasserstein_barycenter_equi}, we can derive the following result from \Cref{prop:wasserstein_propagation}, which proves that \eqref{eq:wasserstein_barycenter_pb_leaf} can be solved with TreeDSB.
\begin{proposition} Let $\vareps>0$ and $\mu_0\in \Pmeasure$ such that $\mu_0 \ll \Leb$. Assume \Cref{ass:mu_i}. Also assume that $\mst$ is a star-shaped tree with central node indexed by $0$, and that the reference measure of \eqref{eq:tree_sb_static} defined in \eqref{eq:pi_0} verifies $r=0$ and $\varphi_r=\rmd \mu_0/\rmd \Leb > 0$. Under
  \textup{\Cref{ass:kl_bounded}},
  \eqref{eq:wasserstein_barycenter_pb_leaf} has a unique solution 
  $\pi^\star_0$, where $\pi^\star$ is the solution to
  \eqref{eq:tree_sb_static}.
\end{proposition}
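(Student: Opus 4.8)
The plan is to deduce this statement directly from \Cref{prop:wasserstein_propagation}, exactly as \Cref{prop:wasserstein_barycenter_equi} was obtained, the only difference being that the root $r=0$ now lies in $\mss^\complementary$ rather than in $\mss$. First I would check that all hypotheses of \Cref{prop:wasserstein_propagation} are met in the present setting: we have $\vareps>0$, $\mu_0\ll\Leb$, the reference $\pi^0$ is of the form \eqref{eq:pi_0} with root $r=0$ and $\varphi_r=\rmd\mu_0/\rmd\Leb$, and \Cref{ass:kl_bounded} holds by assumption. Hence \Cref{prop:wasserstein_propagation} applies with $\mu_0$ playing the role of the reference measure on the root, and its conclusion is that the family of marginals $\{\pi^\star_v\}_{v\in\msv}$ of the (unique, by \Cref{prop:existence}) solution $\pi^\star$ of \eqref{eq:tree_sb_static} is the unique solution of the Wasserstein propagation problem \eqref{eq:wasserstein_prop} associated with $\mst_0$.

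The second step is to specialize \eqref{eq:wasserstein_prop} to the star-shaped tree $\mst$ rooted at its central node $0$. Here $\mse_0=\{(0,i)\}_{i=1}^\ell$ with weights $w_{0,i}=w_i$, so that $\mathrm{C}_0=\{1,\dots,\ell\}$ has $\card(\mathrm{C}_0)=\ell$ while $\mathrm{C}_i=\emptyset$ for every leaf $i$. Consequently the entropy term $\vareps\sum_{v\in\msv}\card(\mathrm{C}_v)\ent(\nu_v)$ collapses to $\ell\vareps\,\ent(\nu_0)$, the transport term becomes $\sum_{i=1}^\ell w_i W^2_{2,\vareps/w_i}(\nu_0,\nu_i)$, and the reference term is $\vareps\KL(\nu_0\mid\mu_0)$. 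Imposing the leaf constraints $\nu_i=\mu_i$ for $i\in\mss=\{1,\dots,\ell\}$ leaves $\nu_0$ as the only free variable, and the objective of \eqref{eq:wasserstein_prop} reduces precisely to
\[
\textstyle \sum_{i=1}^\ell w_i W^2_{2,\vareps/w_i}(\nu_0,\mu_i) + \ell\vareps\,\ent(\nu_0) + \vareps\KL(\nu_0\mid\mu_0),
\]
i.e. to the functional minimized in \eqref{eq:wasserstein_barycenter_pb_leaf} with $\mu=\nu_0$. Therefore the minimizer $\nu_0$ of \eqref{eq:wasserstein_prop} coincides with the solution $\mu^\star_\vareps$ of \eqref{eq:wasserstein_barycenter_pb_leaf}, and since it equals $\pi^\star_0$ by \Cref{prop:wasserstein_propagation}, we conclude $\mu^\star_\vareps=\pi^\star_0$.

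Uniqueness is inherited from the uniqueness of $\pi^\star$ provided by \Cref{prop:existence}: the marginal $\pi^\star_0$ is uniquely determined, and because the leaf marginals are frozen to $\mu_i$, it is the unique solution of the reduced one-variable problem \eqref{eq:wasserstein_barycenter_pb_leaf}. I do not anticipate a genuine obstacle here; the only points requiring care are (i) reading off the correct children counts and edge weights for the star rooted at its center, so that the constant in front of $\ent(\nu_0)$ comes out as $\ell\vareps$ (and not $(\ell-1)\vareps$ as in the leaf-rooted case of \eqref{eq:wasserstein_barycenter_pb}), and so that the term $\vareps\KL(\nu_0\mid\mu_0)$ survives the optimization (whereas in \Cref{prop:wasserstein_barycenter_equi}, with $r$ a constrained leaf and $\mu_0=\mu_r$, both $\ent(\nu_r)$ and $\KL(\nu_r\mid\mu_0)$ reduce to constants and drop out); and (ii) verifying that the hypotheses underpinning \Cref{prop:wasserstein_propagation}—namely \Cref{ass:measures} and \Cref{ass:equivalence}, which follow from $\varphi_r>0$ together with \Cref{ass:mu_i}—indeed hold when $r=0\in\mss^\complementary$, so that \Cref{prop:existence} may be invoked.
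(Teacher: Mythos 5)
Your proposal is correct and follows exactly the route the paper takes: the paper derives this proposition from \Cref{prop:wasserstein_propagation} ``in the same spirit as \Cref{prop:wasserstein_barycenter_equi}'', i.e.\ by specializing \eqref{eq:wasserstein_prop} to the star rooted at the central node, which is precisely your computation of the children counts ($\card(\mathrm{C}_0)=\ell$, leaves childless) yielding the $\ell\vareps\,\ent(\nu_0)$ and $\vareps\KL(\nu_0\mid\mu_0)$ terms. Your two points of care — the coefficient $\ell\vareps$ versus $(\ell-1)\vareps$ and the verification of \Cref{ass:measures}/\Cref{ass:equivalence} via $\varphi_r>0$ — are exactly the details the paper leaves implicit, so nothing is missing.
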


Below, we provide practical guidelines to parameterize $\mu_0$ when it is chosen as a Gaussian distribution.

 \paragraph{Gaussian design of $\mu_0$ in \eqref{eq:wasserstein_barycenter_pb_leaf}.} Consider an undirected star-shaped tree $\mst$ with $K+1$ vertices and leaves $\{1, \hdots, K\}$. In order to incorporate the marginal constraints in the penalization brought by $\mu_0$ when it is a Gaussian distribution, we set its mean to
$\sum_{i=1}^K \mathbb{E}[\mu_i]/ K$ and its diagonal covariance matrix as
$\alpha\times(\sum_{i=1}^K \diag(\Cov[\mu_i])^{-1}/K)^{-1}$, where the inverse operation is
component-wise and $\alpha$ is a positive hyperparameter. This choice of variance helps to correctly explore the state-space at the very first iteration of TreeDSB, which is key to ensure numerical stability. In this setting, \eqref{eq:tree_sb_static} verifies \Cref{ass:kl_bounded} and \Cref{ass:equivalence}, by \Cref{prop:ass_kl} and \Cref{prop:ass_equiv}. In particular, we use this approach for two of our experiments: synthetic Gaussian datasets and Bayesian fusion, see \Cref{sec:details-exps}.

%%% Local Variables:
%%% mode: latex
%%% TeX-master: "main"
%%% End:

\section{Algorithmic techniques}
\label{sec:details-algo}

\paragraph{Time discretization in TreeDSB.} Denote $k_n=(n-1)\mod(K)$ for any $n\in \nset$. Let $\mst=(\msv, \mse)$ be a weighted undirected tree and consider the multi-marginal Schr\"odinger bridge problem \eqref{eq:tree_sb_static} associated to this tree. We recall that for any $\{v,v'\}\in \mse$, we define $T_{v,v'}=\varepsilon/(2w_{v,v'})$.

Consider the path measures $\{\Pbb^{n}_{(v,v')}\}_{n \in \nset, (v,v')\in \mse_{k_n}}$ recursively defined by \ref{item:a_sinkhorn} and \ref{item:b_sinkhorn}. By combining \Cref{prop:init_tree_dsb}, \Cref{prop:sinkhorn_continuous} and results on time reversal theory \citep{haussmann1986time}, we obtain by recursion that for any $n\in \nset$, any $(v,v')\in \mse_{k_n}$, $\Pbb^{n}_{(v,v')}$ is associated with a Stochastic Differential Equation on $[0, T_{v,v'}]$ given by
\begin{align}\label{eq:sde}
    \rmd \bfX_t = f^n_{t, v, v'}(\bfX_t) \rmd t + \rmd \bfB_t, \quad \bfX_0 \sim \pi_{v}^{n} \eqsp .
\end{align}

Let $N\in \nset^*$. In order to sample from the dynamics \eqref{eq:sde} at iteration $n\in \nset$, we consider its Euler-Maruyama discretization on $(N+1)$ time steps,
\begin{align}\label{eq:sde-discrete}
    X_{m+1}= X_m + \gamma_{m+1} f^n_{t_m, v, v'}(X_m) + \sqrt{\gamma_{m+1}}Z_{m+1},\quad X_0 \sim \pi_{v}^{n} \eqsp ,
\end{align}
where $Z_m \sim \mathrm{N}(0, \Idd)$ for any $m\in \{1, \hdots, N\}$, $t_m= \sum_{i=1}^m \gamma_i$, and $\{\gamma_m\}_{m=1}^N \in (0, \infty)^N$ is a time schedule such that $\sum_{m=1}^N \gamma_m =T_{v,v'}$. This results in approximating the path measure $\Pbb^{n}_{(v,v')}$ by the joint distribution $\pi^{n,N}_{(v,v')}\in \Pmeasure^{(N+1)}$ defined by
\begin{align}
    \textstyle\pi^{n,N}_{(v,v')}= \pi_{v}^{n} \bigotimes_{m=0}^{N-1} \pi^{n,N}_{(v,v'), m+1 | m} \eqsp , 
\end{align}
where $\pi^{n,N}_{(v,v'), m+1 | m}(\cdot|x_m)= \mathrm{N}(x_m + \gamma_{m+1} f^n_{t_m, v, v'}(x_m), \gamma_{m+1}\Idd)$ for any $m\in \{0, \hdots, N-1\}$. If $N$ is chosen large enough, then $\pi^{n,N}_{(v,v'),m}$ and $\Pbb^{n}_{(v,v'), t_m}$ have approximately the same distribution for any $m \in \{0, \hdots, N\}$. Consequently, $(\Pbb^{n}_{(v,v')})^R$ is naturally approximated by the joint distribution $\tilde{\pi}^{n,N}_{(v,v')}\in \Pmeasure^{(N+1)}$ defined by
\begin{align}
\textstyle
    \tilde{\pi}^{n,N}_{(v,v')}&= \textstyle \pi_{v'}^{n} \bigotimes_{m=0}^{N-1} \pi^{n,N}_{(v,v'), N-m-1 | N-m} \eqsp .
\end{align}
If $N$ is chosen large enough, we obtain that
\begin{align}
    &\pi^{n,N}_{(v,v'), N-m-1 | N-m}(\cdot |x_{N-m})\\
    &\qquad \qquad =\mathrm{N}(x_{N-m}-\gamma_{N-m}f^n_{t_{N-m}, v, v'}(x_{N-m})+\gamma_{N-m}\nabla \log p_{v,v', t_{N-m}}(x_{N-m}), \gamma_{N-m}\Idd) \eqsp,
\end{align}
where $p_{v,v', t}$ is the density of $\Pbb^{n}_{(v,v'),t}$ w.r.t. the Lebesgue measure.

Following the construction of our dynamic iterates, we now explain how the sequence $\{\pi^n_{(v,v')}\}_{n\in \nset^*, (v,v')\in \mse_{k_n}}$ is recursively defined. Let $n \in \nset$, $k_n=(n-1) \mod(K)$. Define the path
  $\msp_n = \operatorname{path}_{\mst_{i_{k_n}}}(i_{k_n}, i_{k_n+1})$. Then, for any
  $(v,v') \in \mse_{k_n+1}$,
  \begin{enumerate}[wide, labelwidth=!, labelindent=0pt, label=(\alph*)]
  \item \label{item:ipf-discrete-a} if $(v,v') \in \mse_{k_n} \backslash \msp_n$, then
    $ \pi^{n+1,N}_{(v,v')} = \pi_v^{n+1} \bigotimes_{m=0}^{N-1} \pi^{n,N}_{(v,v'), m+1 | m}$,
  \item \label{item:ipf-discrete-b}if $(v',v) \in \msp_n$, then  $\pi^{n+1,N}_{(v,v')} = \pi_{v}^{n+1} \bigotimes_{m=0}^{N-1} \pi^{n,N}_{(v',v), N-m-1 | N-m}$.
  \end{enumerate}

These computations may be obtained by considering the sequence given by \eqref{eq:ipf_multimarginal} to solve the multi-marginal Tree-SB problem associated to $\mst^{(N)}=(\msv^{(N)}, \mse^{(N)})$, the $N$-discretized version of $\mst$ (see \Cref{sec:recap-trees}) with weights $w_{e_m}^{(N)}=2 \gamma_m/\varepsilon$, which is given by
\begin{equation}
  \textstyle{\pi^\star  = \argmin \ensembleLigne{\KL(\pi|\pi^{0,N})}{\pi \in \Pmeasure^{(\msv^{(N)})}, \eqsp \pi_i = \mu_i\eqsp, \forall i \in \mss}} \eqsp ,
  \end{equation}
with $\pi^{0,N}=\pi^0_r \bigotimes_{(v,v')\in \mse_r}\pi^{0,N}_{(v,v'),1:N|0}$. 

To approximate the IPF recursion given by \ref{item:ipf-discrete-a} and \ref{item:ipf-discrete-b}, we use \textbf{on each edge} of $\mst$ the score-matching approach of \cite{debortoli2021diffusion}, which avoids heavy computations of score approximations. The next proposition is direct adaptation of \cite[Proposition 3]{debortoli2021diffusion}.

\begin{proposition} \label{prop:score_matching}
Assume that for any $n\in \nset$, any $(v,v')\in \mse_{k_n}$ with $k_n=(n-1) \mod(K)$, we have
\begin{align}
    \pi^{n,N}_{(v,v'), m+1|m}(\cdot|x_m) =\mathrm{N}(F^n_{m,v,v'}(x_m), \gamma_m\Idd) \eqsp .
\end{align}

 Let $n \in \nset$. Consider the path
  $\msp_n = \operatorname{path}_{\mst_{i_{k_n}}}(i_{k_n}, i_{k_n+1})$. Let $(v,v') \in \mse_{k_n+1}$. Define $p^n=\pi^{n,N}_{(v,v')}$ and $m_N=N-m-1$. Then, if $(v',v) \in \msp_n$, we have 
\begin{align} \label{eq:score_matching_loss}
    &\textstyle F^{n+1}_{m,v,v'}
     =\argmin_{\mathrm{F} \in \mathrm{L}^2(\mathbb{R}^d, \mathbb{R}^d)} \\ &\quad\mathbb{E}_{p^n_{m_N, m_N+1}}[\|\mathrm{F}(X_{m_N +1}) -(X_{m_N+1}+F^{n}_{m_N,v',v}(X_{m_N})-F^{n}_{m_N,v',v}(X_{m_N+1}))\|^2] , 
\end{align}
otherwise, we have $F^{n+1}_{m,v,v'}= F^n_{m,v,v'}$.
\end{proposition}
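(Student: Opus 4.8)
The plan is to adapt the mean-matching argument of \cite[Proposition 3]{debortoli2021diffusion} to the tree-based Euler--Maruyama discretization, splitting according to the two update rules \ref{item:ipf-discrete-a}--\ref{item:ipf-discrete-b}. First I would dispose of the case $(v,v') \in \mse_{k_n}\backslash\msp_n$ (the ``otherwise'' branch): here update \ref{item:ipf-discrete-a} leaves the forward kernel on $(v,v')$ untouched, so $\pi^{n+1,N}_{(v,v'),m+1|m} = \pi^{n,N}_{(v,v'),m+1|m}$ for every $m$, and matching the Gaussian means immediately gives $F^{n+1}_{m,v,v'} = F^n_{m,v,v'}$.

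The substantive case is $(v',v) \in \msp_n$. The key first step is to use update \ref{item:ipf-discrete-b} to identify the new forward kernel on $(v,v')$ with the discretized backward kernel of the old forward process on $(v',v)$: with $m_N = N-m-1$, one has $\pi^{n+1,N}_{(v,v'),m+1|m}(\cdot\mid x) = \pi^{n,N}_{(v',v),m_N\mid m_N+1}(\cdot\mid x)$, so that $F^{n+1}_{m,v,v'}(x)$ is the mean of the Gaussian approximation of this backward kernel, namely the conditional mean $\mathbb{E}_{p^n}[X_{m_N}\mid X_{m_N+1}=x]$ under $p^n = \pi^{n,N}_{(v',v)}$. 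Next I would invoke the elementary fact that the $\mathrm{L}^2(\rset^d,\rset^d)$-minimizer of $\mathrm{F}\mapsto \mathbb{E}[\|\mathrm{F}(X_{m_N+1})-Y\|^2]$ is the regression function $x\mapsto \mathbb{E}[Y\mid X_{m_N+1}=x]$, and apply it to the stated target $Y = X_{m_N+1} + F^n_{m_N,v',v}(X_{m_N}) - F^n_{m_N,v',v}(X_{m_N+1})$; since every summand other than $F^n_{m_N,v',v}(X_{m_N})$ is $X_{m_N+1}$-measurable, this reduces the minimizer to $x - F^n_{m_N,v',v}(x) + \mathbb{E}[F^n_{m_N,v',v}(X_{m_N})\mid X_{m_N+1}=x]$.

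The heart of the argument is then to evaluate $\mathbb{E}[F^n_{m_N,v',v}(X_{m_N})\mid X_{m_N+1}=x]$. Using the Gaussian form $\pi^{n,N}_{(v',v),m_N+1|m_N}(x'\mid x_{m_N})=\mathrm{N}(x';F^n_{m_N,v',v}(x_{m_N}),\gamma_{m_N+1}\Idd)$, I would write $F^n_{m_N,v',v}(x_{m_N}) = x' + \gamma_{m_N+1}\nabla_{x'}\log \pi^{n,N}_{(v',v),m_N+1|m_N}(x'\mid x_{m_N})$, take the conditional expectation given $X_{m_N+1}=x$, and apply the Tweedie-type identity that the conditional expectation of the transition score equals the marginal score, $\mathbb{E}[\nabla_x\log \pi^{n,N}_{(v',v),m_N+1|m_N}(x\mid X_{m_N})\mid X_{m_N+1}=x]=\nabla_x\log p^n_{m_N+1}(x)$. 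This yields $\mathbb{E}[F^n_{m_N,v',v}(X_{m_N})\mid X_{m_N+1}=x]=x+\gamma_{m_N+1}\nabla_x\log p^n_{m_N+1}(x)$; substituting back into the regression function and comparing with the score-based backward drift recalled just before the statement shows the two coincide, which is precisely $F^{n+1}_{m,v,v'}(x)$.

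I expect the main obstacle to be twofold. First, justifying the marginal-score identity rigorously requires a Fubini / dominated-convergence argument to exchange $\nabla$ with the integral defining the marginal $p^n_{m_N+1}$, which relies on mild integrability of the Gaussian kernels (automatic here since the transitions are non-degenerate Gaussians). Second, one must carefully bookkeep the time-index reversal $m\mapsto m_N=N-m-1$ together with the orientation of each edge ($(v,v')$ versus $(v',v)$), and absorb the $O(\gamma^2)$ discrepancy between $f^n_{t_{m_N}}$ and $f^n_{t_{m_N+1}}$ inherent to the Euler--Maruyama scheme, exactly as in \cite{debortoli2021diffusion}; this is where the ``direct adaptation'' must be verified to hold edge-by-edge along $\msp_n$.
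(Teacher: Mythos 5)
Your argument is correct and is essentially the paper's own: the paper gives no proof of this proposition, deferring entirely to Proposition~3 of \cite{debortoli2021diffusion}, and your reconstruction (the $\mathrm{L}^2$ minimizer is the conditional expectation, the $X_{m_N+1}$-measurable terms factor out, and the exact integration-by-parts/Tweedie identity turns $\mathbb{E}[F^n_{m_N,v',v}(X_{m_N})\mid X_{m_N+1}=x]$ into $x+\gamma_{m_N+1}\nabla\log p^n_{m_N+1}(x)$) is precisely that argument transported edge-by-edge along $\msp_n$, with the trivial ``otherwise'' branch handled by update \ref{item:ipf-discrete-a} and the $O(\gamma^2)$ drift-evaluation discrepancy correctly acknowledged. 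Your silent reading of $p^n$ as the forward law $\pi^{n,N}_{(v',v)}$ on the stage-$n$ orientation of the edge is the intended one.
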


In practice, we use two neural networks per edge $\{v,v'\}\in \mse$, one for each possible direction of the edge, such that $F_{v,v'}(\theta^n_{v,v'}, m,x)\approx F^n_{m,v,v'}(x)$ and $F_{v',v}(\theta^n_{v',v}, m,x)\approx F^n_{m,v',v}(x)$. For any $\{v,v'\}\in \mse$, the parameter $\theta^n_{v,v'}$ is updated at iteration $n$ via the score matching loss defined by \eqref{eq:score_matching_loss} in \Cref{prop:score_matching} if $(v,v')\in \operatorname{path}_{T_{i_{k_n}}}(i_{k_n}, i_{k_n +1})$, see \Cref{algo:treedsb}.

%%% Local Variables:
%%% mode: latex
%%% TeX-master: "main"
%%% End:

\section{Additional experimental results and details}
\label{sec:details-exps}

The numerical experiments presented in \Cref{sec:experiments} are obtained by our own Pytorch implementation, which is inspired from the code\footnote{\url{https://github.com/JTT94/diffusion_schrodinger_bridge}} provided by \cite{debortoli2021diffusion}. We first provide information on the general setting of our experiments in \Cref{sec:details-exps-1}, and then give details on each of them in \Cref{sec:details-exps-2} along with additional results. We recall that a mIPF cycle is defined as a subset of $K$ consecutive iterations of \eqref{eq:ipf_multimarginal} and that the order of the leaves given by $\{i_0, \hdots, i_{K-1}\}$ is randomly shuffled at each new mIPF cycle.

\subsection{General experimental setup}
\label{sec:details-exps-1}

\paragraph{Implementation of \Cref{algo:treedsb} in practice.}Let $n\in \nset$, with $k_n=(n-1)\mod(K)$, $k_n+1=n \mod(K)$. Consider the path $\msp_n=\operatorname{path}_{\mst_{i_{k_n}}}(i_{k_n}, i_{k_n+1})$. Assume that we are provided with a dataset $\msd_{i_{k_n}}$, which contains $M$ samples from $\pi^n_{i_{k_n}}$. Following Lines 7-9 in \Cref{algo:treedsb}, we apply processes \ref{item:process-a} and \ref{item:process-b} recursively on the edges $(v,v')\in \msp_n$. 
\begin{enumerate}[wide, labelwidth=!, labelindent=0pt, label=(\alph*)]
    \item \underline{Sampling step (Line 7).}\label{item:process-a} For any $x_0\in \msd_{v}$, we sample from the diffusion trajectory \eqref{eq:sde-discrete} given by the Euler Maruyama discretization of $\mathbb{P}^n_{v,v'}$ starting from $x_0$. This gives us $M\times N$ trajectory samples. We then store the last iterate of each trajectory in a new dataset $\msd_{v'}$, which thus approximates $\pi^n_{v'}$.
    \item \underline{Training step (Lines 8-9).}\label{item:process-b} In order to avoid heavy computation, we approximate the \emph{mean-matching} loss \eqref{eq:score_matching_loss} by an unbiased estimator obtained by subsampling $b$ elements from the \emph{full} trajectories computed in the sampling process, see \cite[Eq. (97)-(98)]{debortoli2021diffusion}. Here, $b$ refers to the \emph{batch-size} parameter of the neural networks. Then, we perform gradient descent to optimize the parameter $\theta_{v',v}$, which parameterizes the \emph{backward} drift on the edge $(v,v')$.
\end{enumerate}
To avoid any bias issue, the whole trajectories obtained at process \ref{item:process-a} are refreshed at a certain frequency over the training iterations of the neural networks by once again simulating the diffusion \eqref{eq:sde-discrete}. In our experiments, this refresh occurs each 500 iterations. 
 
\paragraph{Setting of the time discretization.}The number of time-steps $N$ in the time discretization of the diffusions is chosen to be even and identical for each of the edges of the tree. Let $\{v,v'\}\in \mse$. We now give details on the design of the time schedule $\{\gamma_k\}_{k=1}^N$ related to the edge $\{v,v'\}$, see \Cref{sec:details-algo}. Following \cite{debortoli2021diffusion}, we choose this sequence to be invariant by time reversal and consider $\gamma_k=\gamma_0 + (2k/N)(\bar{\gamma}-\gamma_0)$ for any $k\in \{0, \hdots, N/2\}$ (the rest of the sequence being obtained by symmetry) where $\gamma_0$ is a free parameter and $\bar{\gamma}$ is determined by $\sum_{k=1}^N \gamma_k=T_{v,v'}$. In our experiments, we set $N=50$ and $\gamma_0=10^{—5}$.

\paragraph{Sampling improvement.} In our code, we implemented the corrector scheme of \cite{song2020score} and the \emph{probability flow}-based sampling approach detailed in \cite[Section H.3]{debortoli2021diffusion}, but did not observe any significant improvement in our experiments using one of these techniques.

\paragraph{Choice of the architectures of the neural networks.} In the case of the experiments related to synthetic datasets (two-dimensional toy datasets, Gaussian distributions) and to the subset posterior aggregation task, we implement the same architecture as presented in \cite[Figure 3]{debortoli2021diffusion}. We refer to this model as ``Basic Model'' and detail it in \Cref{fig:architecture}. In the ``Basic Model'', the PositionalEncoding block applies the sine transform described in \cite{vaswani2017attention}, with output dimension equal to 32, and each MLP Block represents a Multilayer Perceptron Network. In particular, MLPBlock (1a) has shape $(d, 128, \max(256, 2d))$, MLPBlock (1b) has shape $(32, 128, \max(256, 2d))$, and MLPBlock (2) has shape $(2\times \max(256, 2d), \max(256, 2d), \max(128,d), d)$, where $d$ denotes the dimension of input data. We optimize the networks with ADAM \citep{kingma2014adam} with learning rate $10^{-4}$ and momentum $0.9$. For each of the networks, we set the batch size to 4,096 and the number of iterations to 10,000 for the synthetic datasets and 15,000 for the subset posterior aggregation task. Our experiments ran on 1 Intel Xeon CPU Gold 6230 20 cores @ 2.1 Ghz CPU.
\begin{figure}[h!]
    \centering
    \includegraphics[width=\linewidth]{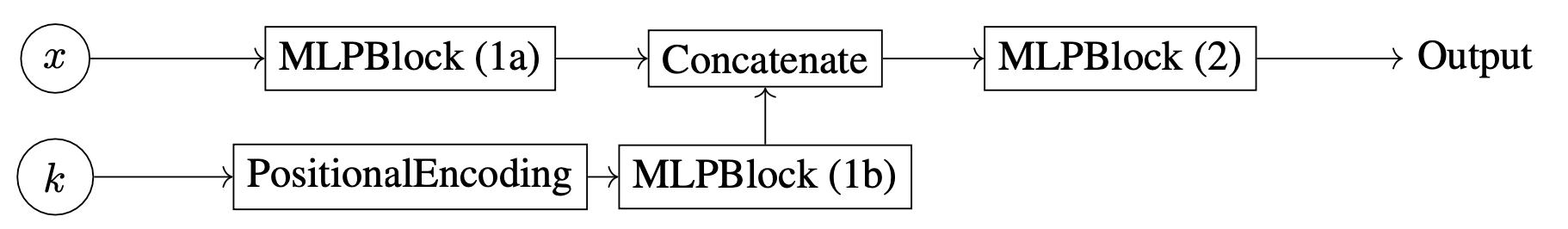}
    \caption{Architecture of the ``Basic Model''. }
    \label{fig:architecture}
\end{figure}

In the case of the experiments related to MNIST dataset, we use a reduced UNET architecture based on \cite{nichol2021improved}, where we set the number of channels to 64 rather than 128. We implement an exponential moving average of network parameters across training iterations, with rate 0.999. We optimize the networks with ADAM \citep{kingma2014adam} with learning rate $10^{-4}$ and momentum $0.9$. Finally, we set the batch size to 256 and the number of training iterations to 30,000. Our experiments ran using 1 Nvidia A100.

\paragraph{Details on regularized state-of the art methods.} We run the fsWB algorithm \citep{cuturi2014fast} with the implementation provided by \cite{flamary2021pot}. For each experiment, we run 100 Sinkhorn iterations with 1500 samples for each dataset (\ie, the maximum number of samples that it can generate) and set the regularization parameter $\vareps$ to its lowest value such that the
algorithm is stable. Finally, for sake of fairness with our method, we initialise the barycenter measure with $\pi^0_r$ when solving the problem \eqref{eq:wasserstein_barycenter_pb_leaf} for synthetic Gaussian datasets and Bayesian fusion. To run the crWB algorithm \citep{li2020continuous}, we use the code provided by the authors. We consider the quadratic regularization, which is shown to be empirically more stable than entropic regularization. Following \cite{fan2020scalable}, we choose the potential networks to be fully connected neural networks with 3 hidden layers of shape $(\max(128,2d), \max(128, 2d), \max(128,2d))$. The activation functions are ReLu. We optimize the networks with ADAM \citep{kingma2014adam} with learning rate $10^{-4}$ for the subset posterior aggregation task and $10^{-3}$ for the Gaussian experiment. Finally, we set the batch size to 4,096 and the number of training iterations to 50,000. We highlight that fsWB and crWB solve a regularized Wasserstein barycenter problem, which does not contain an additional \emph{penalization} term on the entropy of the barycenter, contrary to TreeDSB.

\subsection{Details on the experiments}
\label{sec:details-exps-2}

\paragraph{Synthetic Gaussian datasets.} For each dimension that we consider, we generate three different triplets of random non-diagonal covariance matrices whose condition number is less than 10. We then run the algorithms on each triplet and aggregate the obtained results. The Gaussian datasets contain 1,500 samples for fsWB, and 10,000 samples for crWB and TreeDSB. We run fsWB with the following settings $(d,\varepsilon)\in \{(2, 0.1), (16,0.2), (64, 0.5), (128, 1.0), (256, 2.0)\}$. We run TreeDSB for 10 mIPF cycles with regularization parameter $\vareps=0.1$, starting from the central node initialized to a Gaussian distribution $\mu_0$ chosen as detailed in \Cref{sec:details-algo} with $\alpha=1$. Thus, we solve the regularized Wasserstein barycenter problem \eqref{eq:wasserstein_barycenter_pb_leaf}, which contains an additional regularization with respect to $\mu_0$. This choice is justified, since the non-regularized barycenter is known to be a Gaussian distribution, and $\mu_0$ can be seen as an \textit{a priori} for the regularized barycenter. For each of the three settings, we keep the best result among the $30$ mIPF iterations. In this setting, TreeDSB and crWB have roughly the same training time.

\paragraph{Subset posterior aggregation.} When considering a dataset splitted into several subdatasets, a common paradigm in bayesian inference consists in running Monte Carlo Markov Chain methods separately on these subdatasets, and then merge the obtained posteriors to recover the full posterior. The barycenter of these subdataset posteriors is proved to be close to the full data posterior under mild assumptions \citep{srivastava2018scalable}. In our setting, we consider the posterior aggregation problem for the logistic regression model associated to the \texttt{wine} dataset\footnote{\url{https://archive.ics.uci.edu/ml/datasets/wine}} (d = 42) with 3 subdatasets. We consider here two splitting methods: (i) either, data is uniformly
splitted between 3 subdatasets with respect to the label distribution, denoted by \texttt{wine-homogeneous}, or (ii) data is splitted with some heterogeneity according to a Dirichlet distribution whose parameter is randomly chosen, denoted by \texttt{wine-heterogeneous}. Following \cite{korotin2021continuous}, we use the stochastic approximation trick so that the subset posterior samples do not vary consistently from the full
posterior in covariance \citep{minsker2014scalable}. We implement the Unadjusted Langevin Algorithm (ULA) to sample from each subdataset posterior and from the full posterior. In each case, we run ULA for $5.5\cdot 10^{6}$ iterations with a well chosen step-size, and obtain 9,900 samples after applying a \emph{burn-in} of order $10\%$ and then a \emph{thinning} of size 500. We provide in \Cref{fig:bayesian} some metrics which assess the quality of this sampling process. We recall that the the full posterior samples serve as ground truth in this experiment.

The results presented in \Cref{tab:comparison_bayesian} were computed as follows. For fsWB, we first subsample 1,500 samples out of the 9,900 samples from each posterior, and then run the algorithm with $\varepsilon=0.5$. We repeat three times this procedure and then aggregate the results. In the case of crWB and TreeDSB, we run the algorithms three times with various seeds. Similarly to the Gaussian setting, we run TreeDSB for 10 mIPF cycles with regularization parameter $\vareps=0.1$. We start from the central node with a Gaussian distribution $\mu_0$ chosen as detailed in \Cref{sec:details-algo} with $\alpha=1$, and thus solve the barycenter formulation \eqref{eq:wasserstein_barycenter_pb_leaf}. For each of the three settings, we keep the best result among the $30$ IPF iterations. In this setting, TreeDSB and crWB have roughly the same training time.

\begin{figure}[h!]
  \centering
  \includegraphics[width=.45\linewidth]{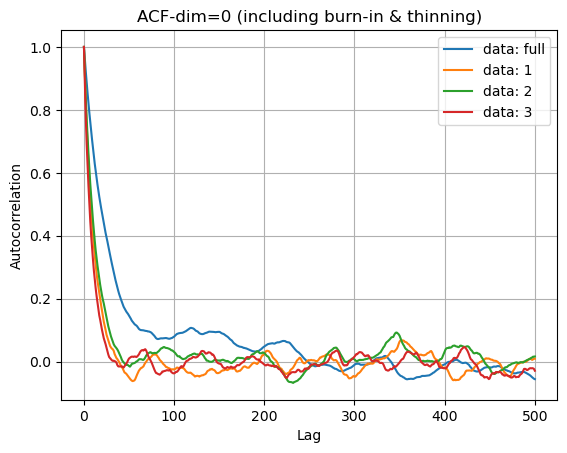} \hfill
  \includegraphics[width=.45\linewidth]{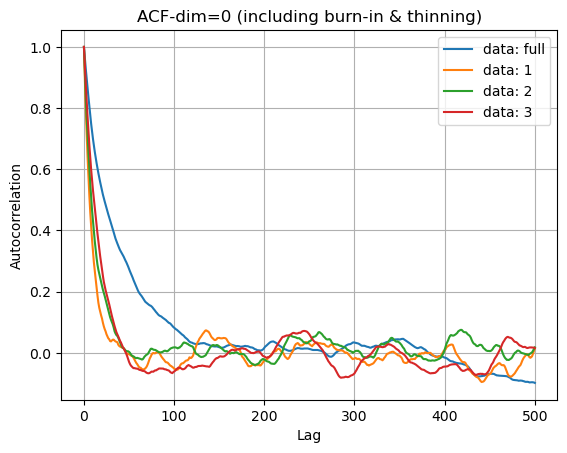}
  \includegraphics[width=.45\linewidth]{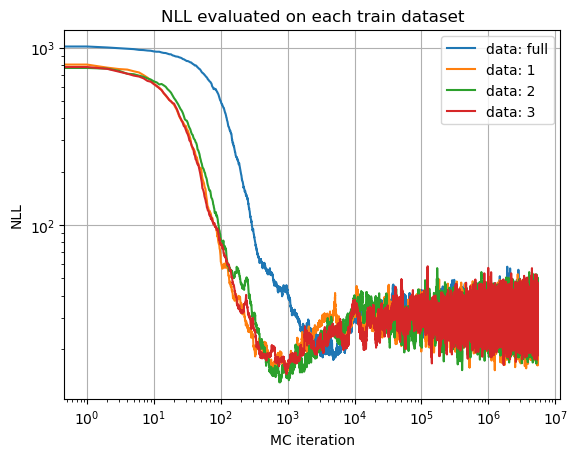} \hfill
  \includegraphics[width=.45\linewidth]{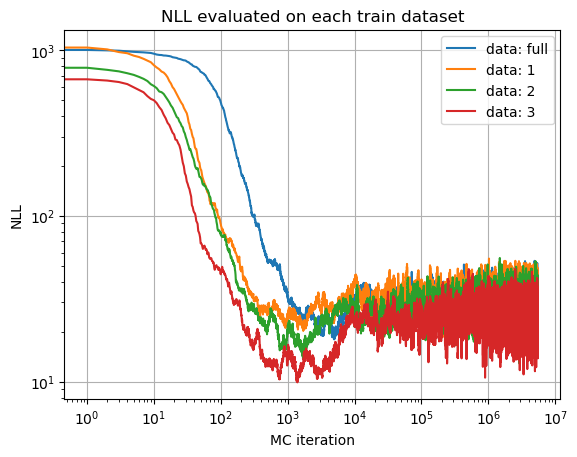}
  \caption{Evaluation of the sampling process for \texttt{wine-homogeneous} (left) and \texttt{wine-heterogeneous} (right). We display the Autocorrelation function on 500 lags (above) and the evolution over the iterations of ULA of the negative log-likelihood (NLL) evaluated on each training dataset  (below). In particular, the samples are decorrelated and the NLL has a satisfying profile.}
  \label{fig:bayesian}
\end{figure} 

\newpage
\paragraph{Synthetic two-dimensional datasets.} In this setting, we consider three different datasets (\emph{Swiss-roll}, \emph{Circle} and \emph{Moons}) that each contain 10,000 samples. Since we do not have an \textit{a priori} on the shape of the barycenter between these datasets, we consider the regularized Wasserstein barycenter problem \eqref{eq:wasserstein_barycenter_pb}, \ie, $r$ is chosen as a leaf and corresponds to one of the input datasets. We emphasize that this experiment is not intended to demonstrate the superiority of TreeDSB to compute 2D Wasserstein barycenters, but is rather meant to illustrate that (a) the marginals of the leaves are well recovered by the algorithm, see \Cref{fig:wasserstein_barycenter_2d}, and that (b) the obtained barycenter is consistent when diffusing from the different leaves, see \Cref{fig:wasserstein_coherence}. In all our experiments on 2D datasets, we observed that (a) was persistently verified without difficulty. In this section, we rather aim at illustrating (b) by providing additional results which assess the quality of the barycenter obtained by TreeDSB with respect to the choice of the starting leaf $r$ and to the choice of the regularization parameter $\varepsilon$. 

To do so, we consider three different choices of regularization in TreeDSB: (i) $\varepsilon=0.2$ (50 mIPF cycles), see \Cref{fig:2d-epsilon=0.2}, (ii) $\varepsilon=0.1$ (50 mIPF cycles), see \Cref{fig:2d-epsilon=0.1} and (iii) $\varepsilon=0.05$ (60 mIPF cycles), see \Cref{fig:2d-epsilon=0.05}. For each of these settings, we run TreeDSB with the starting leaf $r$ chosen as \emph{Swiss-roll} (first row), \emph{Circle} (second row) or \emph{Moons} (third row), and display the final barycenter obtained by diffusing from \emph{Swiss-roll} (first column), \emph{Circle} (second column) and \emph{Moons} (third column). Note that the vertex 0 always corresponds to the starting leaf, the vertex 1 to the barycenter node and that \Cref{fig:wasserstein_coherence} corresponds to the first row of \Cref{fig:2d-epsilon=0.1}.

We can make the following observations. First, the estimated barycenter is always coherent within each row, which assesses the convergence of our method. Then, for each value of $\varepsilon$, the TreeDSB barycenter is rather consistent between the rows, \ie, the choice of the starting leaf does not have a meaningful impact on our method. Finally, as expected, we observe that the support of the barycenter is less and less diffuse as long as $\varepsilon$ decreases.
\newpage

\begin{figure}[h!]
  \centering
  \includegraphics[width=.3\linewidth]{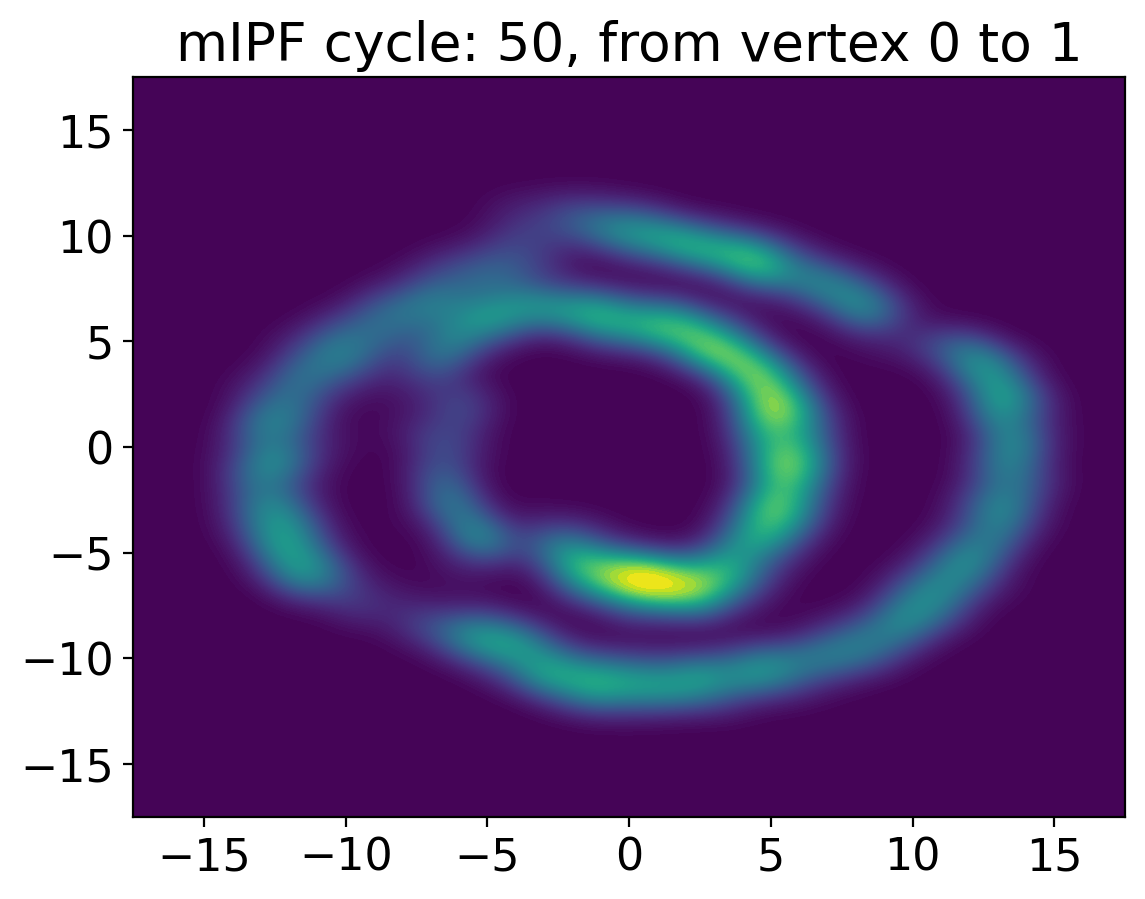} \hfill
  \includegraphics[width=.3\linewidth]{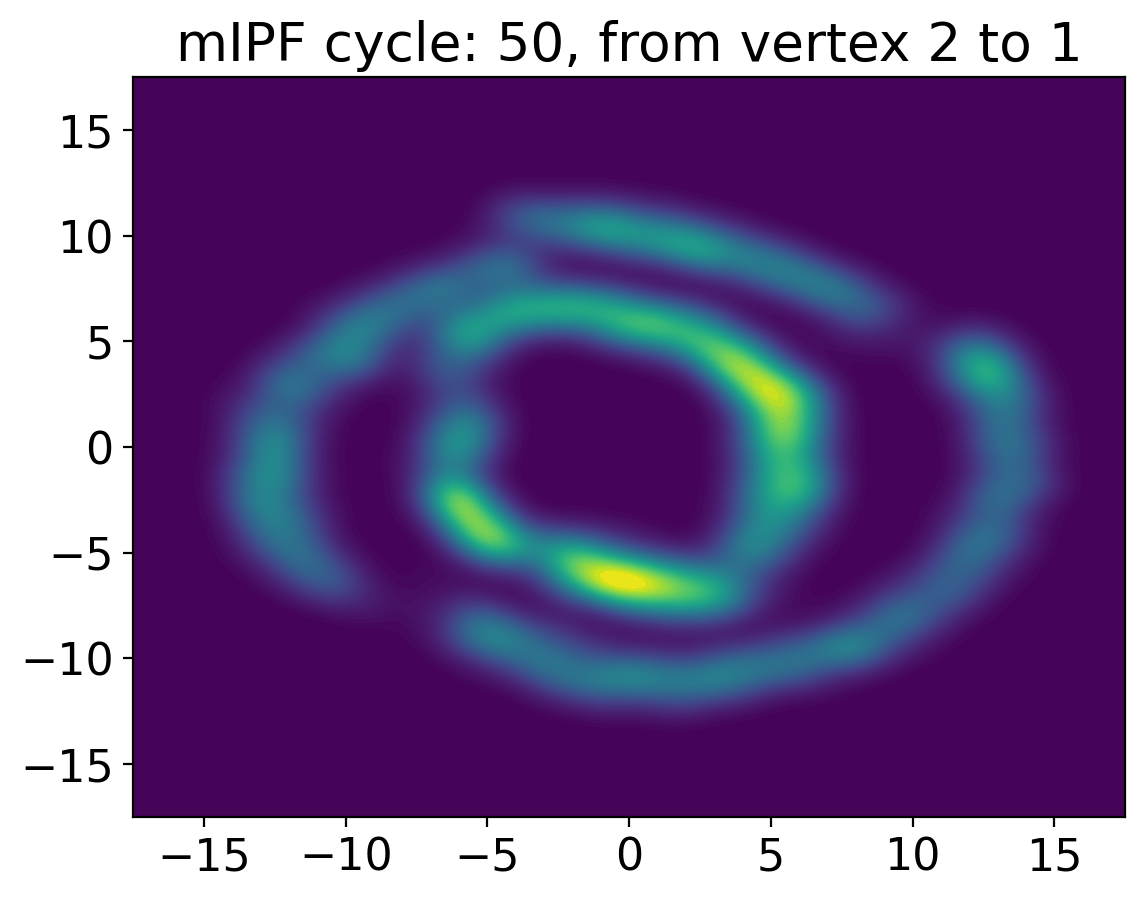} \hfill
  \includegraphics[width=.3\linewidth]{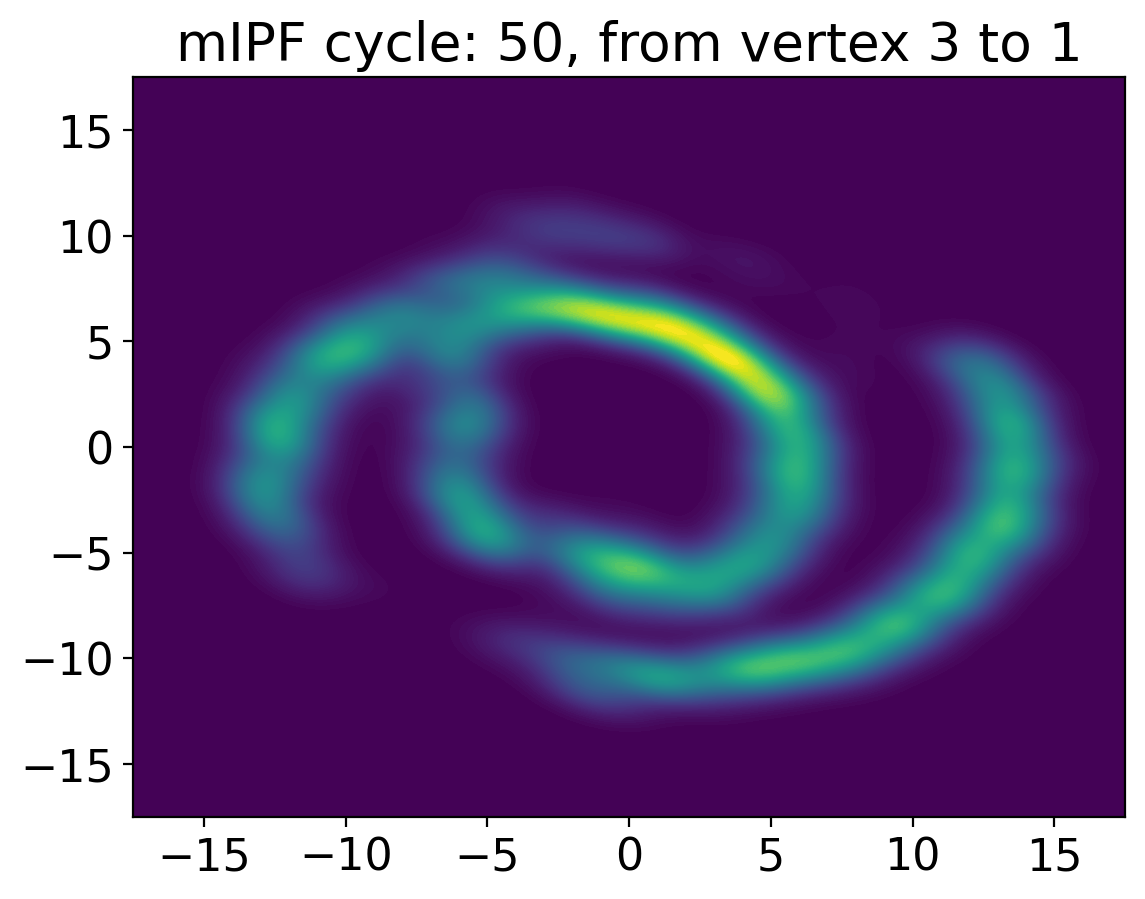}\\
  \includegraphics[width=.3\linewidth]{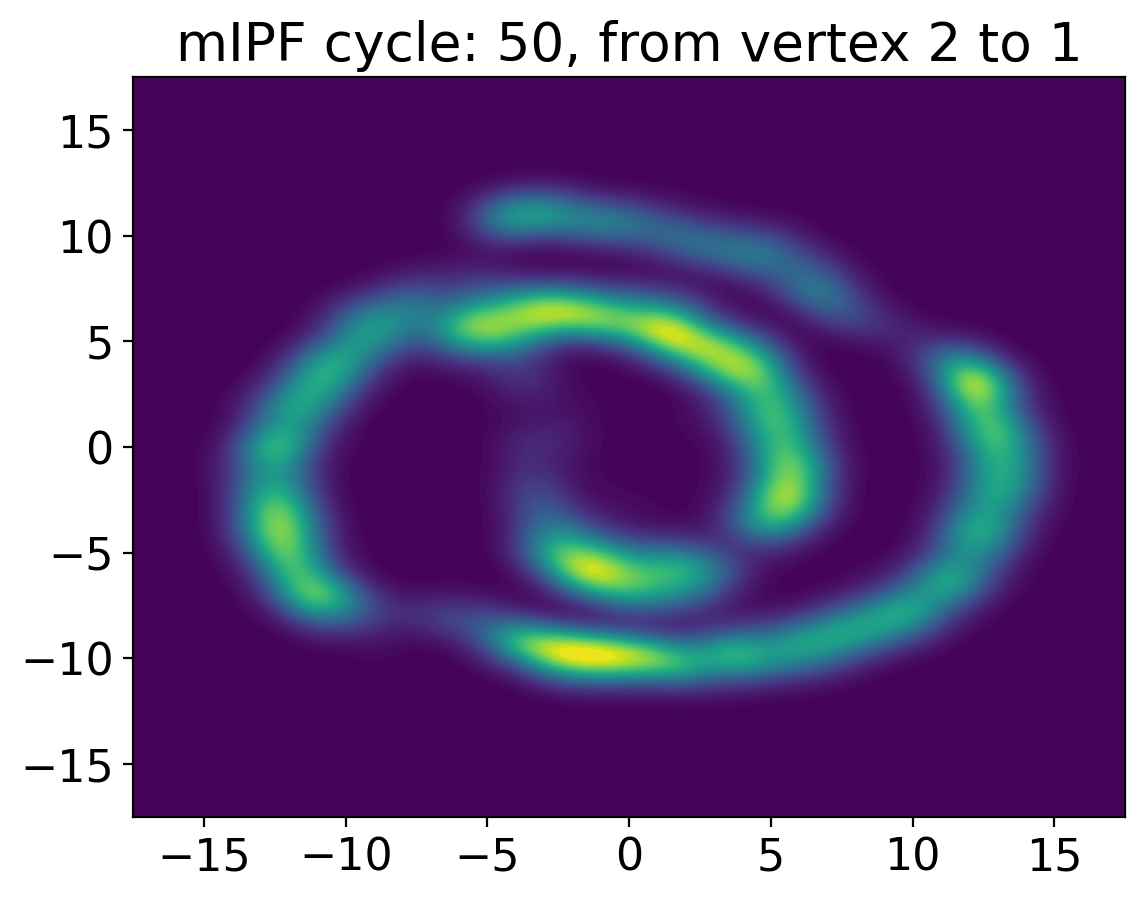} \hfill
  \includegraphics[width=.3\linewidth]{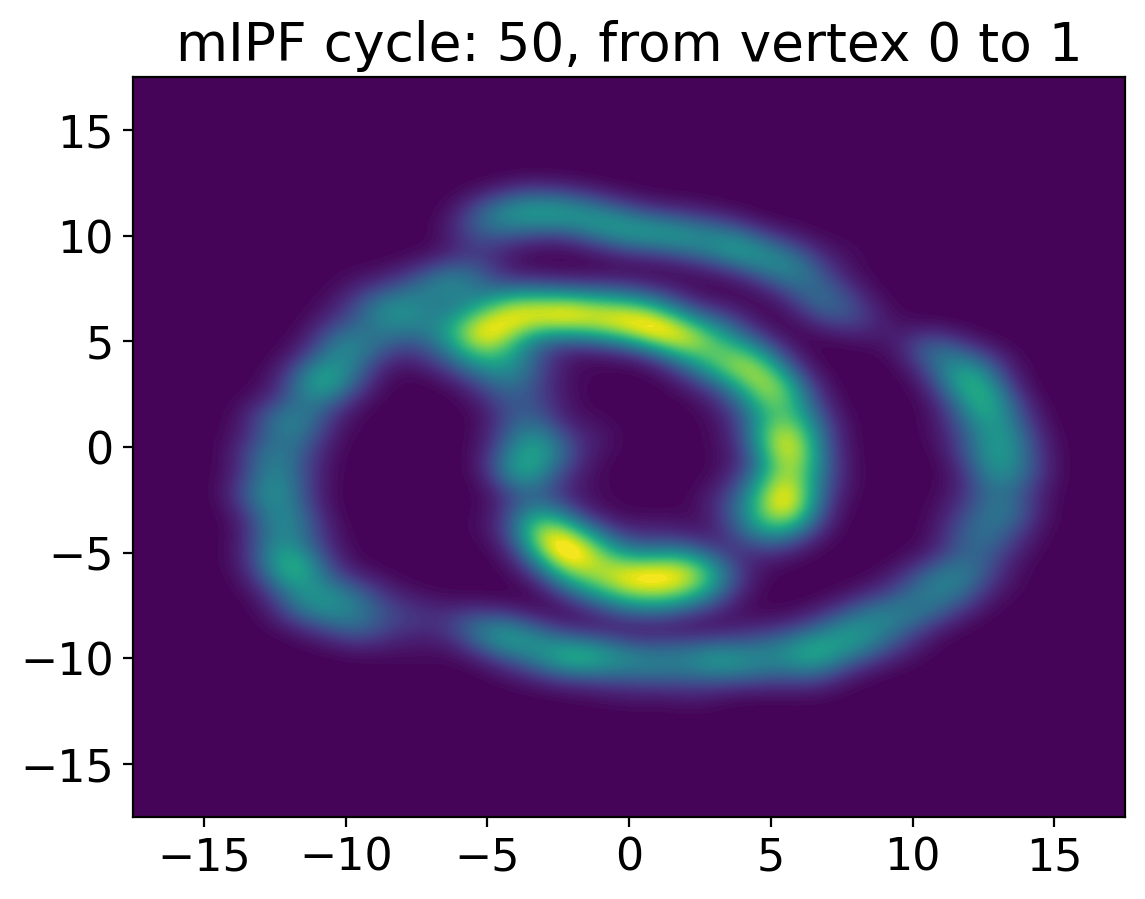} \hfill
  \includegraphics[width=.3\linewidth]{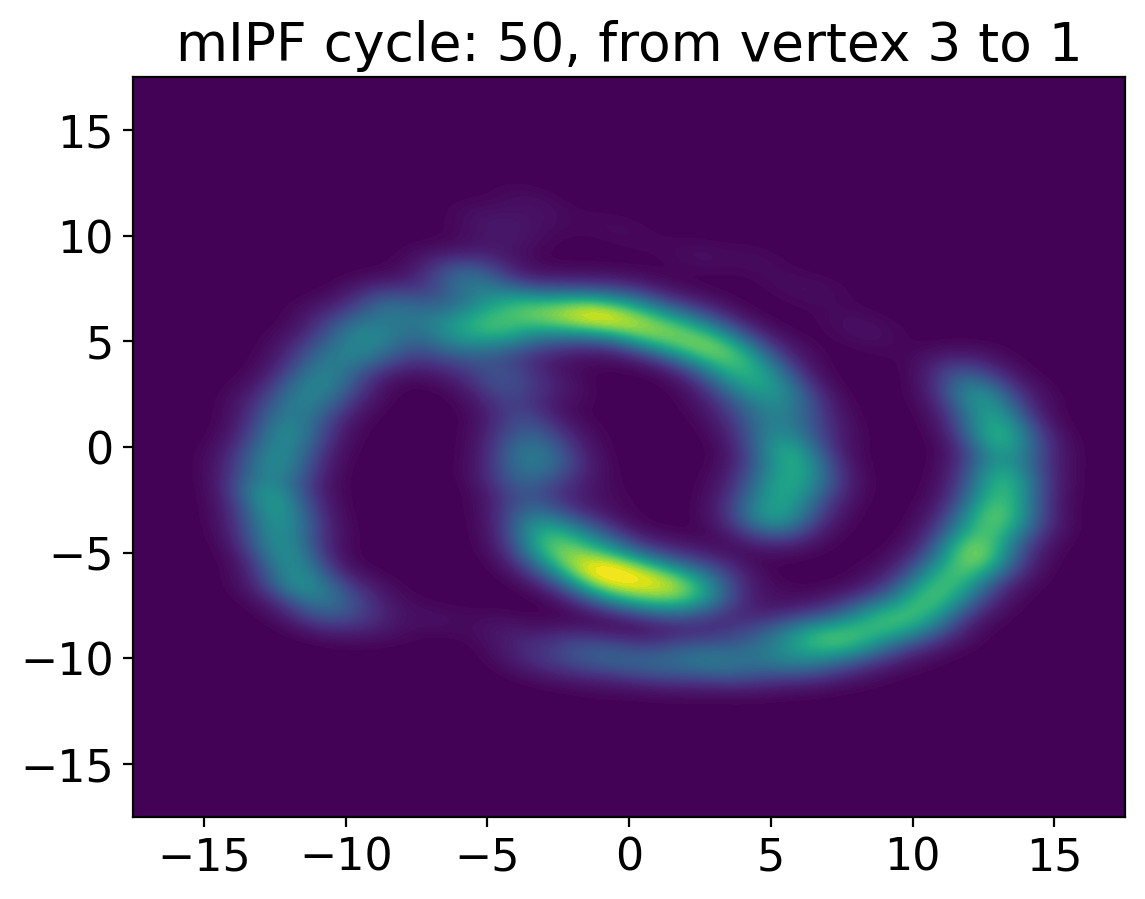}\\
  
  \includegraphics[width=.3\linewidth]{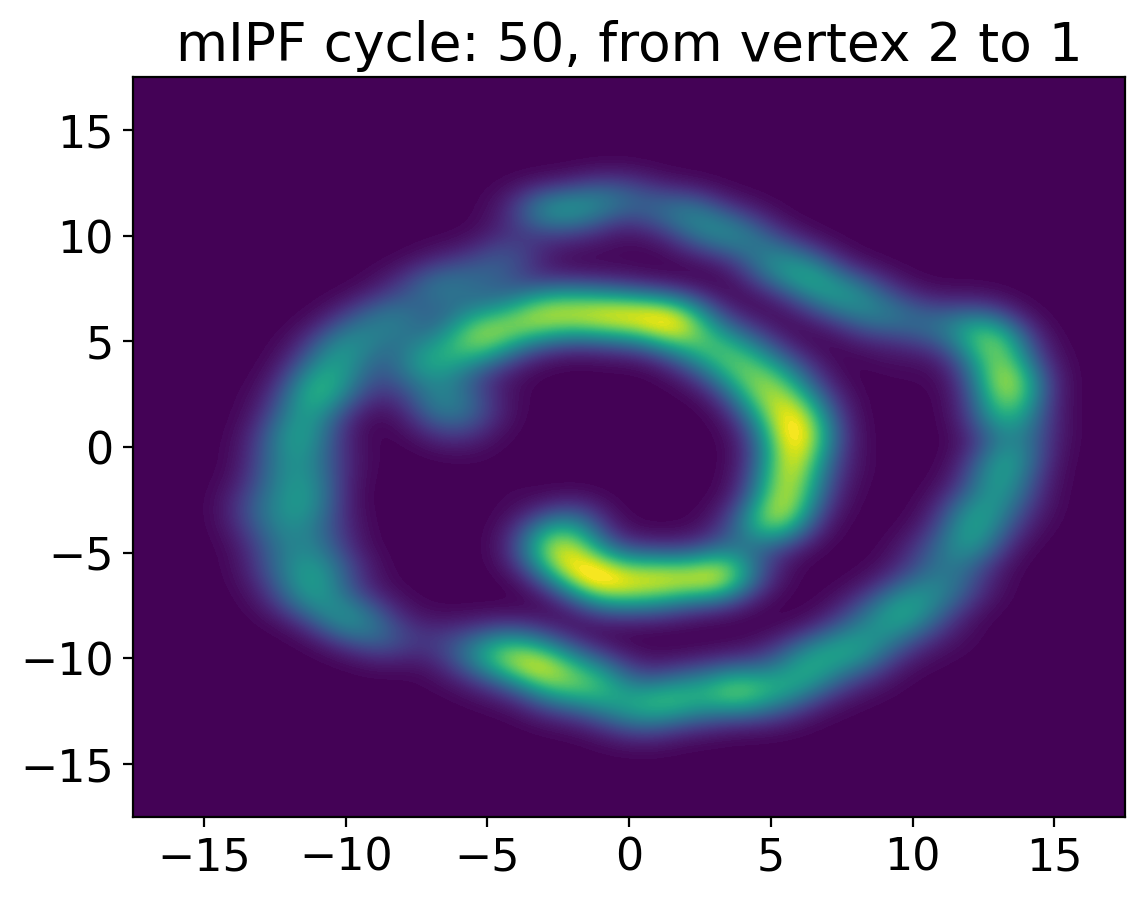} \hfill
  \includegraphics[width=.3\linewidth]{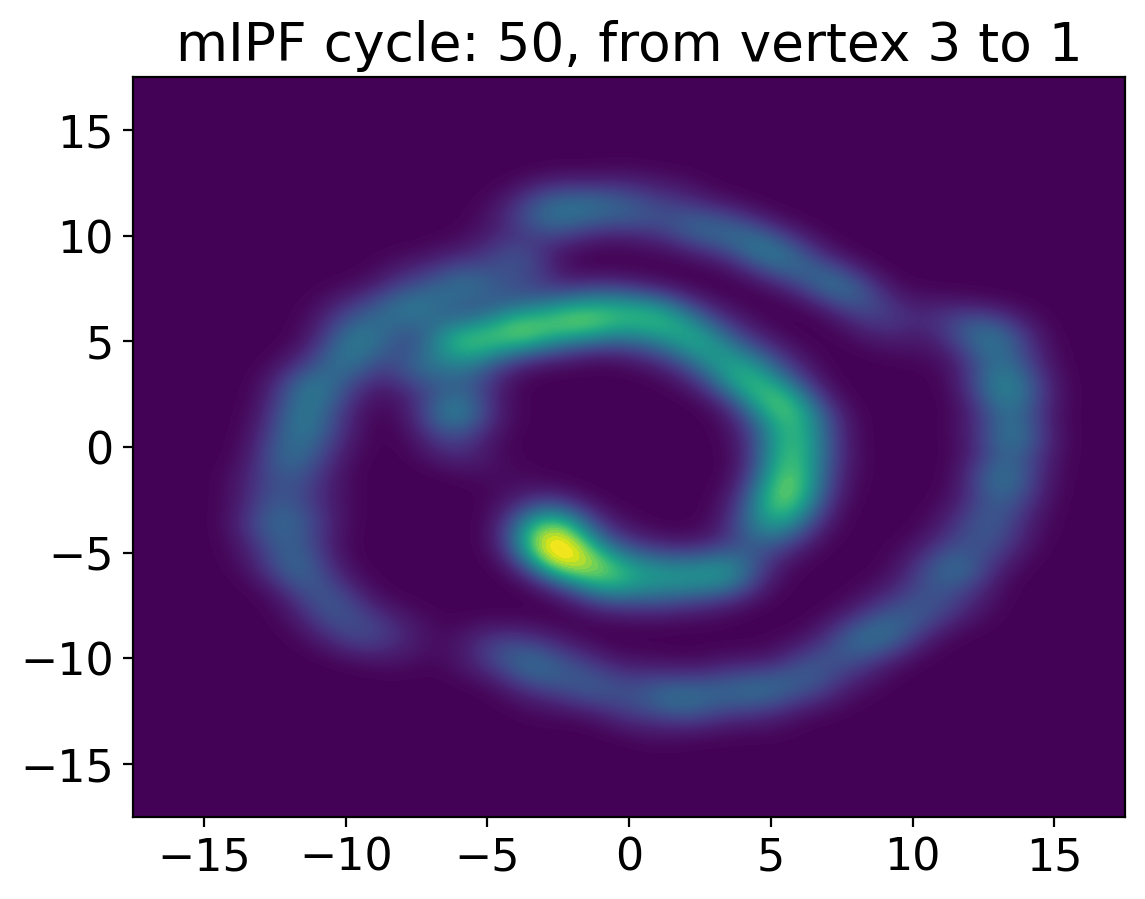} \hfill
  \includegraphics[width=.3\linewidth]{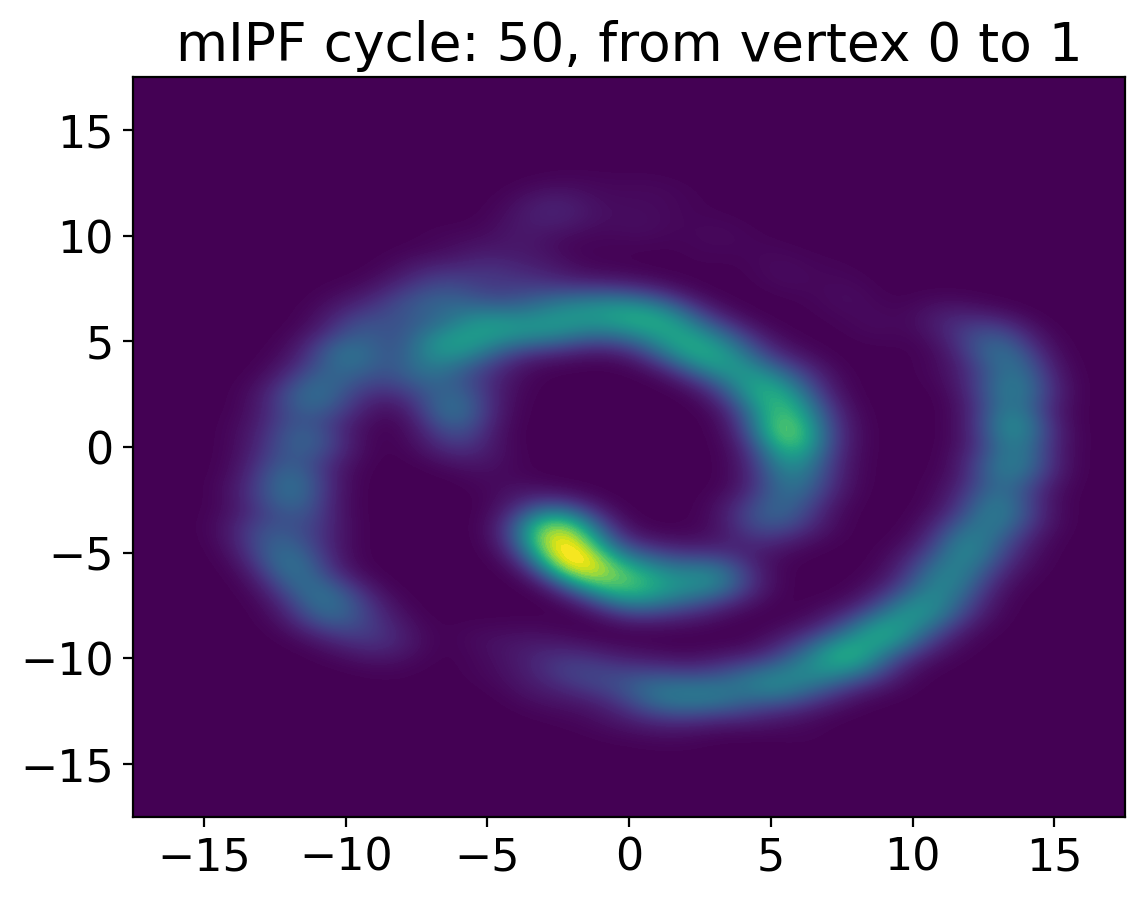}
  \caption{Estimated 2D barycenter obtained by TreeDSB with $\varepsilon=0.2$ (50 mIPF cycles). First row: starting from \emph{Swiss-roll}. Second row: starting from \emph{Circle}. Third row: starting from \emph{Moons}.}
  \label{fig:2d-epsilon=0.2}
\end{figure}
\vfill
\begin{figure}[h!]
  \centering
  \includegraphics[width=.3\linewidth]{img_2d/2D-epsilon=0.1/from-swiss/source=0,dest=1/0_f_50_sde_epsilon=0.100_eps1_density_49_smooth.png} \hfill
  \includegraphics[width=.3\linewidth]{img_2d/2D-epsilon=0.1/from-swiss/source=1,dest=2/0_b_50_sde_epsilon=0.100_eps1_density_49_smooth.png} \hfill
  \includegraphics[width=.3\linewidth]{img_2d/2D-epsilon=0.1/from-swiss/source=1,dest=3/0_b_50_sde_epsilon=0.100_eps1_density_49_smooth.png}\\
  \includegraphics[width=.3\linewidth]{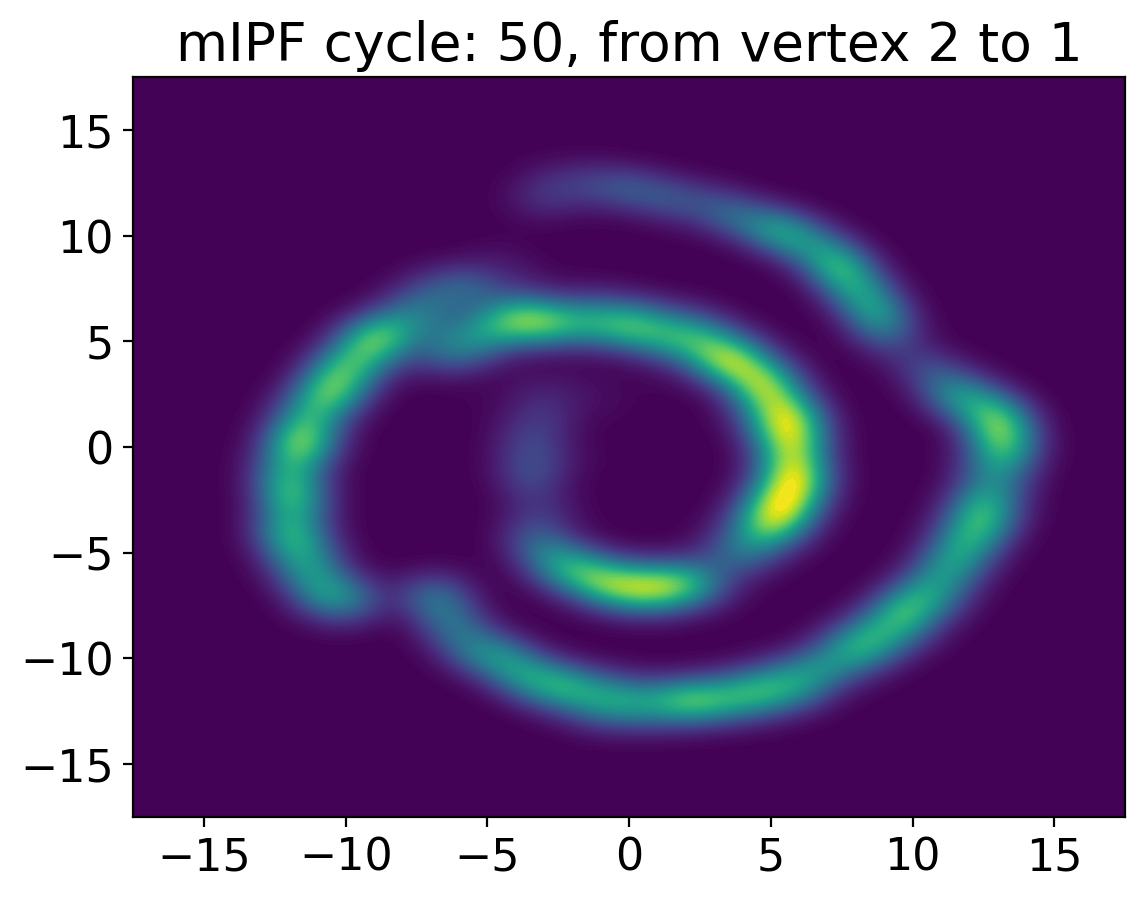} \hfill
  \includegraphics[width=.3\linewidth]{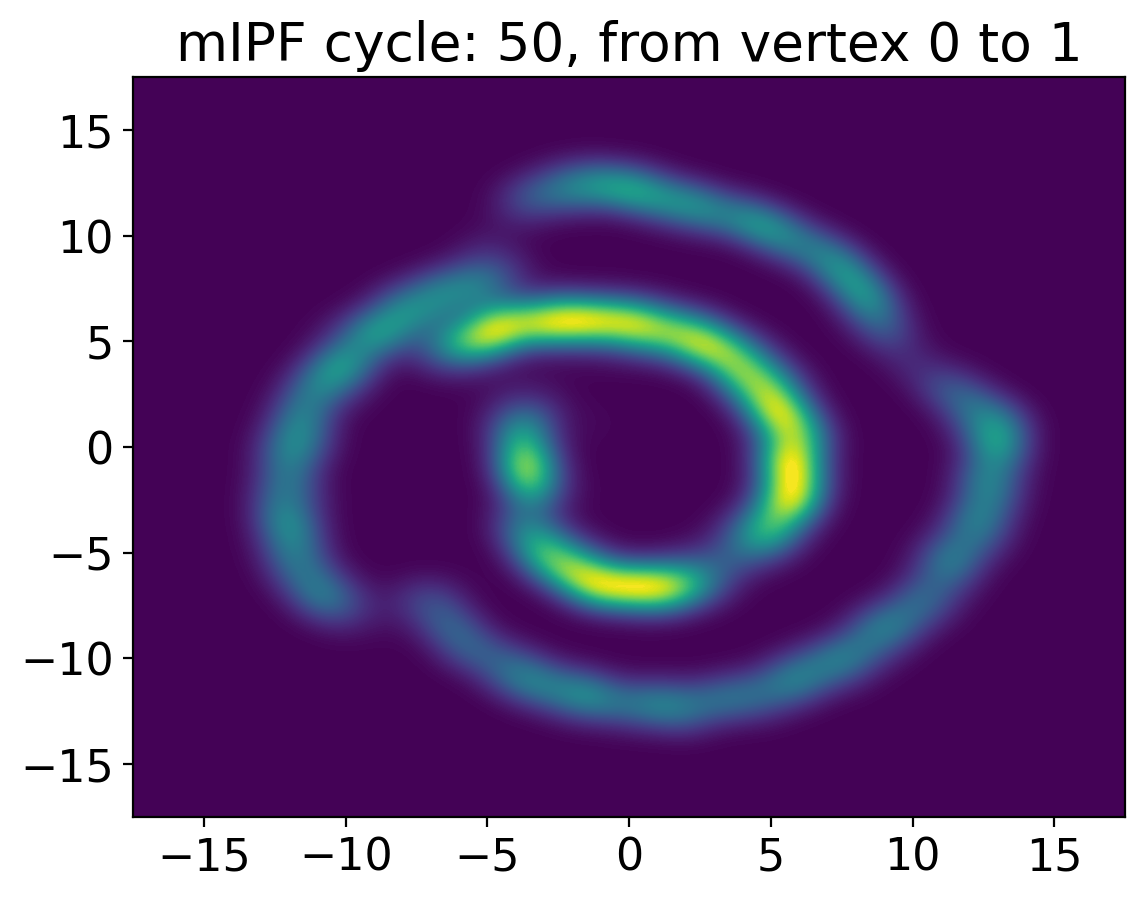} \hfill
  \includegraphics[width=.3\linewidth]{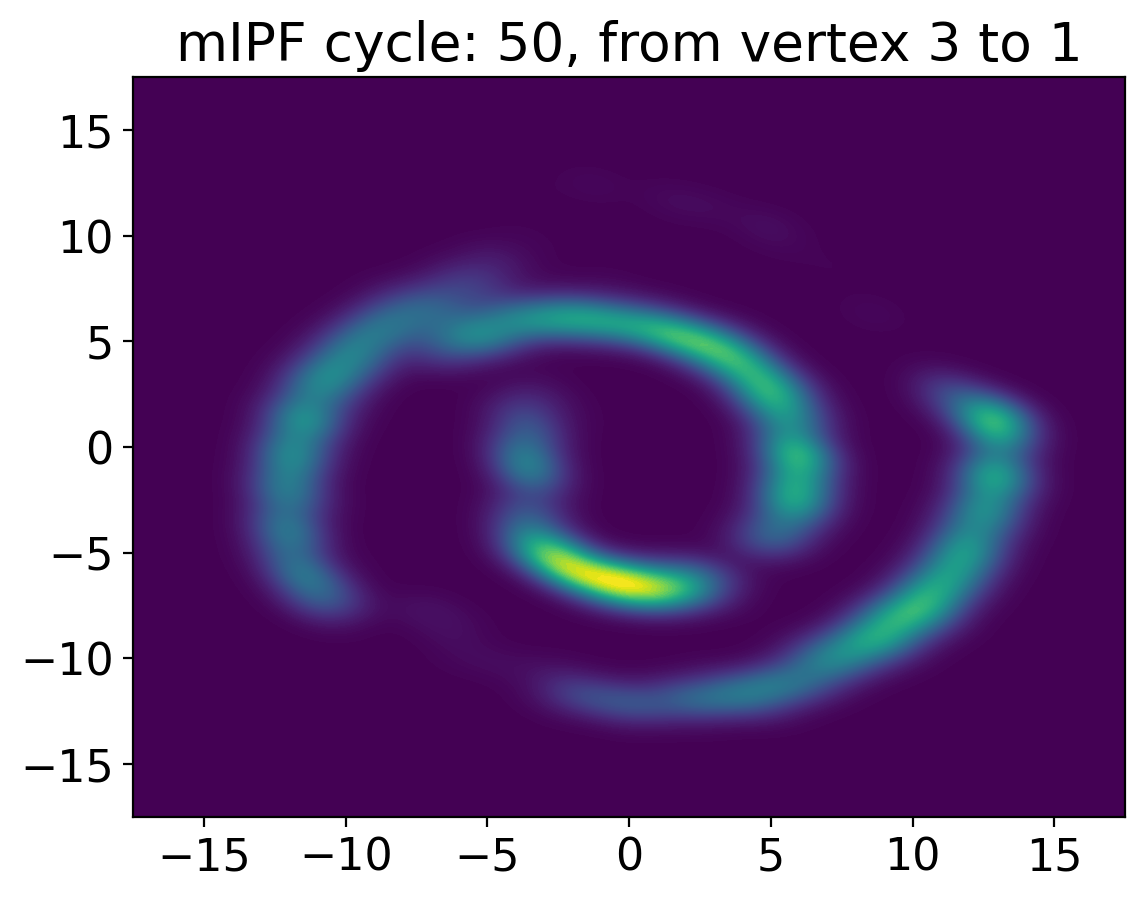}\\
  \includegraphics[width=.3\linewidth]{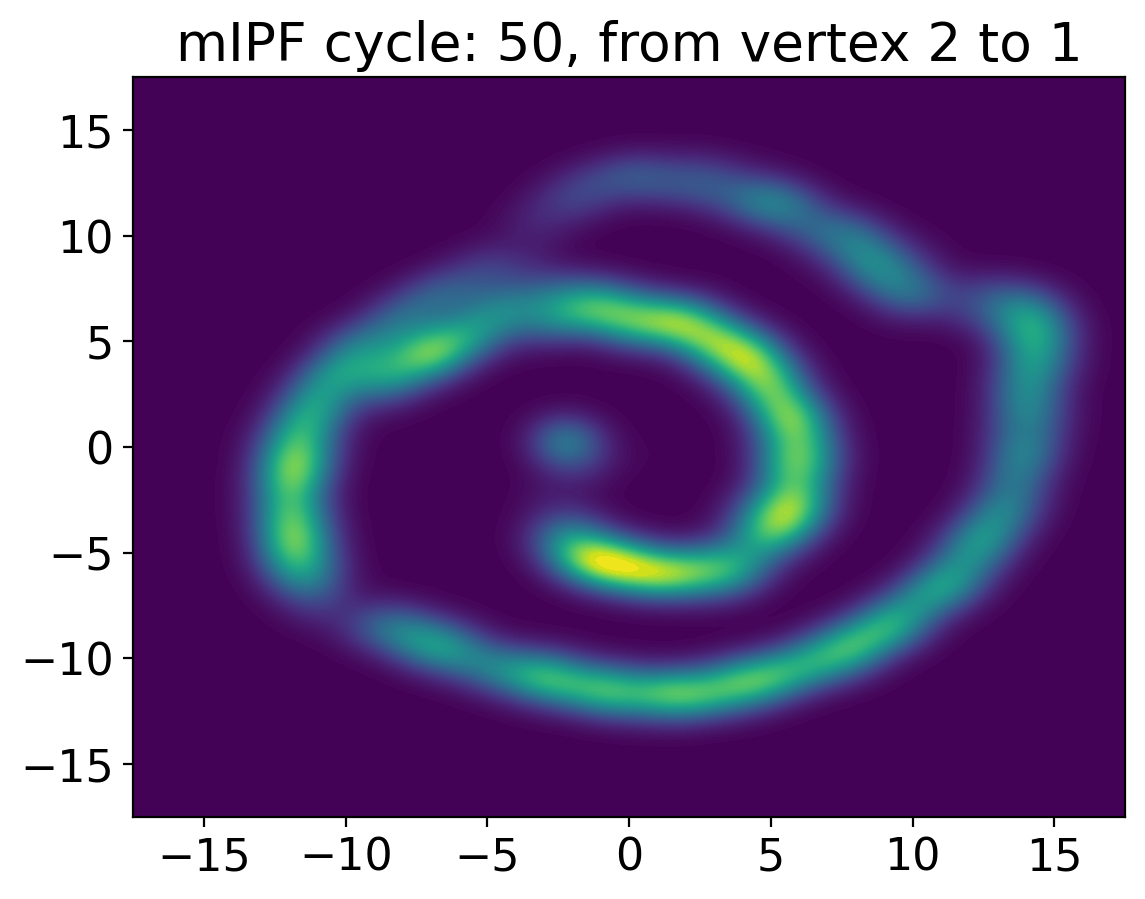} \hfill
  \includegraphics[width=.3\linewidth]{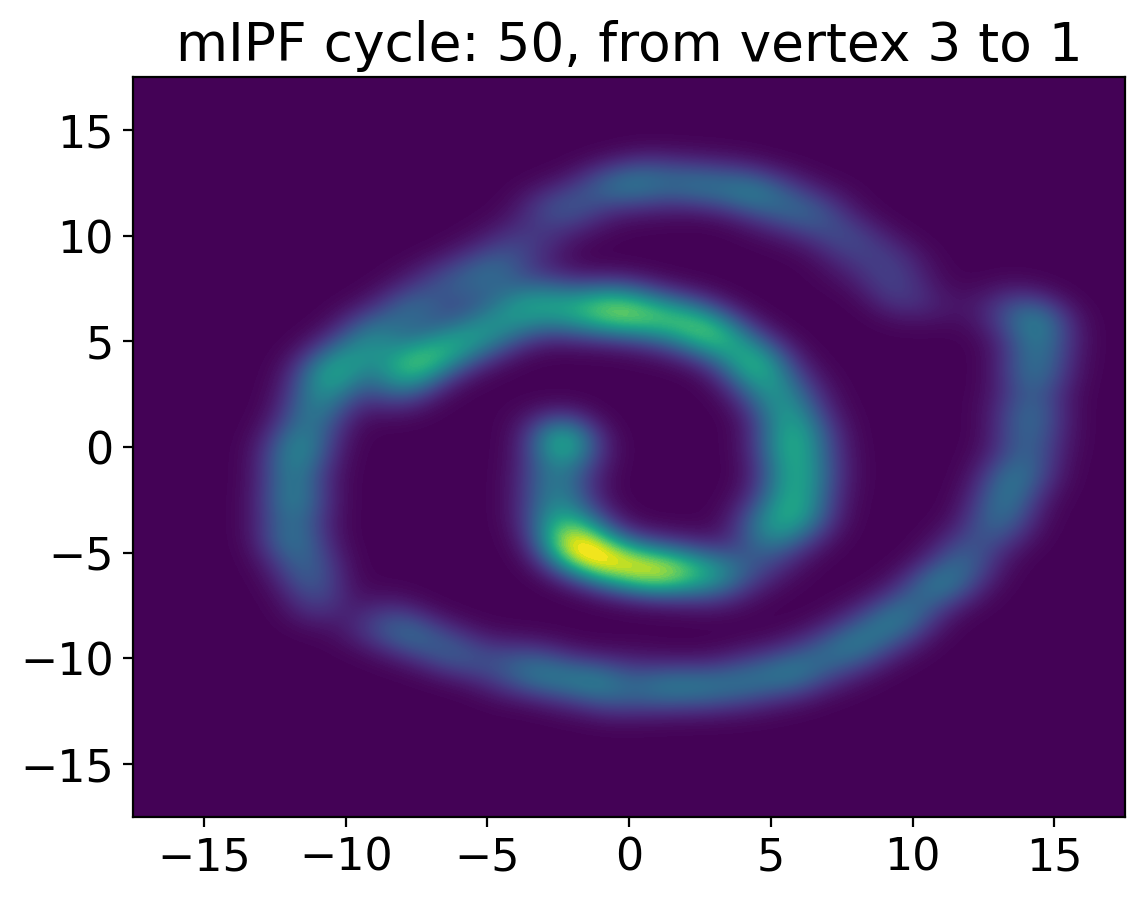} \hfill
  \includegraphics[width=.3\linewidth]{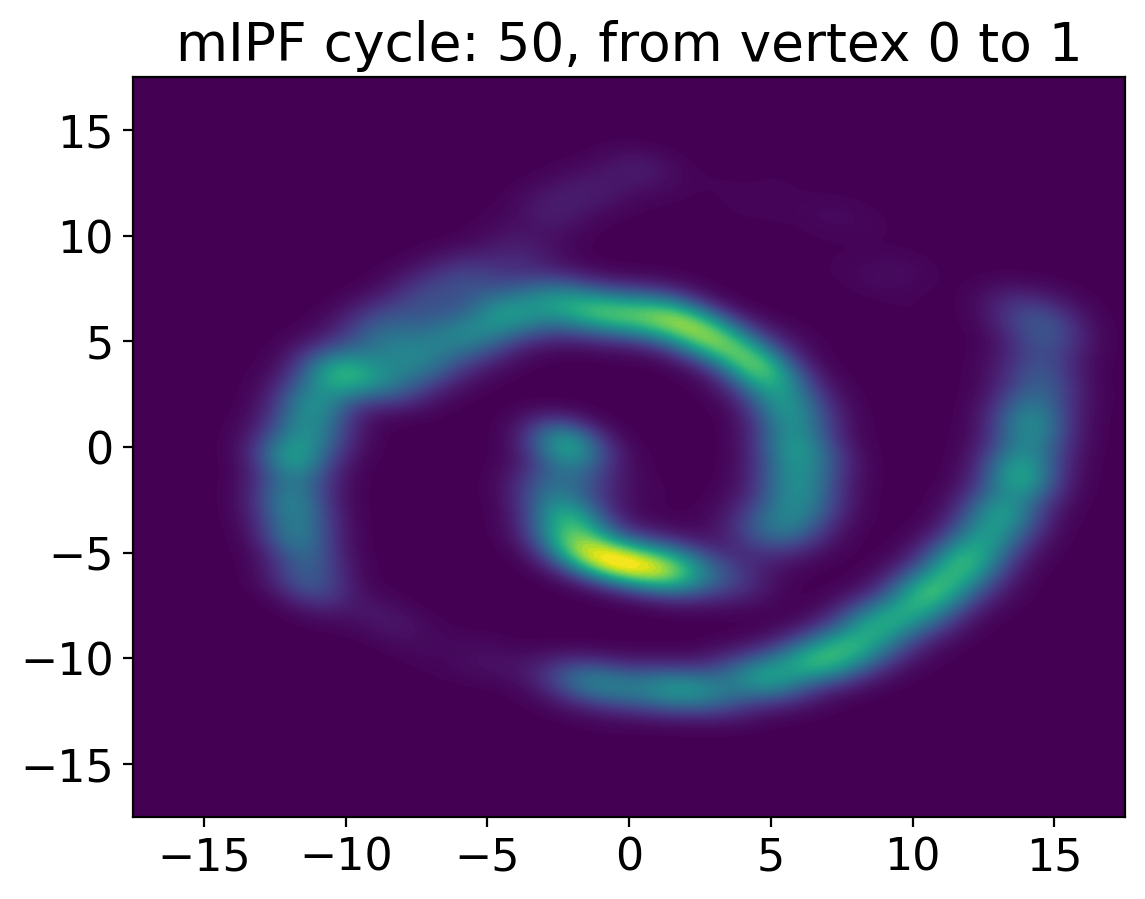}\\
  \vspace{-0.2cm}
  \caption{Estimated 2D barycenter obtained by TreeDSB with $\varepsilon=0.1$ (50 mIPF cycles). First row: starting from \emph{Swiss-roll}. Second row: starting from \emph{Circle}. Third row: starting from \emph{Moons}.}
  \label{fig:2d-epsilon=0.1}
\end{figure}

\newpage

\begin{figure}[h!]
  \centering
  \includegraphics[width=.3\linewidth]{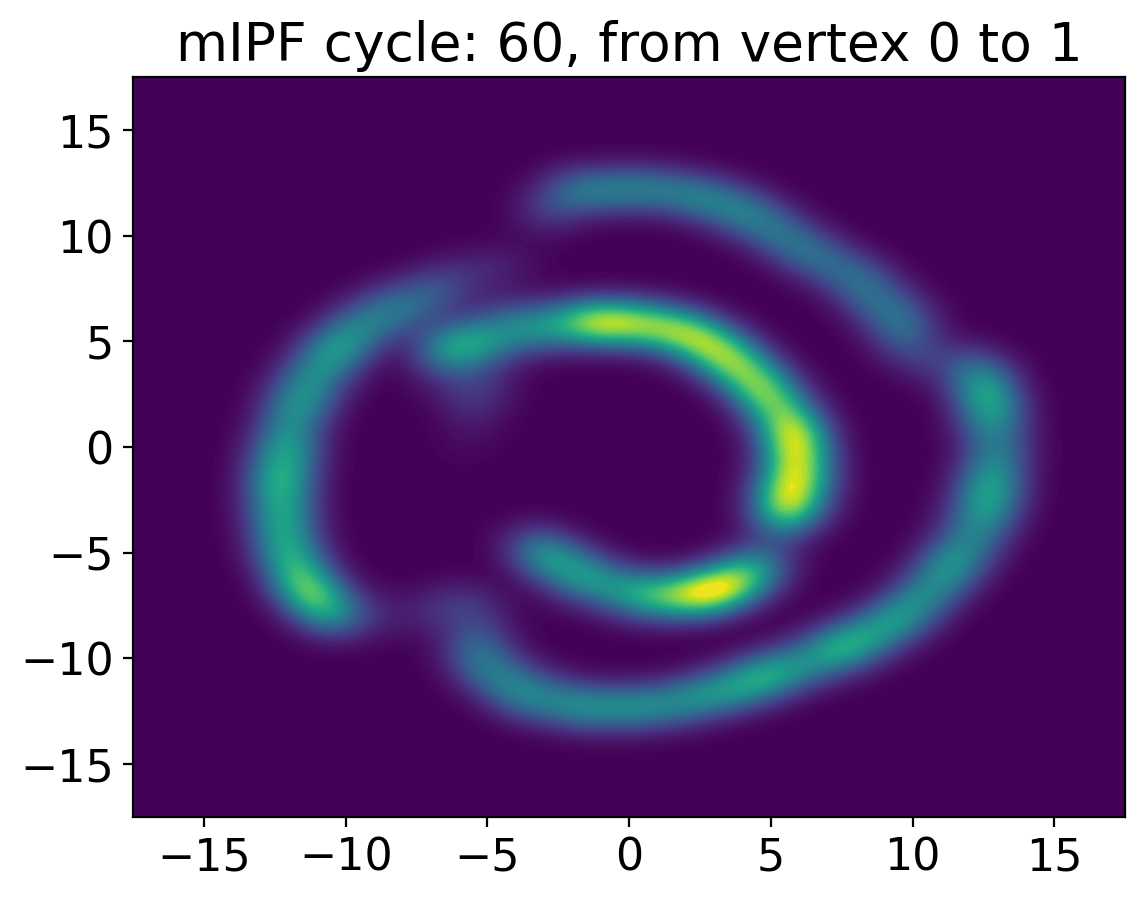} \hfill
  \includegraphics[width=.3\linewidth]{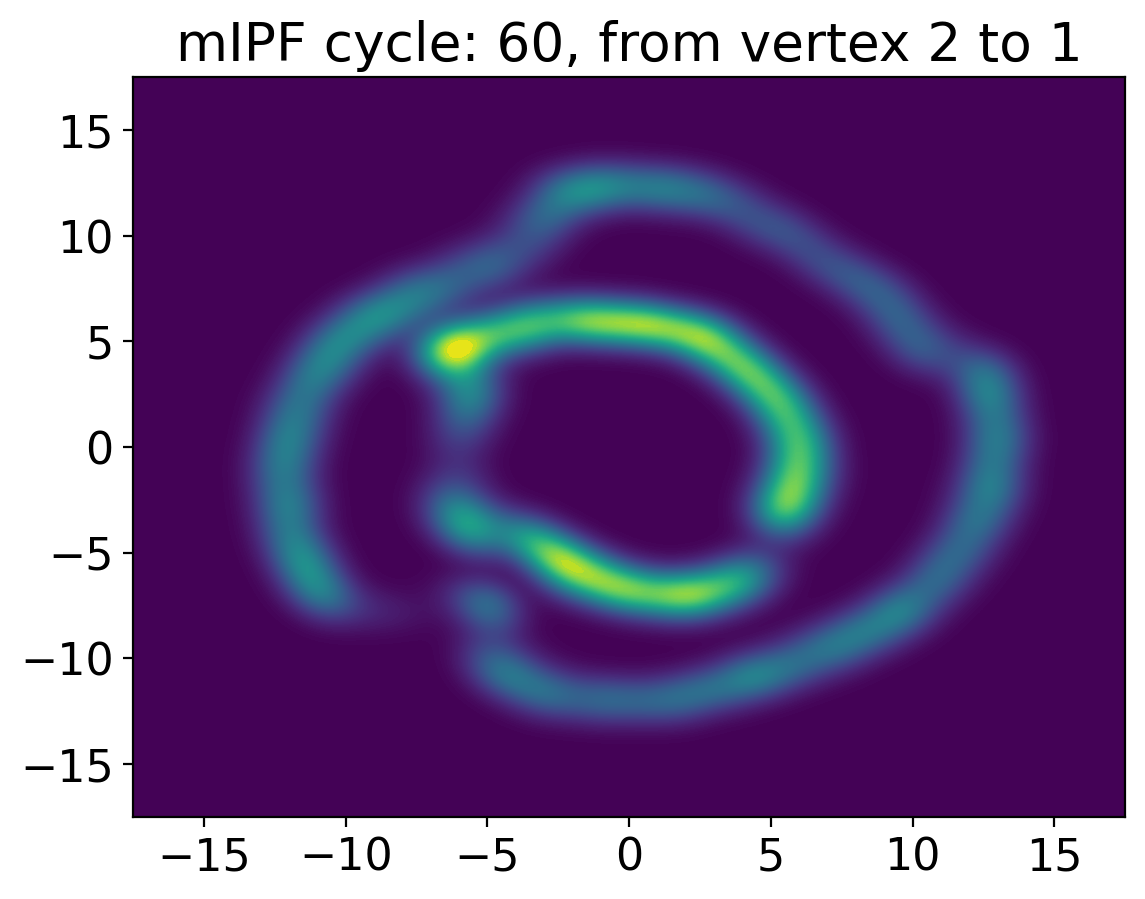} \hfill
  \includegraphics[width=.3\linewidth]{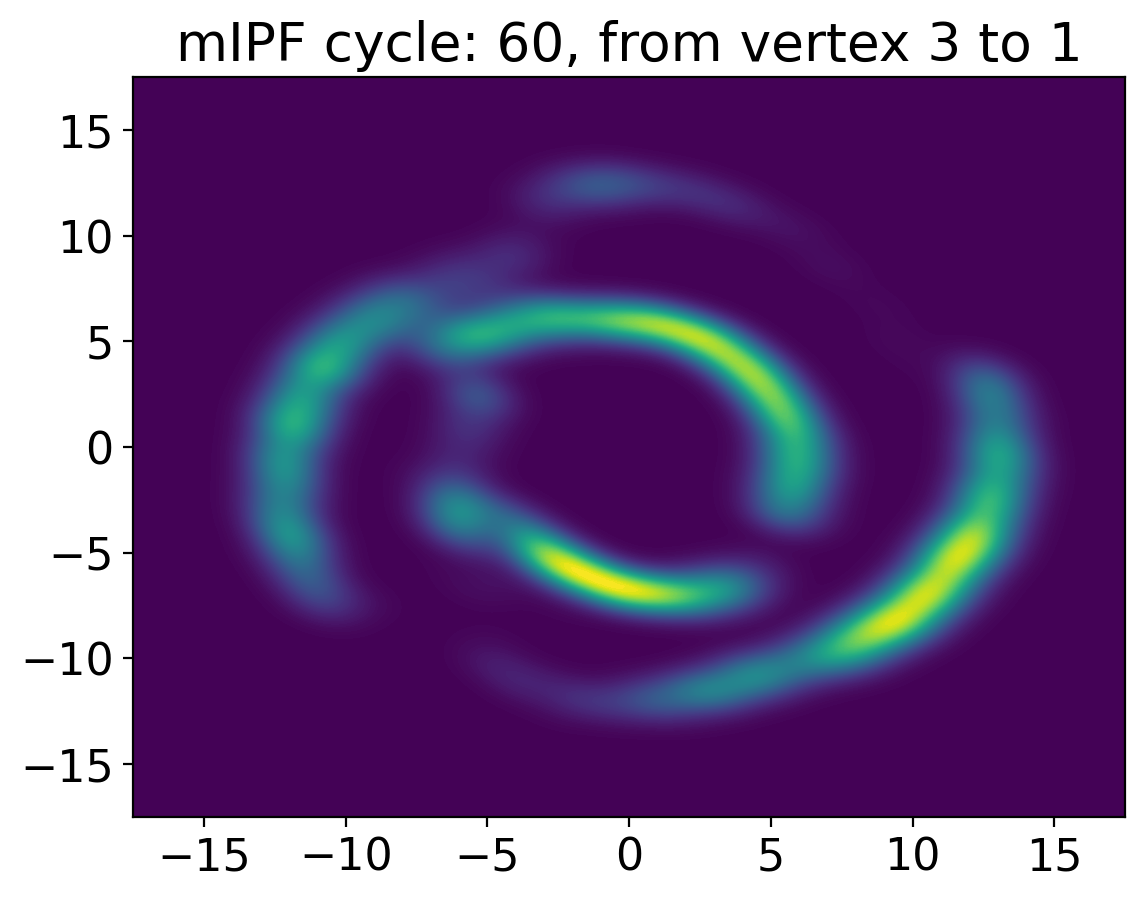}\\
  \includegraphics[width=.3\linewidth]{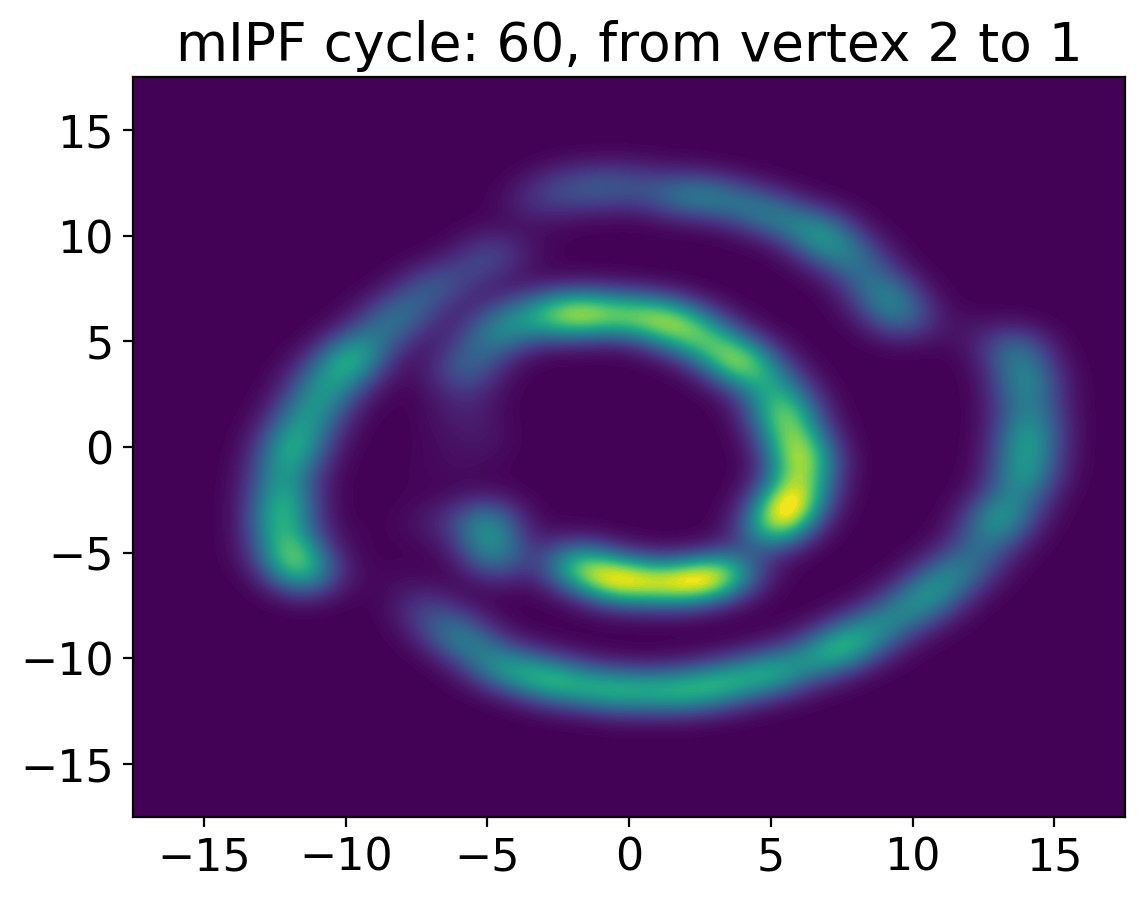} \hfill
  \includegraphics[width=.3\linewidth]{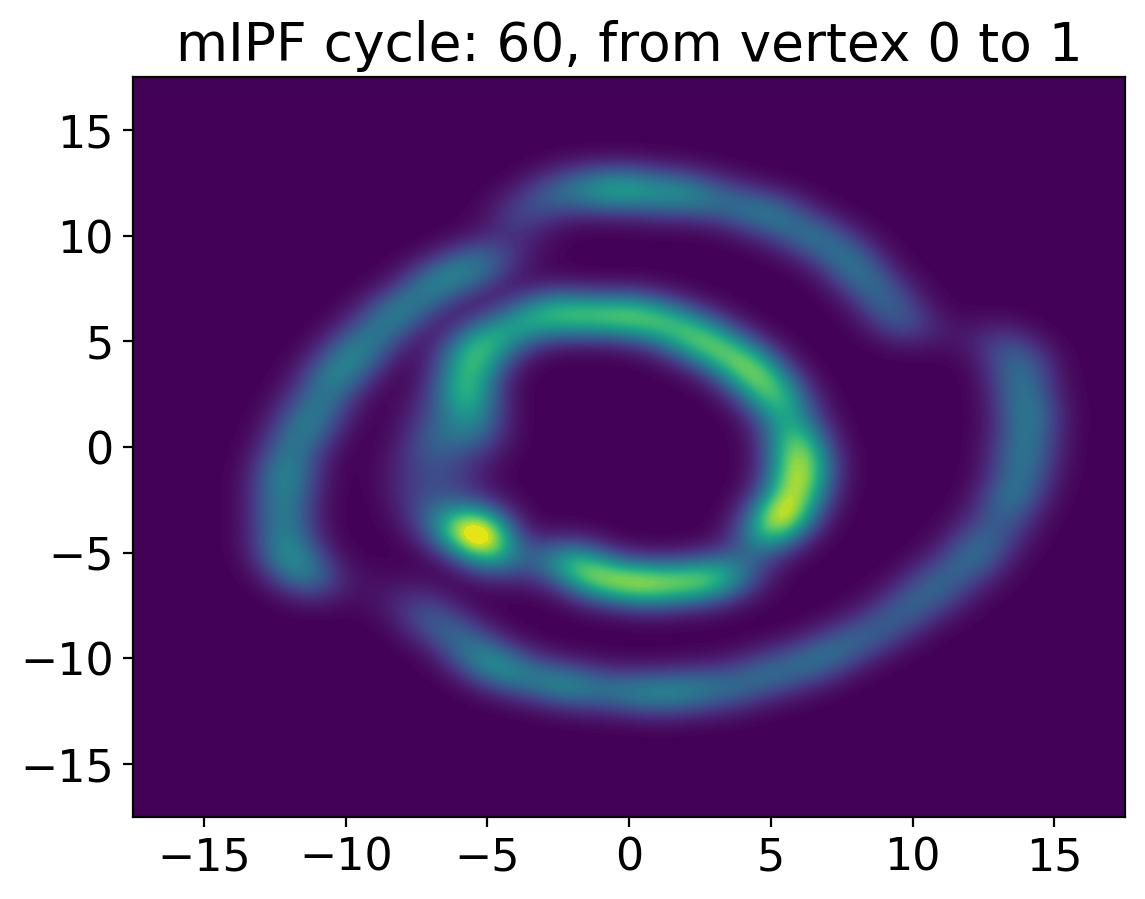} \hfill
  \includegraphics[width=.3\linewidth]{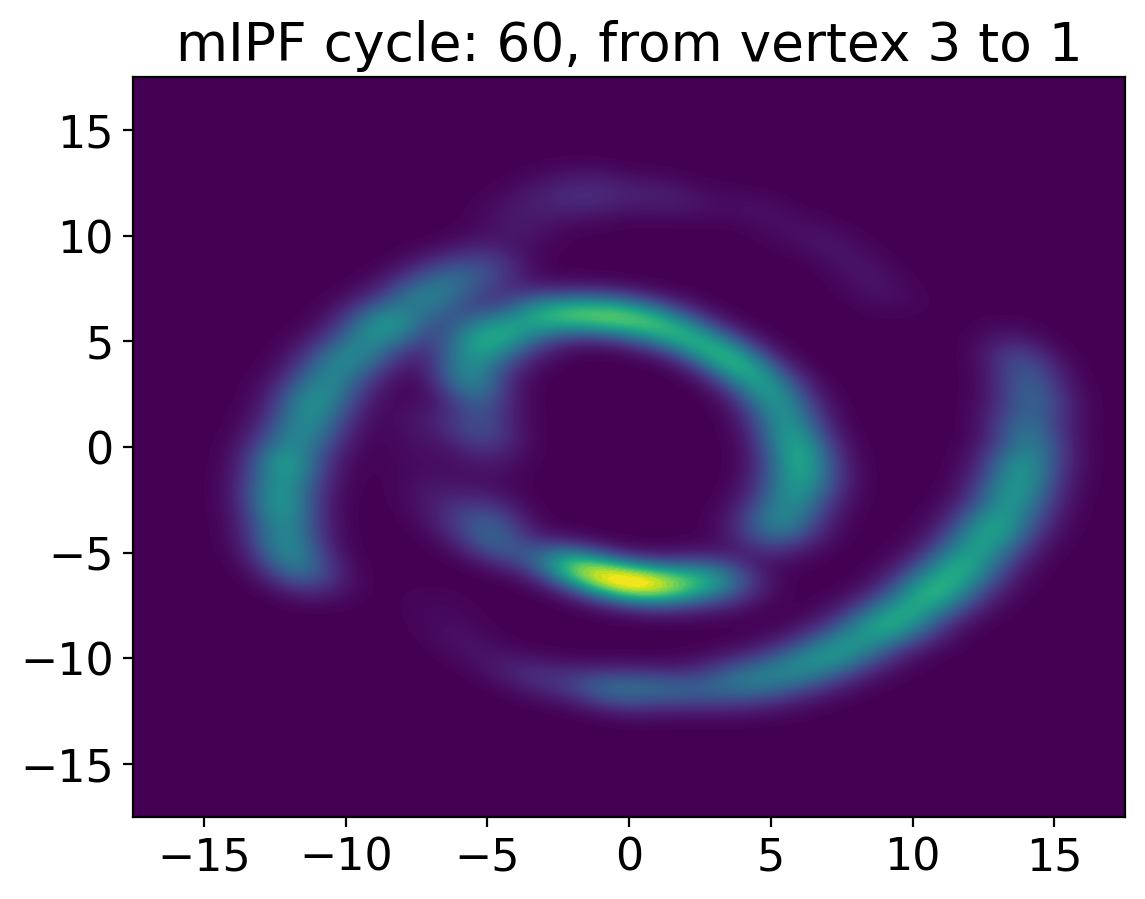}\\
  \includegraphics[width=.3\linewidth]{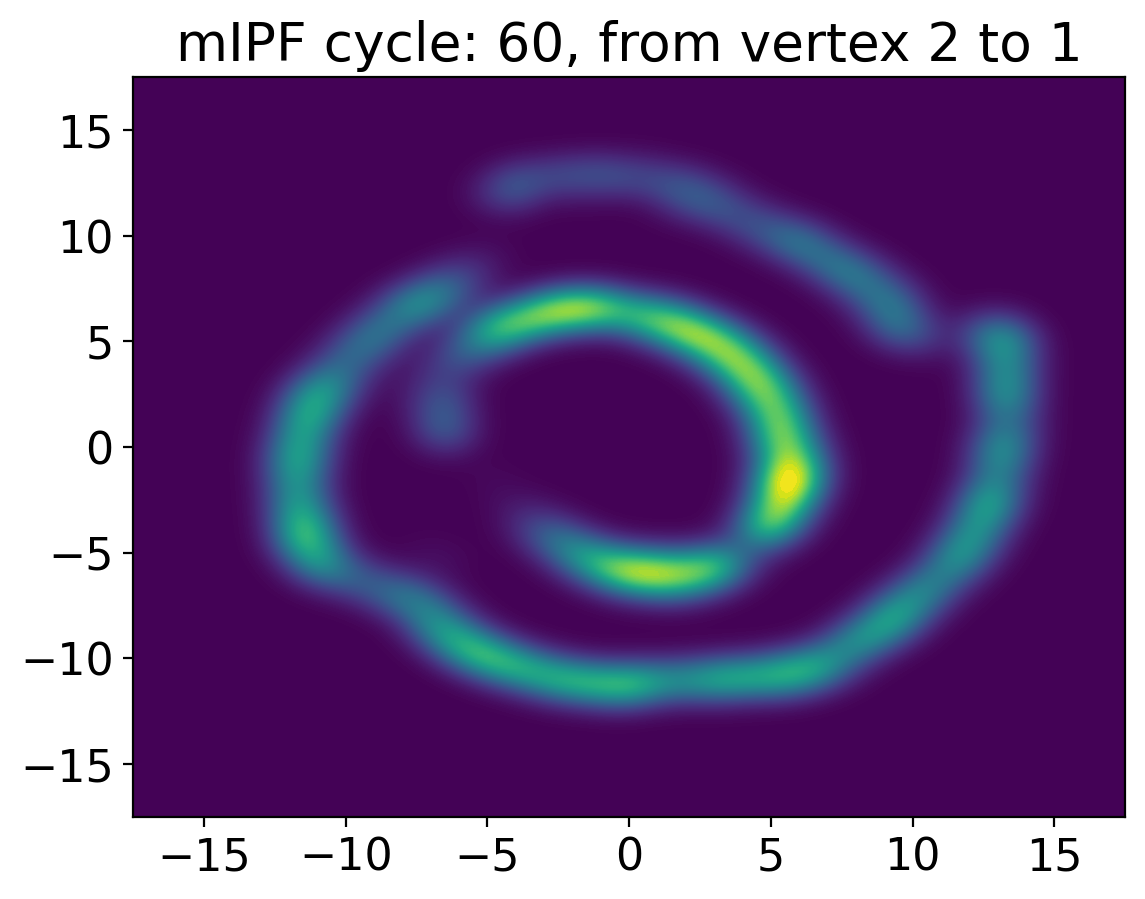} \hfill
  \includegraphics[width=.3\linewidth]{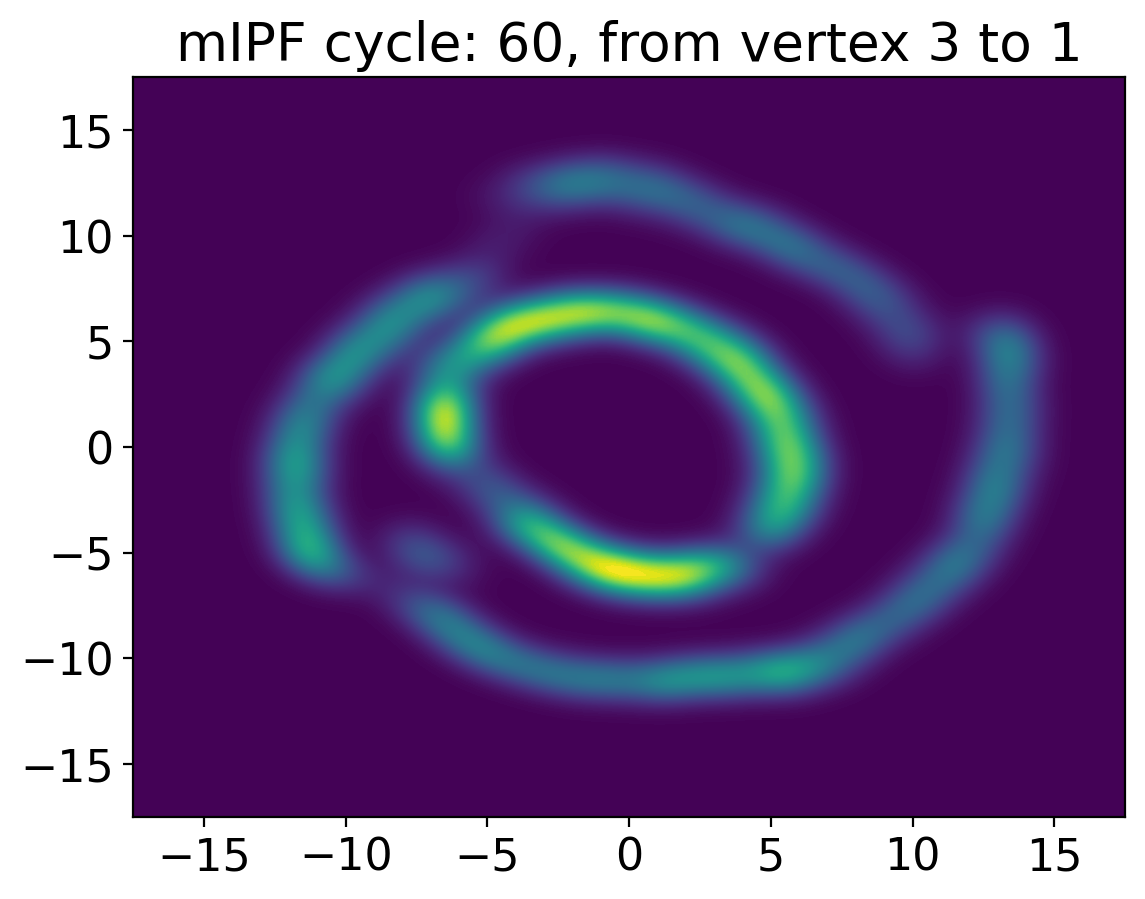} \hfill
  \includegraphics[width=.3\linewidth]{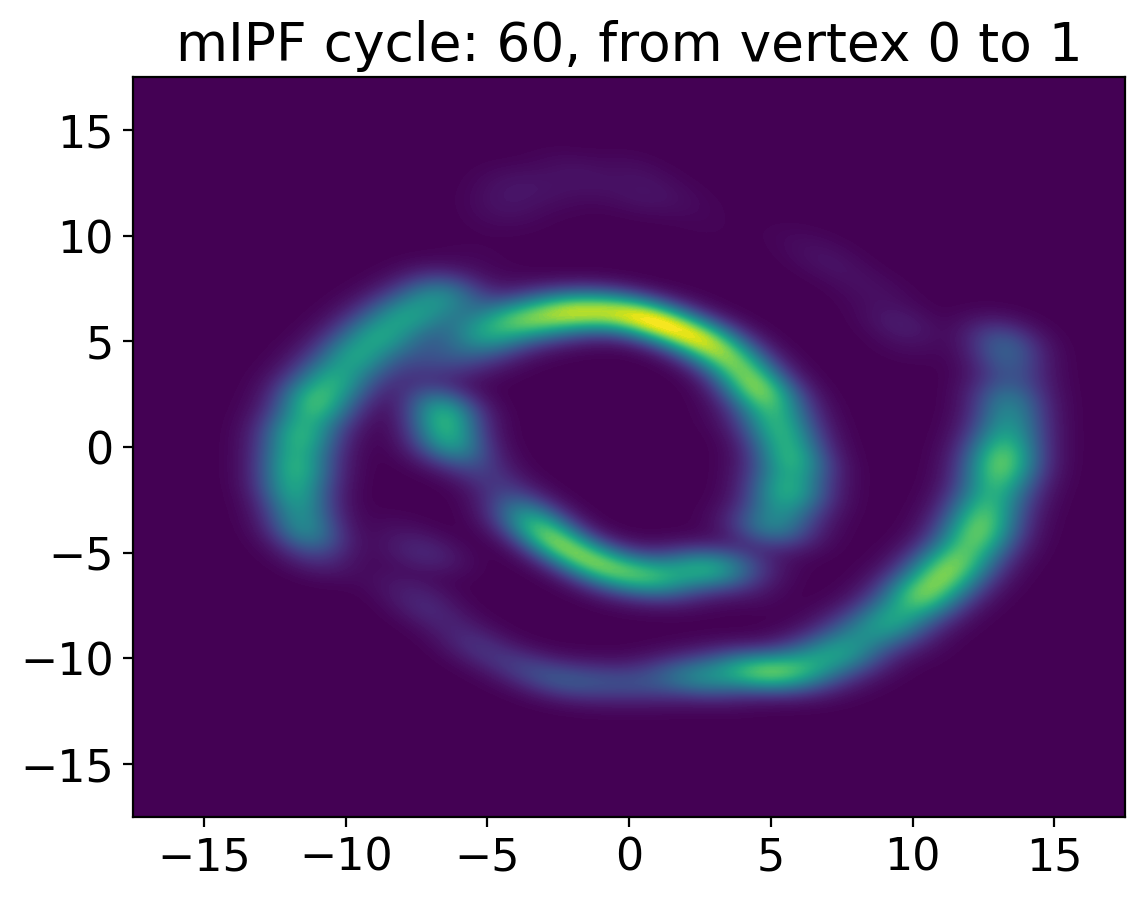}
  \caption{Estimated 2D barycenter obtained by TreeDSB with $\varepsilon=0.05$ (60 mIPF cycles). First row: starting from \emph{Swiss-roll}. Second row: starting from \emph{Circle}. Third row: starting from \emph{Moons}.}
  \label{fig:2d-epsilon=0.05}
\end{figure} 

For purpose of illustration, we provide in \Cref{fig:sota-2D} the barycenter obtained by state-of-the-art two-dimensional \emph{in-sample} methods that are available in POT library \citep{flamary2021pot}: (i) non-regularized free-support Wasserstein barycenter \citep{cuturi2014fast}, (ii) entropic-regularized free-support Wasserstein barycenter (fsWB) with $\varepsilon=0.5$ \citep{cuturi2014fast} and (iii) entropic-regularized convolutional Wasserstein barycenter with $\varepsilon=5. 10^{-4}$ \citep{solomon2015convolutional}, which is specifically designed for images. We notably observe that TreeDSB cannot capture the full complexity of the 2D barycenter compared to these methods. We infer that this gap comes from the \emph{dynamic} nature of TreeDSB, since increasing the number of training iterations per IPF iteration or improving the complexity of the neural networks did not bring any significant change in our results. Finally, we recall that the methods (i), (ii) and (iii) do not scale well with dimension, and have to be completely run again when new data samples are available, contrary to TreeDSB.

\begin{figure}[h!]
    \centering
    \includegraphics[width=.32\linewidth]{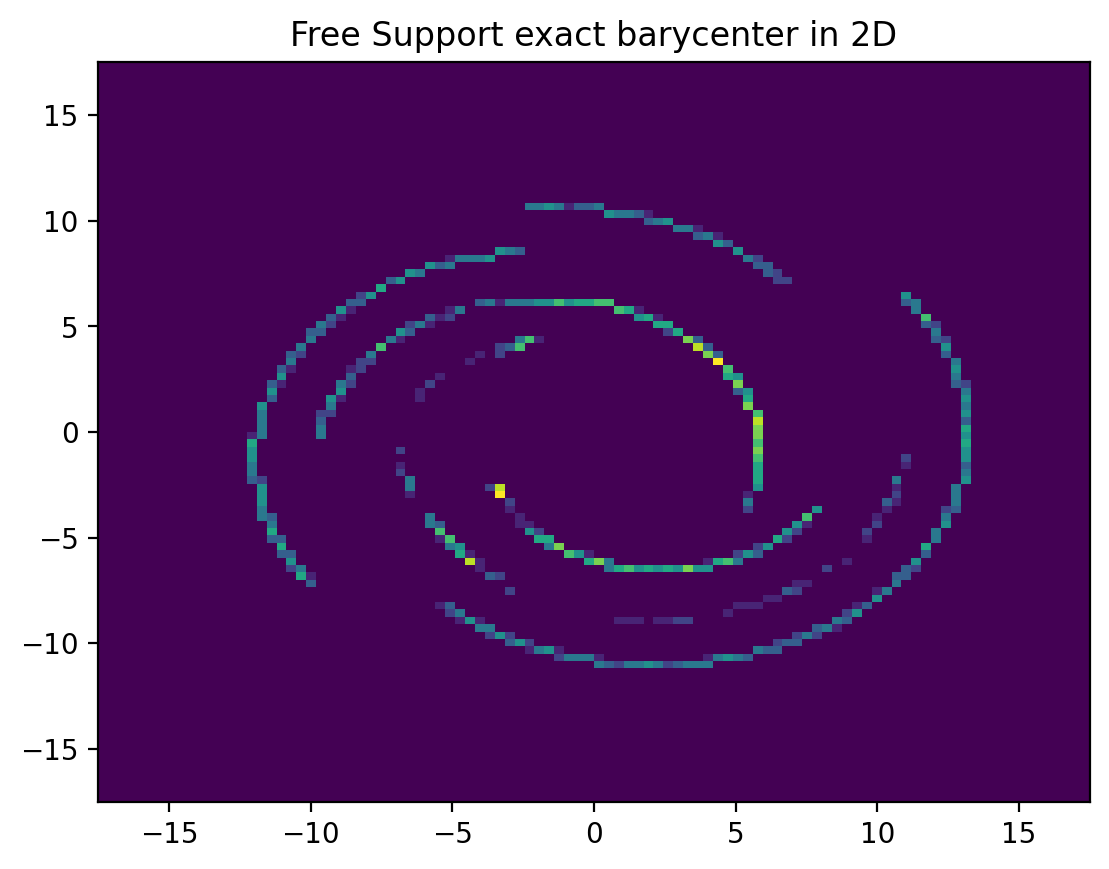}\hfill
    \includegraphics[width=.32\linewidth]{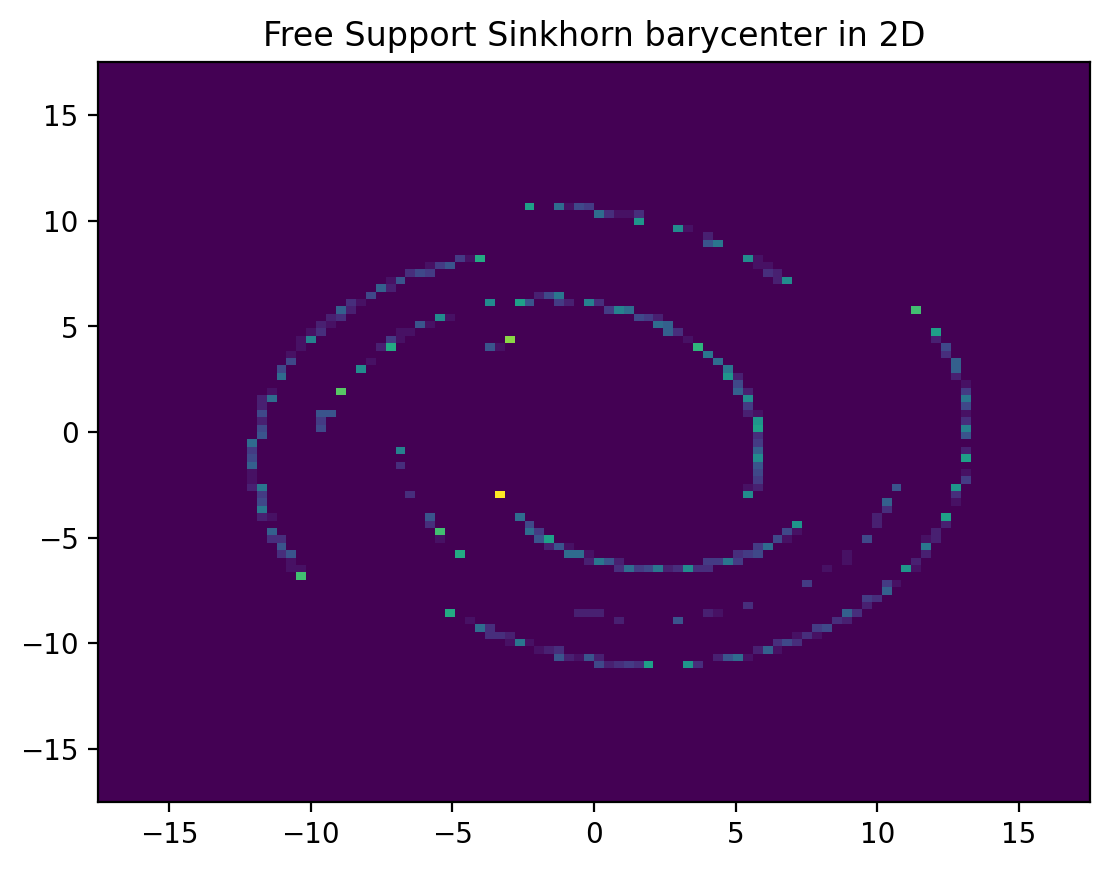}\hfill
    \includegraphics[width=.32\linewidth]{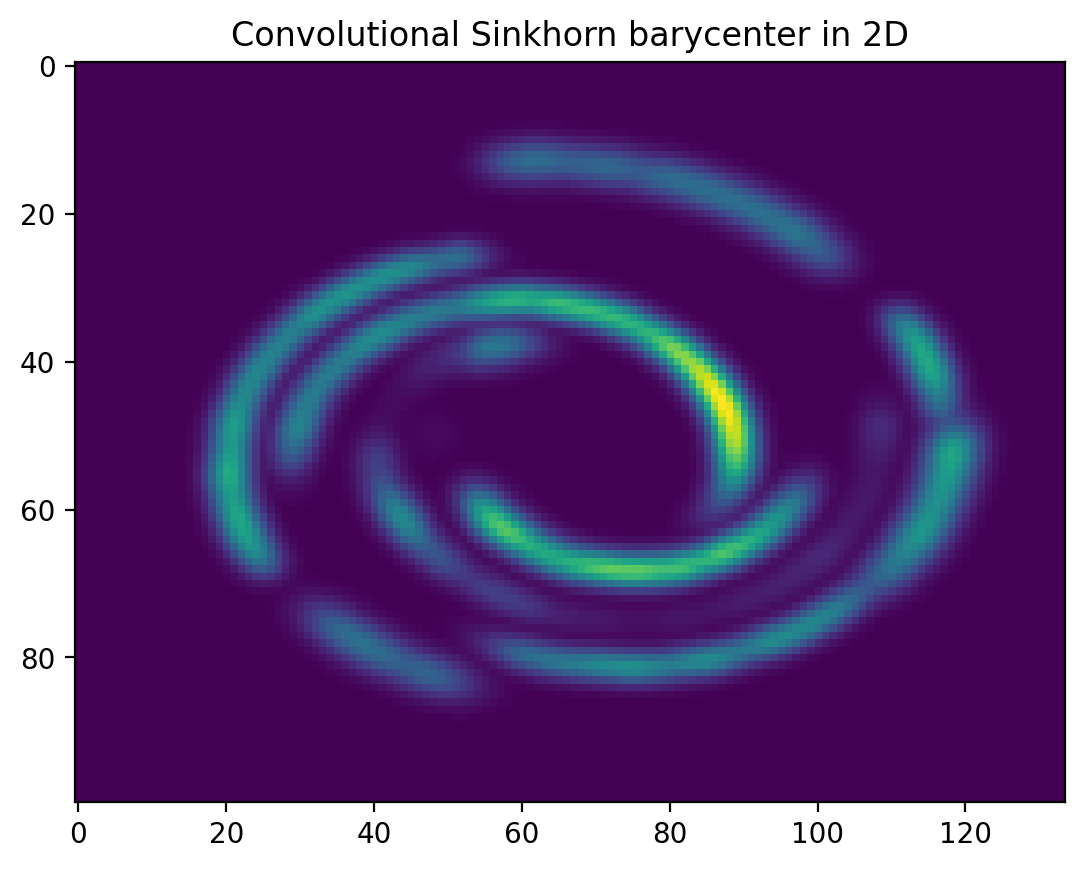}
    \caption{Estimated 2D barycenter obtained by \emph{in-sample} algorithms. From left to right: \cite{cuturi2014fast} (non-regularized), \cite{cuturi2014fast} (regularized), \cite{solomon2015convolutional}.}
    \label{fig:sota-2D}
\end{figure}

\newpage
\paragraph{MNIST Wasserstein barycenter.} This setting can be qualified as \emph{high-dimensional}, since the data dimension is $d=784$. Here, each digit dataset contains 1,000 samples. As in the two-dimensional setting, we do not have an \textit{a priori} on the shape of the barycenter between MNIST digits, and thus consider the formulation \eqref{eq:wasserstein_barycenter_pb}, where the root $r$ is chosen as a leaf. We propose below several experiments to assess the scalability of TreeDSB to this setting.

\vspace{-0.2cm}
 \paragraph{Digits 0 and 1.} In \Cref{fig:mnist 0-1}, we report the results obtained by running TreeDSB on MNIST digits 0 and 1, for 15 mIPF cycles with $\varepsilon=0.5$, starting from the leaf MNIST-0. We display 25 samples from the reconstructed MNIST-0 marginal (first column), from the reconstructed MNIST-1 marginal (fourth column), from the estimated barycenter by diffusing from MNIST-0 (second column) and diffusing from MNIST-1 (third column). We notably observe that the digits are well recovered and that the barycenter samples are consistent. We draw the reader's attention to the fact that TreeDSB showed numerical unstability with a regularization value $\varepsilon$ lower than $0.5$. For purpose of illustration, we display in \Cref{fig:wasserstein_barycenter} the Wasserstein barycenter obtained by \emph{non-regularized} methods from \cite{fan2020scalable} and \cite{korotin2021continuous}, and by the \emph{regularized} approach from \cite{li2020continuous}.
 \begin{figure}[h!]
  \centering
  \includegraphics[width=.2\linewidth]{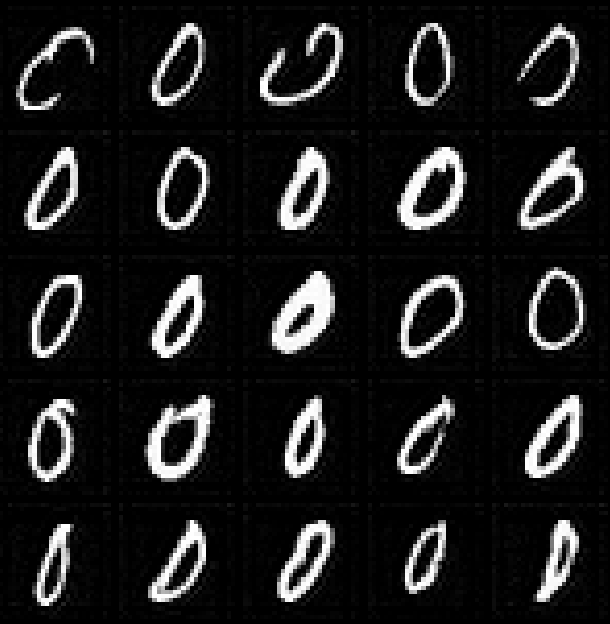} \hfill
  \includegraphics[width=.2\linewidth]{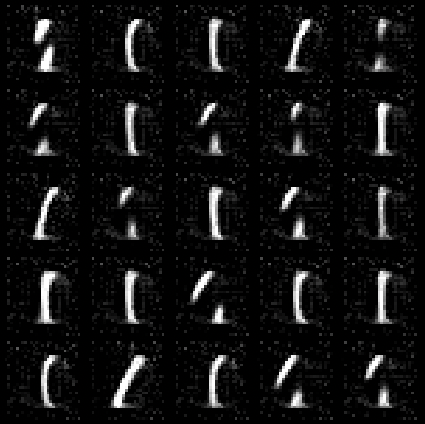} \hfill
  \includegraphics[width=.2\linewidth]{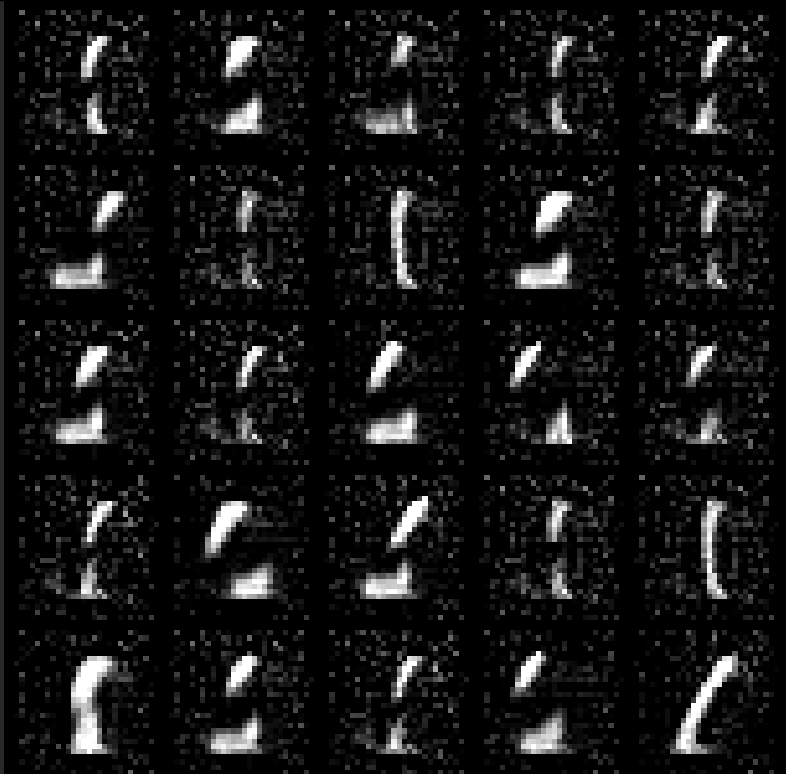} \hfill
  \includegraphics[width=.2\linewidth]{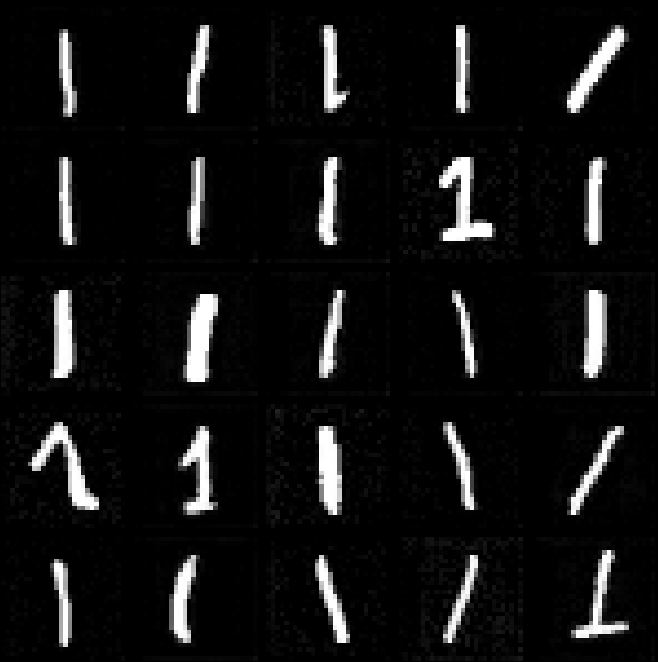}
   %\vspace{0.2cm}
  %\includegraphics[width=.2\linewidth]{} \hfill
  %\includegraphics[width=.2\linewidth]{img_mnist/MNIST-0-1/from-1/source=1,dest=2/backward.png} \hfill
  %\includegraphics[width=.2\linewidth]{} \hfill
  %\includegraphics[width=.2\linewidth]{}
  \caption{Tree DSB results for MNIST digits 0 and 1, after 15 mIPF cycles with $\varepsilon=0.5$.}
  %Second row: starting from MNIST-1.}
  \label{fig:mnist 0-1}
\end{figure}
\vspace{-0.2cm}
\begin{figure}[h!]
  \centering
  \includegraphics[width=.2\linewidth]{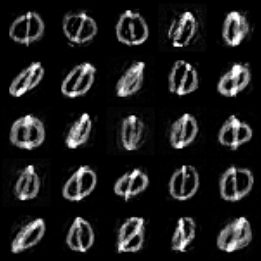} \hfill
  \includegraphics[width=.2\linewidth]{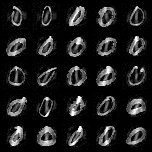} \hfill 
  \includegraphics[width=.2\linewidth]{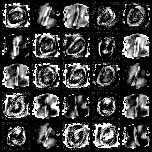} \hfill
  \caption{From left to right: \cite{fan2020scalable}, \cite{korotin2021continuous} and \cite{li2020continuous}.}
  \label{fig:wasserstein_barycenter}
\end{figure}
\vspace{-0.2cm}
\paragraph{Digits 2,4 and 6.}  In \Cref{fig:mnist 2-4-6 a}, we report the results obtained by running TreeDSB on MNIST digits 2,4 and 6, for 10 mIPF cycles with $\varepsilon=0.5$. Here, we consider three settings which differ by the starting leaf $r$ in the algorithm: MNIST-2 (first row), MNIST-4 (second row), or MNIST-6 (third row). For each of these settings, we display 30 samples from the estimated barycenter by diffusing from MNIST-2 (first column), diffusing from MNIST-4 (third column) and diffusing from MNIST-6 (third column). We notably observe a global consistency of the barycenter samples across the various settings. In \Cref{fig:mnist 2-4-6 b}, we report the results obtained by running TreeDSB on MNIST digits 2,4 and 6, for 10 mIPF cycles with $\varepsilon=0.2$, starting from MNIST-6. We display 30 samples from the reconstructed marginals (first row), from the estimated barycenter (second row) by diffusing from MNIST-2 (first column), diffusing from MNIST-4 (second column) and diffusing from MNIST-6 (third column). As expected, we observe less noisy barycenter samples compared to  \Cref{fig:mnist 2-4-6 a}, while still well recovering MNIST digits.

\vspace{-0.2cm}
\paragraph{Digits 0,1 and 4.} In \Cref{fig:mnist 0-1-4 a}, we report the results obtained by running TreeDSB on MNIST digits 0,1 and 4, for 10 mIPF cycles with $\varepsilon=0.5$. We consider two settings which differ by the starting leaf $r$ in the algorithm: MNIST-0 (second row) and MNIST-1 (first/third rows), for which we display samples from the reconstructed measures (first row). In \Cref{fig:mnist 0-1-4 b}, we report the results obtained by running TreeDSB on MNIST digits 0,1 and 4, for 10 mIPF cycles with $\varepsilon=0.2$. We consider two settings which differ by the starting leaf $r$ in the algorithm: MNIST-0 (first/second row), for which display samples from the reconstructed measures (first row), and MNIST-1 (third row). For all of these settings, we display 30 samples from the estimated barycenter by diffusing from MNIST-0 (first column), diffusing from MNIST-1 (third column) and diffusing from MNIST-4 (third column). Similarly to the digits 2-4-6, we observe consistency within the barycenter samples, unconditionally to the starting leaf, and less noise as $\varepsilon$ decreases. Note that the reconstructed MNIST digits are less truthful to the original datasets when $\varepsilon$ is low.

\begin{figure}[h!]
    \centering
    \includegraphics[width=.32\linewidth]{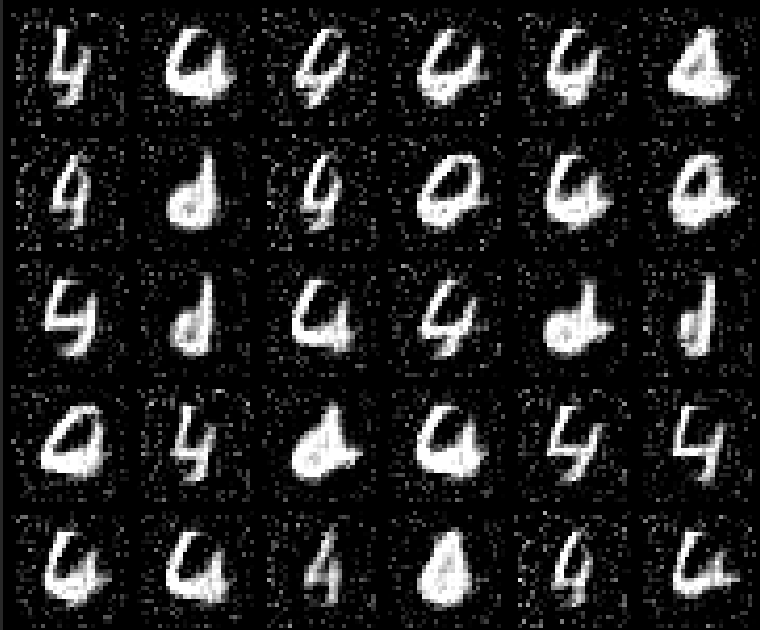}\hfill
    \includegraphics[width=.32\linewidth]{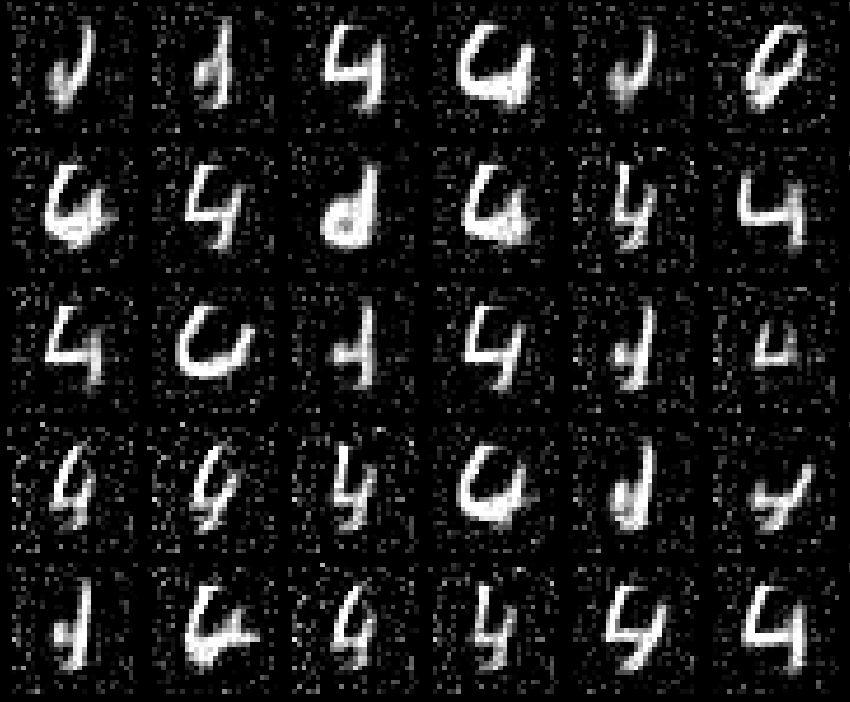}\hfill
    \includegraphics[width=.32\linewidth]{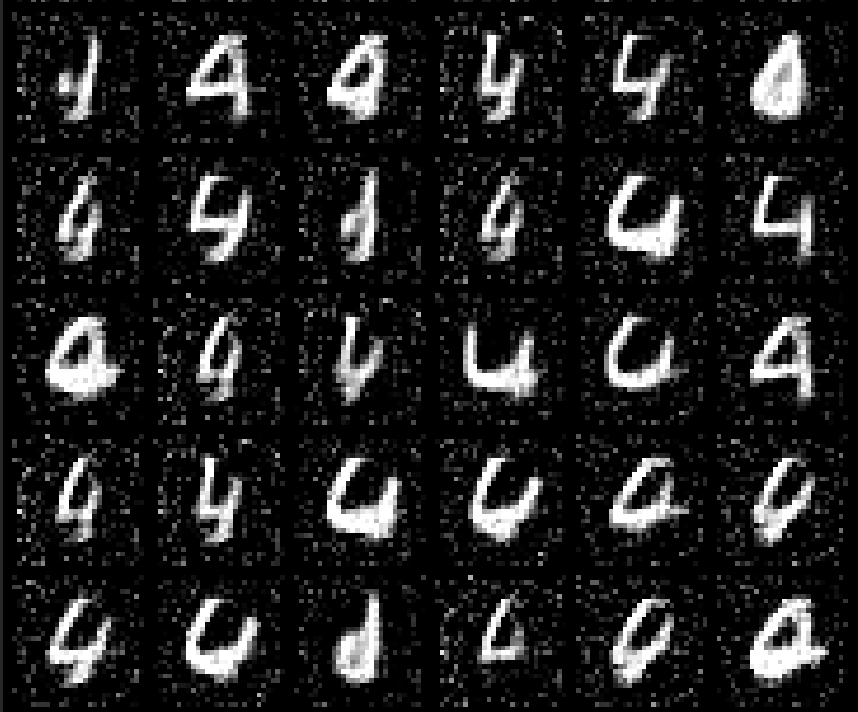}\\
    \vspace{0.2cm}
    \includegraphics[width=.32\linewidth]{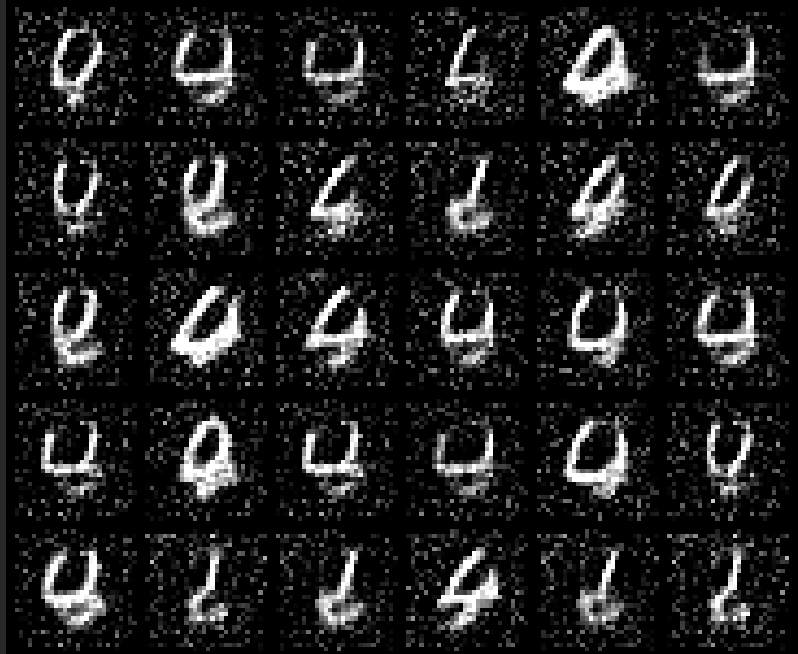}\hfill
    \includegraphics[width=.32\linewidth]{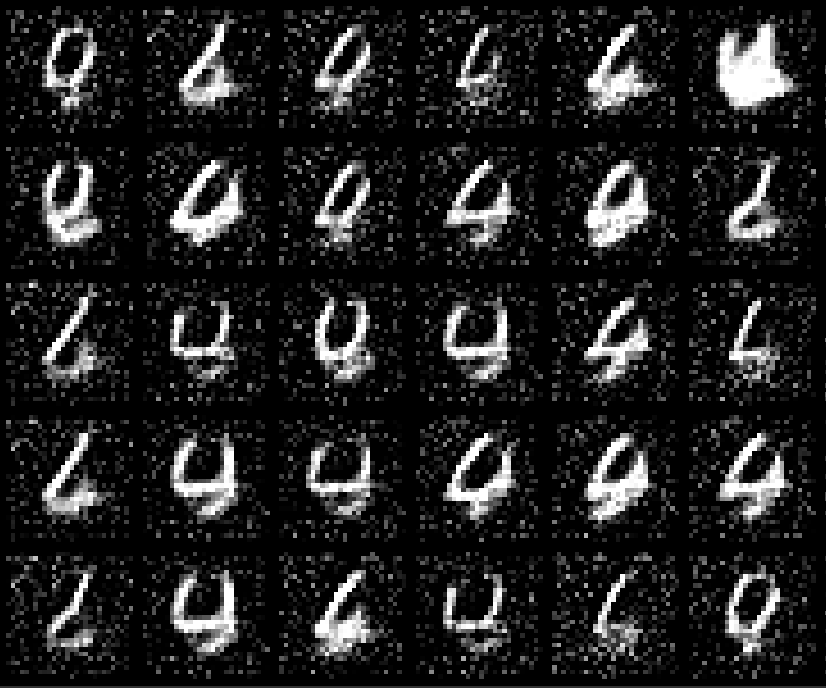}\hfill
    \includegraphics[width=.32\linewidth]{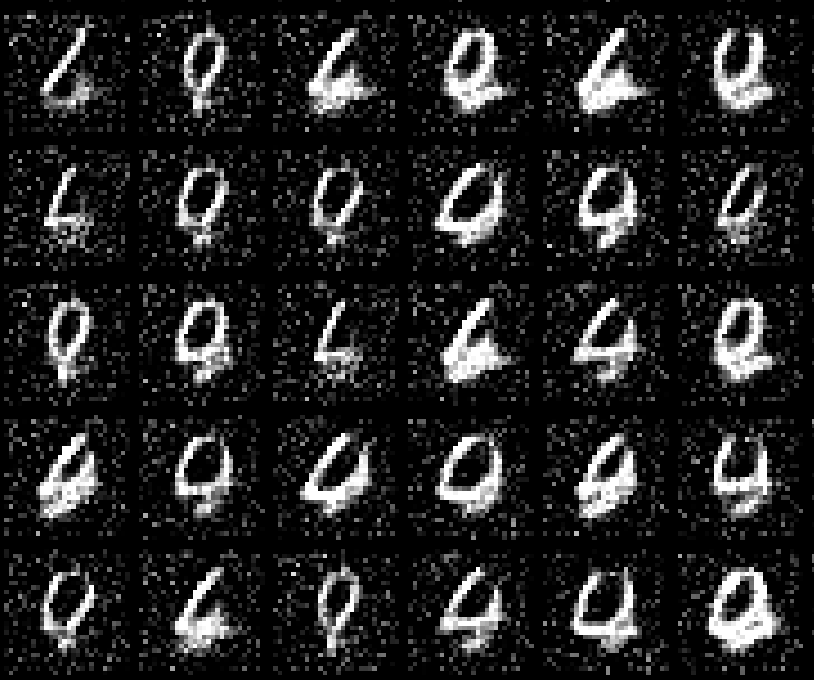}\\
    \vspace{0.2cm}
    \includegraphics[width=.32\linewidth]{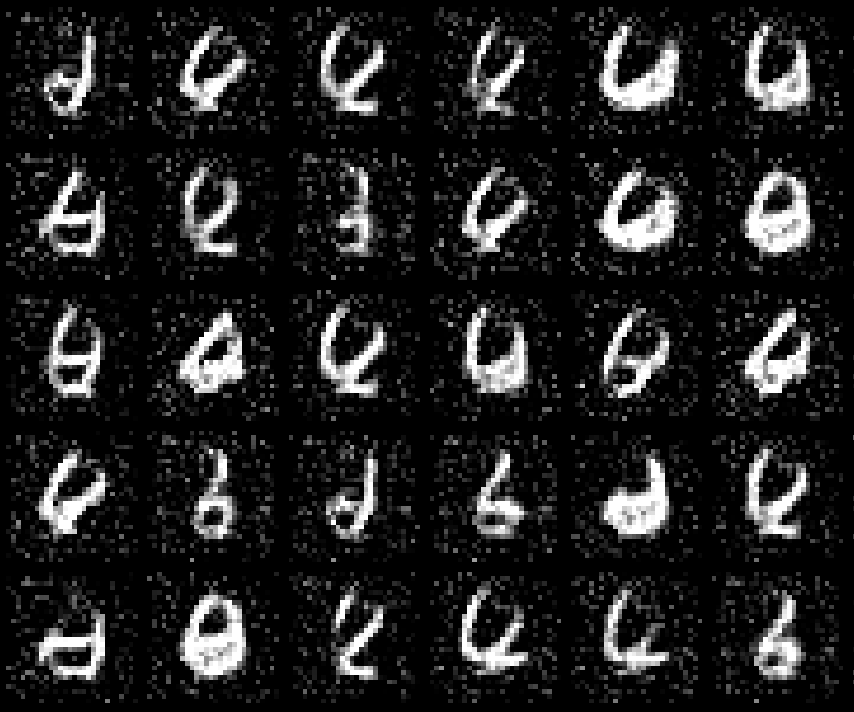}\hfill
    \includegraphics[width=.32\linewidth]{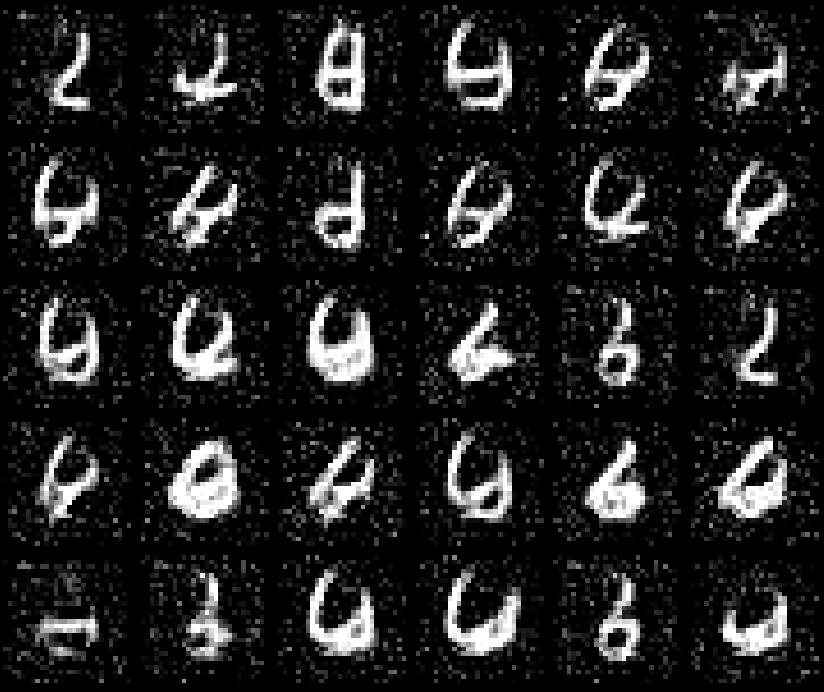}\hfill
    \includegraphics[width=.32\linewidth]{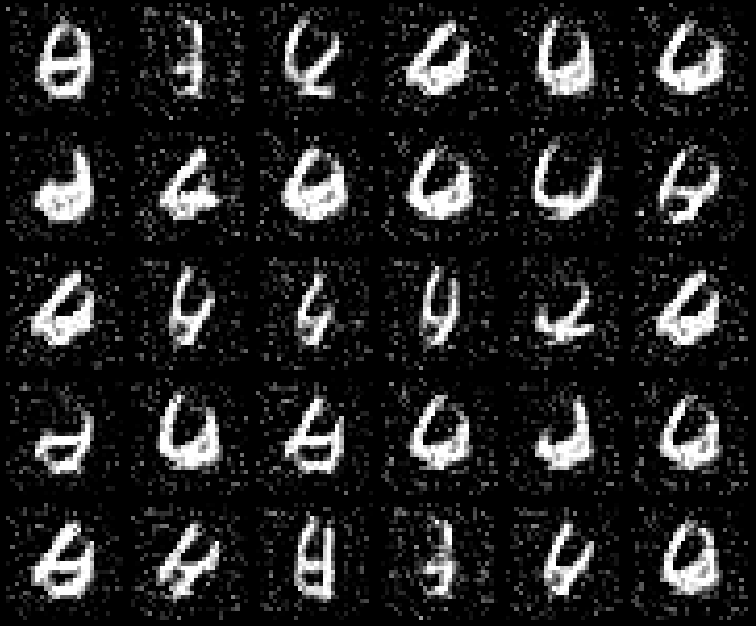}
    \caption{Tree DSB results for MNIST digits 2,4 and 6, after 10 mIPF cycles with $\varepsilon=0.5$. First row: starting from MNIST-2. Second row: starting from MNIST-4. Third row: starting from MNIST-6. }
    \label{fig:mnist 2-4-6 a}
\end{figure}

\begin{figure}[h!]
    \centering
    \includegraphics[width=.32\linewidth]{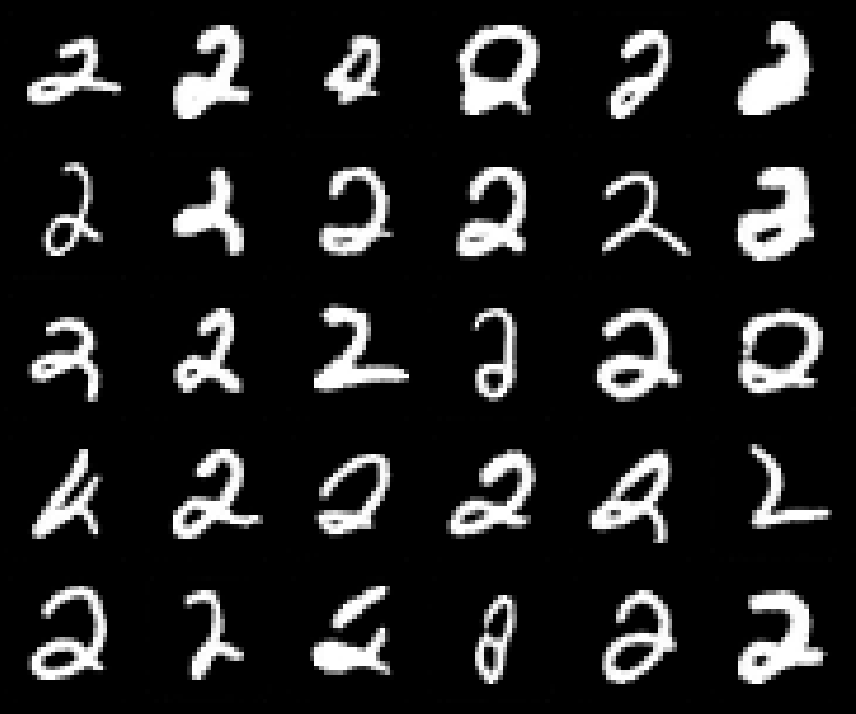}\hfill
    \includegraphics[width=.32\linewidth]{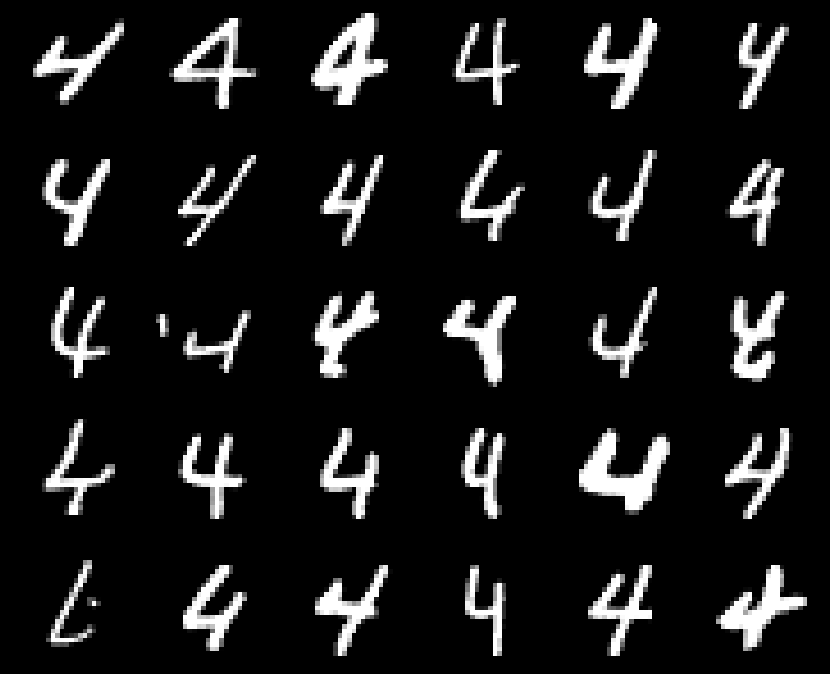}\hfill
    \includegraphics[width=.32\linewidth]{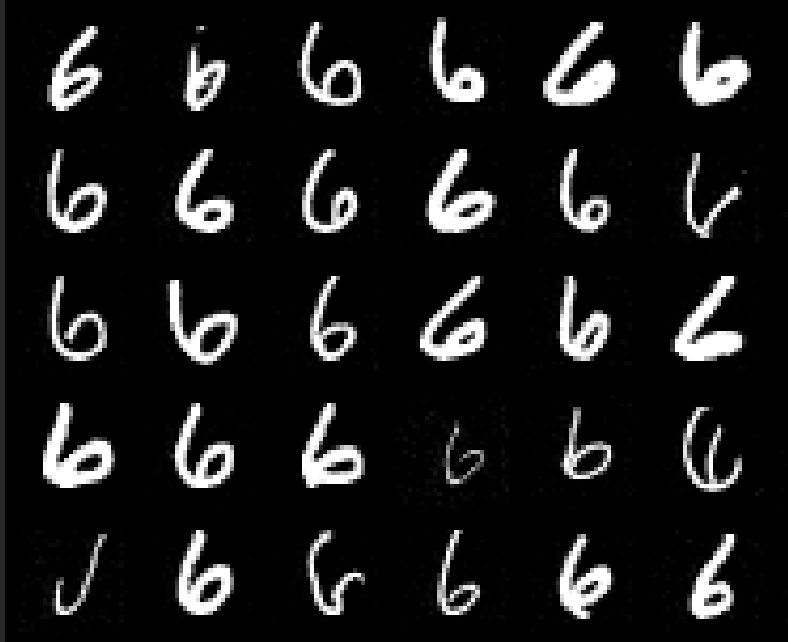}\\
    \vspace{0.2cm}
    \includegraphics[width=.32\linewidth]{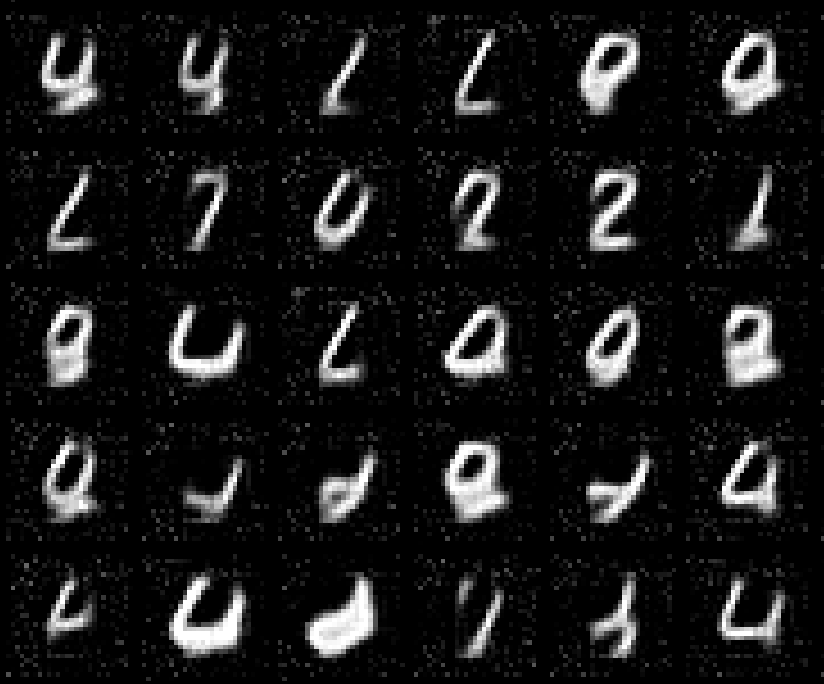}\hfill
    \includegraphics[width=.32\linewidth]{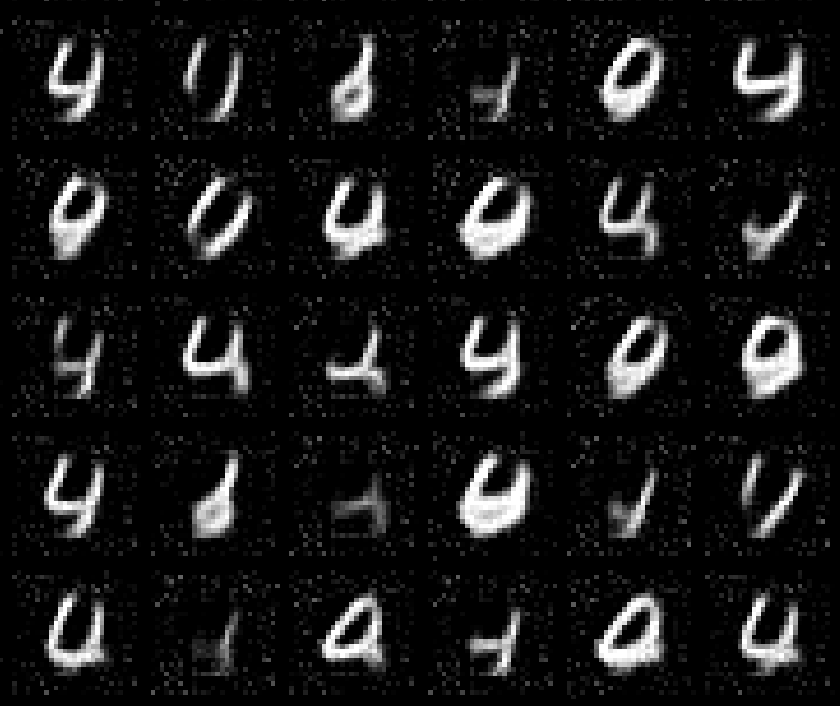}\hfill
    \includegraphics[width=.32\linewidth]{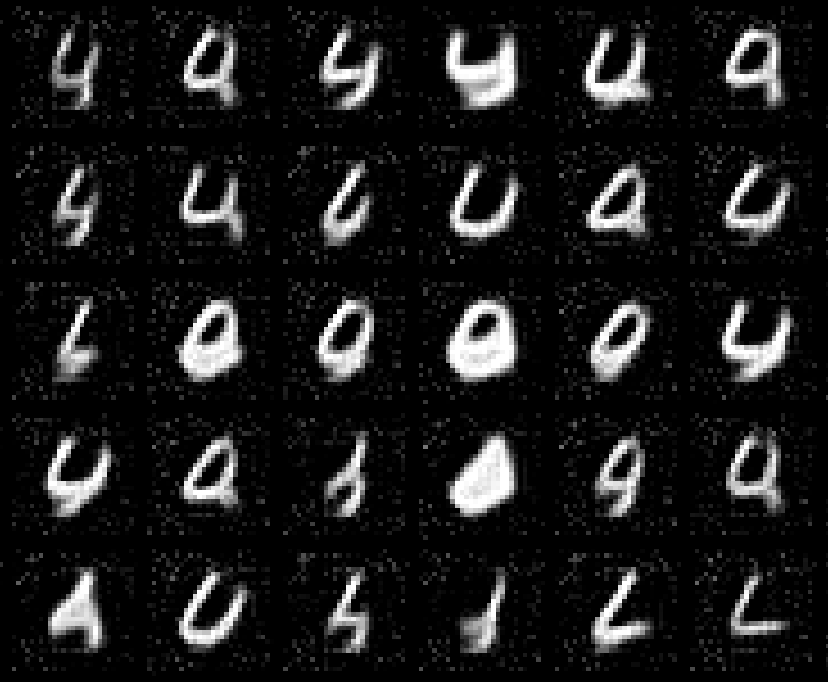}\\
    \caption{Tree DSB results for MNIST digits 2,4 and 6, after 10 mIPF cycles with $\varepsilon=0.2$, starting from MNIST-6. First row: samples from the reconstructed marginals. Second row: samples from the estimated barycenter. }
    \label{fig:mnist 2-4-6 b}
\end{figure}

\begin{figure}[h!]
    \centering
    \includegraphics[width=.32\linewidth]{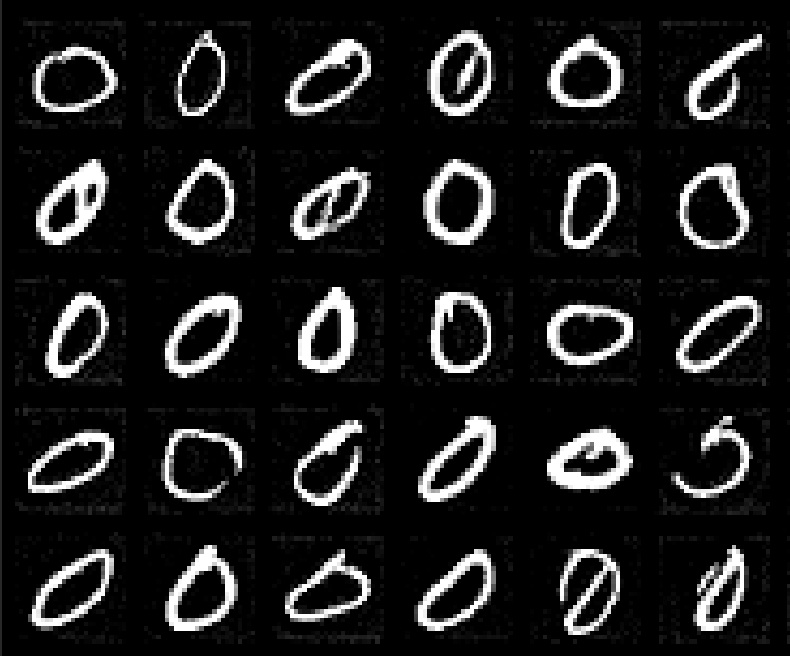}\hfill
    \includegraphics[width=.32\linewidth]{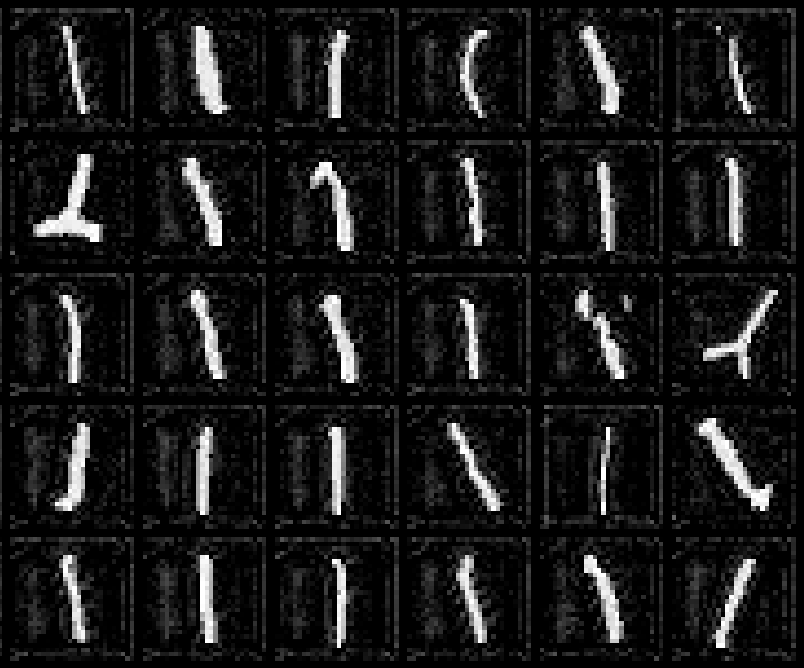}\hfill
    \includegraphics[width=.32\linewidth]{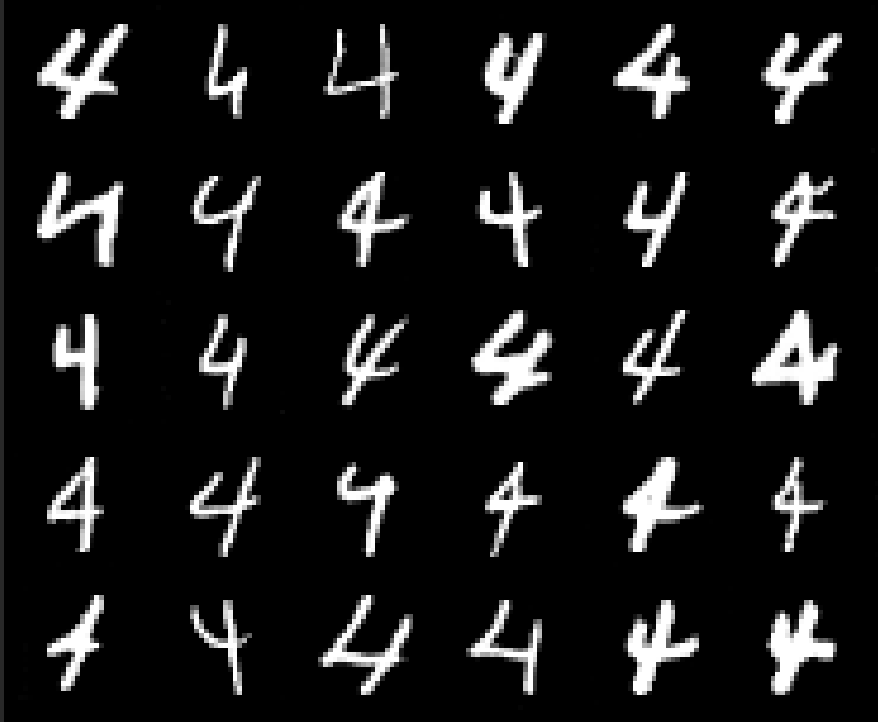}\\
    \vspace{0.2cm}
    \includegraphics[width=.32\linewidth]{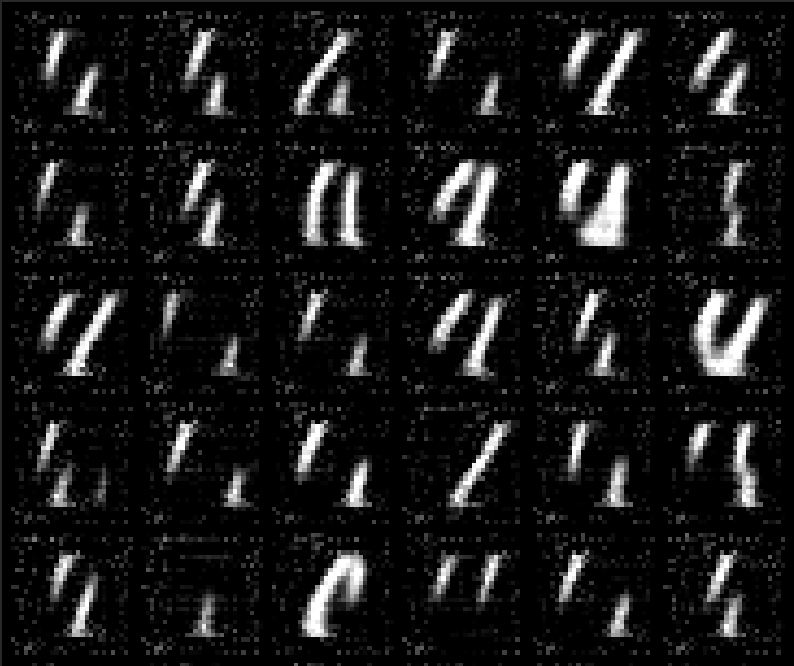}\hfill
    \includegraphics[width=.32\linewidth]{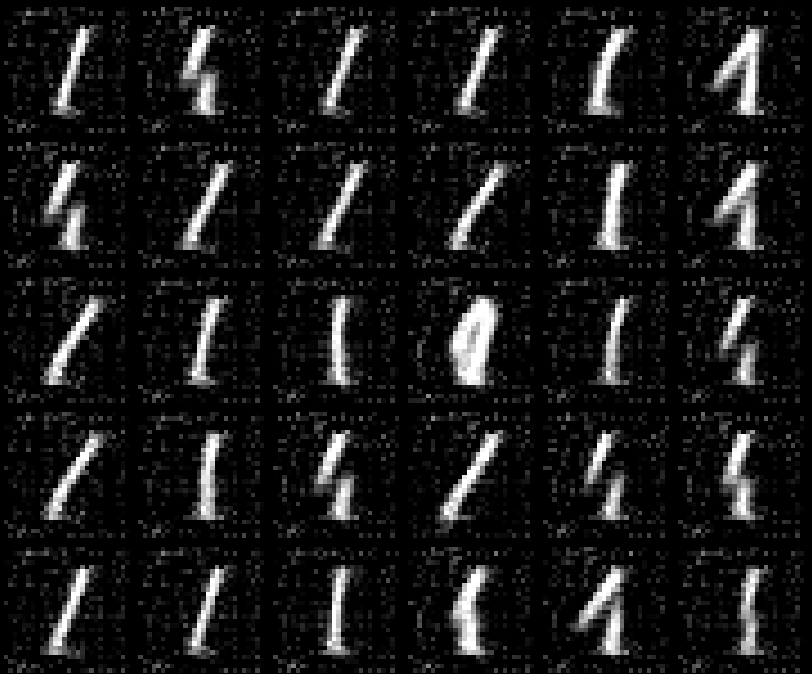}\hfill
    \includegraphics[width=.32\linewidth]{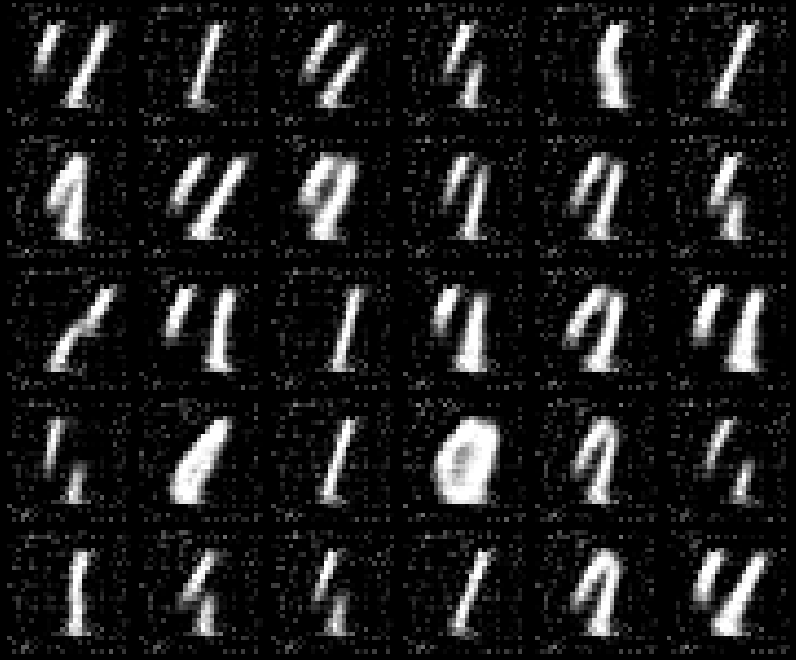}\\
    \vspace{0.2cm}
    \includegraphics[width=.32\linewidth]{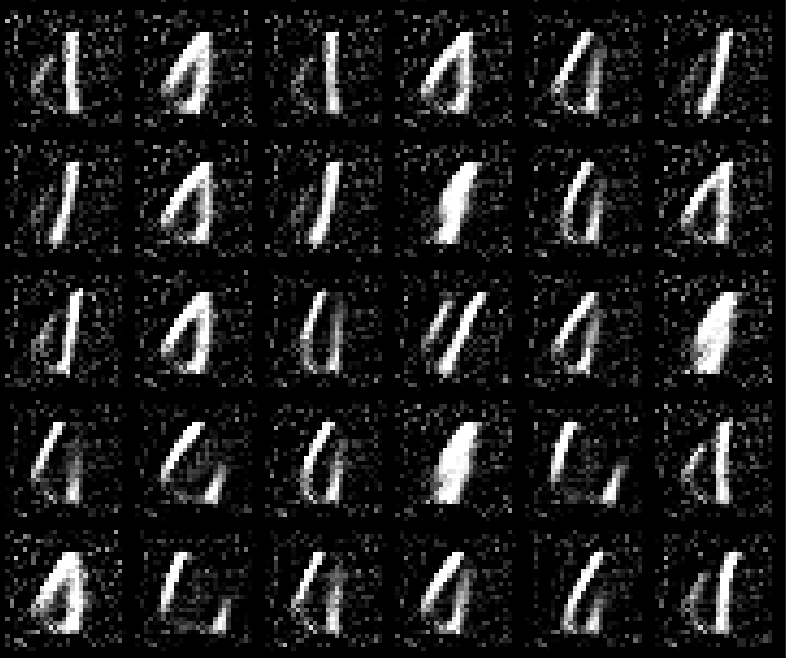}\hfill
    \includegraphics[width=.32\linewidth]{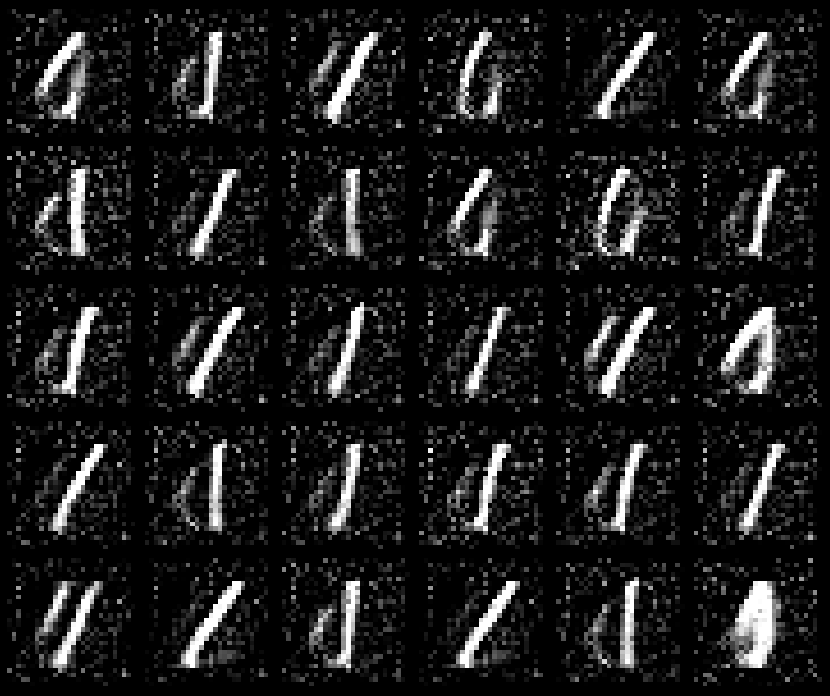}\hfill
    \includegraphics[width=.32\linewidth]{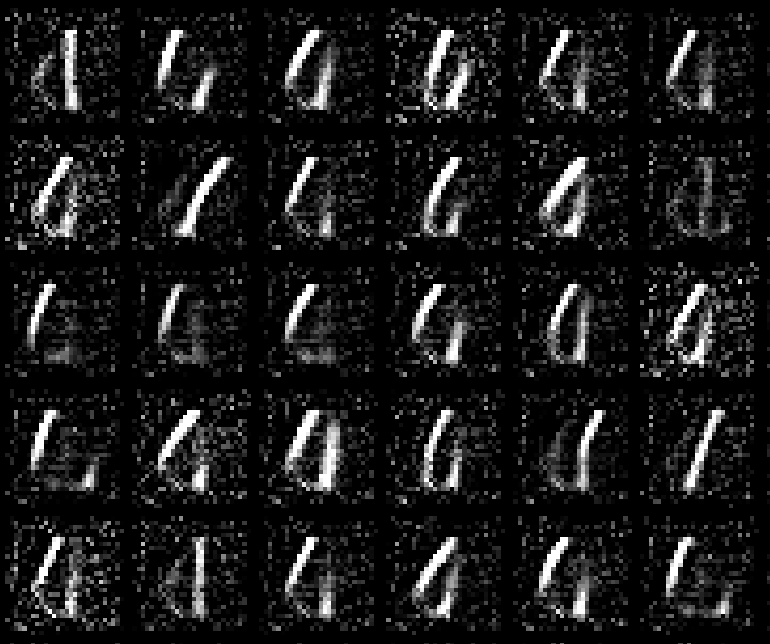}\\
    \caption{Tree DSB results for MNIST digits 0,1 and 4, after 10 mIPF cycles with $\varepsilon=0.5$. First row: samples from the reconstructed marginals, starting from MNIST-1. Second row: samples from the estimated barycenter, starting from MNIST-0. Third row: samples from the estimated barycenter, starting from MNIST-1.}
    \label{fig:mnist 0-1-4 a}
\end{figure}

\begin{figure}[h!]
    \centering
    \includegraphics[width=.32\linewidth]{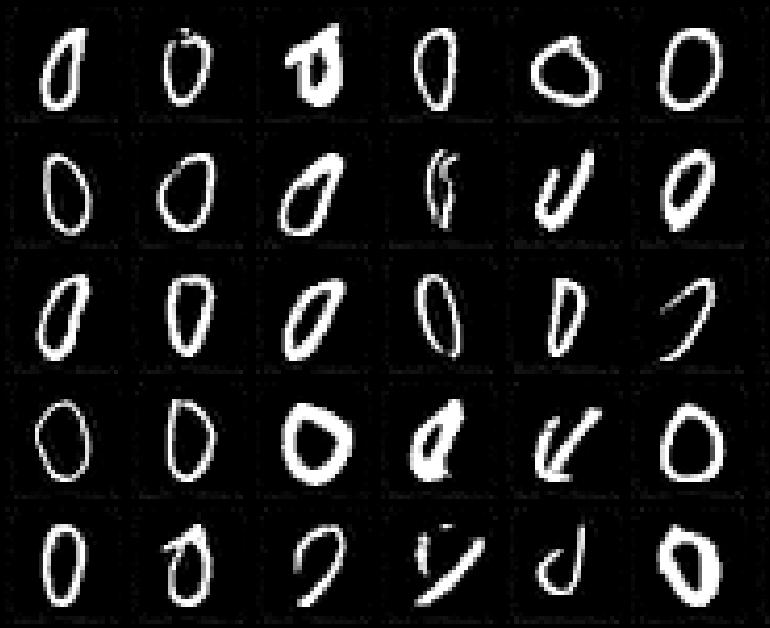}\hfill
    \includegraphics[width=.32\linewidth]{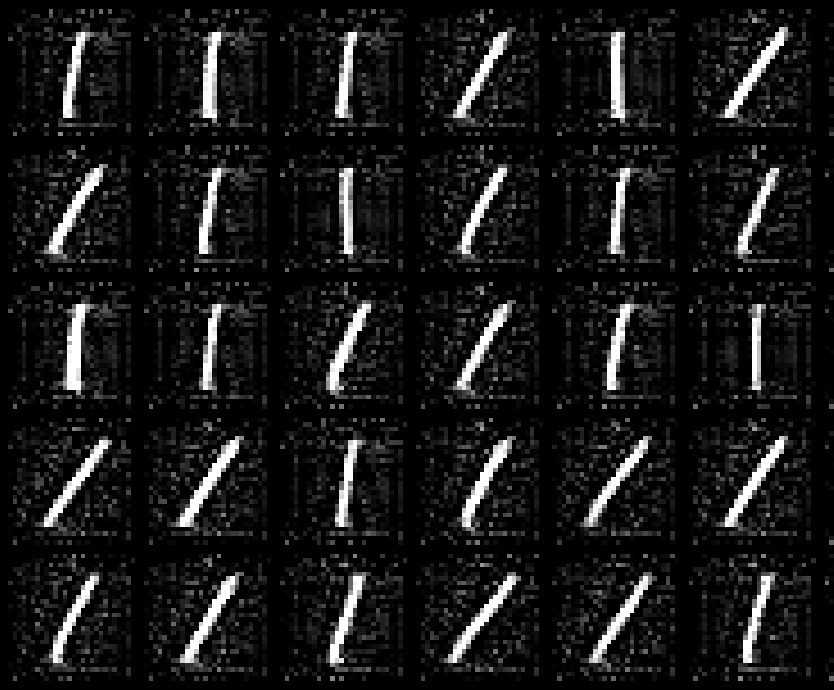}\hfill
    \includegraphics[width=.32\linewidth]{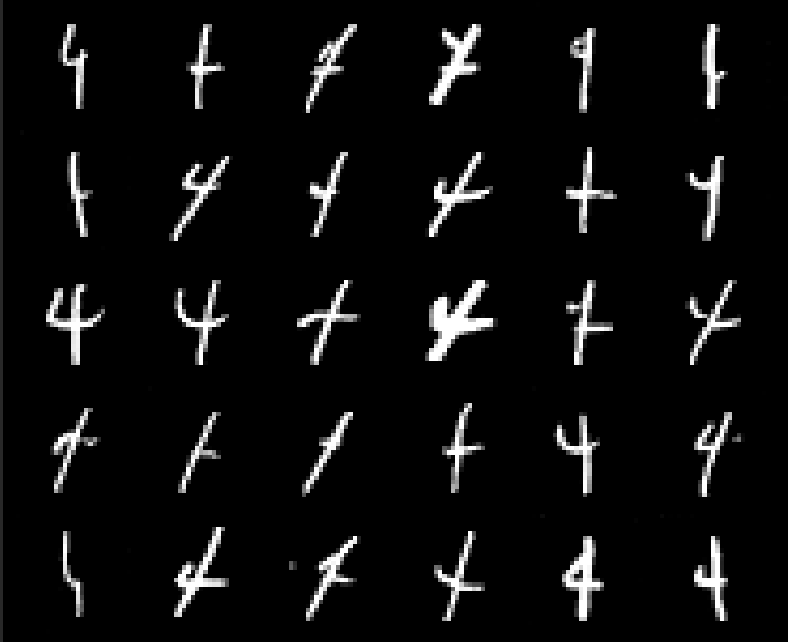}\\
    \vspace{0.2cm}
    \includegraphics[width=.32\linewidth]{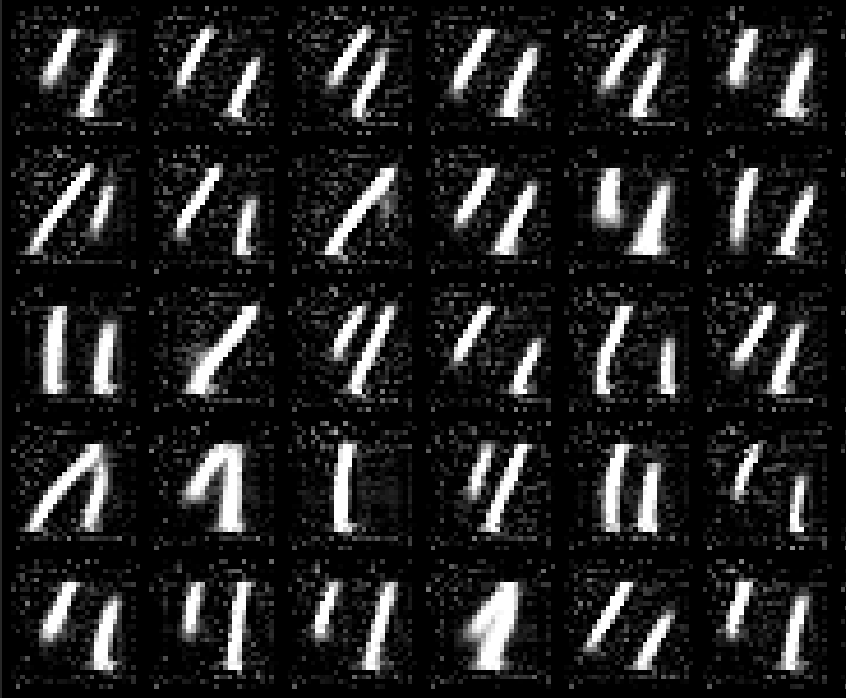}\hfill
    \includegraphics[width=.32\linewidth]{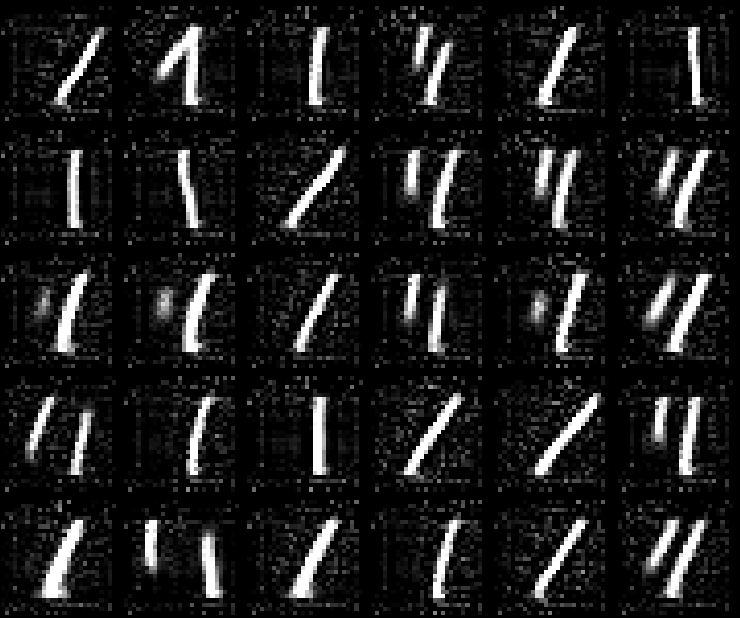}\hfill
    \includegraphics[width=.32\linewidth]{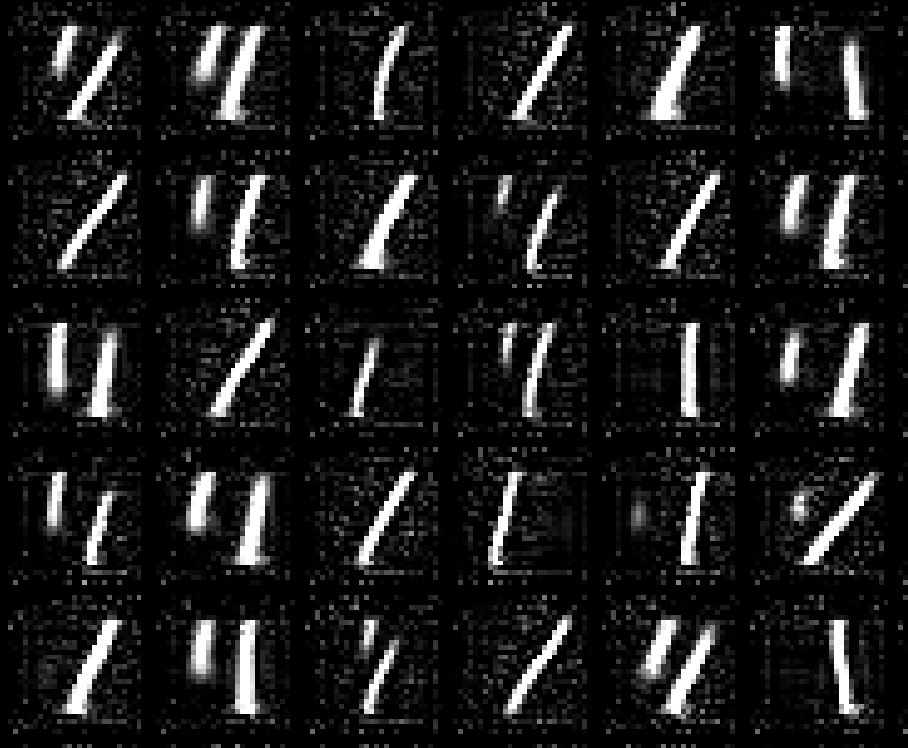}\\
    \vspace{0.2cm}
    \includegraphics[width=.32\linewidth]{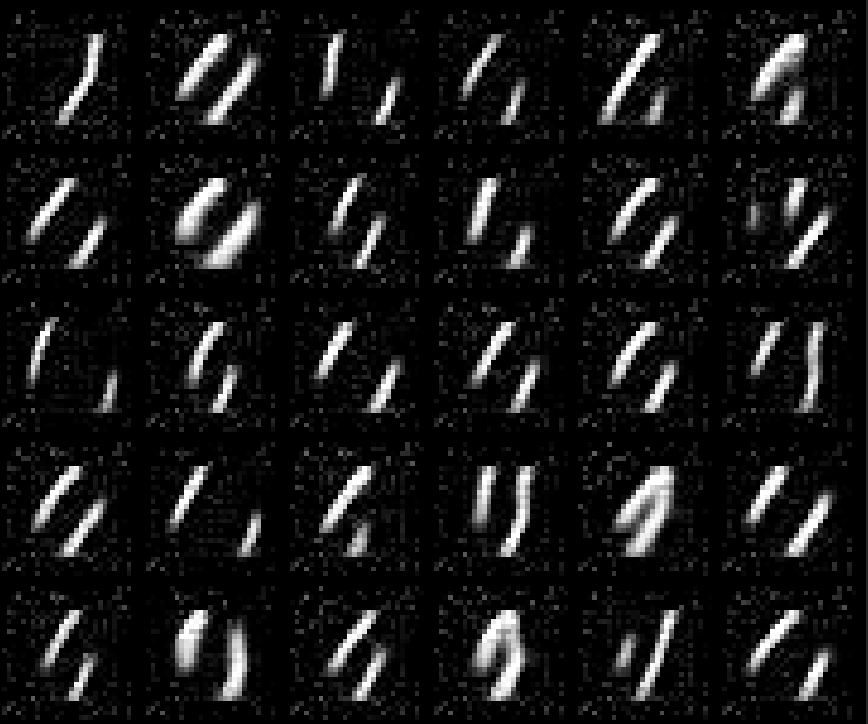}\hfill
    \includegraphics[width=.32\linewidth]{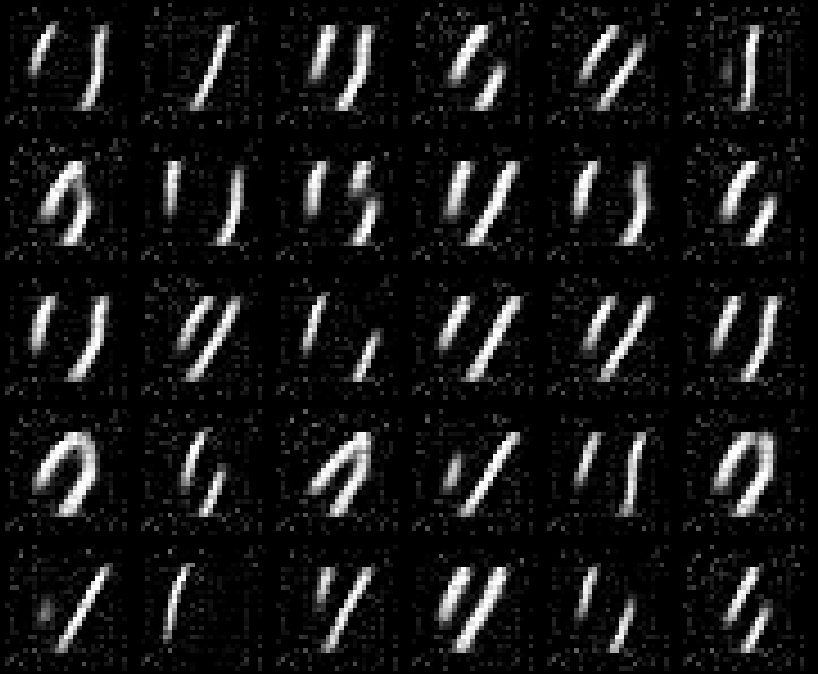}\hfill
    \includegraphics[width=.32\linewidth]{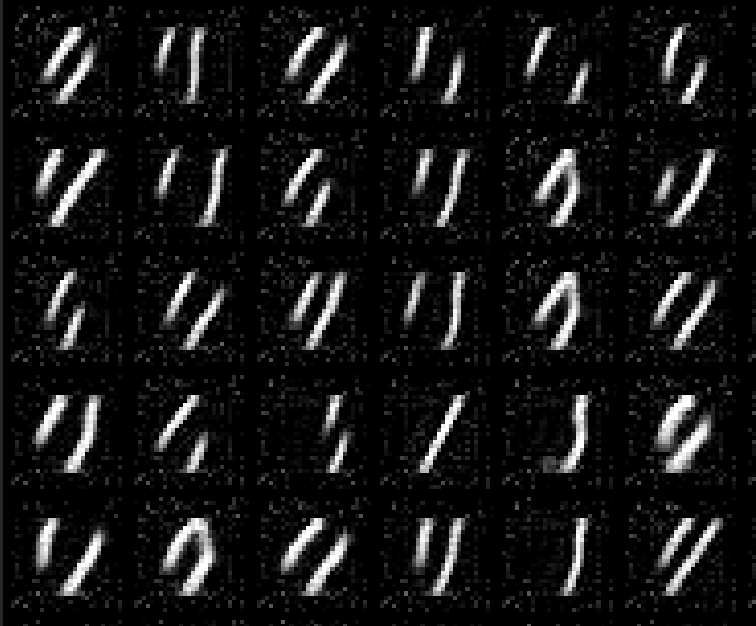}\\
    \caption{Tree DSB results for MNIST digits 0,1 and 4, after 10 mIPF cycles with $\varepsilon=0.2$. First row: samples from the reconstructed marginals, starting from MNIST-0. Second row: samples from the estimated barycenter, starting from MNIST-0. Third row: samples from the estimated barycenter, starting from MNIST-1.}
    \label{fig:mnist 0-1-4 b}
\end{figure}

\onecolumn

%%%%%%%%%%%%%%%%%%%%%%%%%%%%%%%%%%%%%%%%%%%%%%%%%%%%%%%%%%%%%%%%%%%%%%%%%%%%%%%
%%%%%%%%%%%%%%%%%%%%%%%%%%%%%%%%%%%%%%%%%%%%%%%%%%%%%%%%%%%%%%%%%%%%%%%%%%%%%%%

\end{document}